\documentclass[runningheads]{llncs}
\usepackage[T1]{fontenc}
\usepackage{graphicx}
\usepackage{booktabs}
\usepackage[misc]{ifsym}
\newcommand{\corr}{(\Letter)}


\AtEndEnvironment{proof}{\phantom{}\qed}

\usepackage[leqno]{amsmath}
\usepackage{amssymb}
\usepackage{bbm}
\usepackage{bm}
\usepackage{tikz}
\usepackage[font=scriptsize]{subcaption}
\usepackage[title]{appendix}
\usepackage{multirow}

\usepackage{hyperref}
\hypersetup{colorlinks=true}

\usepackage[export]{adjustbox}

\captionsetup[subfigure]{labelformat=empty}

\usepackage{array}
\newcolumntype{P}[1]{>{\centering\arraybackslash}p{#1}}
\usepackage{booktabs}

\usetikzlibrary{positioning}
%
%
%

\usetikzlibrary{shapes}
\usetikzlibrary{fit}
\usetikzlibrary{chains}
\usetikzlibrary{arrows}

\tikzstyle{latent} = [circle,fill=white,draw=black,inner sep=1pt,
minimum size=20pt, font=\fontsize{10}{10}\selectfont, node distance=1]
\tikzstyle{obs} = [latent,fill=gray!25]
\tikzstyle{const} = [rectangle, inner sep=0pt, node distance=1]
\tikzstyle{factor} = [rectangle, fill=black,minimum size=5pt, inner
sep=0pt, node distance=0.4]
\tikzstyle{det} = [latent, diamond]

\tikzstyle{plate} = [draw, rectangle, rounded corners, fit=#1]
\tikzstyle{wrap} = [inner sep=0pt, fit=#1]
\tikzstyle{gate} = [draw, rectangle, dashed, fit=#1]

\tikzstyle{caption} = [font=\footnotesize, node distance=0] %
\tikzstyle{plate caption} = [caption, node distance=0, inner sep=0pt,
below left=5pt and 0pt of #1.south east] %
\tikzstyle{factor caption} = [caption] %
\tikzstyle{every label} += [caption] %









\tikzset{
    position/.style args={#1:#2 from #3}{
        at=(#3.#1), anchor=#1+180, shift=(#1:#2)
    }
}

\newcommand{\xmark}{
\tikz[scale=0.23] {
    \draw[line width=0.7,line cap=round] (0,0) to [bend left=6] (1,1);
    \draw[line width=0.7,line cap=round] (0.2,0.95) to [bend right=3] (0.8,0.05);
}}

\def\ci{\perp\!\!\!\perp} 
\def\leqM{\leq_{\scriptscriptstyle \model}}

\newcommand{\vect}[1] {\boldsymbol{#1}}
\newcommand{\mat}[1]{\mathtt{#1}}
\newcommand{\data} {\mathcal{D}}
\newcommand{\model} {\mathcal{M}}
\newcommand{\x} {\vect{x}}
\newcommand{\X} {\vect{X}}
\newcommand{\Dx} {\mathcal{X}}
\newcommand{\y} {y}
\newcommand{\Y} {Y}
\newcommand{\Dy} {\mathcal{Y}}
\newcommand{\params} {\vect{\theta}}
\newcommand{\Params} {\vect{\Theta}}
\newcommand{\Dparams} {\Omega_{\params}}
\newcommand{\epist}[2][e] {\mathcal{U}^{#1}_{#2}}

\newcommand{\KL}[2]{D_{\scriptscriptstyle \textsc{KL}}\!\left(#1 \,\|\, #2\right)}
\newcommand{\E}[1]{\mathbb{E}\!\left[#1\right]}

\newcommand{\cvar}[2]{\mathbb{V}\!\left(#1\,|\,#2\right)}
\newcommand{\mi}[2] {I\left(#1 \, ; \, #2\right)}
\newcommand{\cmi}[3] {I\left(#1 \, ; \, #2 \, | \, #3\right)}

\newcommand{\cent}[2] {H\left(#1 \, | \, #2\right)}

\begin{document}

\title{On the Calibration of Epistemic Uncertainty: Principles, Paradoxes and Conflictual Loss}

\titlerunning{On the Calibration of Epistemic Uncertainty}


\author{Mohammed Fellaji\inst{1}\orcidID{0009-0007-8604-4424} \corr \and \\ Frédéric Pennerath\inst{1}\orcidID{0000-0002-2202-5896}\and \\ Brieuc Conan-Guez\inst{2}\orcidID{0000-0003-4583-9777} \and \\ Miguel Couceiro\inst{2}\orcidID{0000-0003-2316-7623}}
\author{Mohammed Fellaji\inst{1} \corr \and Frédéric Pennerath\inst{1} \and \\ Brieuc Conan-Guez\inst{2} \and Miguel Couceiro\inst{2}}
\authorrunning{M. Fellaji et al.}
\institute{CentraleSupélec, Université Paris Saclay, CNRS, LORIA, France \email{\{mohammed.fellaji,frederic.pennerath\}@centralesupelec.fr} \and 
Université de Lorraine, CNRS, LORIA, France \\ \email{brieuc.conan-guez@univ-lorraine.fr} \\ \email{miguel.couceiro@loria.fr}}


\maketitle

\begin{abstract}
The calibration of predictive distributions has been widely studied in deep learning, but the same cannot be said about the more specific epistemic uncertainty as produced by Deep Ensembles, Bayesian Deep Networks, or Evidential Deep Networks. Although measurable, this form of uncertainty is difficult to calibrate on an objective basis as it depends on the prior for which a variety of choices exist.
Nevertheless, epistemic uncertainty must in all cases satisfy two formal requirements: firstly, it must decrease when the training dataset gets larger and, secondly, it must increase when the model expressiveness grows. Despite these expectations, our experimental study shows that on several reference datasets and models, measures of epistemic uncertainty violate these requirements, sometimes presenting trends completely opposite to those expected. These paradoxes between expectation and reality raise the question of the true utility of epistemic uncertainty as estimated by these models.
A formal argument suggests that this disagreement is due to a poor approximation of the posterior distribution rather than to a flaw in the measure itself.
Based on this observation, we propose a regularization function for deep ensembles, called \emph{conflictual loss} in line with the above requirements. We emphasize its strengths by showing experimentally that it fulfills both requirements of epistemic uncertainty, without sacrificing either the performance nor the calibration of the deep ensembles.

\keywords{Epistemic Uncertainty \and Calibration \and Bayesian Deep Learning \and Deep Ensembles \and Evidential Deep Networks.}
\end{abstract}

\section{Introduction\label{sec:introduction}}

All neural networks, from small discriminative classifiers to large generative models, can be seen as probabilistic models that estimate some distribution.
This distribution captures the uncertainty of the predicted variable, induced both by \emph{latent factors} that are inherent in the process that has generated the data, and by the \emph{model bias}, which reflects the lack of expressiveness of the model to represent the true distribution.
The uncertainty related to latent factors is sometimes referred to as \emph{aleatoric or data uncertainty} in contrast to the completely different \emph{epistemic or model uncertainty} that is meant to measure \emph{estimator variance} / \emph{overfitting}, {\it i.e.,} the uncertainty about the output distribution itself, due to the limited size of the training dataset.
While every probabilistic model de facto takes into account aleatoric uncertainty, epistemic uncertainty becomes measurable
only by models whose output distribution is a random variable. This includes \emph{Bayesian Neural Networks} (BNN) that apply (approximate) Bayesian inference on network weights \cite{mackay_practical_1992,hinton_keeping_1993,neal_bayesian_1996,gal_dropout_2016}, \emph{Deep Ensembles} (DE) that sample the prior distribution \cite{lakshminarayanan_simple_2017}, \emph{Prior Networks} \cite{malinin_predictive_2018}, \emph{Evidential Deep Learning} (EDL) \cite{sensoy_evidential_2018} and derived methods that directly learn parameters of a second-order distribution.
All of these models produce not only the \emph{posterior predictive distribution} for a given input, but also a measure of epistemic uncertainty that quantifies the part of uncertainty that can be further reduced by observing more data in the vicinity of the input.
Since this measure is usually computed as the \emph{mutual information} between the model output and the parameters (conditioned on the input and the training dataset), we will stick to this choice in the sequel, even if the choice of a better epistemic uncertainty metric remains an arguable subject, as discussed in \cite{wimmer_quantifying_2023}.
Regardless of the choice of metric, epistemic uncertainty appears as a relevant criterion for deciding to label a new example in the context of active learning~\cite{depeweg_decomposition_2018,gal_deep_2017}, to tackle the exploration-exploitation dilemma in reinforcement learning~\cite{silva_uncertainty-aware_2020}, or to detect OOD examples \cite{lee_training_2018}, although some approaches advocate an even finer decomposition to either detect OODs, distinguishing between epistemic and \emph{distributional uncertainty} \cite{malinin_predictive_2018},
or to account for \emph{procedural variability} \cite{huang_efficient_2023}, \emph{i.e.}, uncertainty coming from the randomness of the optimization procedure.
Accurately quantifying epistemic uncertainty is thus crucial from both theoretical and application perspectives.

Whereas there is a large body of work on the calibration of predictive uncertainty as produced by deep neural networks, including Bayesian ones \cite{kuleshov_accurate_2018,ovadia_can_2019}, to our knowledge, there is no work dealing specifically with the calibration of epistemic uncertainty.
In this paper, we address the question of how to evaluate the quality of epistemic uncertainty produced by deep networks.
One difficulty that may explain the lack of research in this field is that the amount of epistemic uncertainty depends on the prior distribution over parameters for which certain freedom of choice exists~\cite{fortuin_priors_2022}, whether it is an \emph{informative} or an \emph{objective prior} like \emph{Jeffreys prior}~\cite{jeffreys_invariant_1946}.
Consequently, the definition of a quantitative score to measure the quality of epistemic uncertainty appears as a questionable objective, and thus we do not consider it.
Instead, we adopt a qualitative standpoint by stating two properties that every measure of epistemic uncertainty should ideally fulfill:
the first property that we call hereafter \emph{data-related principle}, states that the amount of epistemic uncertainty decreases as the model observes more data.
The second property, referred to as \emph{model-related principle}, states that epistemic uncertainty must increase with model complexity, {\it i.e.}, the number of weights, as a consequence of the \emph{curse of dimensionality}.

While we can check these requirements are de facto true for a simple probabilistic model like Bayesian linear regression,
it is not obvious that this still holds for Bayesian deep networks or their alternatives, since the parameters of these more complex non-convex models converge somewhat randomly to one of the many local optima.
With this in mind, we conducted an experimental study of epistemic uncertainty as produced by \emph{Deep Ensembles} \cite{lakshminarayanan_simple_2017}, \emph{MC-Dropout} \cite{gal_dropout_2016} and \emph{Evidential Deep Learning} \cite{sensoy_evidential_2018}.
The results are surprising: we observe that in all data regimes and for all tested methods, the average measures of mutual information computed on the test set, completely contradict the \emph{model-related principle}: the larger the model, the smaller the epistemic uncertainty when precisely the opposite is expected.
The \emph{data-related principle} seems globally but not perfectly respected, with some blatant counter-examples.
The same paradoxes are consistently observed, even when calibration techniques are used like \emph{label smoothing} \cite{szegedy_rethinking_2015} and \emph{confidence penalty} \cite{pereyra_regularizing_2017}.
This disagreement between expectation and reality thus raises the question of the true utility of epistemic uncertainty as estimated by these models.

A necessary condition to solve these inconsistencies is to ensure that epistemic uncertainty is maximal in the absence of data, a property that can only result from an appropriate choice of prior, or equivalently, of the regularizer in the loss function.
Based on this observation, we designed an elementary regularization function for ensembles of deep classifiers, called \emph{conflictual loss} for reasons that will become obvious later on. We emphasize the strengths of the resulting \emph{Conflictual Deep Ensembles} by showing experimentally that it restores both properties of epistemic uncertainty, without sacrificing either the performance or the calibration of the deep ensembles. To summarize, our contributions are the following:
\begin{itemize}
\item A method for assessing the quality of epistemic uncertainty of a model based on two principles.
\item The empirical demonstration, using this method, that common models and calibration techniques do not satisfy (and even sometimes contradict) these quality criteria.
\item A theoretical argument that suggests that these inconsistencies are due to the poor posterior approximation and not to the metric itself.
\item A new regularizer for deep ensembles, called the \emph{conflictual loss function}, designed to ensure the data-related principle of epistemic uncertainty.
\item Experimental results showing that this technique restores both quality criteria of epistemic uncertainty without degrading the other performance scores (accuracy, calibration, OOD detection).
\end{itemize}
The rest of the paper is structured as follows:
Section~\ref{sec:related-works} presents some previous works in the field of uncertainty, calibration, and prior.
Section~\ref{sec:theory} formalizes the two fundamental properties of epistemic uncertainty and gives some theoretical insights about them.
Section~\ref{sec:conflictual-loss} describes the \emph{conflictual loss} for deep ensembles.
Section~\ref{sec:experiments} presents experimental results and section~\ref{sec:conclusion} concludes.

\section{Related Work\label{sec:related-works}}

The calibration of a model reflects how its predictive distributions are consistent with its errors on a test dataset.
In~\cite{nixon_measuring_2020}, the authors discussed existing calibration metrics such as the \emph{Expected Calibration Error} (ECE)~\cite{naeini_calibration_2015} and introduced new measures like \emph{Static Calibration Error} (SCE) to better take multiclass problems into account. Post-hoc calibration techniques are also popular: histogram binning~\cite{zadrozny_obtaining_2001}, isotonic regression~\cite{zadrozny_transforming_2002} and temperature scaling~\cite{guo_calibration_2017} to name the most common.
The latter performs generally well on in-domain data but falls short when the data undergoes a distributional shift or are
out-of-distribution (OOD)~\cite{ovadia_can_2019}. 
Few works have studied calibration of posterior predictive~\cite{ovadia_can_2019,yao_quality_2019}.

When it comes to priors and regularization in deep learning, there is a considerable literature that can be classified into two main categories: parameter-based (regularizers L1, L2, etc.) and output-based (such as label smoothing~\cite{szegedy_rethinking_2015}, confidence penalty~\cite{pereyra_regularizing_2017}). The latter techniques are introduced primarily to avoid peaky outputs, which are a sign of overfitting~\cite{meister_generalized_2020}.
Priors have given rise to numerous works in general, with some specific to Bayesian deep learning (see survey \cite{fortuin_priors_2022} on this subject). To the best of our knowledge, there is no work focusing on calibrating epistemic uncertainty based on priors and without using an additional model to compute uncertainties.

To avoid the multiple evaluations of BNNs at inference time, some authors also propose to estimate the epistemic uncertainty more directly: the model predicts its uncertainty about the prediction. More precisely, the model produces a second-order distribution, that is a distribution over class distributions. Such ``all in one'' approaches (Evidential Deep Learning~\cite{sensoy_evidential_2018}, Prior Networks~\cite{malinin_predictive_2018}, Information Aware Dirichlet~\cite{tsiligkaridis_information_2021}, to name a few) require some specific training schemes. Indeed, \cite{bengs2022pitfalls} shows that training in a classical way such models by minimizing a second-order loss, does not entail well-calibrated uncertainty estimates.


The task of OOD detection is important in many applications. OOD samples are fundamentally different from the samples used during training~\cite{ren_likelihood_2019,sensoy_evidential_2018}. In theory, these examples should yield a high epistemic uncertainty. The task of OOD detection is challenging as shown in~\cite{ovadia_can_2019}: most of the methods tested on different benchmarks resulted in high-confidence predictions on OOD samples.
As shown in~\cite{ovadia_can_2019,mucsányi2024benchmarking}, DE and MC-Dropout are competitive benchmarks across different tasks including OOD detection.
Additionally, some approaches to detect OOD data involve training the model with both in-domain and OOD samples~\cite{malinin_etal_2017_incorporating,lee2018training,malinin_predictive_2018}. However, they have some limitations such as relying on the choice of the OOD dataset and that epistemic uncertainty is shifted into aleatoric uncertainty as discussed in~\cite{kirsch_pitfalls_2021}.

\section{Principles of Epistemic Uncertainty\label{sec:theory}}

In this section, we introduce two principles that characterize an \emph{idealized} measure of epistemic uncertainty.
For each of them, we give a formal definition, we justify why it is a desirable feature and we give a first analysis on its practical validity.

In what follows, we consider a family of probabilistic models that estimate the distribution of some measurable output $\Y \in \Dy$ given some input vector $\x \in \Dx$, thanks to a parametric function $f_{\params}$ parameterized by a vector $\params \in \Dparams$ of parameters, {\it i.e.,} $p(\y \,|\, \x, \params) = f_{\params}(\x)$. As Bayesian inference requires the definition of a prior $p(\params)$, the notation $\model$ for a \emph{formal model} refers hereafter to the pair $\model = \left(f_{\params},p(\params)\right)$. Given such a model $\model$ conditioned on some training sample $\data \in (\Dx \times \Dy)^*$, we consider a metric function $\epist{\data,\model}: \Dx \rightarrow \mathbb{R}^+$ that maps to an input $\x$, the measure of epistemic uncertainty conveyed by joint distribution $p(\y,\params\,|\,\x,\data)$.
We assume that this metric grows with epistemic uncertainty and is non-negative.
This is the case of common metrics like \emph{mutual information}
\[
  \epist[i]{\data,\model}(\x) = \cmi{\Y}{\Theta}{\x,\data} = \cent{\Y}{\x,\data} - \cent{\Y}{\Params,\x,\data}, 
\]
where $\cent{\cdot}{\cdot}$ denotes conditional entropy.
In case of regression ({\it i.e.,} $\Y \in \mathbb{R}$), another option is \emph{difference of variances}
\[
 \epist[v]{\data,\model}(\x) = \cvar{\Y}{\x,\data} - \cvar{\Y}{\Params,\x,\data} \, ,
\]
where $\cvar{\Y}{\x,\data}$ refers to the variance of the output when it follows the posterior predictive $p(\Y\,|\,\x, \data)$ and $\cvar{\Y}{\Params,\x,\data}$ refers to the average of variances of individual models $p(\Y\,|\,\params, \x)$, \emph{i.e.},
\[
\cvar{\Y}{\Params,\x,\data} = \int \cvar{\Y}{\params,\x} \, p(\params \,|\, \data) \, d\params \, .
\]    


\subsection{Data-related Principle of Epistemic Uncertainty}

The first principle simply states that epistemic uncertainty reduces as more training samples become available.
\begin{definition}[First principle]
  An epistemic uncertainty metric $\epist{\data,\model}$ is (ideally) a non-increasing function of training samples $\data$, {\it i.e.,}
  \[
    \forall \model, \forall \x, \forall \data_1, \forall \data_2, \quad \data_1 \subseteq \data_2 \Rightarrow \epist{\data_1,\model}(\x) \geq  \epist{\data_2,\model}(\x) \, .
  \]
\end{definition}
The reason why this property is desirable is illustrated by the next \emph{thought experiment} in the context of \emph{active learning}:
suppose that a model $\model$ has been trained on samples $\data_1$ so far. Now comes a new unlabelled sample $\x$.
Since epistemic uncertainty is the ideal criterion for measuring the information that could be gained by labeling a new sample, the measure $\epist{\data_1,\model}(\x)$ is compared to some decision threshold $\sigma$ in order to decide whether the sample is worth being labeled by an expert. Assuming that this is not the case, sample $\x$ is discarded.
Later on, the train set has been enriched with more samples $\data_2$. 
If it is possible that $\epist{\data_1 \cup \data_2,\model}(\x) > \epist{\data_1,\model}(\x)$, then it is also possible that $\epist{\data_1 \cup \data_2,\model}(\x) \geq \sigma$ so that this time the system would have asked for the labeling of $\x$.
This behavior would go against what we expect, {\it i.e.,} a model trained on more data has necessarily learned more information. The first principle bans such a scenario.

Next, we analyze the extent to which mutual information satisfies this principle.
Although examples can be found such that the observation of a specific sample increases mutual information rather than reducing it, we can ask whether the first principle is satisfied when averaging over all possible observations, {\it i.e.,} in expectation.
At first glance, we would be tempted to answer in the negative, as there exist random variables $X$, $Y$ and $Z$ such that $\mi{X}{Y} < \cmi{X}{Y}{Z}$. For making the answer positive, we need the assumption that samples are iid.
\begin{theorem}
  The \emph{mutual information metric} satisfies the first principle in expectation with respect to new random iid samples $\data_2$, {\it i.e.,}
  \[
    \forall \model, \forall \x, \forall \data_1, \quad \epist[i]{\data_1,\model}(\x) \geq  \E{\epist[i]{\data_1\cup \data_2,\model}(\x)} \, .
  \]
  \label{th:first-principle-in-expectation}
\end{theorem}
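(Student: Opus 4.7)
The plan is to prove the inequality by a single application of the chain rule for mutual information together with a conditional independence coming from the generative assumption that samples are iid given $\Params$. Throughout, let $D_2$ denote the random variable whose realizations are the additional iid samples $\data_2$, and write all mutual informations as implicitly conditioned on $\x$ and $\data_1$ to lighten notation.

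First, I would apply the chain rule in the two possible orders to $\cmi{Y}{\Params, D_2}{\x,\data_1}$:
\[
\cmi{Y}{\Params}{\x,\data_1} + \cmi{Y}{D_2}{\Params,\x,\data_1} = \cmi{Y}{D_2}{\x,\data_1} + \cmi{Y}{\Params}{\x,\data_1,D_2}.
\]
The key observation is that the second term on the left-hand side vanishes. Indeed, by the very definition of the model $\model = (f_{\params}, p(\params))$, the predictive law of $Y$ at input $\x$ is fully determined by $\params$: $p(\y\,|\,\x,\params,\data) = f_{\params}(\x)$ for any data $\data$. Since moreover $D_2$ is, conditionally on $\Params$, independent of any quantity depending on other inputs (iid assumption), we have $Y \ci D_2 \mid \Params, \x, \data_1$, hence $\cmi{Y}{D_2}{\Params,\x,\data_1} = 0$.

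Substituting this into the chain-rule identity yields
\[
\cmi{Y}{\Params}{\x,\data_1} \;=\; \cmi{Y}{D_2}{\x,\data_1} \;+\; \cmi{Y}{\Params}{\x,\data_1,D_2},
\]
and since mutual information is non-negative, dropping the first summand gives $\cmi{Y}{\Params}{\x,\data_1} \geq \cmi{Y}{\Params}{\x,\data_1,D_2}$. Finally, by the standard definition of a mutual information conditioned on a random variable, the right-hand side is exactly $\mathbb{E}_{D_2}\!\left[\cmi{Y}{\Params}{\x,\data_1,\data_2}\right] = \E{\epist[i]{\data_1 \cup \data_2, \model}(\x)}$, which is the desired conclusion.

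The only delicate step is the justification of $\cmi{Y}{D_2}{\Params,\x,\data_1}=0$: one must argue cleanly that, conditional on the parameters, the future observations $D_2$ carry no information about $Y$ beyond what $\Params$ already fixes. This is immediate from the iid generative assumption but is worth stating explicitly, since it is precisely here (and only here) that the iid hypothesis mentioned in the theorem enters the proof; without it one could construct counterexamples of the form alluded to in the discussion preceding the theorem.
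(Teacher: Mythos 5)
Your proof is correct and is essentially the paper's own argument in different packaging: the paper expands the difference $\cmi{\Y}{\Params}{\x,\data_1}-\cmi{\Y}{\Params}{\x,\data_1,\data_2}$ into conditional entropies and regroups them to obtain $\cmi{\Y}{\Y'}{\kappa}-\cmi{\Y}{\Y'}{\Params,\kappa}$, which is exactly your two-way chain-rule identity, and then kills the second term via the same conditional independence $\Y \ci \data_2 \mid \Params$ that you invoke. The only (cosmetic) differences are that the paper reduces to a single new sample and handles the new input $\x'$ with a separate independence step, whereas you treat $\data_2$ wholesale.
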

\begin{proof}
For the sake of clarity and without loss of generality, we assume $\data_2$ is made of one single sample $(\x', Y')$. Also for conciseness, we denote $\kappa$ the triplet  $(\data_1,\x,\x')$ as these terms have no incidence on the proof below.
Then considering a ``test'' input $\x$ for which we want to estimate the epistemic uncertainty conveyed by its output $\Y$, we consider the difference
\begin{eqnarray*}
\Delta I & = & \epist[i]{\data_1,\model}(\x) - \E{\epist[i]{\data_1 \cup \data_2,\model}(\x)} \\
         & = & \cmi{\Y}{\Params}{\x, \data_1} - \cmi{\Y}{\Params}{\x, \data_1, \Y', \x'} \\
         & = & \cmi{\Y}{\Params}{\data_1, \x, \x'} - \cmi{\Y}{\Params}{\Y', \data_1, \x, \x'} \text{ as } \Y,\Params \ci \X' \,|\, \X \\
         & = & \cent{\Y}{\kappa} - \cent{\Y}{\Params, \kappa} - \cent{\Y}{\Y', \kappa} + \cent{\Y}{\Params, \Y', \kappa} \\
         & = & \cent{\Y}{\kappa} - \cent{\Y}{\Y',\kappa}  - \left(\cent{\Y}{\Params,\kappa} - \cent{\Y}{\Params, \Y', \kappa}\right) \\
         & = & \cmi{\Y}{\Y'}{\kappa} - \cmi{\Y}{\Y'}{\Params, \kappa} \, .
\end{eqnarray*}
But $\Y$ and $\Y'$ are iid samples, {\it i.e.,} they are independent given $\Params$ and $\kappa$, and thus $\cmi{\Y}{\Y'}{\Params, \kappa} = 0$. Hence, $\Delta I = \cmi{\Y}{\Y'}{\kappa} \geq 0$, and the proof is now complete.

  
\end{proof}

\subsection{Model-related Principle of Epistemic Uncertainty}

The second principle 
essentially expresses \emph{overfitting}: given two models trained with the same set of samples, if one has more expressive power than the other, then it should have a larger measure of epistemic uncertainty since the choice of model candidates is wider, {\it i.e.,} its \emph{posterior distribution} is more spread out.

While this principle 
seems intuitive, the formalization of the underlying notion of expressive power requires the use of complex theories of statistical learning  ({\it e.g.,} VC dimension), which we avoid since it is unnecessary.
Indeed, we can only consider models that are, by construction, ordered in increasing order of complexity, as defined below.
\begin{definition}[Submodel]\label{def:submodel}
  We say model $\model_a = \left(f^a_{\params_a}, p_a(\params_a)\right)$ is a \emph{submodel} of 
  model $\model_b= \left(f^b_{\params_b}, p_b(\params_b)\right)$,  denote by $\model_a \leqM \model_b$, if $\params_a$ is a subset of parameters $\params_b$ so that $\params_b = (\params_a, \params_{b'})$ and there exists a constant vector $\params^0_{b'} \in \Omega_{\params_{b'}}$ such that 
  \[
    \forall \params_a \in \Omega_{\params_a}, \quad  f^a_{\params_a} = f^b_{(\params_a, \params^0_{b'})}
    \text{ and } \, p_a(\params_a) = p_b\!\left(\params_a \,|\, \Params_{b'} = \params^0_{b'} \right) \, .
  \]
\end{definition}

Moreover, when priors are chosen in such a way that individual parameters are independent, 
freezing 
$\params_{b'}$ 
has no 
impact on the prior of $\params_a$, so that the condition on priors simplifies to $p_a(\params_a) = p_b(\params_a)$.
Since the submodel relation is reflexive, transitive, and antisymmetric, it defines a partial ordering on the set of parameterized models, that can be used to state the second principle.
\begin{definition}[Second principle]
  An epistemic uncertainty metric $\epist{\data,\model}$ \\ 
  should be a non-decreasing function over the set of parameterized models, {\it i.e.,}
  \[
    \forall \data, \forall \x, \forall \model_1, \forall \model_2, \quad \model_1 \leqM \model_2 \Rightarrow \epist{\data,\model_1}(\x) \leq \epist{\data,\model_2}(\x) 
  \]
\end{definition}

To see why this principle is desirable, consider the following example in the field of \emph{explainability}.
Given two black-box models $\model_1$ and $\model_2$ with comparable performance, let's assume that $\epist{\data,\model_1}$ is larger on average than $ \epist{\data,\model_2}$ when estimated on a test set. A larger value of epistemic uncertainty for a given input $\x$ means more diverse and thus inconsistent stochastic functions $p(\y\,|\,\params,\x)$ as $\params$ follows the posterior distribution.
Therefore, model $\model_2$ provides on average more similar and consistent functions than $\model_1$.
Explaining the output $p(\y \,|\, \x, \data)$ of a model amounts to summarize in an understandable format these stochastic functions $p(\y\,|\,\params,\x)$ taken as a whole.
Model $\model_2$ is thus preferred since the explanation of its output is shorter on average.
But if the second principle is broken, it is possible that $\model_1$ is formally a submodel of $\model_2$. If so, $\model_1$ is by construction a restriction of $\model_2$, providing systematically shorter explanations, a contradiction. In summary, without the second principle, measures of epistemic uncertainty could make inconsistent \emph{Occam's razor principle} and all related concepts (sparsity, minimum description length, etc).

Focusing again on the mutual information metric, we see that the second principle is also verified in expectation, as it is always true that $\cmi{\Y}{\Params_1}{\data, \x} \leq \cmi{\Y}{\Params_1, \Params_2}{\data, \x}$.
Indeed, the left term of this inequality can be understood as a weighted average of mutual information $\cmi{\Y}{\Params_1}{\params^0_2, \data, \x}$ of every submodel $\params^0_2$.
However, the weight of submodel $\params^0_2$ is given by the posterior $p(\params_2 \, |\, \data)$.
The interpretation of this inequality in expectation is therefore difficult and of little interest in the context of \emph{model selection} since, in practice, we want to compare a model with a given submodel, not with the whole distribution of submodels.

To illustrate, consider a \emph{Bayesian linear regression model} with known homoskedastic variance and isotropic normal prior, {\it i.e.},
\[
  \Y = \sum_{i \in \mathcal{I}} \Theta_i \, \psi_i(\x) + \varepsilon \text{ with } \varepsilon \sim \mathcal{N}(0,\sigma^2) \text{ and } \Theta_i \sim \mathcal{N}(0, \sigma^2_0) \, .
\]
where the regressor functions $\psi_i$ are chosen in a large collection indexed by $\mathcal{I}$.
After making some \emph{variable selection}, we can force to zero some coefficients $\theta_i$, only keeping regressors of the index in $\mathcal{I}' \subseteq \mathcal{I}$. As priors on coefficients are independent, the definition of the resulting submodel $\model_{\mathcal{I}'}$ is obtained just by replacing $\mathcal{I}$ by $\mathcal{I}'$ in the above definition.
Now given such a submodel $\model_{\mathcal{I}'}$, we would like to ensure the measure of epistemic uncertainty is smaller for $\model_{\mathcal{I}'}$ than for $\model_{\mathcal{I}}$.
For the sake of simplicity, suppose that regressor functions are decorrelated, {\it i.e.,} $\E{\psi(\vect{X}) \, \psi(\vect{X})^T} = \mat{Id}$. It can be shown that the epistemic uncertainty of submodel $\model_{\mathcal{I}'}$ as estimated by the \emph{difference of variances}, is
\[
  \epist[v]{\data,\model_{\mathcal{I}'}}(\x) = \cvar{\Y}{\x,\data} - \cvar{\Y}{\Params,\x,\data} = \frac{|\mathcal{I}'|}{\sigma_0^{-2} + |\data|\, \sigma^{-2}} \, .
\]
As this value increases with the number $|\mathcal{I}'|$ of parameters and decreases with the number $|\data|$ of examples, the first and second principles are always satisfied.

This result naturally raises the question of whether it can be generalized to models like deep neural networks.
As there are no simple analytical answers for such complex models, we present an experimental study in section~\ref{sec:experiments} to assess the extent to which the two principles are satisfied by various classification models. This includes the method of \emph{Conflictual Deep Ensembles}, which we introduce in the next section.

\section{Conflictual Deep Ensembles\label{sec:conflictual-loss}}

As we shall see in Sect.~\ref{sec:experiments}, several classifiers present abnormally low levels of epistemic uncertainty in the low-data regime. This ``hole'' is contrary to the first principle which states that epistemic uncertainty should be maximal in the absence of training data. Why does this happen?
By rewriting mutual information $\mi{\Y}{\Params \,|\, \x, \data}$ as
\[
 \mi{\Y}{\Params \,|\, \x, \data}= \int p(\params\,|\, \data) \, \KL{p(\y\,|\,\params, \x)}{p(\y \,|\, \x, \data)}\, d\params \, ,
\]
we can interpret it as a weighted average of divergence between predictions $p(\y\,|\,\params, \x)$ of individual models and prediction $p(\y \,|\, \x, \data)$ of the averaged model.
Therefore, the hole reflects the absence of diversity between output distributions $p(\y\,|\,\params, \x)$ in the low-data regime.
But in this regime, this lack of variability is mostly a consequence of the choice of the prior or, equivalently, of the regularization term in the loss function. This explains why the hole is particularly visible in experiments using \emph{label smoothing}, since this regularization technique drives output distributions closer to the same uniform distribution.

This observation also suggests that designing a prior that favors diversity or, in other words, \emph{discordance} between output distributions, could fill the hole of epistemic uncertainty.
Such an objective can be achieved simply by constructing a so-called \emph{conflictual deep ensemble}, where each classifier in the ensemble slightly favors a class of its own.
In the absence of data, these slight tendencies are enough to create discordance in the output distributions and therefore a high level of mutual information.

In practice, a \emph{conflictual deep ensemble} of order $k$ is implemented as an ensemble of $k \times C$ deep classifiers such that every class $c \in \{1, \dots, C\}$ is mapped to $k$ models $\{ \params^c_i \}_{1\leq i \leq k}$.
Denoting by $P(\y \,|\, \params, \x)$ the probability of class $\y$ as predicted for input $\x$ by model $\params$, we define the \emph{conflictual loss} for class $c$ as
\begin{equation}
    L_c(\params) = - \sum_{(\x,\y) \in \data} \left( \log P(\y \,|\, \params, \x) + \lambda \, \log P(c \,|\, \params, \x) \right), \label{eq:conflictual-loss}
\end{equation}
where the first term is the log-likelihood and the second term is the bias that slightly favors class $c$.
This bias term can be interpreted as if for each observed example $(\x,\y)$ in the train set, we add $\lambda$ faked examples of selected class $c$.
In practice $\lambda$ has been empirically fixed to $0.05$, meaning there is one faked example for $20$ real examples.
We then train every model of the ensemble independently, using for model $\params^c_i$ the loss function $L_{c}$.

The Conflictual Loss both resembles and differs from Label Smoothing: like LS, the conflictual term $\log P(c \,|\, \params, \x)$ is not an independent regularization term, but factorized into the sum of the log-likelihoods.
However, unlike LS which encourages classifiers to be more concordant by promoting the uniform distribution, the Conflictual Loss encourages classifiers to be contradictory.

Existing works~\cite{deMathelin2023,deMathelin2023a} have sought to address the diversity of the models in the ensemble. Their focus was on an ``anti-regularization'' of the model's weights resulting in weights with high magnitudes without sacrificing the model's performance. Although, by construction, Conflictual loss aims at creating diversity in the ensemble outputs, we expect each model in the ensemble to adjust its weights accordingly.

\section{Empirical Analysis\label{sec:experiments}}

We conducted several experiments to assess to what extent both principles of epistemic uncertainty discussed in Sect.~\ref{sec:theory} are verified. We compared \emph{Conflictual Deep Ensemble} to \emph{MC-Dropout} \cite{gal_dropout_2016}, \emph{Deep Ensemble} \cite{lakshminarayanan_simple_2017} and \emph{EDL} \cite{sensoy_evidential_2018}. In addition, we tested \emph{Label Smoothing} (LS) \cite{szegedy_rethinking_2015} in combination with MC-Dropout.
Moreover, we evaluated \emph{Confidence Penalty} regularization combined with MC-Dropout and reported the results in Appendix~\ref{app:confidence-penalty} due to the space limit.~\footnote{Code available at: \url{https://github.com/fellajimed/Conflictual-Loss}}

For the varying number of samples used to train the models, we considered, after a $20\%$ validation-train split identical for all models, fractions of the entire training set that grow exponentially from $0.005$ to $1$.
To evaluate the \emph{data-related principle}, we made sure that by increasing the training set, new examples are added to the previous training set rather than randomly selecting a new independent subset from the entire training set.
Additionally, for a fixed ratio, we emphasize that the models are trained on the same samples.

We used Multilayer Perceptron (MLP) models with two hidden layers for a straightforward control of its size: 
since layers are dense, they are invariant under the permutation of neurons. As a consequence the submodel relation defined in Sect.~\ref{sec:theory} is simplified: given two networks $\model_1$ and $\model_2$ of $L$ dense layers whose sizes are respectively $(n^{[1]}_1,\dots, n^{[L]}_1)$ and $(n^{[1]}_2,\dots, n^{[L]}_2)$,
\[
  \model_1 \leqM \model_2 \quad \Longleftrightarrow \quad \forall i, \, n^{[i]}_1 \leq n^{[i]}_2 
\]
We thus consider for every method a chain of submodels whose sizes of the hidden layers grow exponentially, starting from $(128,64)$ neurons up to $(2048,1024)$.
Epistemic uncertainty is estimated using $20$ forward passes at inference time for MC-Dropout and $10$ MLPs in the case of Deep Ensemble. We set the order $k=1$ for the Conflictual DE so that it also contains $C = 10$ MLPs. Each hidden layer is followed by a Dropout layer ($p=0.3$) and a ReLU activation function. 

\begin{figure}[htbp]
  \begin{subfigure}[t]{\dimexpr0.185\textwidth+20pt\relax}
    \makebox[20pt]{\raisebox{25pt}{\rotatebox[origin=c]{90}{\scriptsize MNIST}}}%
    \includegraphics[width=\dimexpr\linewidth-20pt\relax]{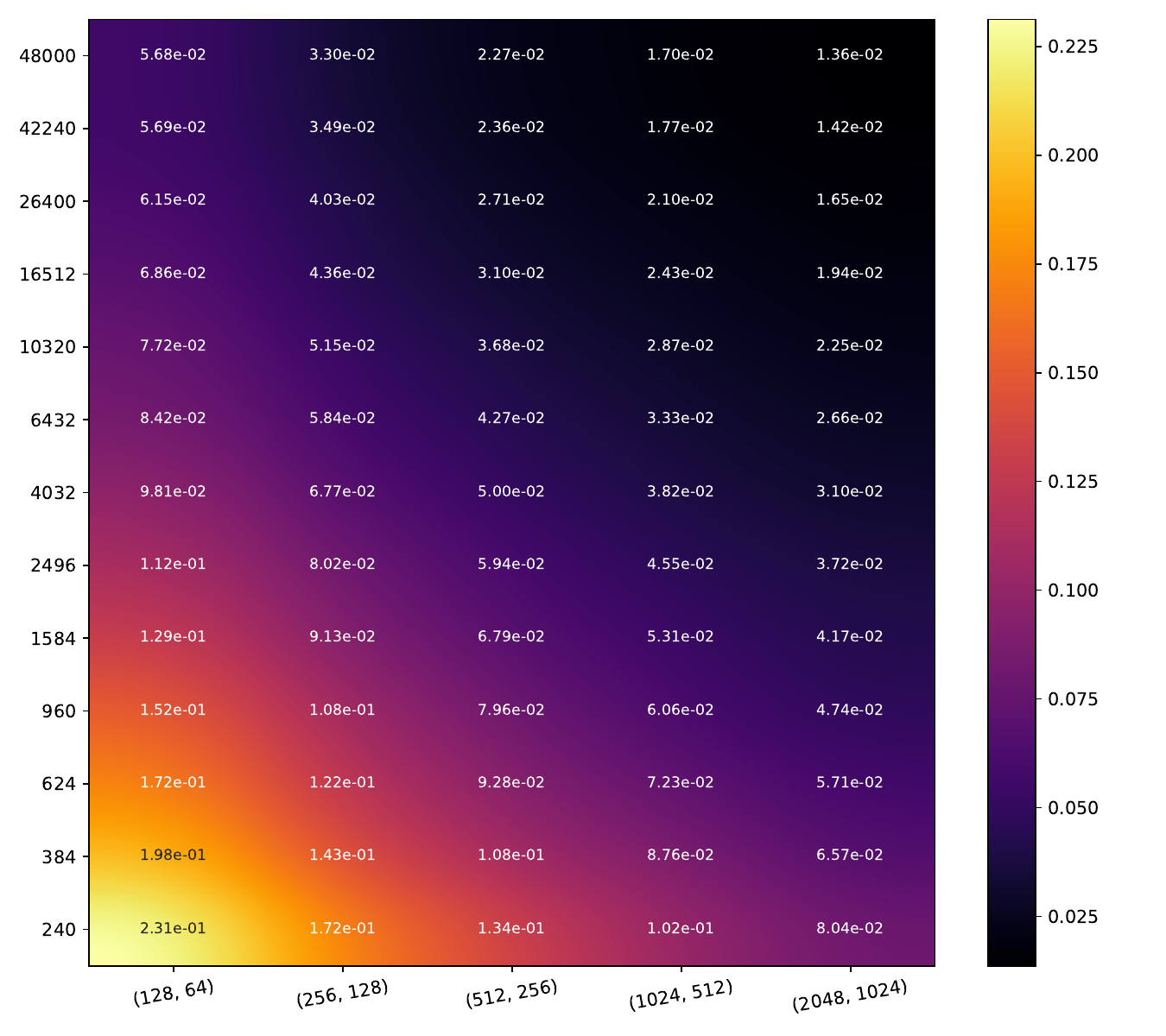}
    \makebox[20pt]{\raisebox{25pt}{\rotatebox[origin=c]{90}{\scriptsize SVHN}}}%
    \includegraphics[width=\dimexpr\linewidth-20pt\relax]{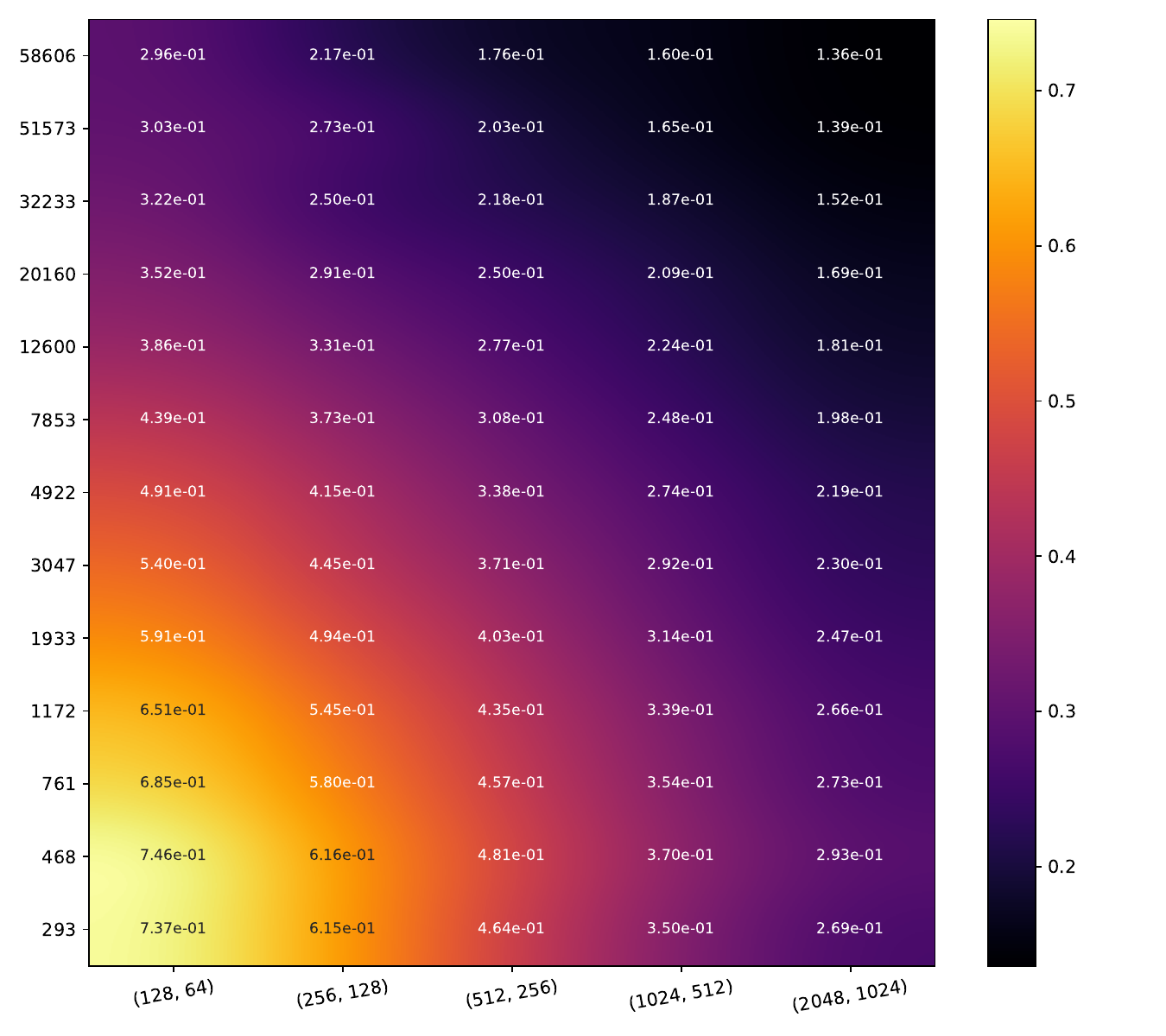}
    \makebox[20pt]{\raisebox{25pt}{\rotatebox[origin=c]{90}{\scriptsize CIFAR10}}}%
    \includegraphics[width=\dimexpr\linewidth-20pt\relax]{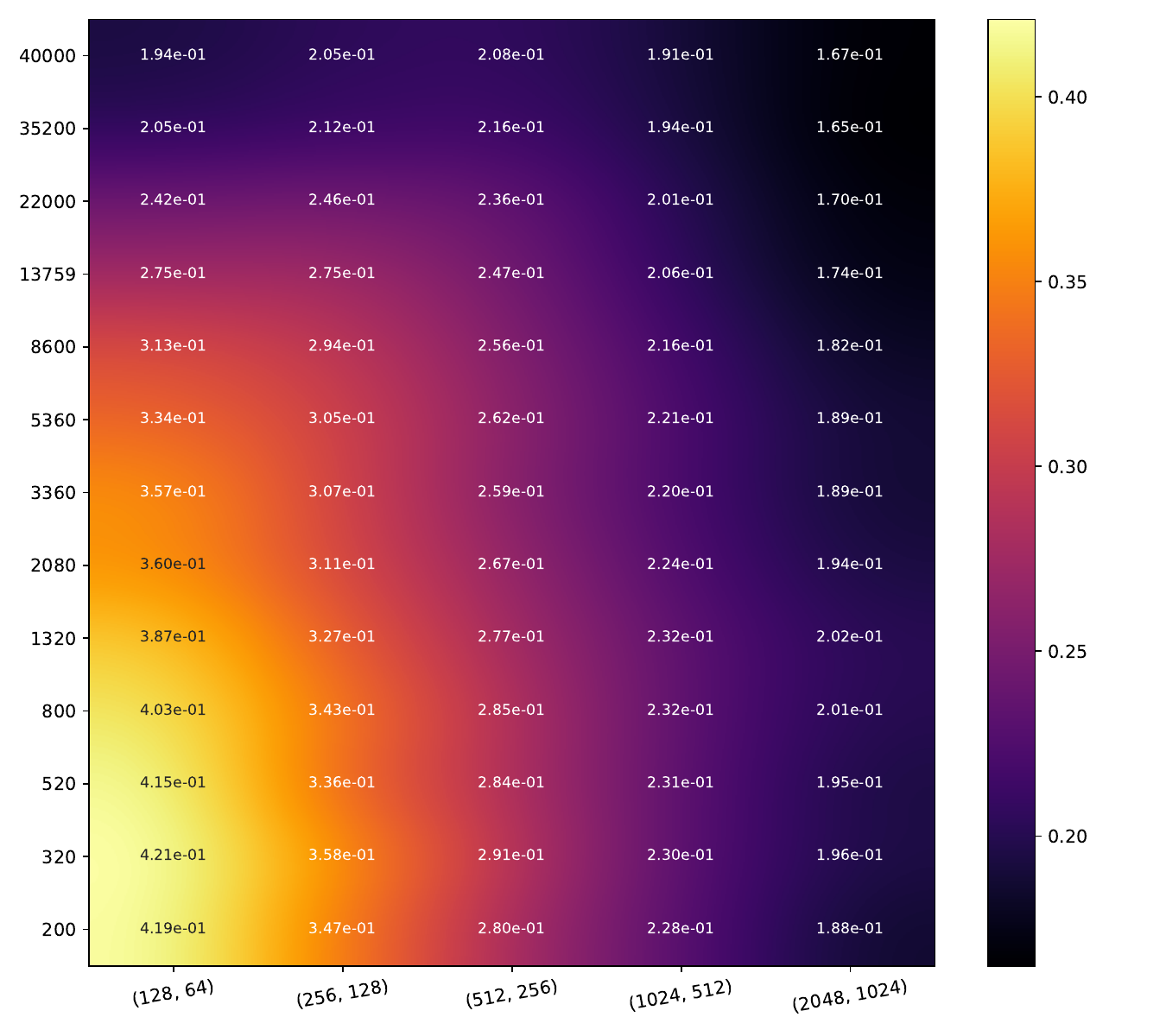}
    \caption*{\qquad MC-Dropout}
  \end{subfigure}\hfill
  \begin{subfigure}[t]{0.185\textwidth}
    \includegraphics[width=\textwidth]{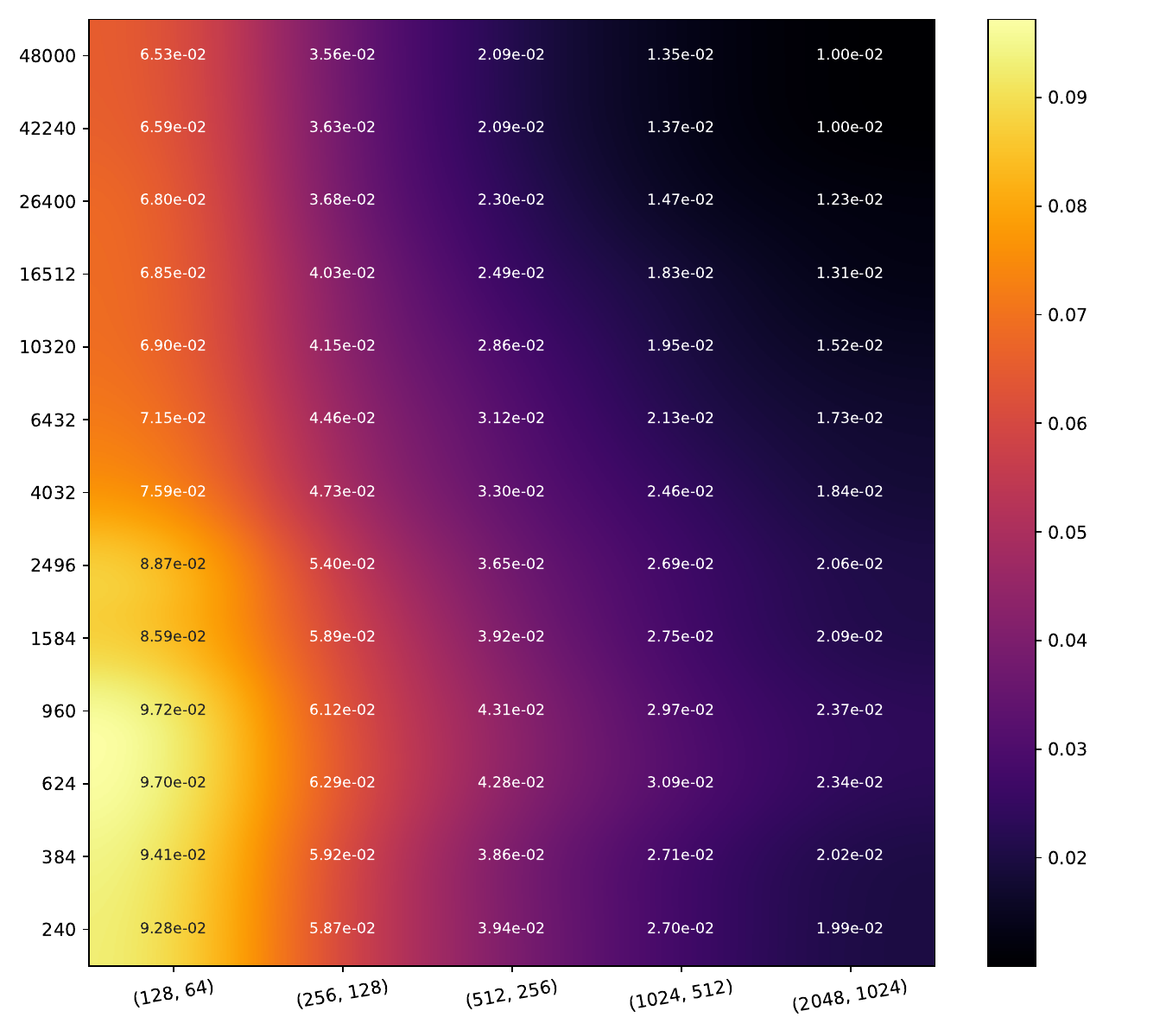}
    \includegraphics[width=\textwidth]{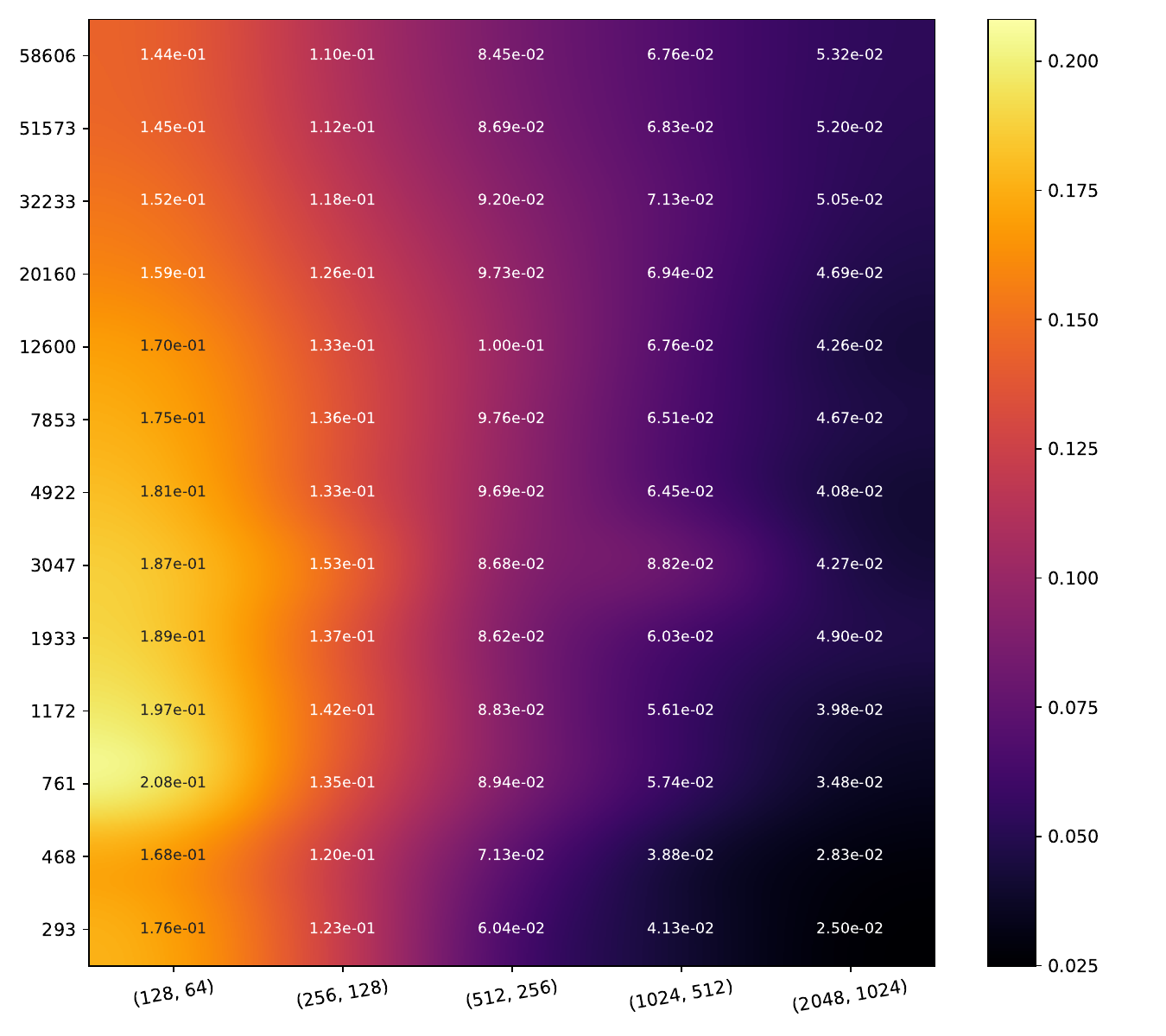}
    \includegraphics[width=\textwidth]{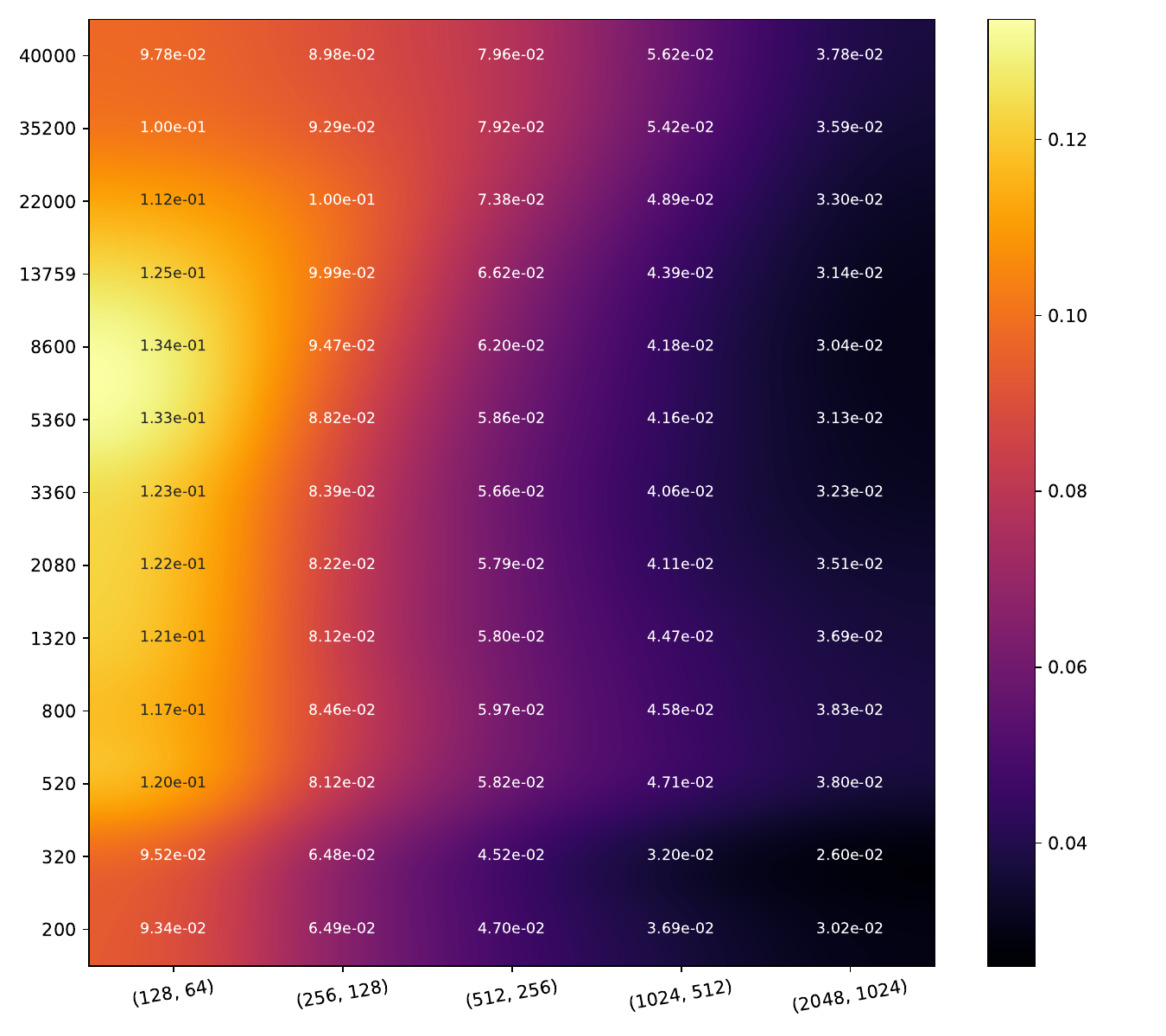}
    \caption*{MC-Dropout LS}
  \end{subfigure}\hfill
  \begin{subfigure}[t]{0.185\textwidth}
    \includegraphics[width=\textwidth]{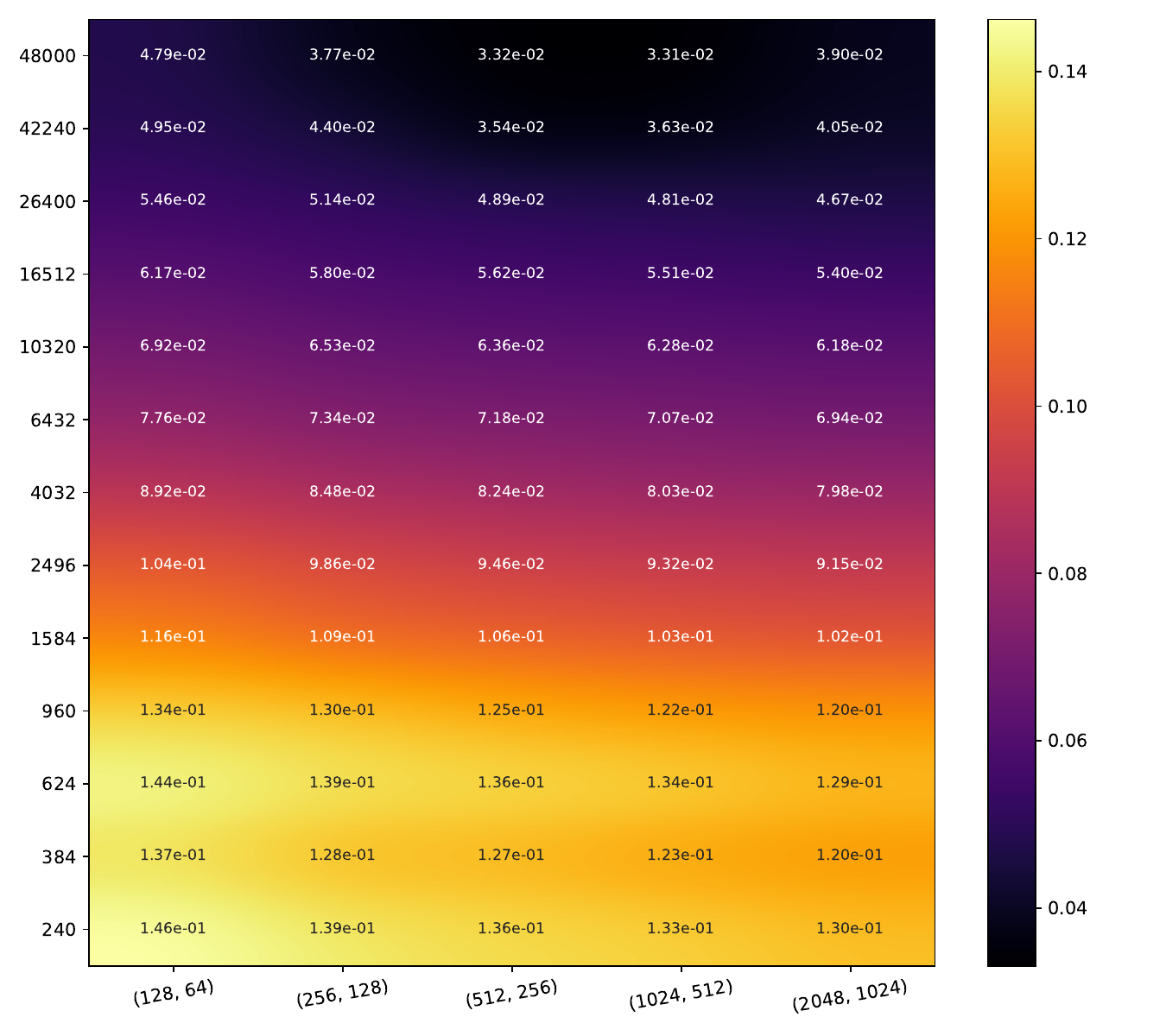}
    \includegraphics[width=\textwidth]{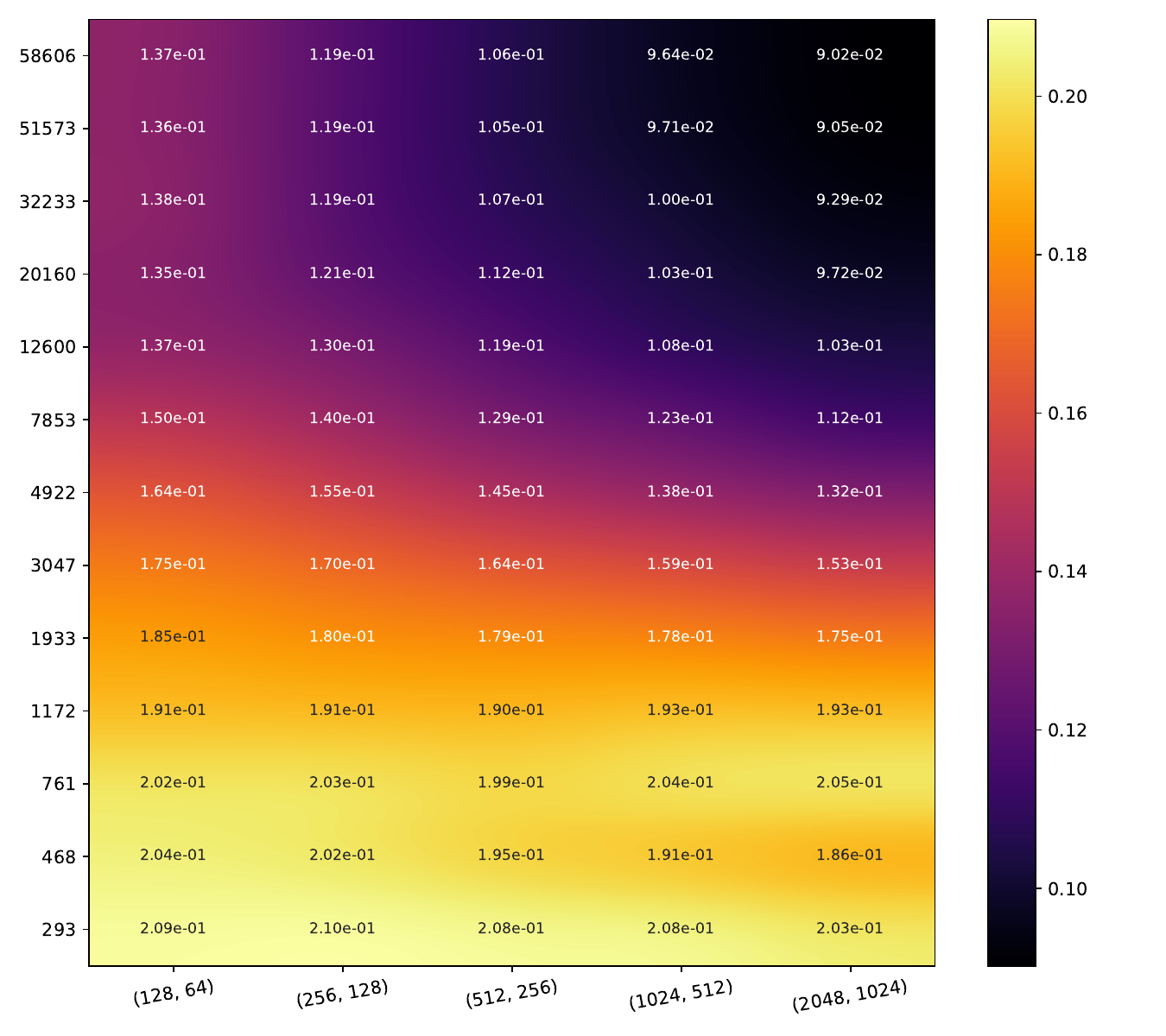}
    \includegraphics[width=\textwidth]{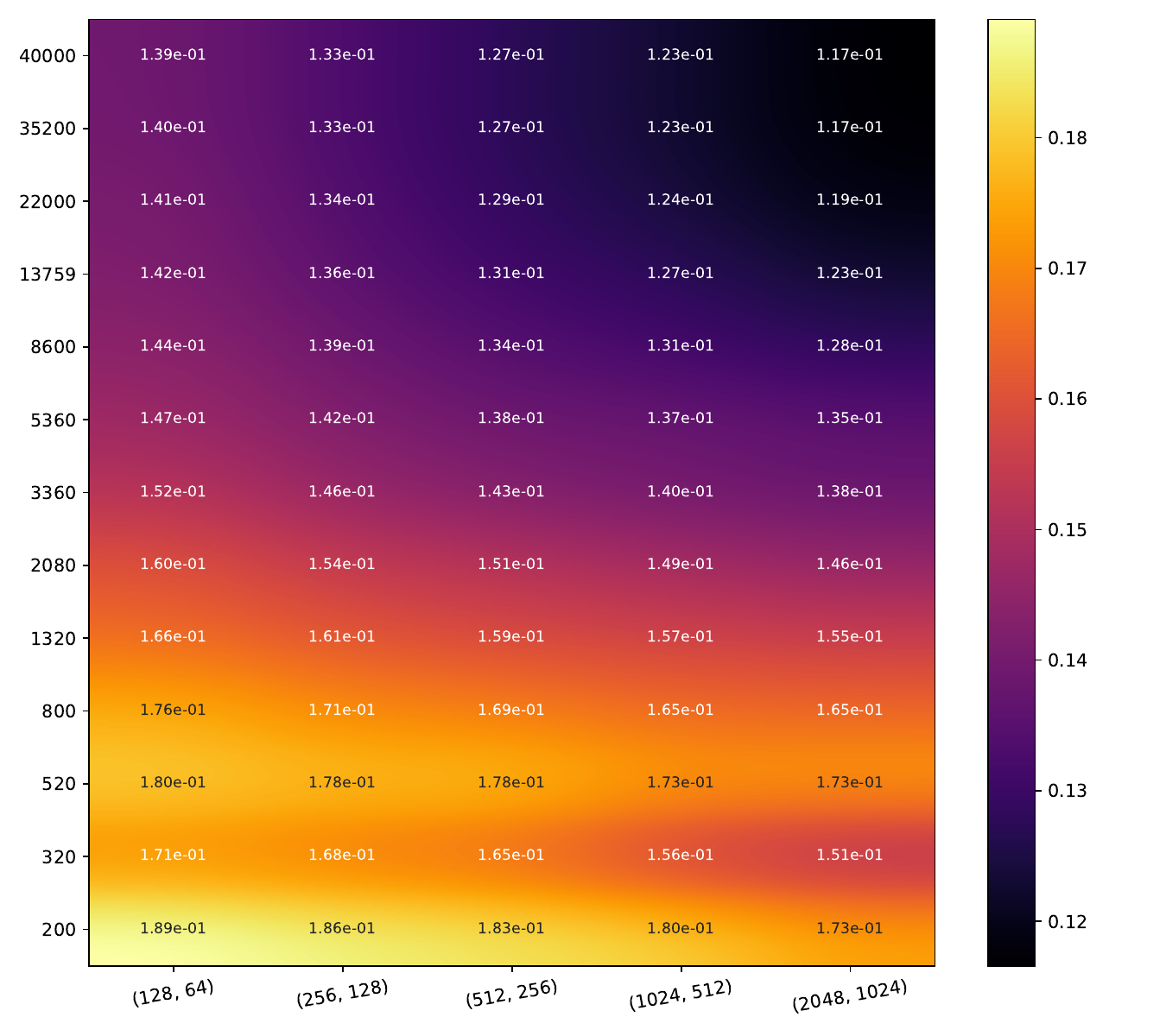}
    \caption*{EDL}
  \end{subfigure}\hfill
  \begin{subfigure}[t]{0.185\textwidth}
    \includegraphics[width=\textwidth]{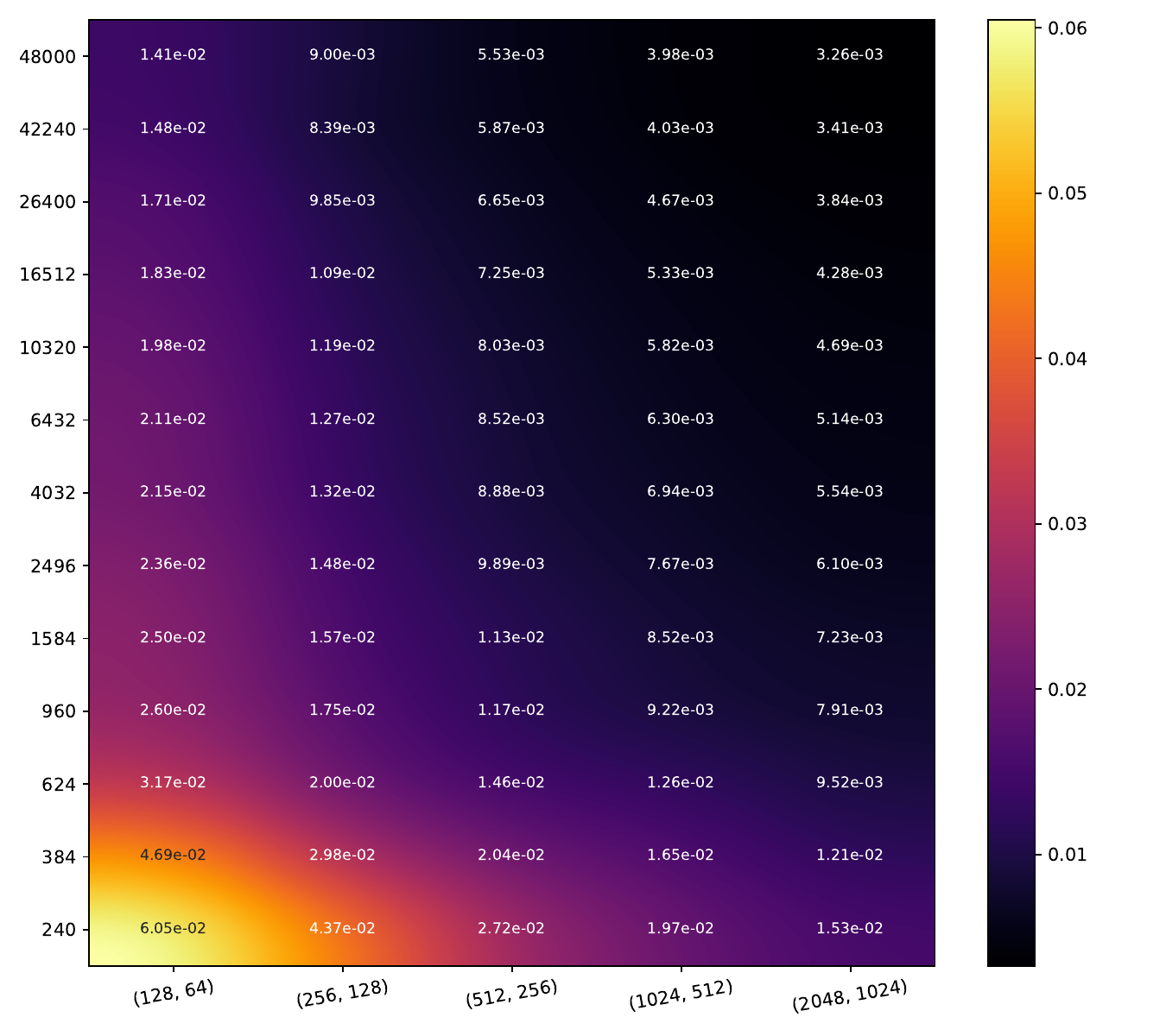}
    \includegraphics[width=\textwidth]{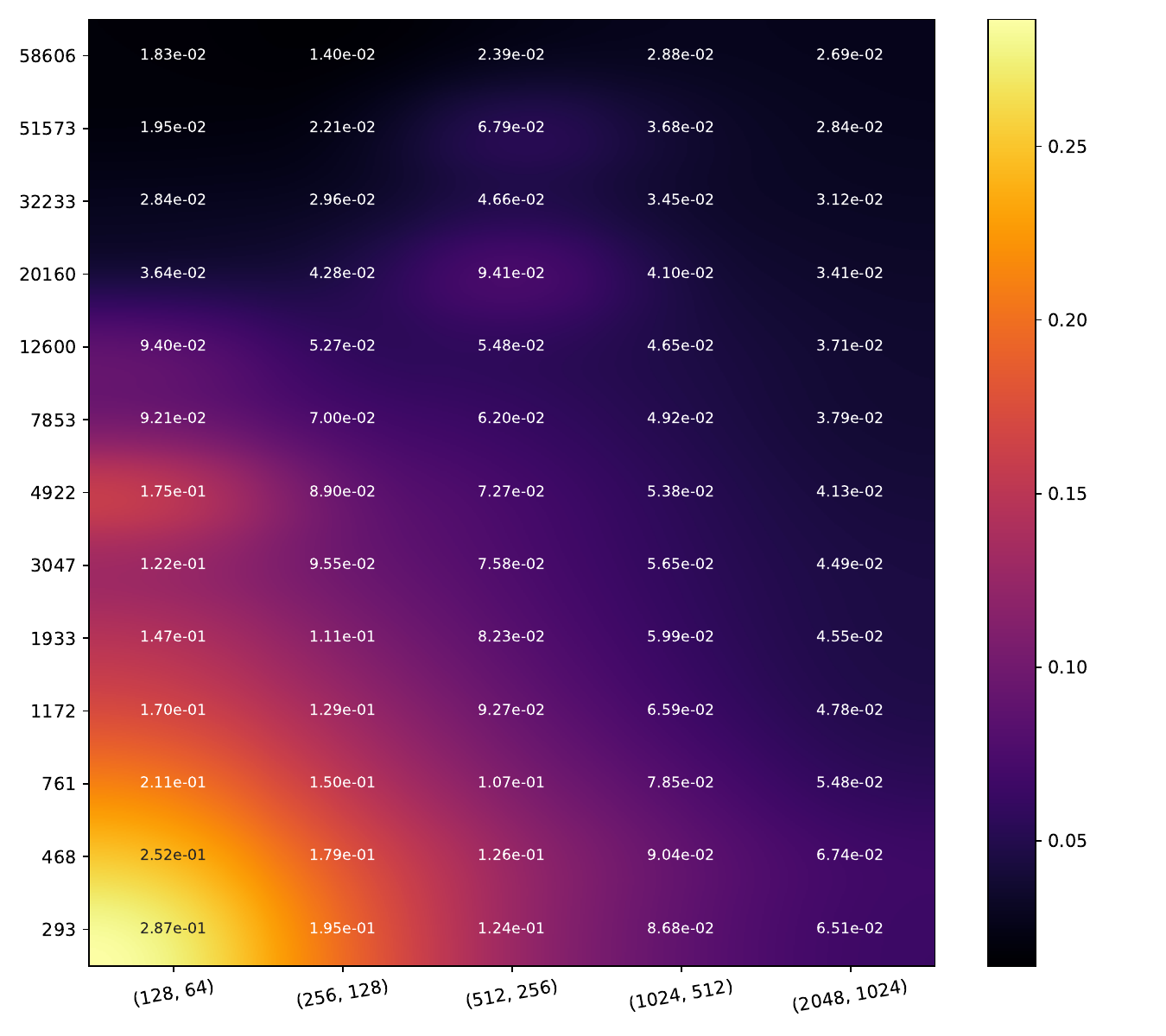}
    \includegraphics[width=\textwidth]{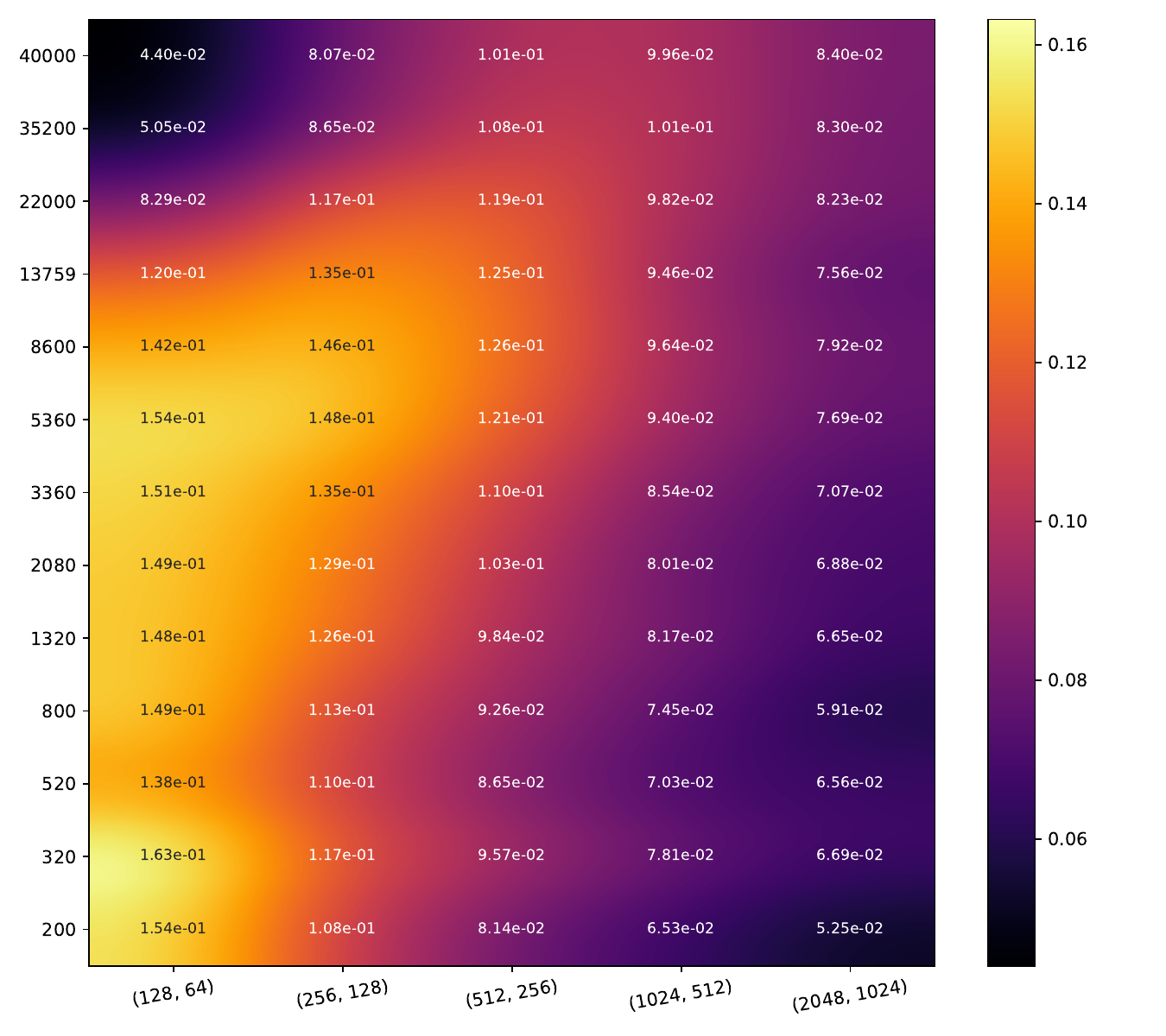}
    \caption*{DE}
  \end{subfigure}\hfill
  \begin{subfigure}[t]{0.185\textwidth}
    \includegraphics[width=\textwidth]{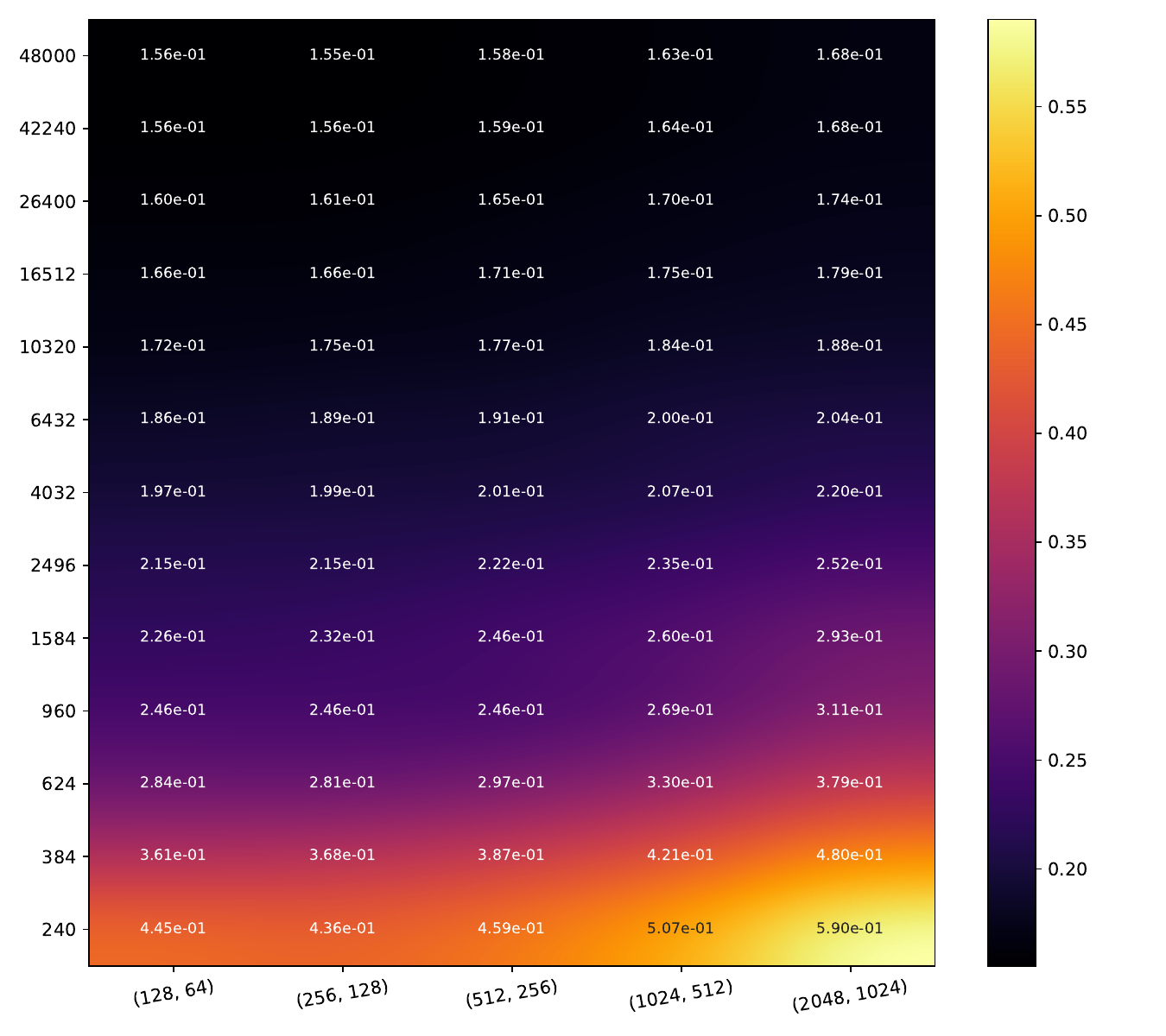}
    \includegraphics[width=\textwidth]{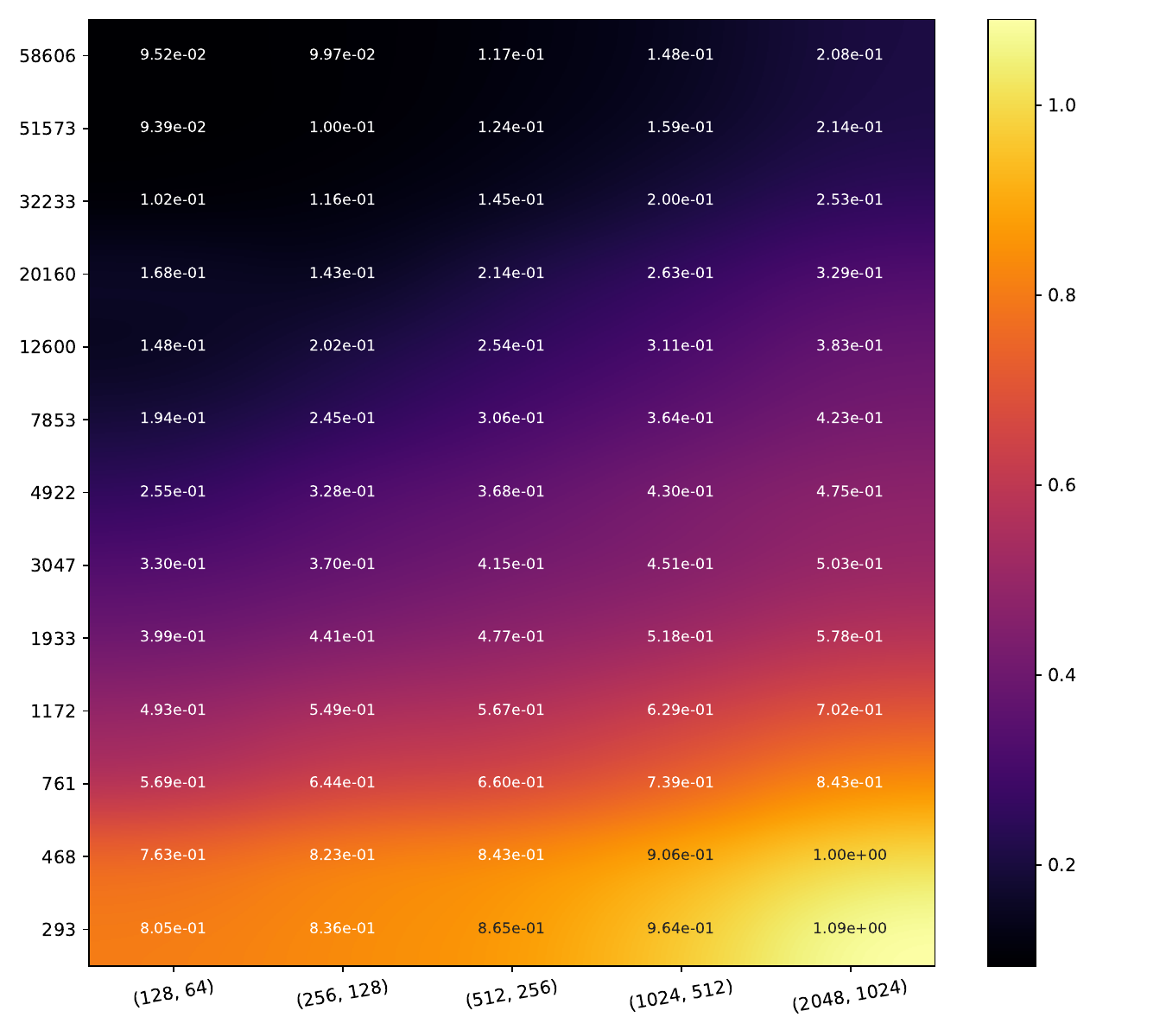}
    \includegraphics[width=\textwidth]{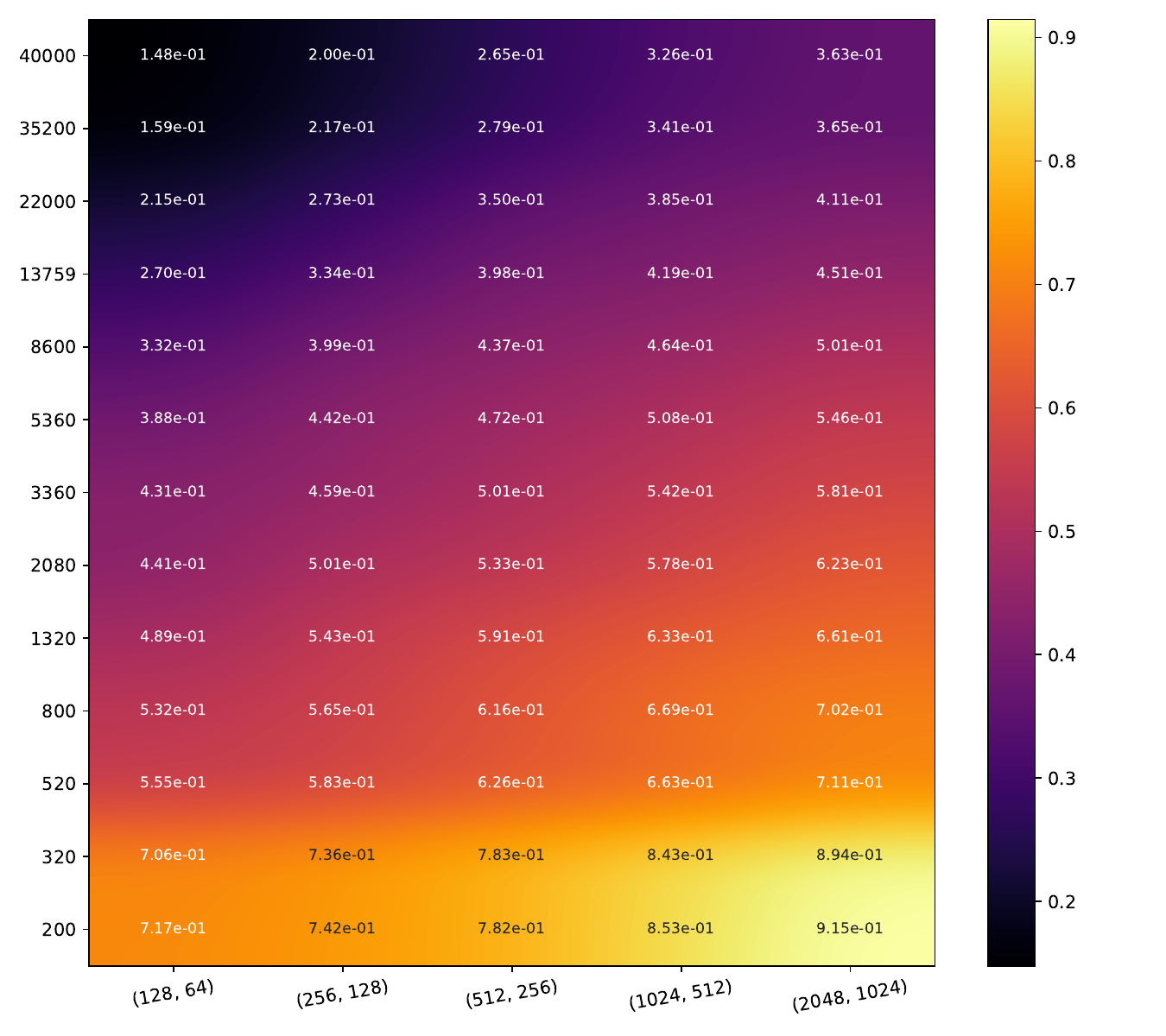}
    \caption*{Conflictual DE}
  \end{subfigure}\hfill
  \caption{Heatmaps of epistemic uncertainty (mutual information) on MNIST, SVHN, and CIFAR10 datasets; for MC-Dropout, label smoothing combined with MC-Dropout (MC-Dropout LS), EDL, Deep Ensembles (DE), and Conflictual DE. For each heatmap, the x-axis gives the sizes of the hidden layers and the y-axis gives the number of training samples. Both have logarithmic scales. Color scales are different. Epistemic uncertainty should decrease along the y-axis (data-related principle) and increase along the x-axis (model-related principle).}\label{fig:epistemic-reg}
\end{figure}

These models are tested on three datasets: MNIST, SVHN and CIFAR10. For the former, we apply the MLPs to the raw images, whereas for the latter, image embeddings are computed using a pre-trained ResNet34 model and used as inputs for the MLPs.
In all the heatmaps, we represent the sizes of the hidden layers on the x-axis, while the y-axis corresponds to the length of the training set.
Empirically, we set the value of $\lambda$ in Eq.~\ref{eq:conflictual-loss} to $0.05$.
The hyperparameters of label smoothing, confidence penalty, and EDL are taken from their respective papers~\cite{szegedy_rethinking_2015,malinin_predictive_2018,pereyra_regularizing_2017} and are set to $0.1$ (for the first two) and $0.01$ for EDL.
We refer the reader to Appendix~\ref{app:implementation-details} for more details.
\begin{table}[htbp]
  \centering
  \begin{tabular}{p{0.1\textwidth} c P{0.155\textwidth} P{0.19\textwidth} P{0.1\textwidth} P{0.1\textwidth} P{0.175\textwidth}}
    \toprule
     & & MC-Dropout & MC-Dropout LS & EDL & DE & Conflictual DE \\
    \midrule
    \multirow{3}{*}{E.U.} %
        & MNIST   & $0.067$ & $0.040$ & $0.087$ & $0.014$ & $0.245$ \\
        & SVHN    & $0.348$ & $0.099$ & $0.152$ & $0.079$ & $0.427$ \\
        & CIFAR10 & $0.258$ & $0.068$ & $0.149$ & $0.102$ & $0.507$ \\
    \midrule
    \multirow{3}{*}{Acc. $\uparrow$} %
        & MNIST   & $93.30\%$ & $\mathbf{93.90\%}$ & $91.46\%$ & $93.52\%$ & $93.86\%$ \\
        & SVHN    & $68.10\%$ & $68.58\%$ & $63.46\%$ & $69.00\%$ & $\mathbf{69.32\%}$ \\
        & CIFAR10 & $60.71\%$ & $60.56\%$ & $60.93\%$ & $61.62\%$ & $\mathbf{61.84\%}$ \\
    \midrule
    \multirow{3}{*}{\shortstack[l]{Brier \\ score} $\downarrow$} %
        & MNIST   & $0.101$ & $0.216$ & $0.164$ & $0.102$ & $\mathbf{0.094}$ \\
        & SVHN    & $0.457$ & $0.498$ & $0.491$ & $0.470$ & $\mathbf{0.419}$ \\
        & CIFAR10 & $0.576$ & $0.572$ & $0.540$ & $0.589$ & $\mathbf{0.528}$ \\
    \midrule
    \multirow{3}{*}{SCE $\downarrow$} %
        & MNIST   & $\mathbf{0.0056}$ & $0.0661$ & $0.0335$ & $0.0076$ & $0.0120$ \\
        & SVHN    & $0.0289$ & $0.0485$ & $0.0276$ & $0.0345$ & $\mathbf{0.0158}$ \\
        & CIFAR10 & $0.0377$ & $0.0435$ & $0.0235$ & $0.0444$ & $\mathbf{0.0194}$ \\
    \midrule
    \multirow{3}{*}{OOD $\uparrow$} %
        & MNIST   & $82.72\%$ & $58.70\%$ & $81.85\%$ & $\mathbf{88.33\%}$ & $86.18\%$ \\
        & SVHN    & $78.59\%$ & $\mathbf{85.09\%}$ & $60.39\%$ & $77.20\%$ & $78.22\%$ \\
        & CIFAR10 & $68.81\%$ & $63.22\%$ & $70.29\%$ & $69.09\%$ & $\mathbf{74.20\%}$ \\
    \midrule
    \multirow{3}{*}{Mis. $\uparrow$} %
        & MNIST   & $92.54\%$ & $73.78\%$ & $87.83\%$ & $\mathbf{93.44\%}$ & $93.16\%$ \\
        & SVHN    & $80.11\%$ & $64.24\%$ & $78.00\%$ & $\mathbf{82.63\%}$ & $78.21\%$ \\    
        & CIFAR10 & $73.51\%$ & $58.13\%$ & $72.36\%$ & $\mathbf{74.79\%}$ & $74.09\%$ \\    
    \bottomrule
  \end{tabular}
  \caption{Average value per heatmap for different metrics: E.U. (Epistemic uncertainty, Fig~\ref{fig:epistemic-reg}), Acc. (accuracy, Fig~\ref{fig:accuracy}), Brier score (Fig~\ref{fig:brier}), SCE (Fig~\ref{fig:sce}), OOD detection (Fig~\ref{fig:auc-ood}) and Mis. (misclassification, Fig~\ref{fig:auc-mis}).
  \emph{Warning:} if the values of accuracy seem low, it is because they are average values computed over the entire accuracy heatmap, including very small datasets and models.}
  \label{tab:mean-heatmaps}
\end{table}
\begin{table}[htbp]
  \centering
  \begin{tabular}{l c P{0.155\textwidth} P{0.19\textwidth} P{0.083\textwidth} P{0.084\textwidth} P{0.175\textwidth}}
    \toprule
     & & MC-Dropout & MC-Dropout LS & EDL & DE & Conflictual DE \\
    \midrule
    \multirow{3}{*}{1\textsuperscript{st} principle} & MNIST & $\mathbf{100\%}$ & $78.33\%$ & $91.67\%$ & $\mathbf{100\%}$ & $\mathbf{100\%}$ \\
                                                     & SVHN & $91.67\%$ & $55\%$ & $88.33\%$ & $88.33\%$ & $\mathbf{96.67\%}$ \\
                                                     & CIFAR10 & $76.67\%$ & $38.33\%$ & $91.67\%$ & $41.67\%$ & $\mathbf{98.33\%}$ \\    
    \midrule
    \multirow{3}{*}{2\textsuperscript{nd} principle} & MNIST & $0\%$ & $0\%$ & $5.77\%$ & $0\%$ & $\mathbf{96.15\%}$ \\
                                                     & SVHN & $0\%$ & $0\%$ & $15.38\%$ & $17.30\%$ & $\mathbf{98.08\%}$ \\
                                                     & CIFAR10 & $11.54\%$ & $0\%$ & $5.77\%$ & $15.38\%$ & $\mathbf{100\%}$ \\
    \bottomrule
  \end{tabular}
  \caption{Aggregation of the heatmaps in Fig.~\ref{fig:epistemic-reg} showing to which degree the two principles are satisfied. The percentages for the first (resp. second) principle represent the frequencies with which jumping to the next larger dataset (resp. to the next smaller model) results in a decrease of mutual information.}
  \label{tab:principles-compliance}
\end{table}

Figure~\ref{fig:epistemic-reg} represents by heatmaps the average of mutual information estimated on the test set as a function of data and model sizes, while table~\ref{tab:principles-compliance} summarizes these heatmaps, indicating the frequencies with which methods comply with the two principles.
As aforementioned, the mutual information validates the two principles in expectation. Therefore, we report the average of the test set as a Monte Carlo estimate of this expectation.
We see that all methods but \emph{Conflictual DE} are in complete contradiction with the \emph{model-related principle}: the larger the model, the smaller the epistemic uncertainty when precisely the opposite is expected.
The \emph{data-related principle} seems better respected, but not perfectly, especially on CIFAR10. In line with comments of Sect.~\ref{sec:conflictual-loss}, \emph{MC-Dropout LS} is the method that most often breaks the first principle, even if none of these methods achieves a perfect score.
Although the exact cause of these phenomena remains unknown, the partial violation of the first principle allows us to circumscribe the origin of the problem. According to property~\ref{th:first-principle-in-expectation}, we know mutual information necessarily satisfies the data-related property in expectation when estimated on the exact posterior distribution. Since the violation of this principle is not related to the metric, the only possible cause is a poor quality of the posterior approximation, due to \emph{procedural variability}, \emph{i.e.}, convergence randomness during stochastic gradient descent.

In comparison, \emph{Conflictual DE} obtains almost perfect scores for both principles, proving the conflictual loss's regularizing effect.
While the method was specifically designed to satisfy the first principle, compliance with the second principle was not expected at such level. It seems that the expressive power of networks serves as an amplifier for the discord sown between classifiers by the conflictual loss.

\begin{figure}[ht]
  \centering
  \begin{subfigure}[t]{\dimexpr0.185\textwidth+20pt\relax}
    \makebox[20pt]{\raisebox{25pt}{\rotatebox[origin=c]{90}{\scriptsize MNIST}}}%
    \includegraphics[width=\dimexpr\linewidth-20pt\relax]{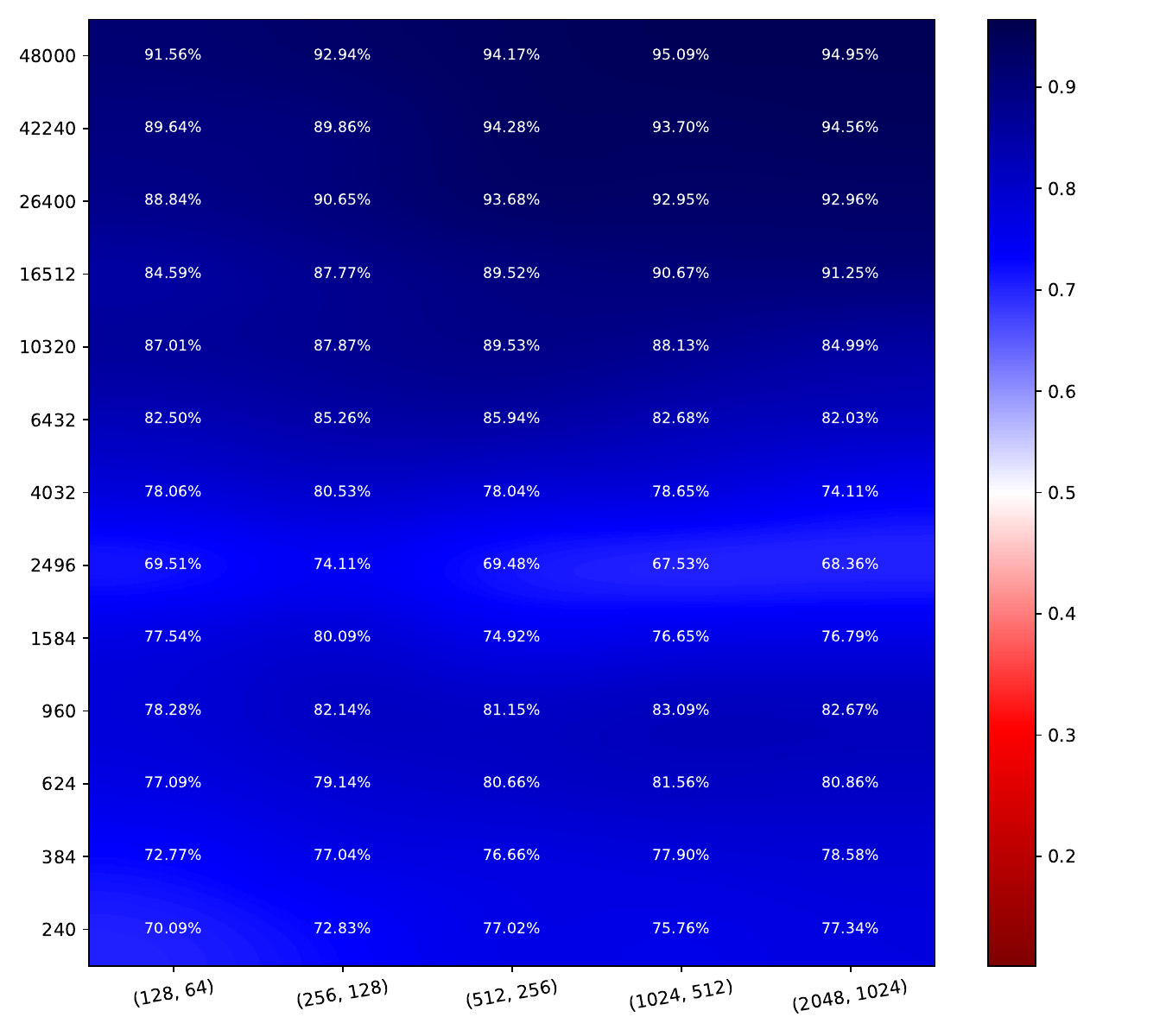}
    \makebox[20pt]{\raisebox{25pt}{\rotatebox[origin=c]{90}{\scriptsize SVHN}}}%
    \includegraphics[width=\dimexpr\linewidth-20pt\relax]{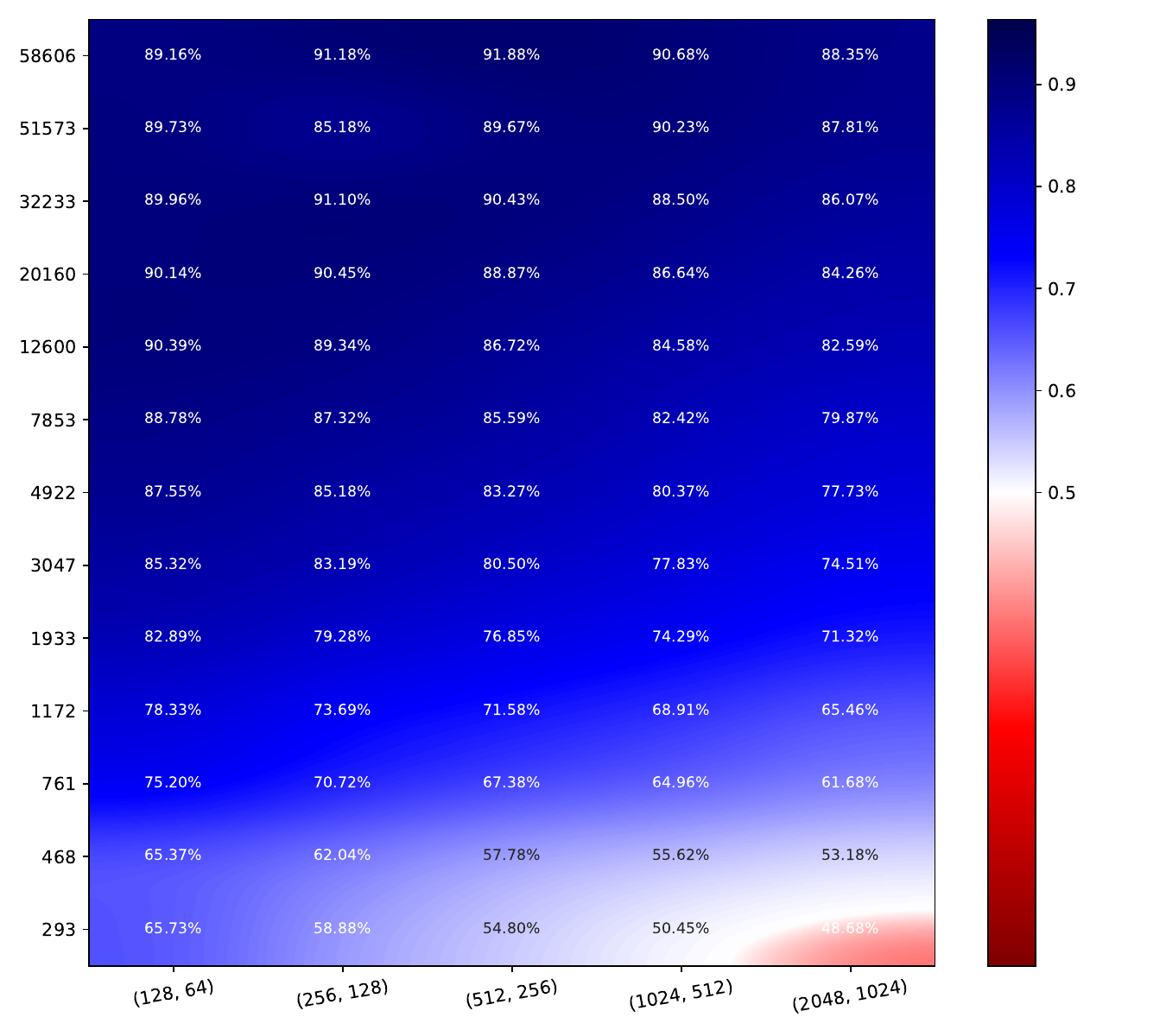}    \makebox[20pt]{\raisebox{25pt}{\rotatebox[origin=c]{90}{\scriptsize CIFAR10}}}%
    \includegraphics[width=\dimexpr\linewidth-20pt\relax]{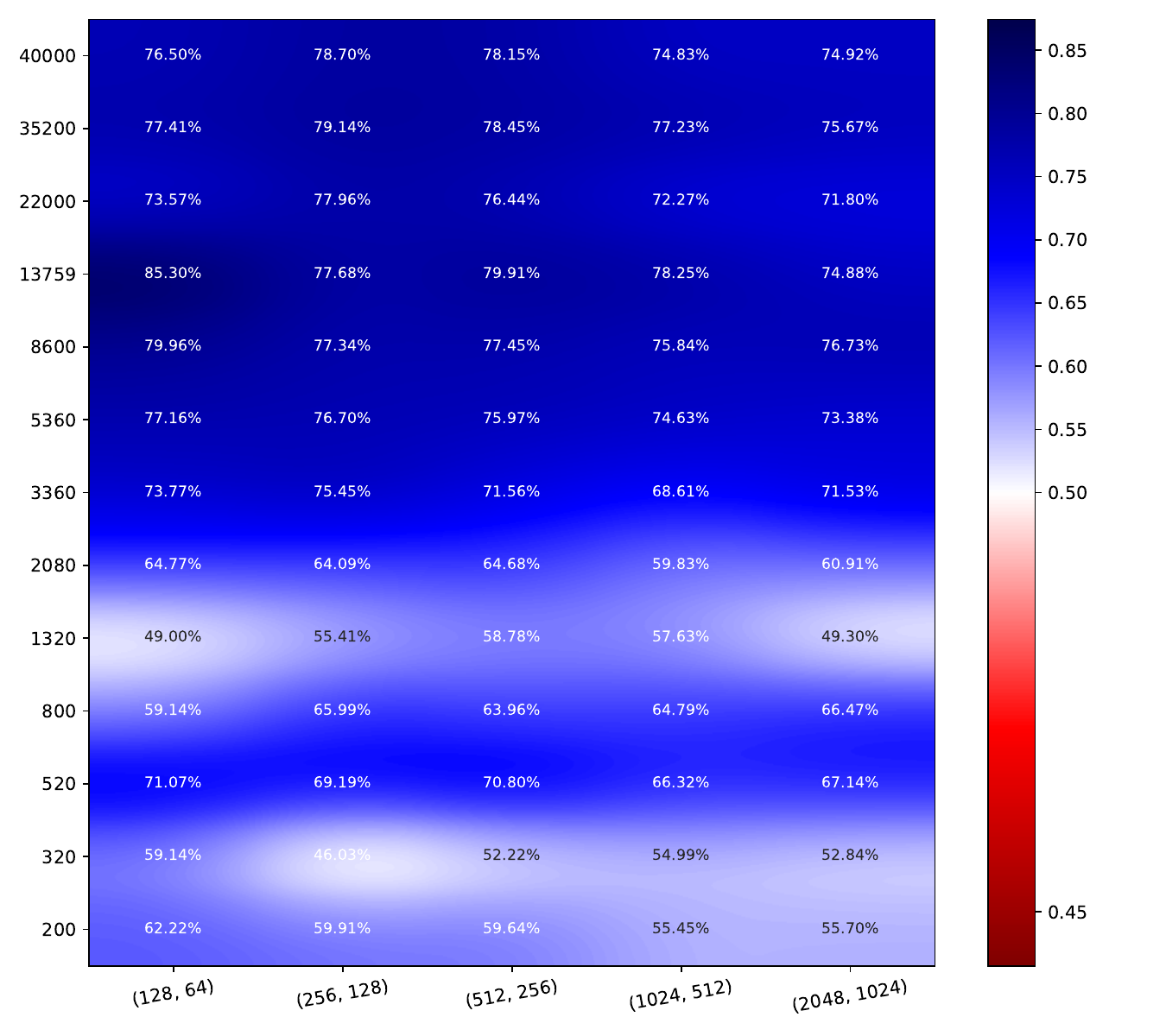}
    \caption*{\qquad MC-Dropout}
  \end{subfigure}\hfill
  \begin{subfigure}[t]{0.185\textwidth}
    \includegraphics[width=\textwidth]{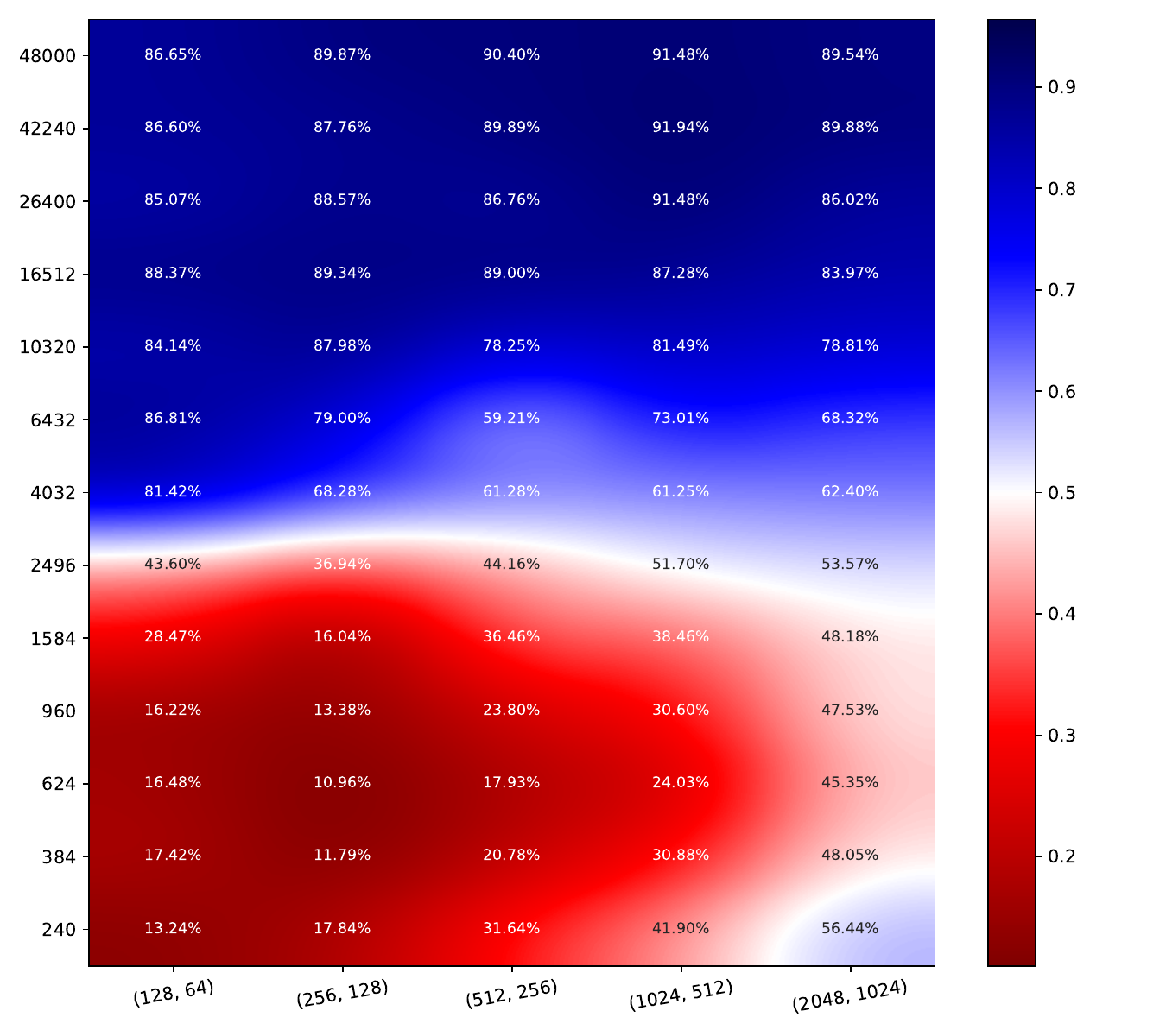}
    \includegraphics[width=\textwidth]{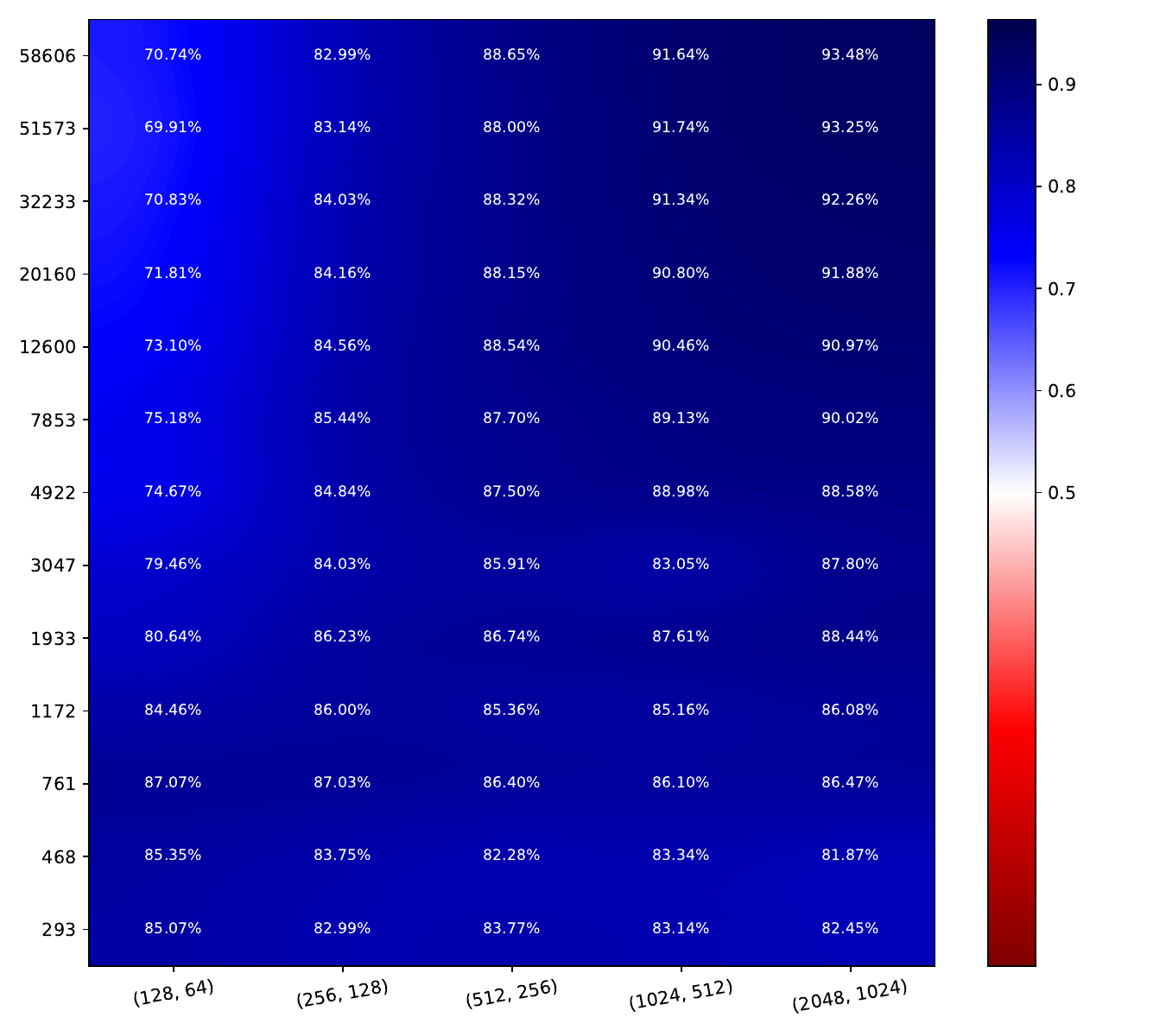}
    \includegraphics[width=\textwidth]{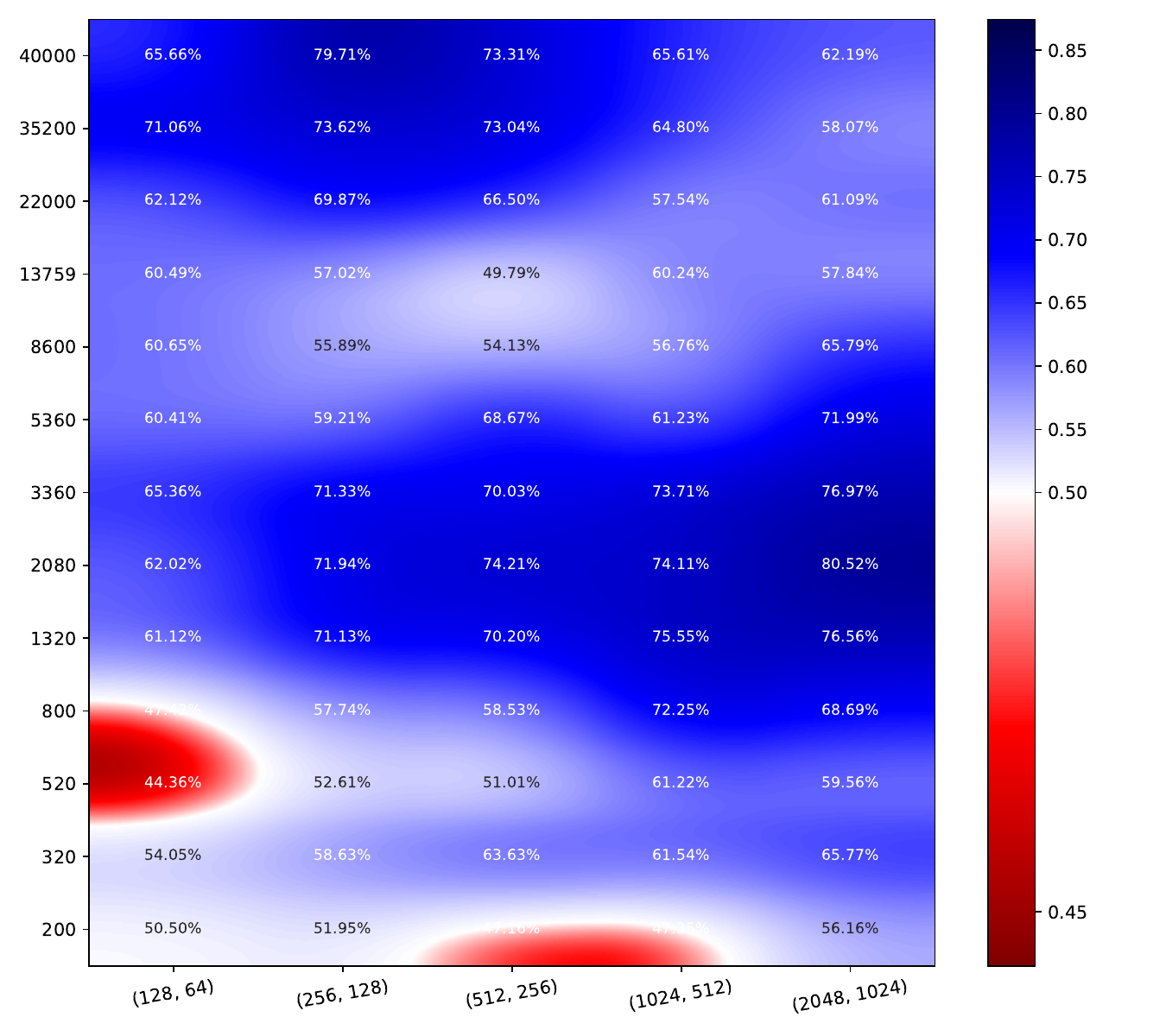}
    \caption*{MC-Dropout LS}
  \end{subfigure}\hfill
  \begin{subfigure}[t]{0.185\textwidth}
    \includegraphics[width=\textwidth]{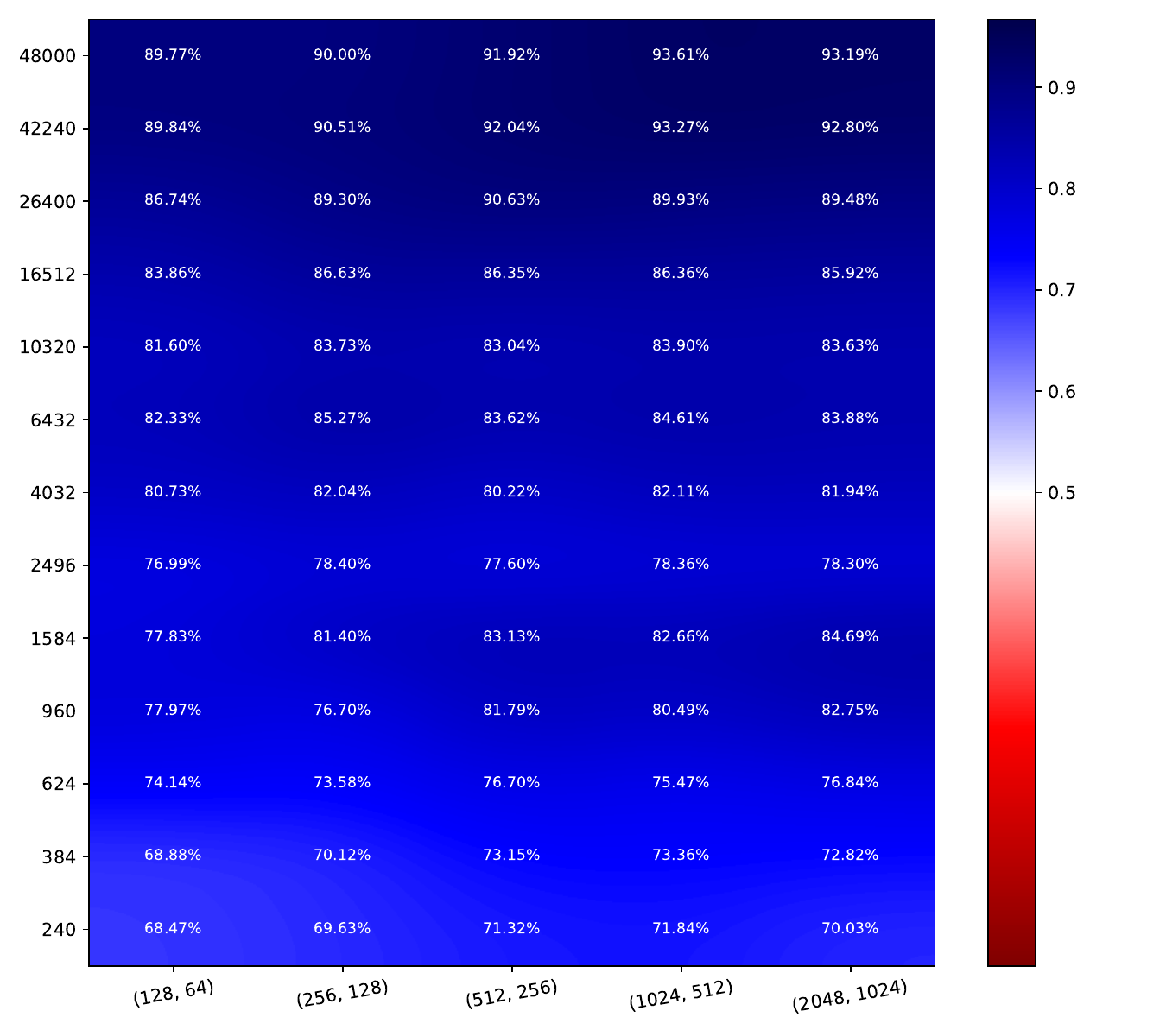}
    \includegraphics[width=\textwidth]{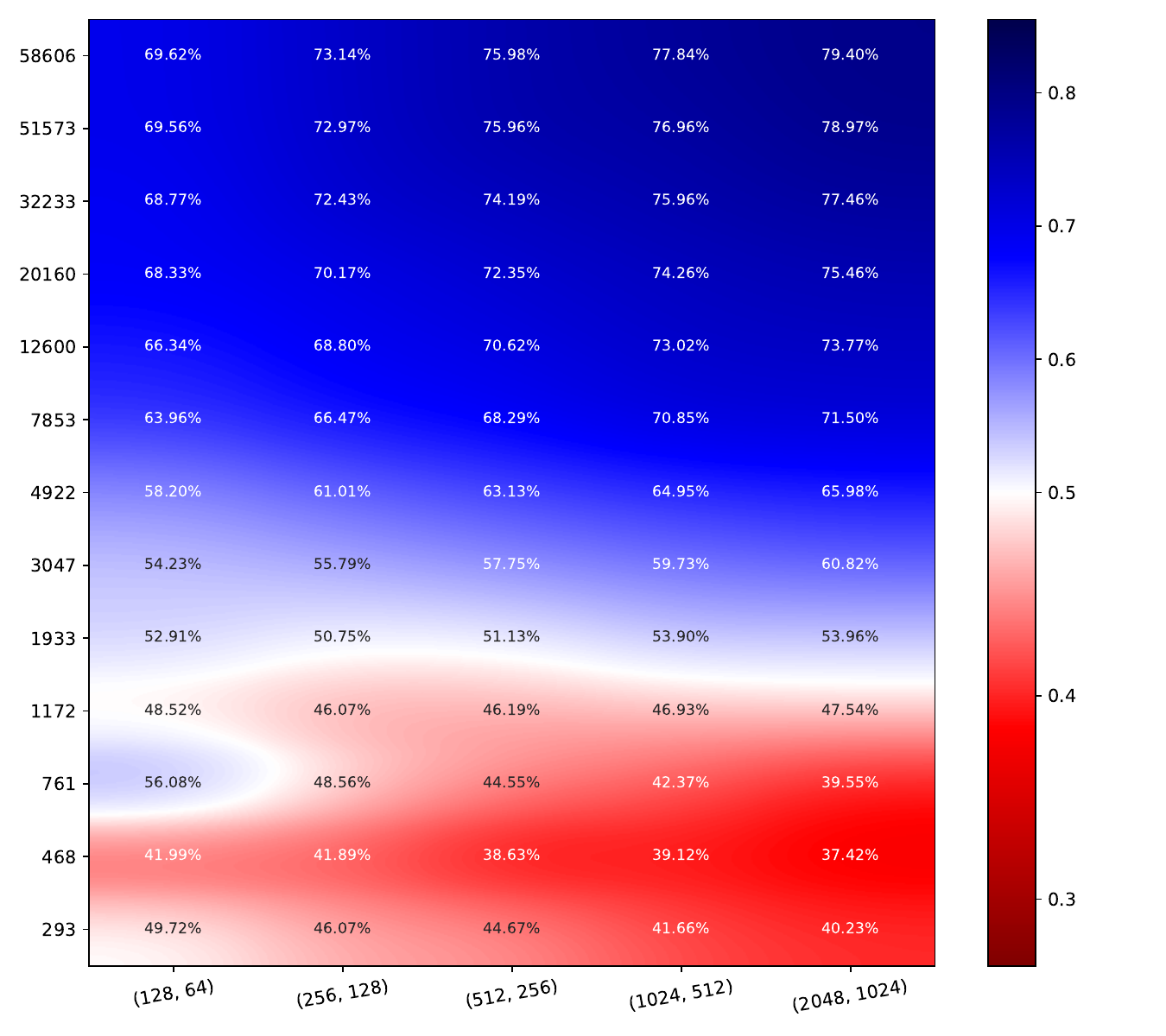}
    \includegraphics[width=\textwidth]{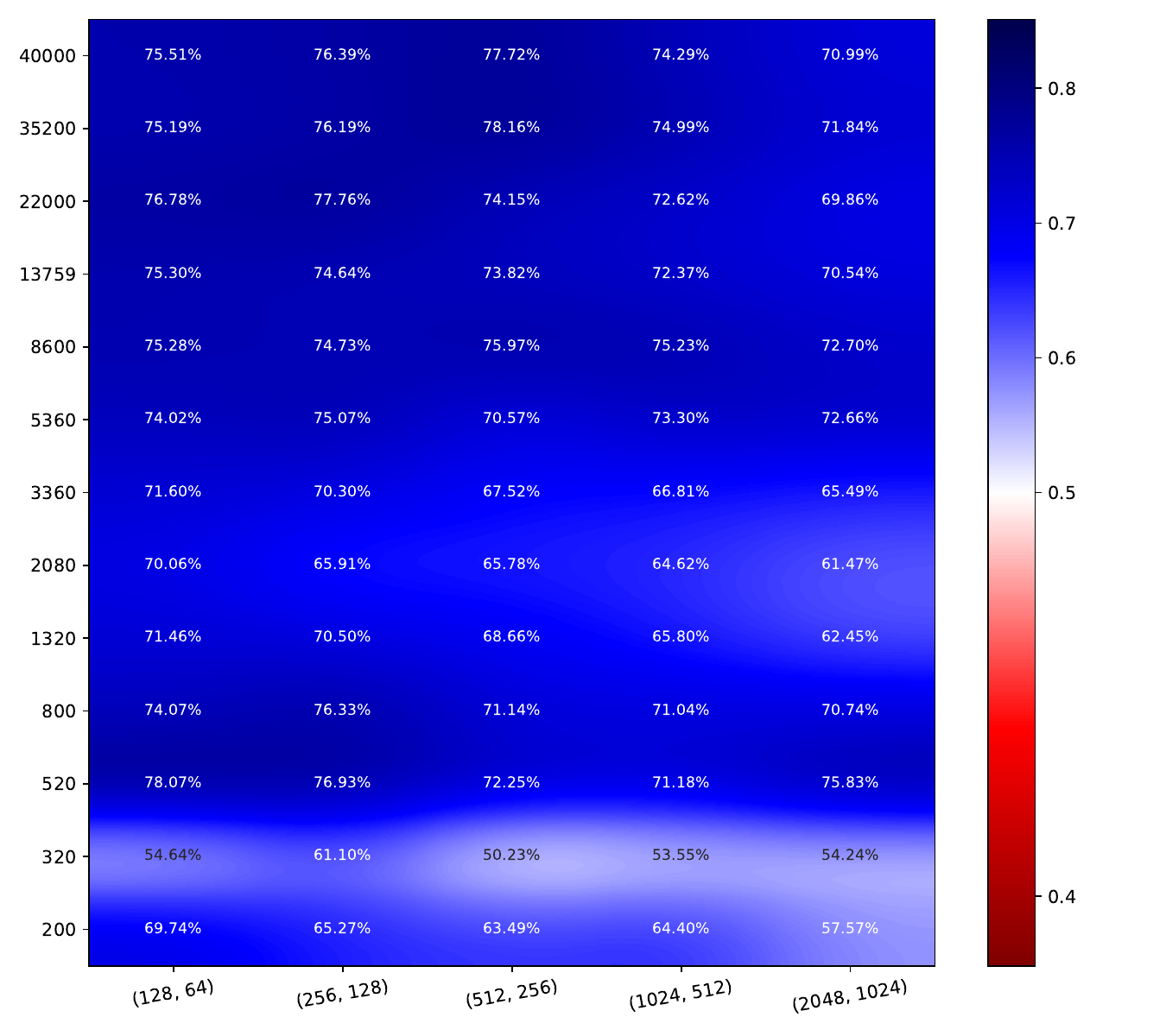}
    \caption*{EDL}
  \end{subfigure}\hfill
  \begin{subfigure}[t]{0.185\textwidth}
    \includegraphics[width=\textwidth]{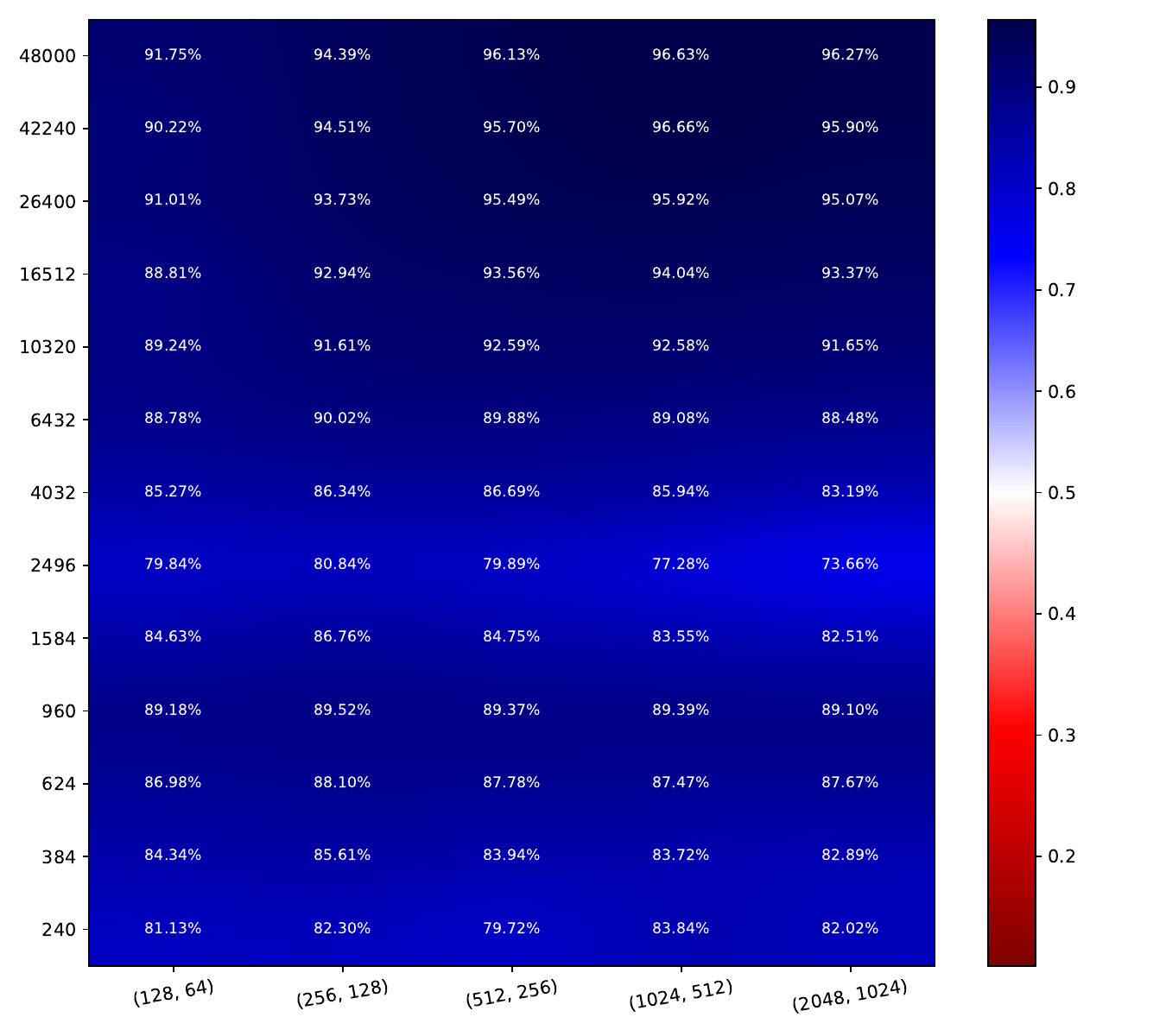}
    \includegraphics[width=\textwidth]{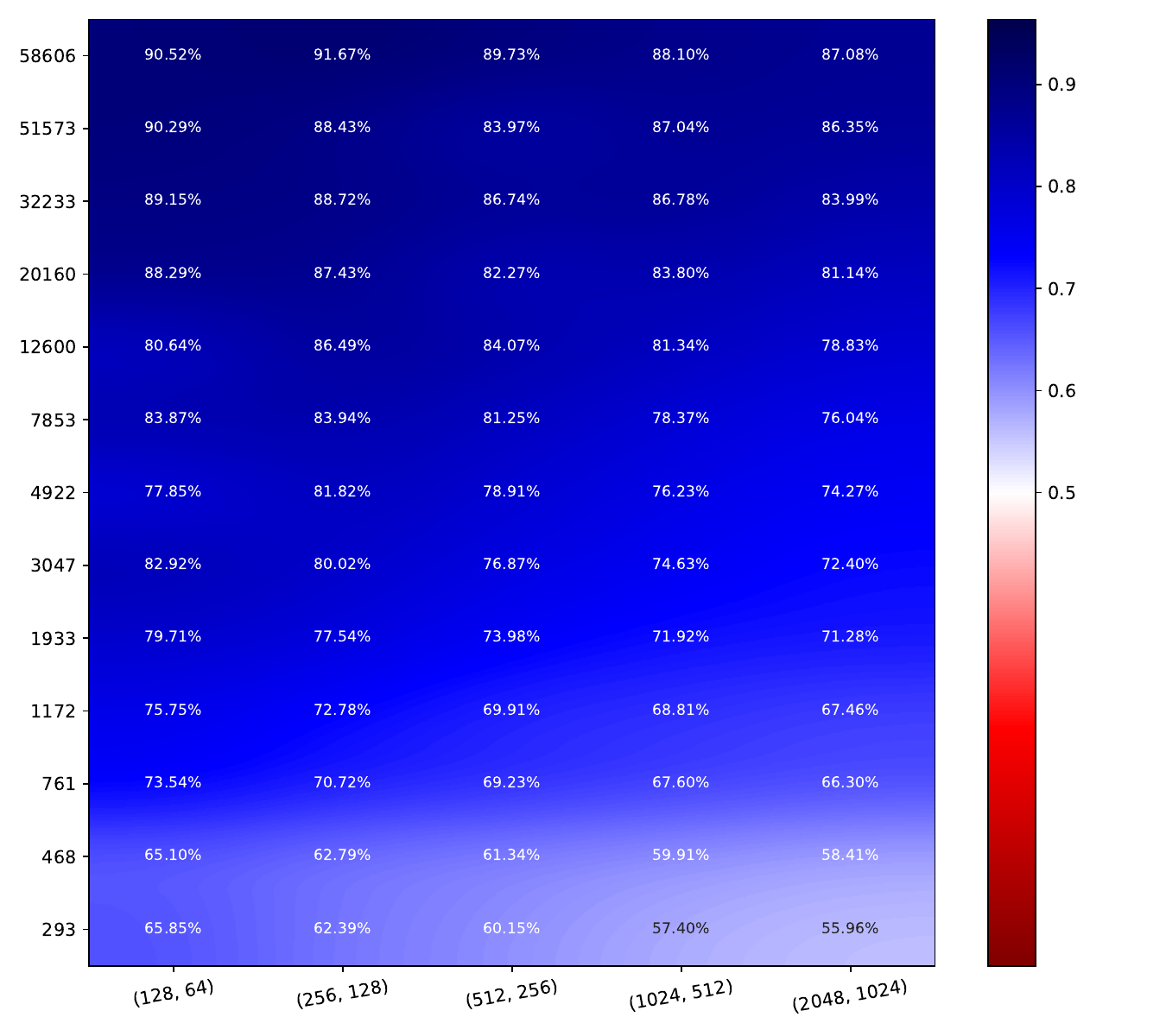}
    \includegraphics[width=\textwidth]{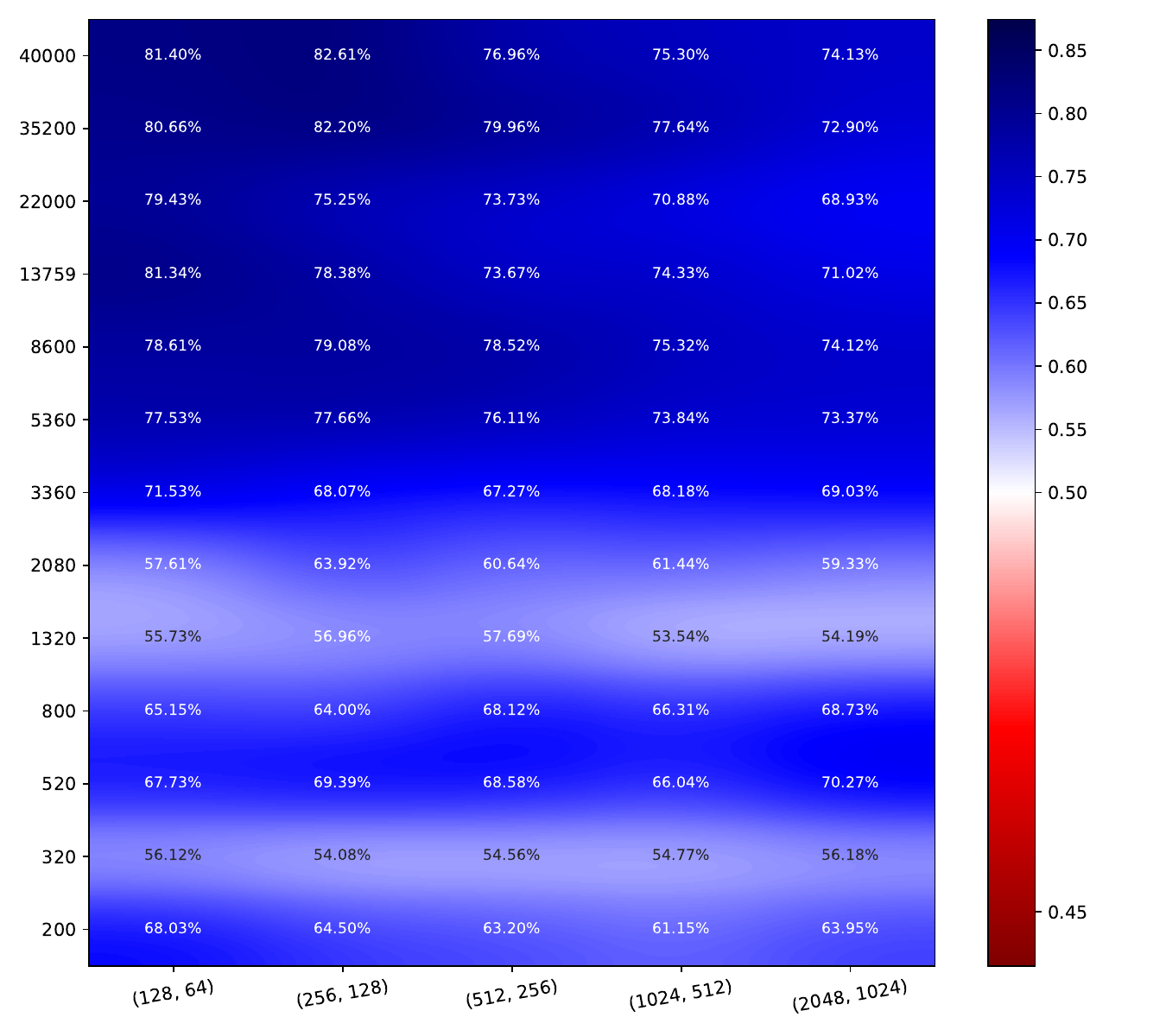}
    \caption*{DE}
  \end{subfigure}\hfill
  \begin{subfigure}[t]{0.185\textwidth}
    \includegraphics[width=\textwidth]{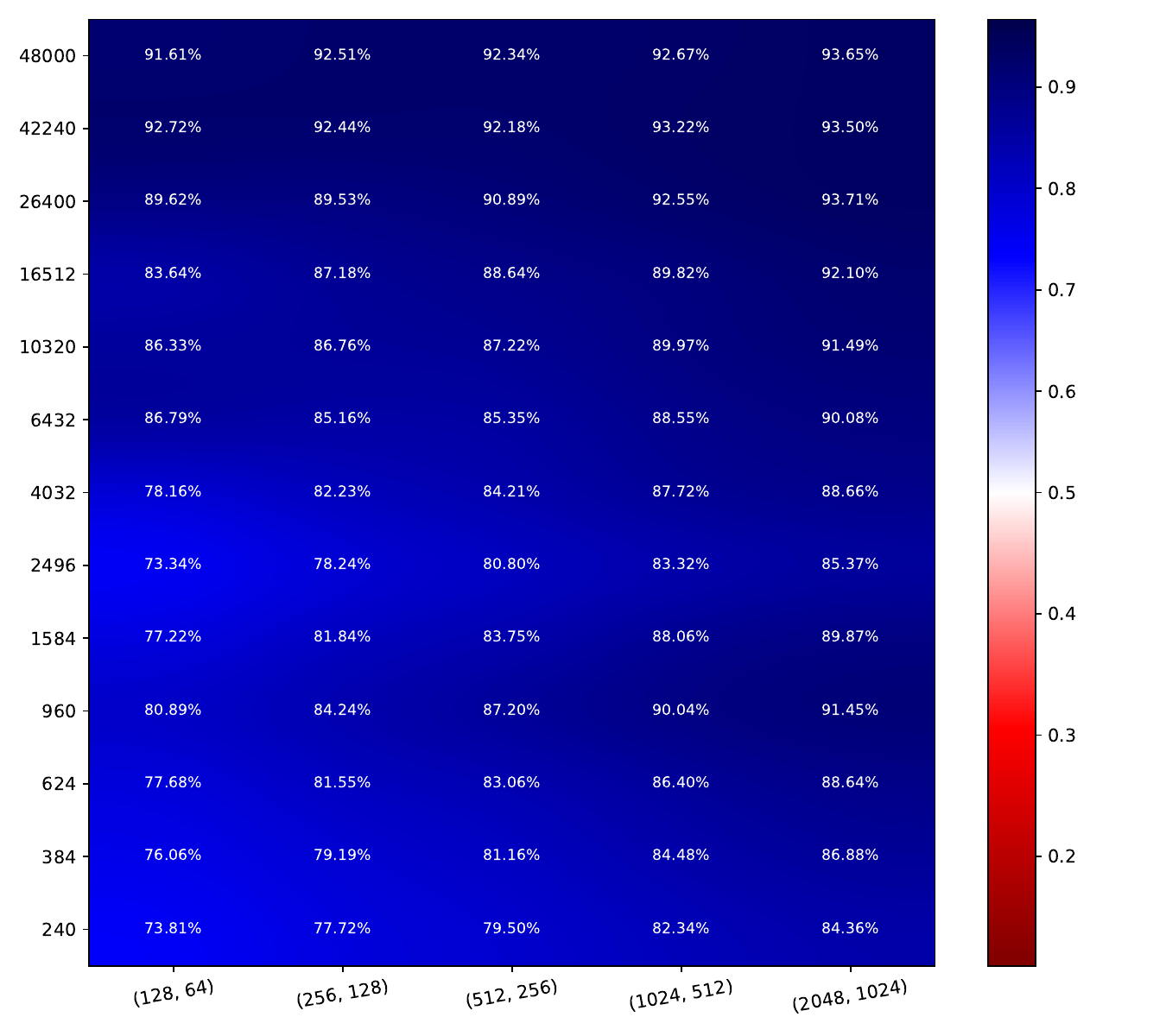}
    \includegraphics[width=\textwidth]{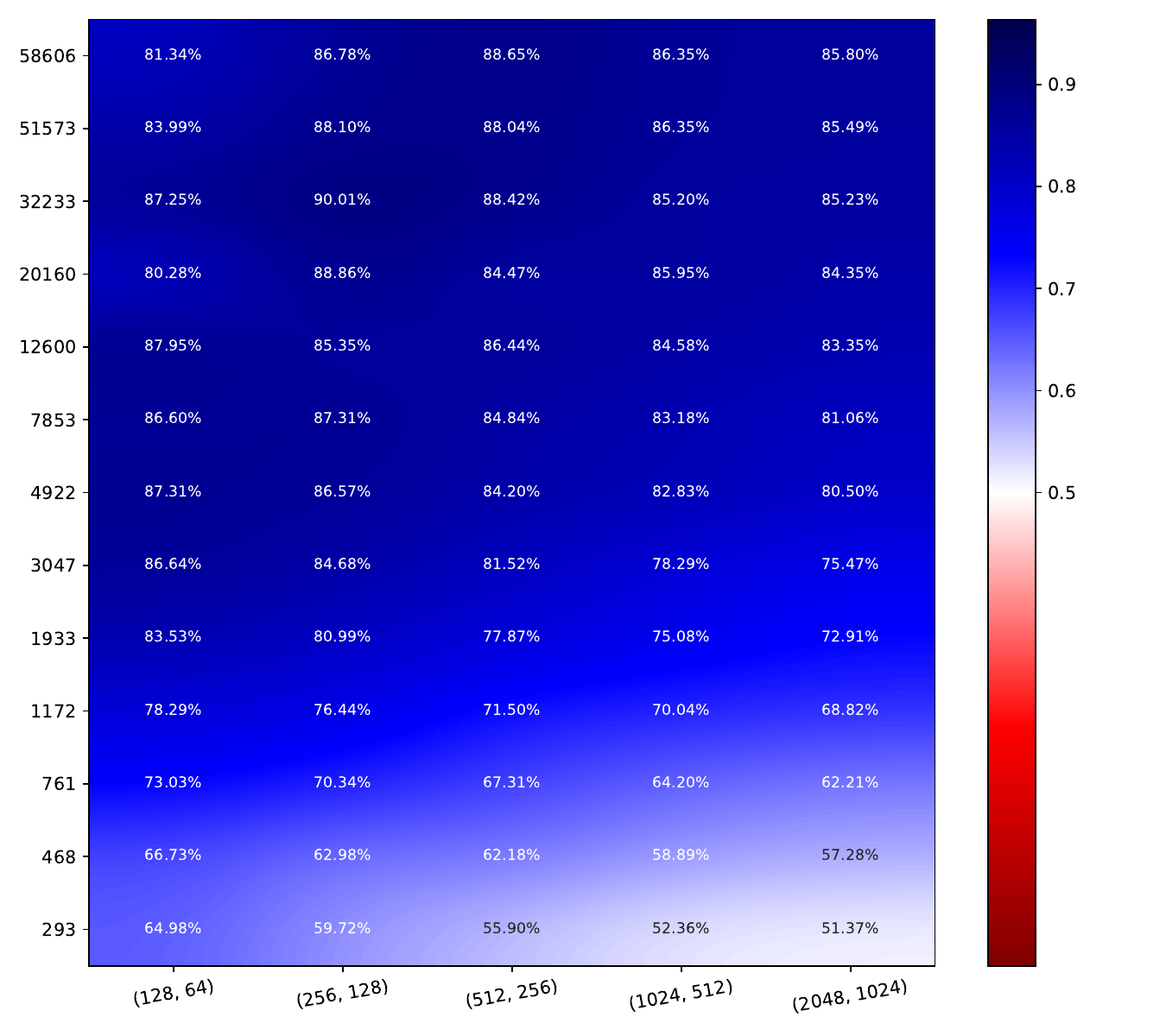}
    \includegraphics[width=\textwidth]{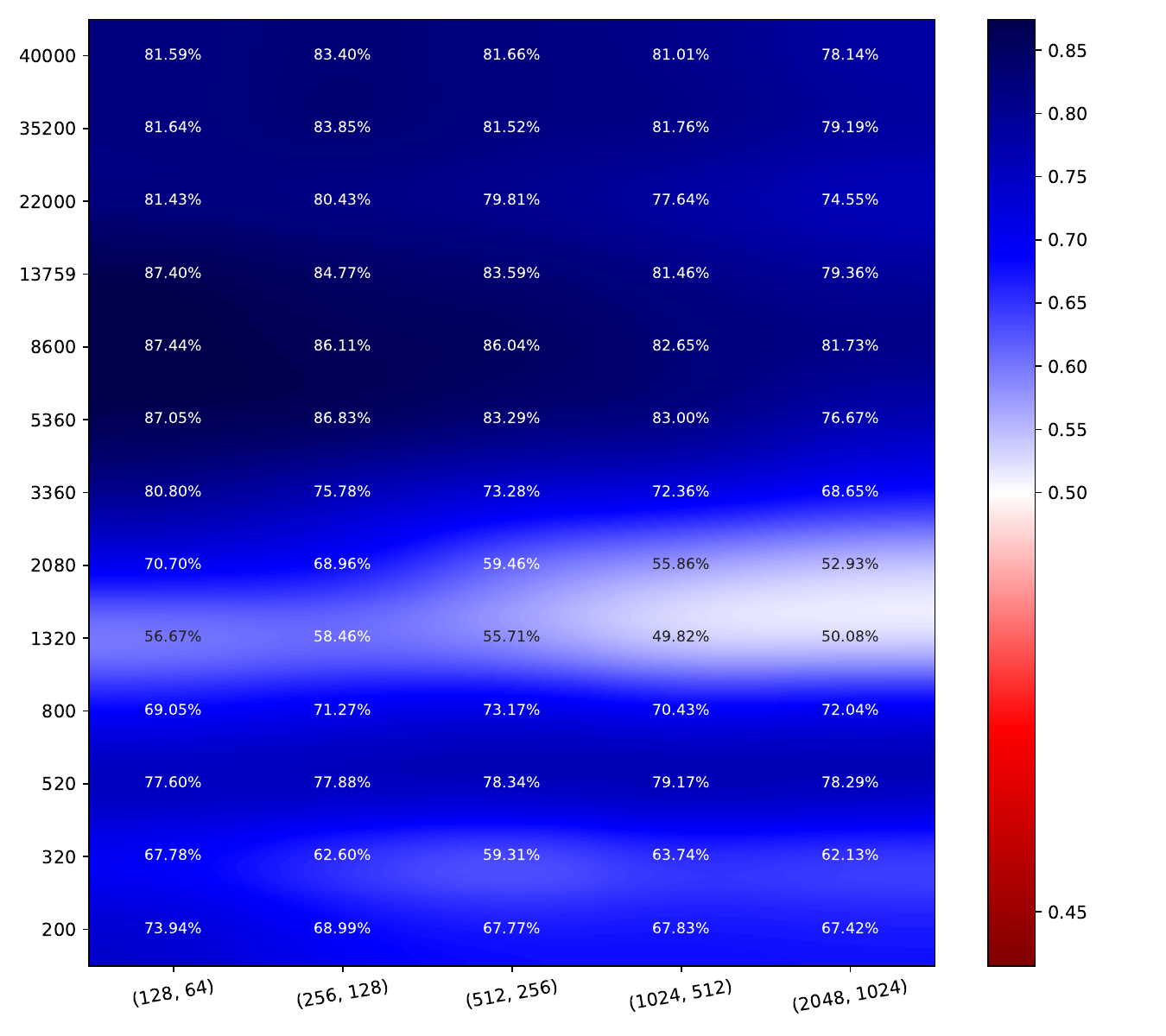}
    \caption*{Conflictual DE}
  \end{subfigure}\hfill
  \caption{Heatmaps of AUROC for OOD detection based on epistemic uncertainty. Same representation as Fig.~\ref{fig:epistemic-reg} but with the same color scale per dataset.}
  \label{fig:auc-ood}
\end{figure}

As shown in Appendix~\ref{app:additional-results} and summarized in Tab.~\ref{tab:mean-heatmaps}, the calibrated epistemic uncertainty resulting from Conflictual DE does not come at the expense of the performance of the model. In fact, Conflictual DE yields comparable or even superior performance compared to DE, as it performs the best overall on CIFAR10 in terms of accuracy (Fig.~\ref{fig:accuracy}) and has comparable results to the best method on MNIST. 
When taking into account the accuracy of the output probabilities with the Brier score, Conflictual DE appears to be the best model on both datasets.

We also checked the quality of epistemic uncertainty to discriminate between OOD and in-distribution (ID) examples, since epistemic uncertainty is expected to be higher for OOD samples than for the ID samples.
We use FashionMNIST, SVHN, and CIFAR10 as OOD datasets for the models trained on MNIST, CIFAR10, and SVHN respectively.
As shown in Fig.~\ref{fig:auc-ood}, Conflictual DE yields high AUROC overall on CIFAR10 and in the low-data regime on MNIST.
MC-Dropout LS performs the worst on MNIST and CIFAR10 at the task of OOD detection as it sometimes results in lower epistemic uncertainty in OOD samples than ID samples.
To some extent, we notice the same results in the task of misclassification detection which consists in distinguishing between the correctly classified samples and the misclassified samples based on epistemic uncertainty (see Appendix~\ref{app:additional-results}: Fig.~\ref{fig:auc-mis}).

Finally, we take a look at the calibration of the models with different methods. As shown in Fig.~\ref{fig:sce}, the SCE with Conflictual DE is the most consistent and the lowest, yielding calibrated models even with small training sets.

\begin{figure}[htbp]
  \centering
  \begin{subfigure}[t]{\dimexpr0.185\textwidth+20pt\relax}
    \makebox[20pt]{\raisebox{25pt}{\rotatebox[origin=c]{90}{\scriptsize MNIST}}}%
    \includegraphics[width=\dimexpr\linewidth-20pt\relax]{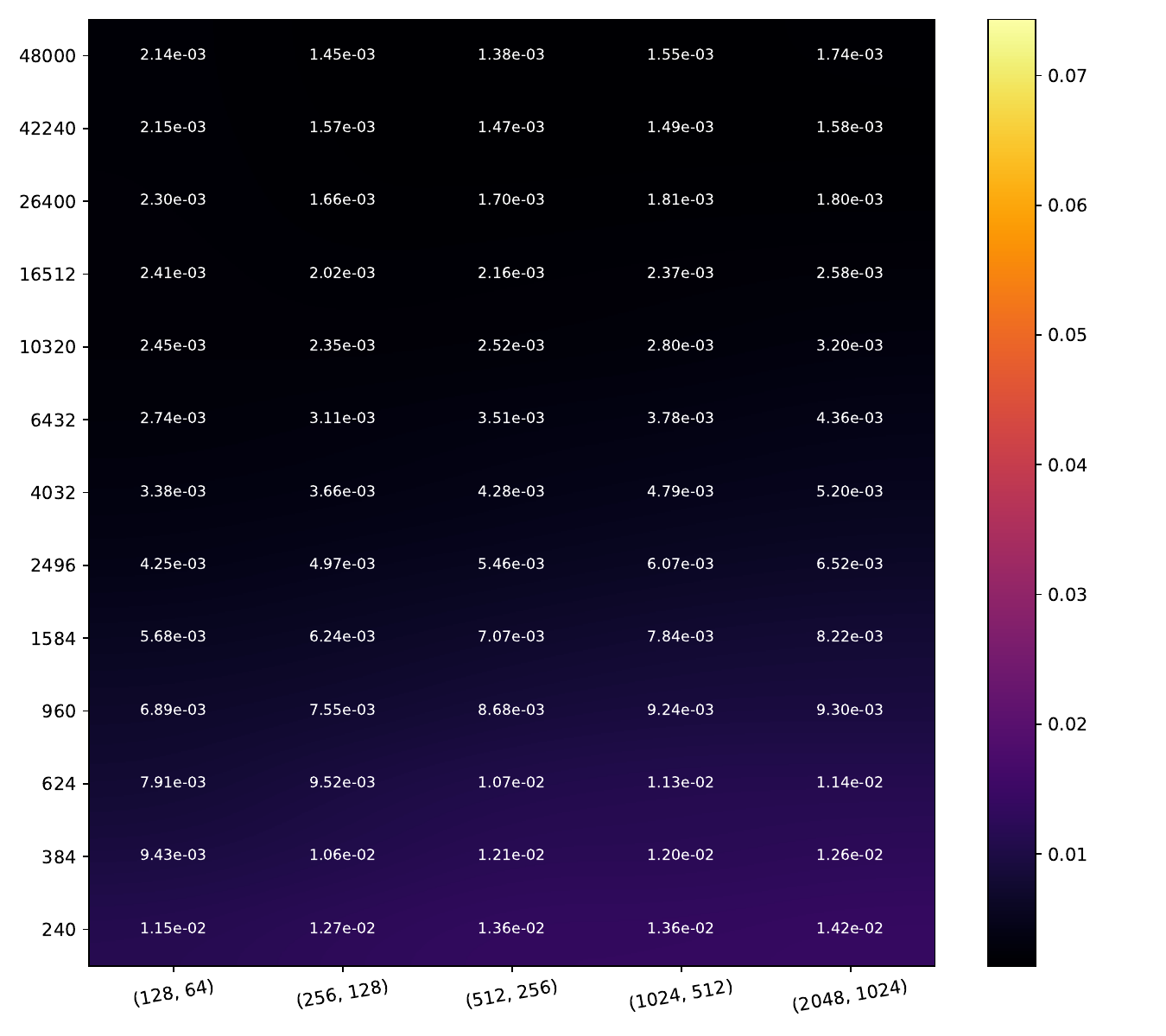}
    \makebox[20pt]{\raisebox{25pt}{\rotatebox[origin=c]{90}{\scriptsize SVHN}}}%
    \includegraphics[width=\dimexpr\linewidth-20pt\relax]{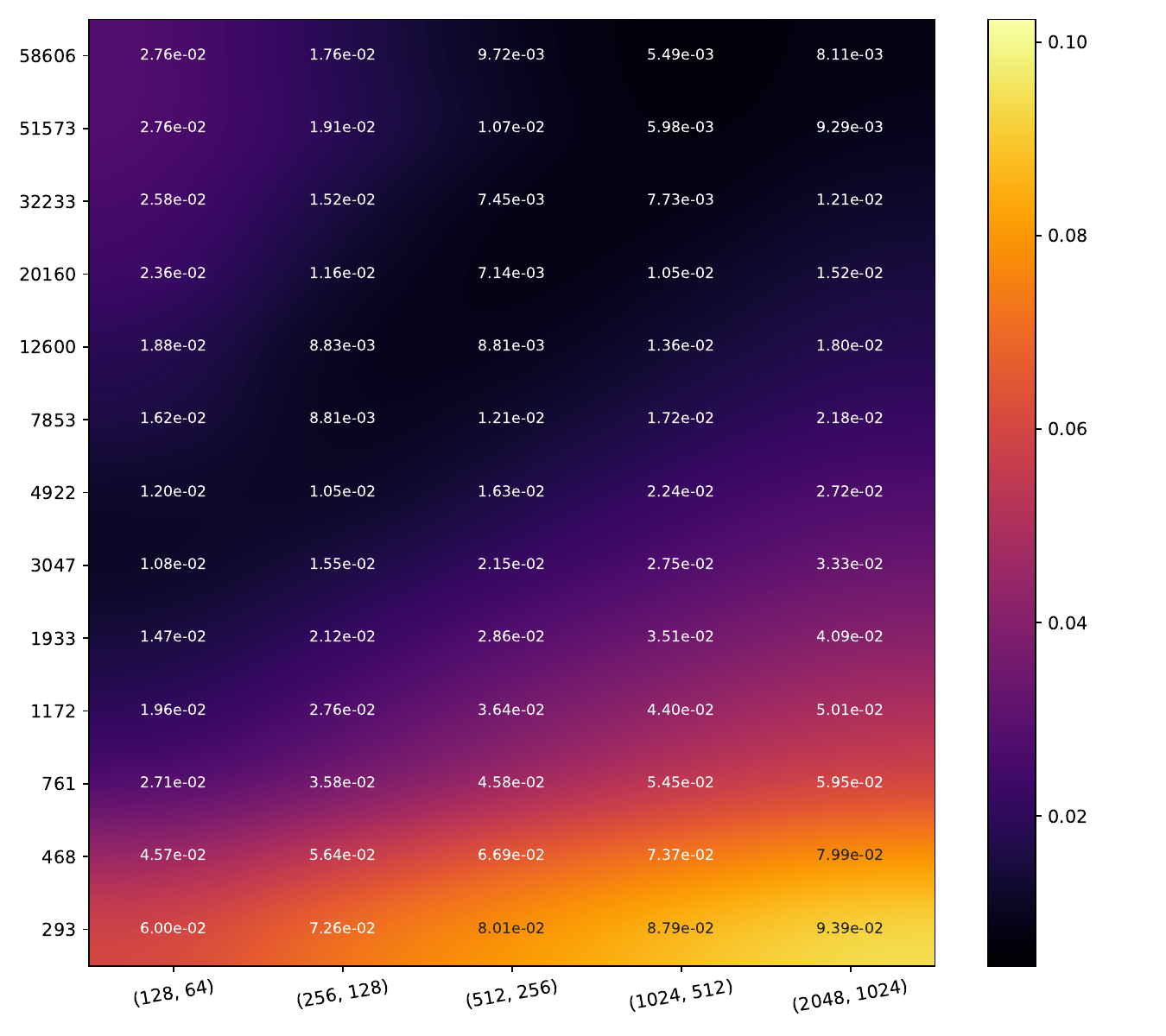}
    \makebox[20pt]{\raisebox{25pt}{\rotatebox[origin=c]{90}{\scriptsize CIFAR10}}}%
    \includegraphics[width=\dimexpr\linewidth-20pt\relax]{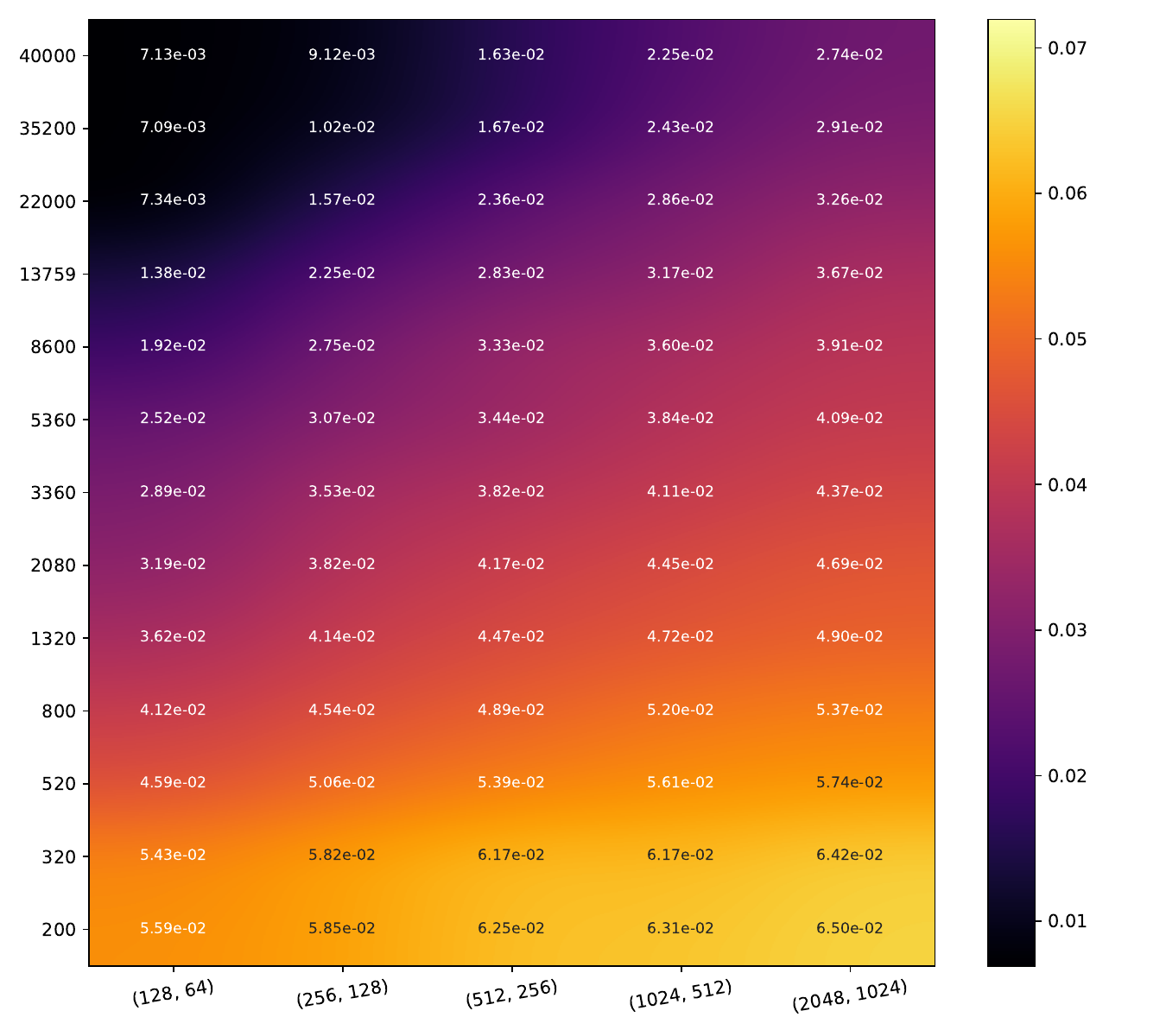}
    \caption*{\qquad MC-Dropout}
  \end{subfigure}\hfill
  \begin{subfigure}[t]{0.185\textwidth}
    \includegraphics[width=\textwidth]{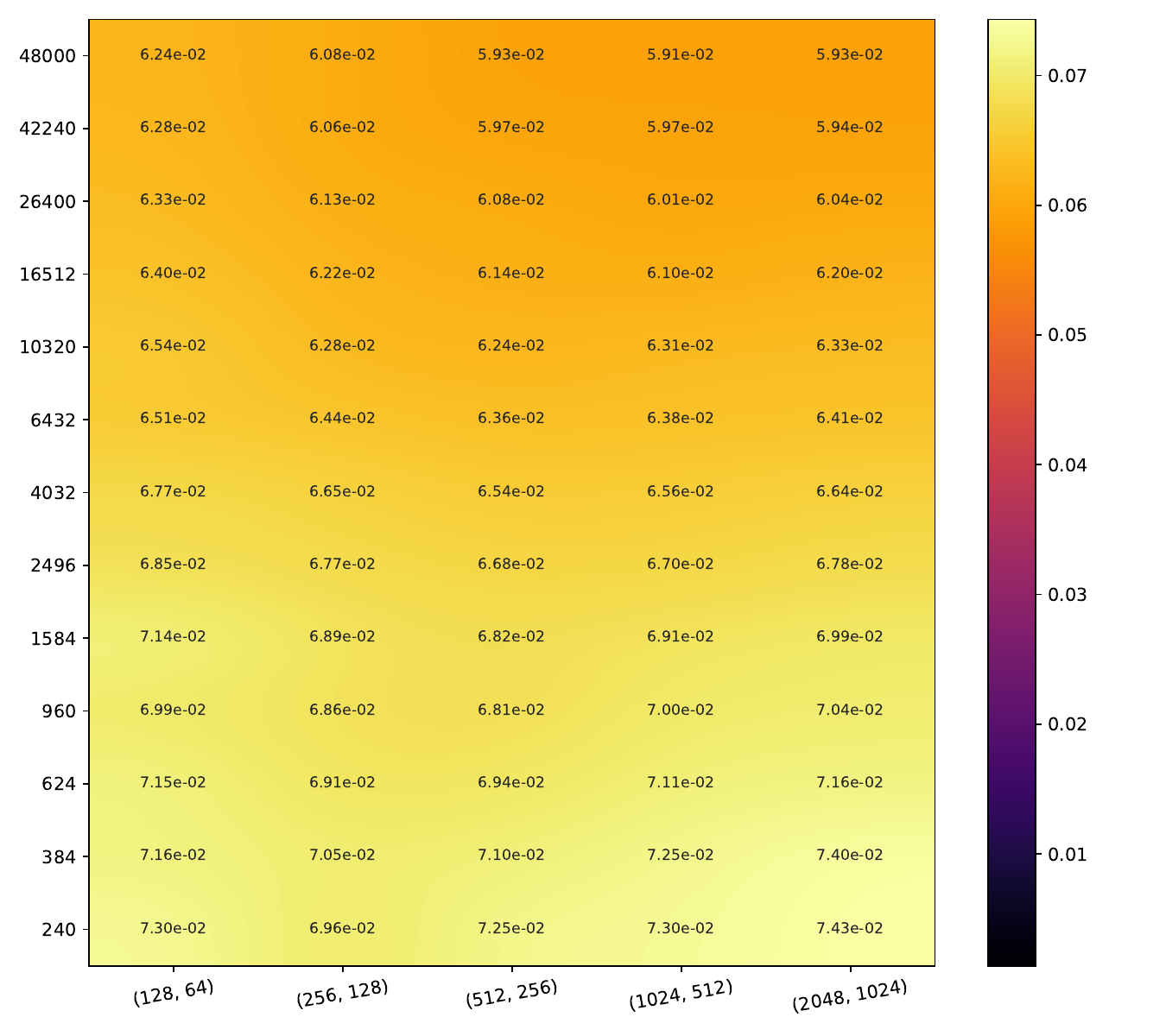}
    \includegraphics[width=\textwidth]{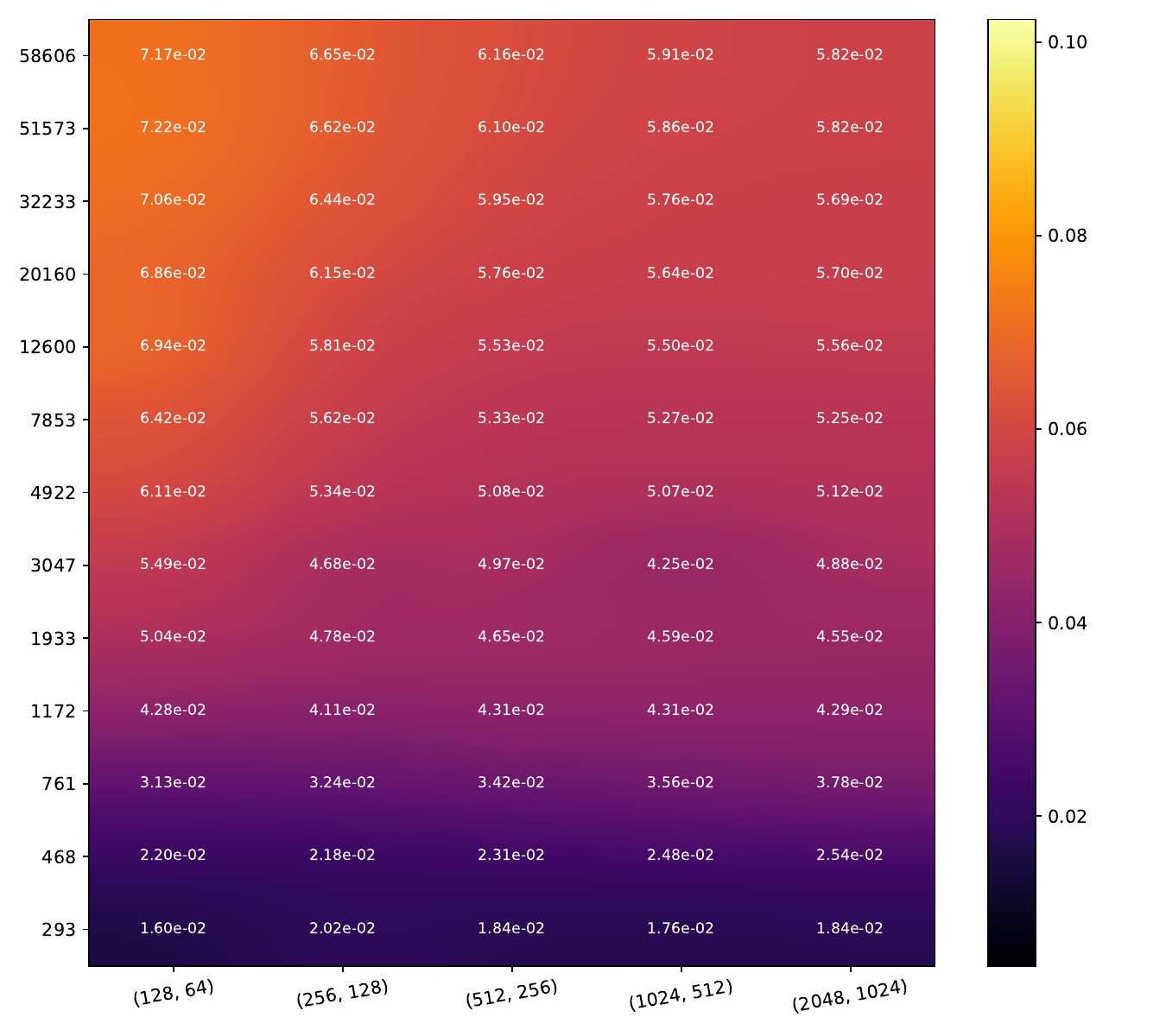}
    \includegraphics[width=\textwidth]{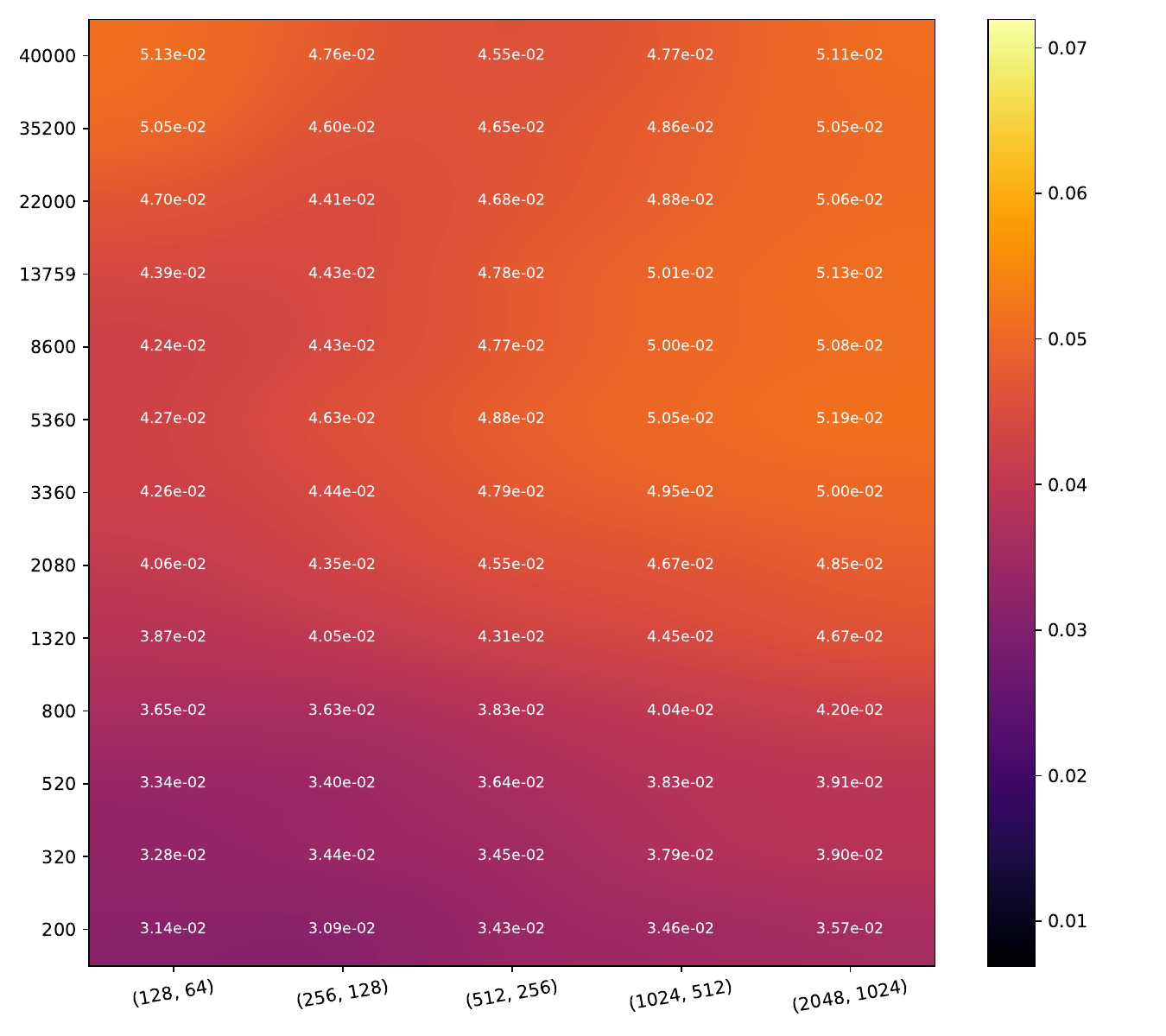}
    \caption*{MC-Dropout LS}
  \end{subfigure}\hfill
  \begin{subfigure}[t]{0.185\textwidth}
    \includegraphics[width=\textwidth]{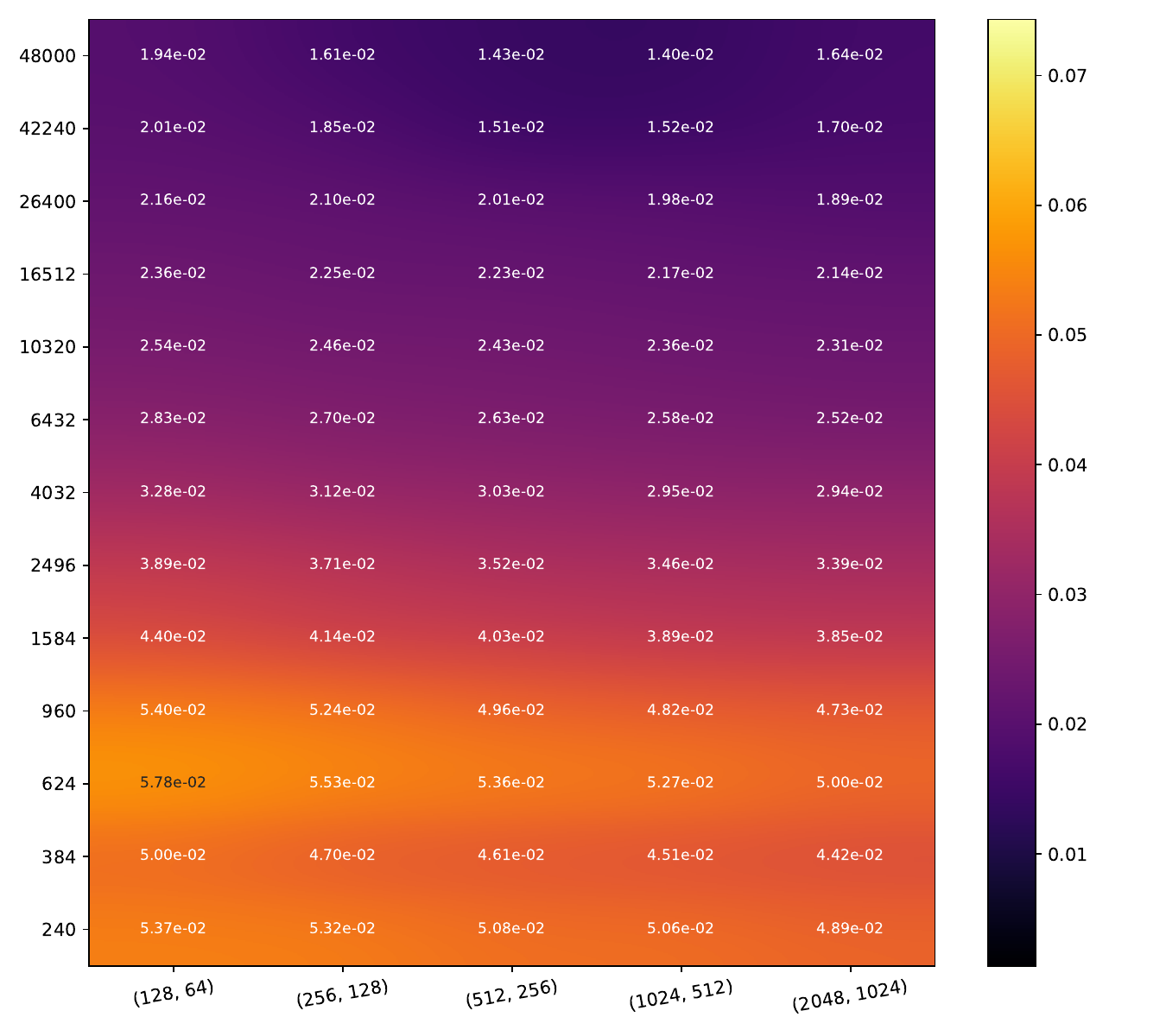}
    \includegraphics[width=\textwidth]{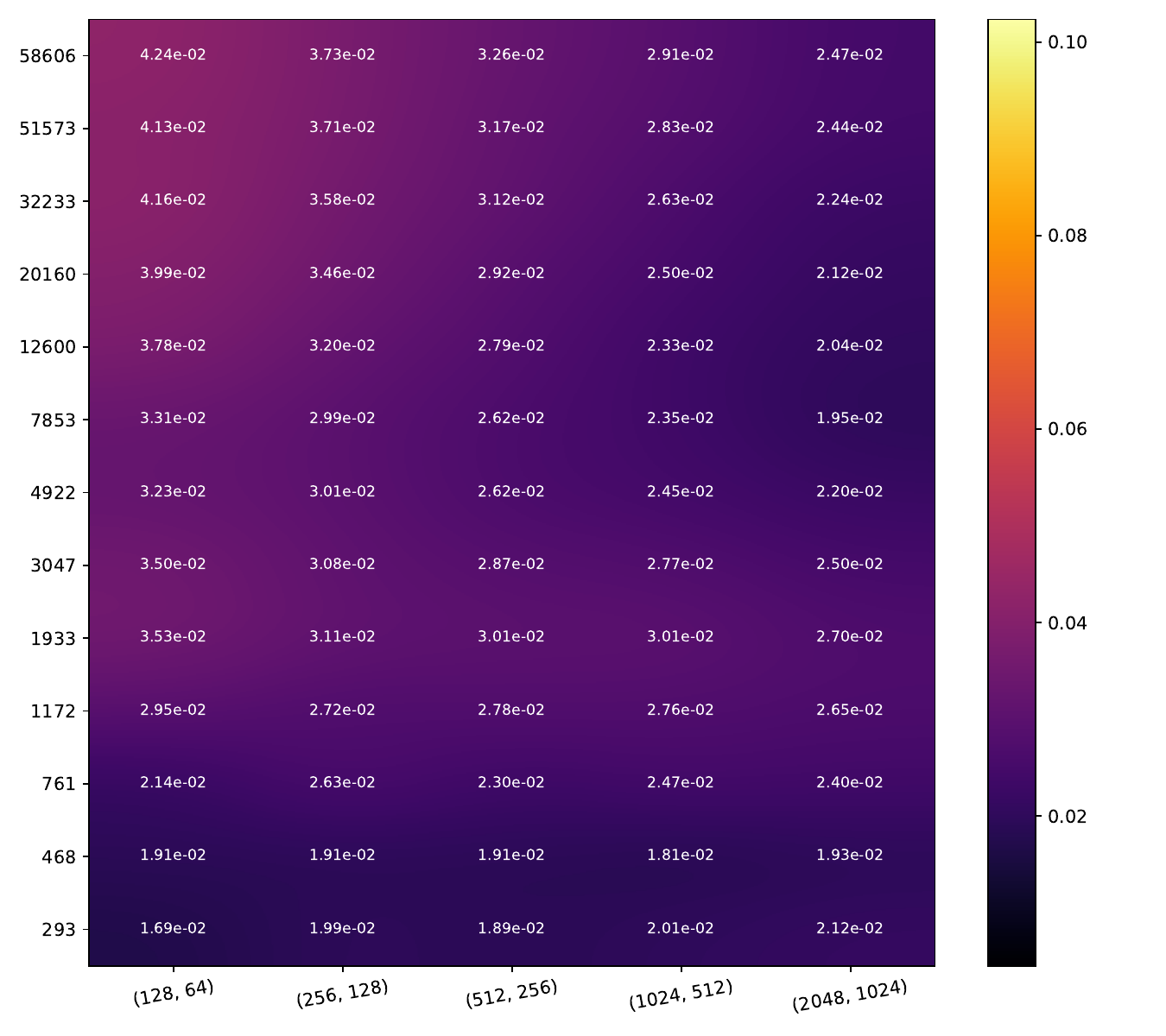}
    \includegraphics[width=\textwidth]{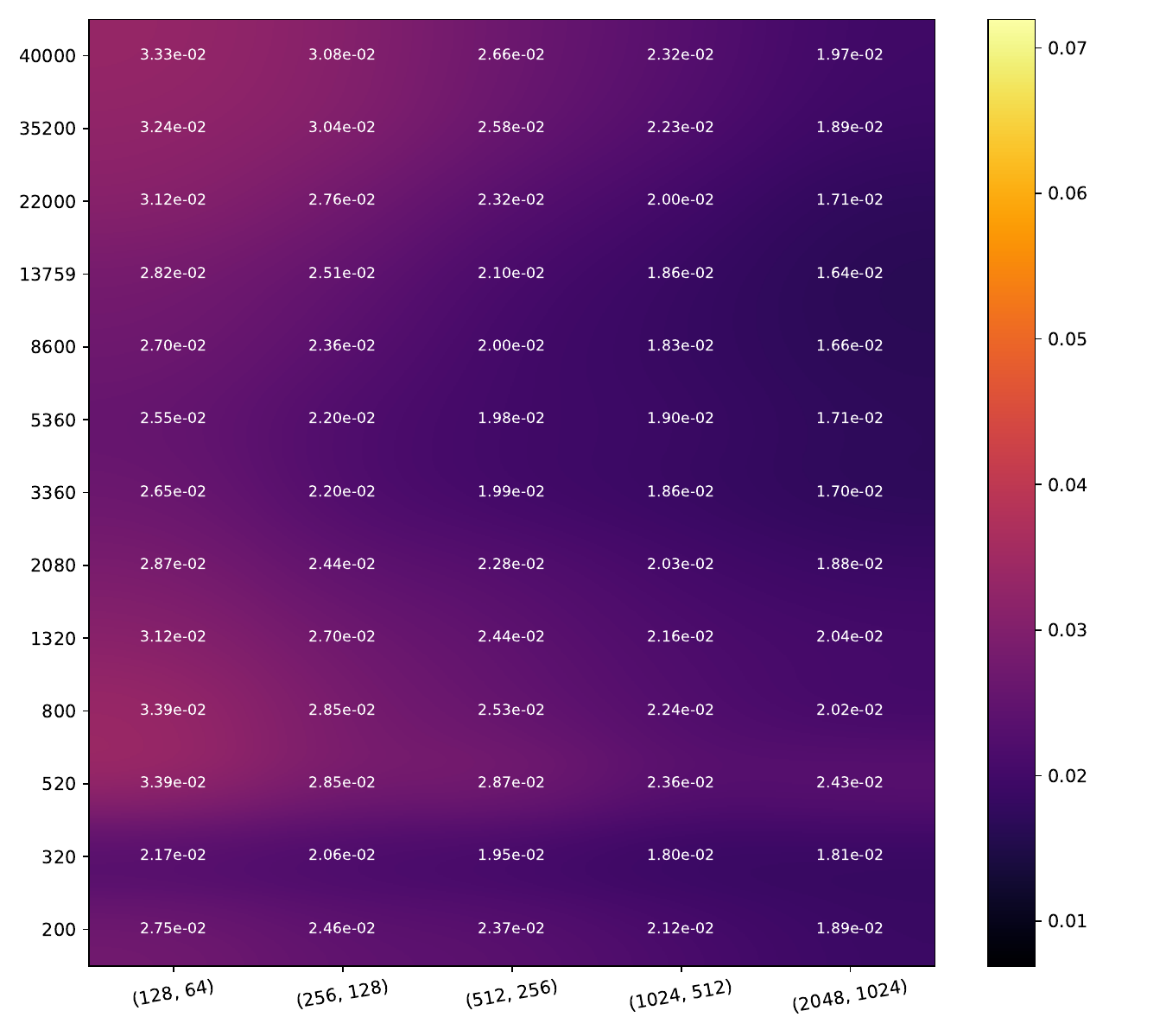}
    \caption*{EDL}
  \end{subfigure}\hfill
  \begin{subfigure}[t]{0.185\textwidth}
    \includegraphics[width=\textwidth]{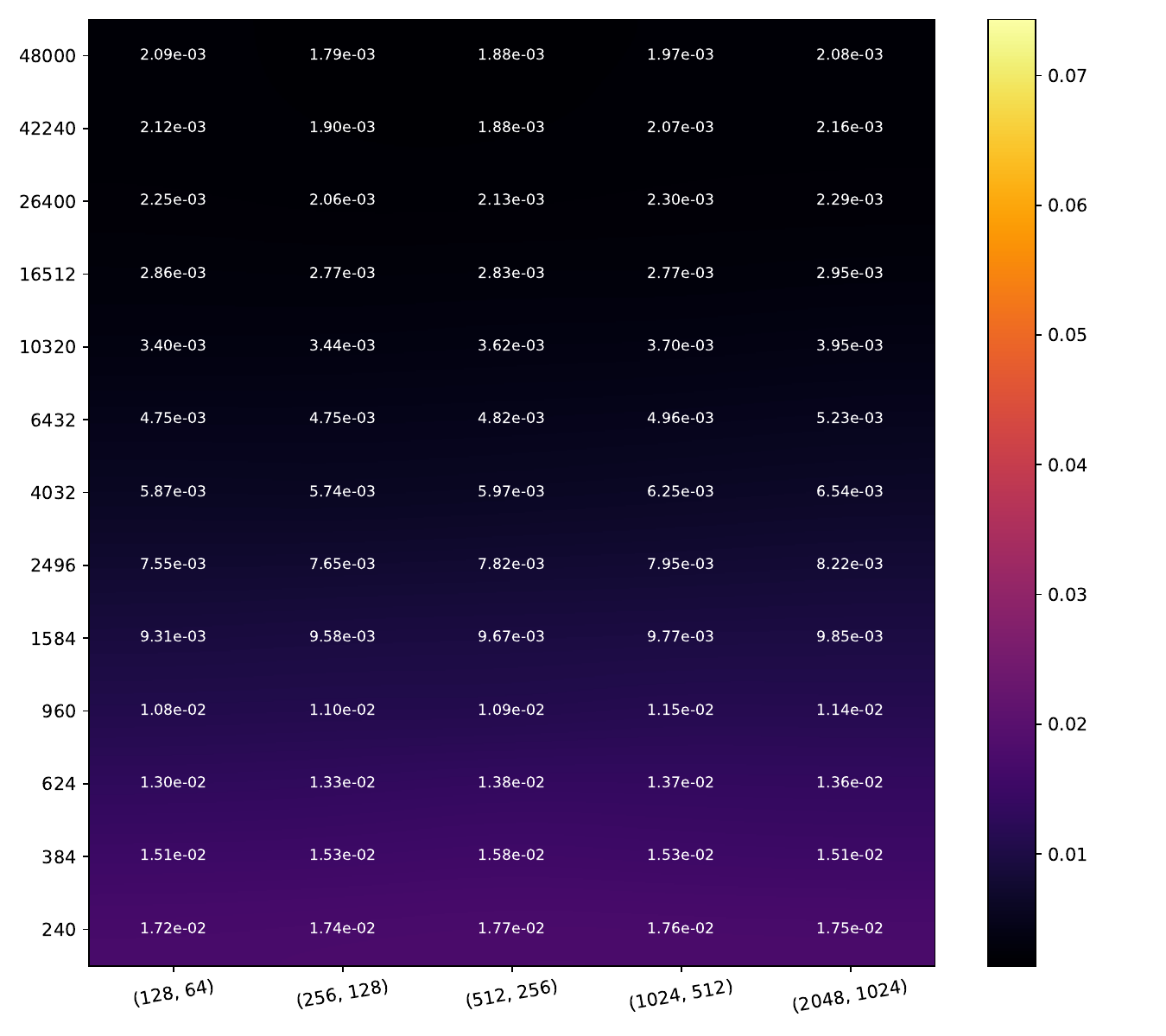}
    \includegraphics[width=\textwidth]{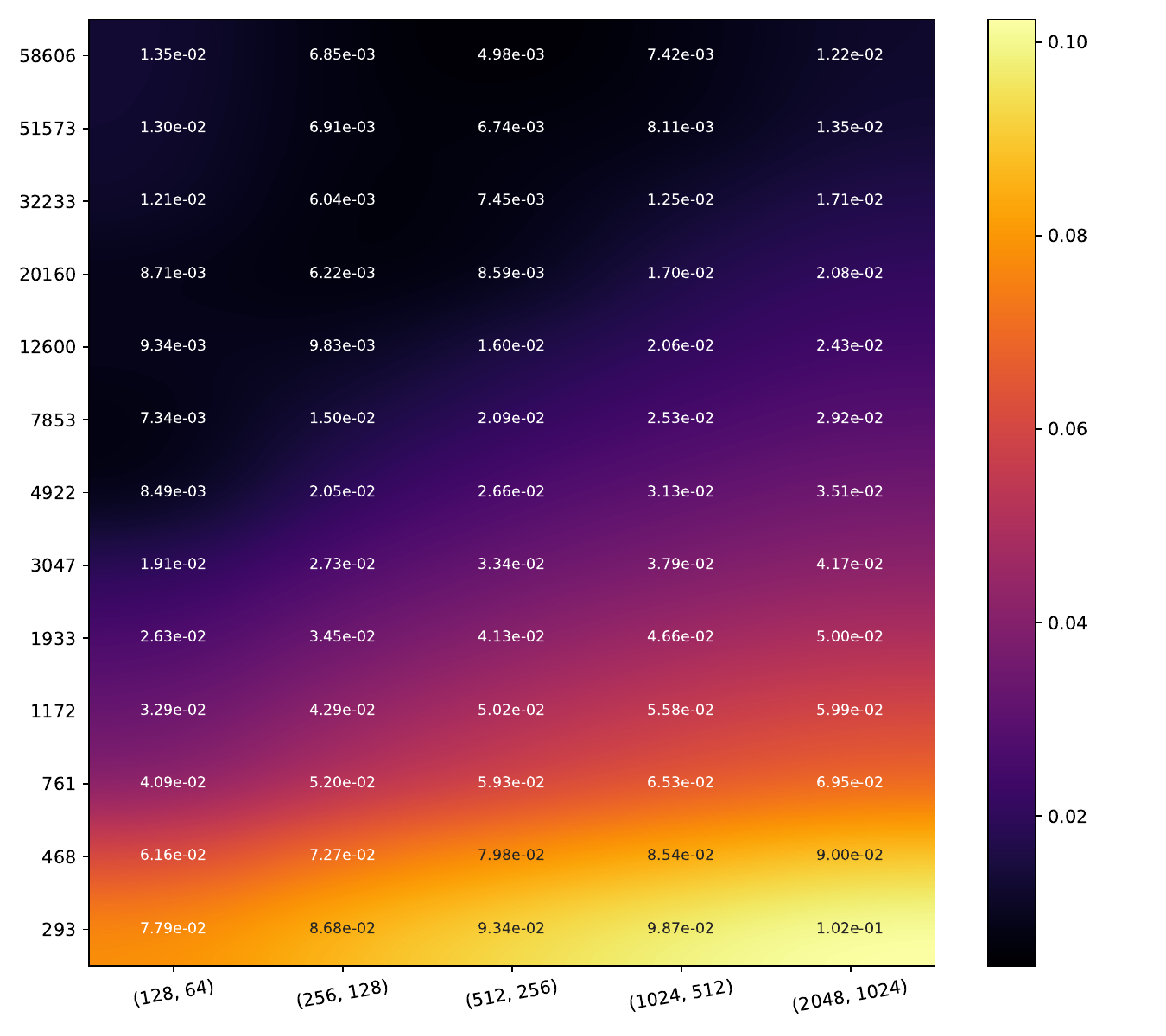}
    \includegraphics[width=\textwidth]{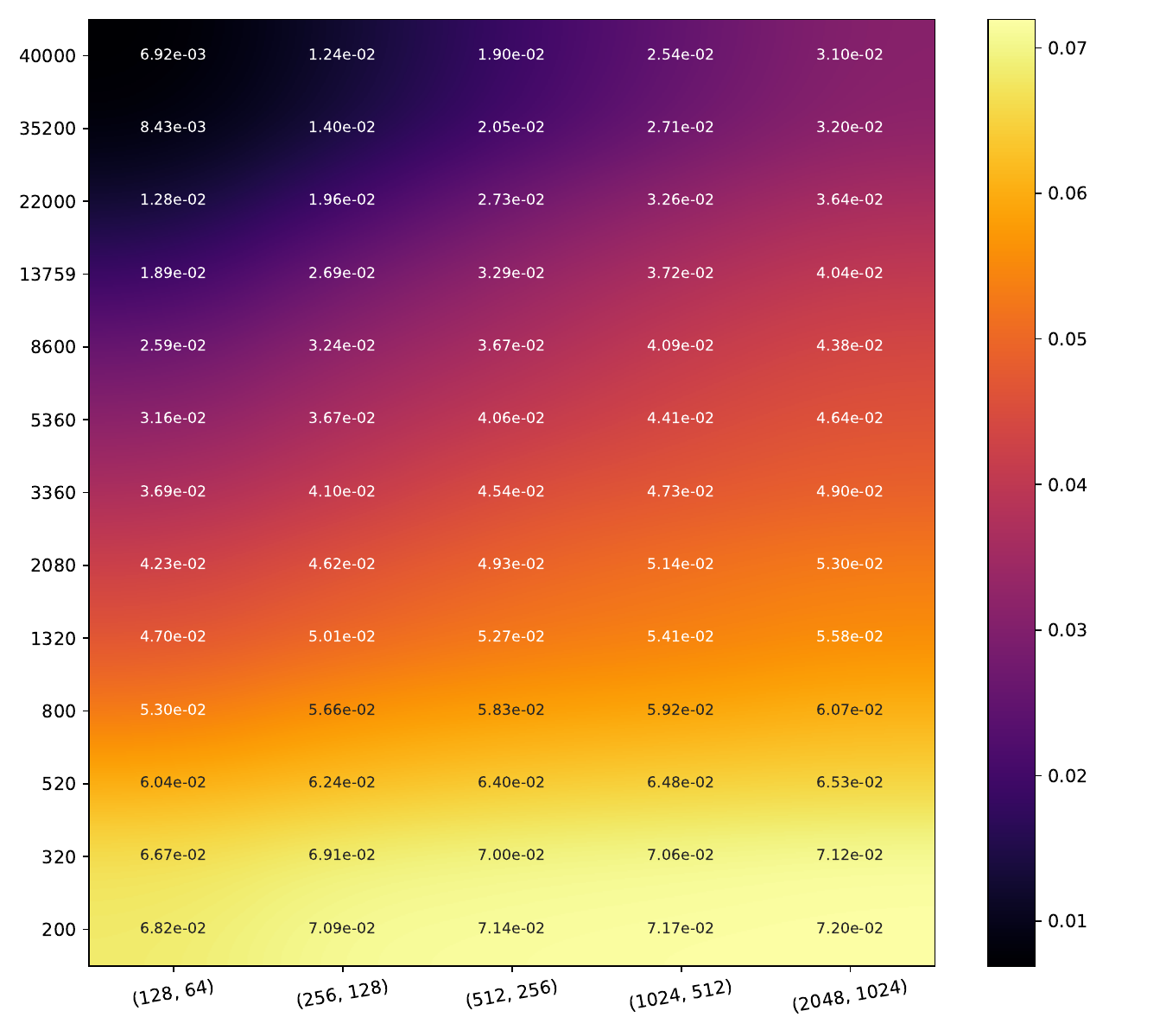}
    \caption*{DE}
  \end{subfigure}\hfill
  \begin{subfigure}[t]{0.185\textwidth}
    \includegraphics[width=\textwidth]{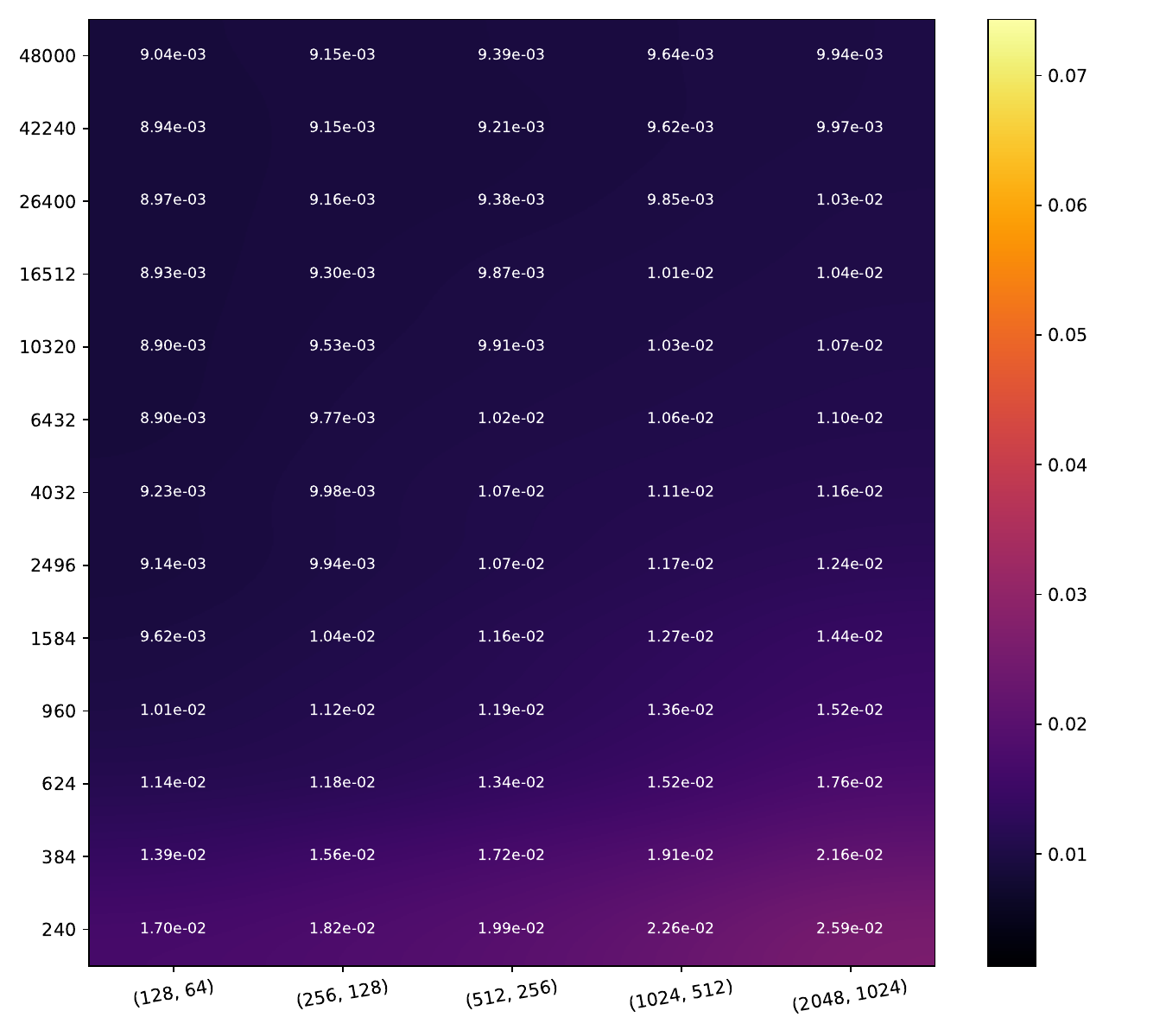}
    \includegraphics[width=\textwidth]{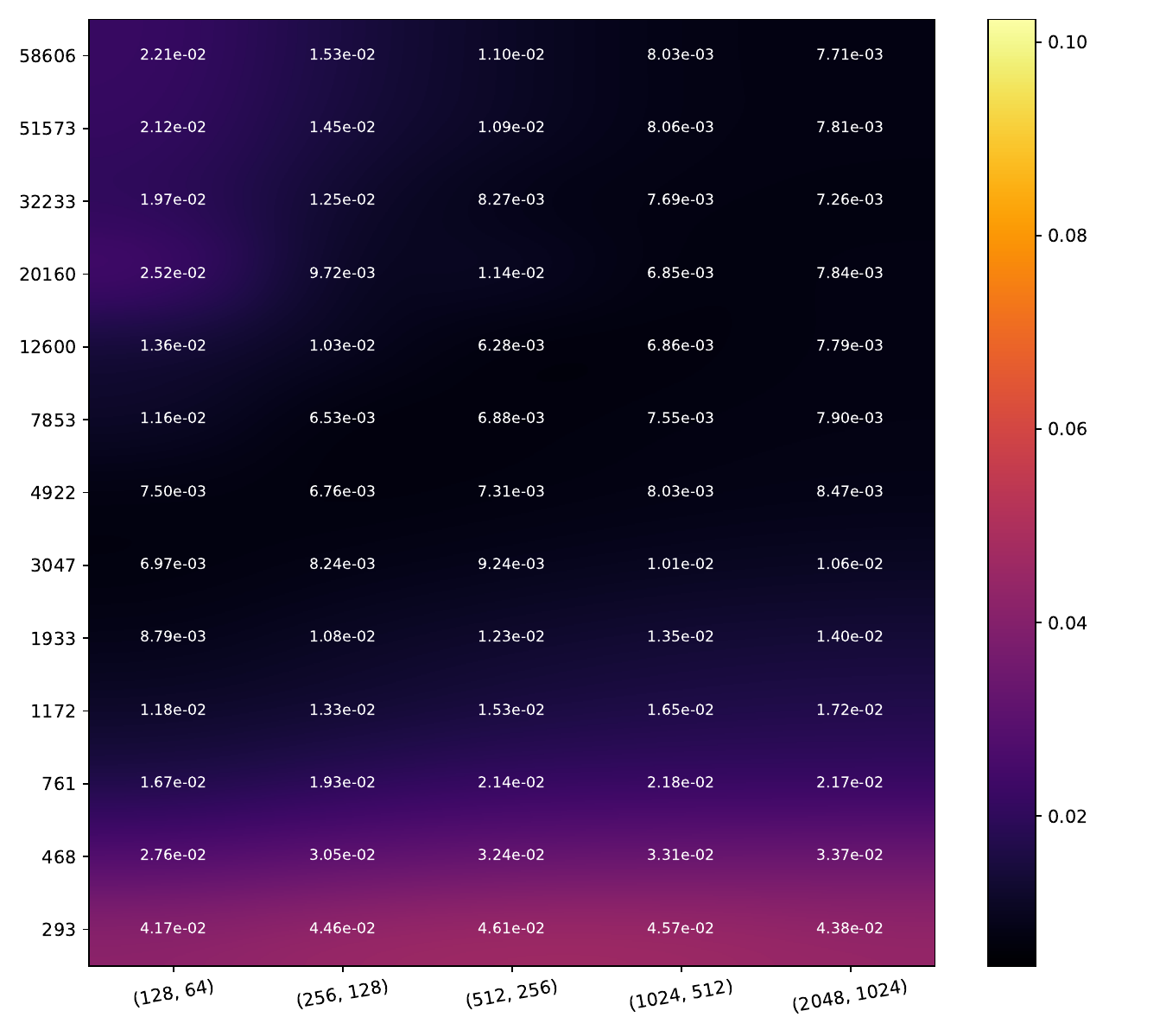}
    \includegraphics[width=\textwidth]{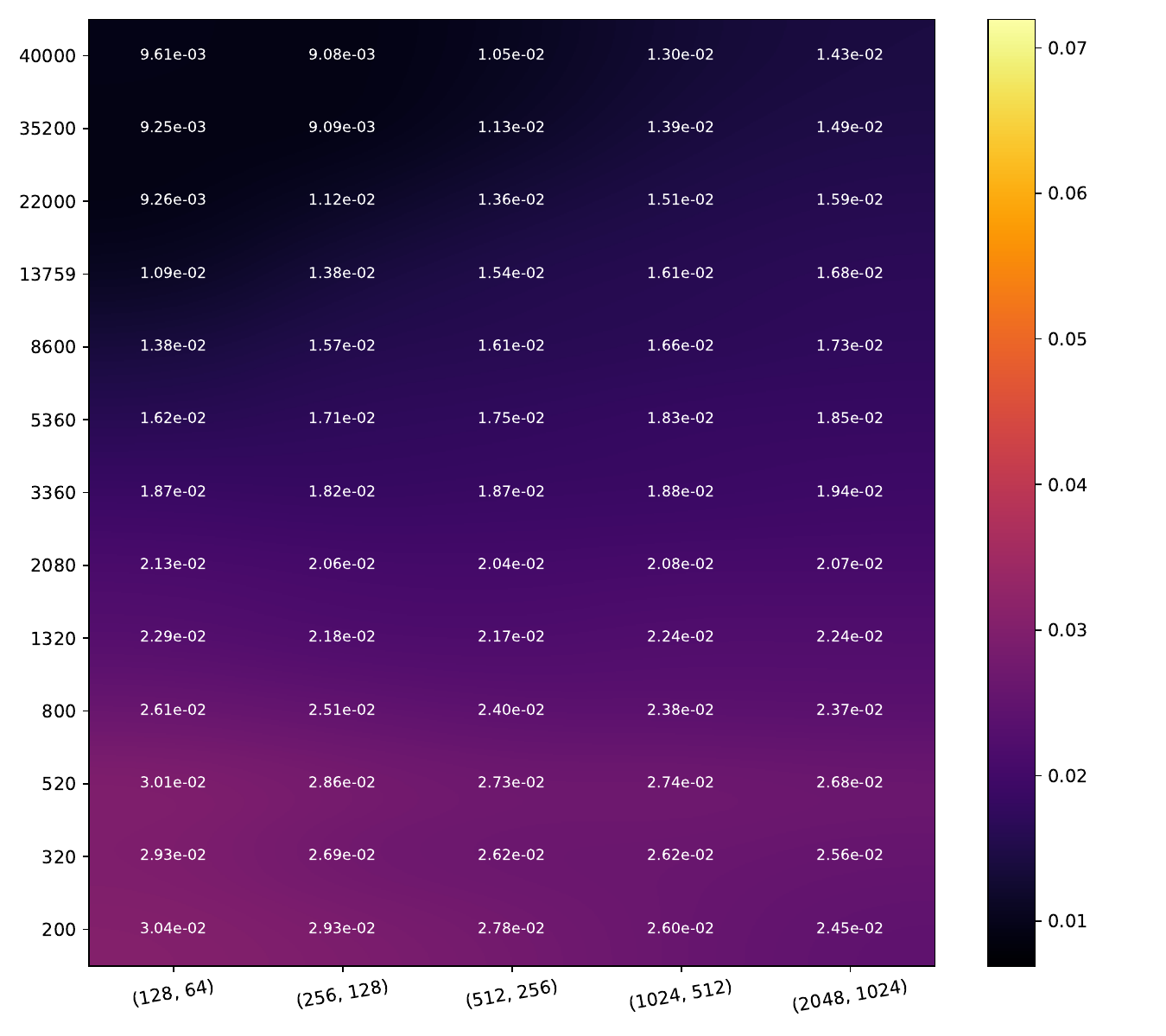}
    \caption*{Conflictual DE}
  \end{subfigure}\hfill
  \caption{Heatmaps of the SCE. Same representation as Fig.~\ref{fig:auc-ood}. Lower is better.}
  \label{fig:sce}
\end{figure}

As the number of hyperparameters to be set should be kept to a minimum, we wondered whether the hyperparameter $\lambda$ introduced by Conflictual DE could replace advantageously that of weight-decay. Therefore we carried out experiments without weight-decay to study its effect on performance (Appendix~\ref{app:no-wd}).
While some methods see their performance deteriorate, the results with Conflictual DE are more or less the same. This robustness suggests that Conflictual DE can be used without weight-decay, resulting in a zero balance for the number of hyperparameters.
To conclude this section, we provide a qualitative and comparative summary of methods on Tab.~\ref{table:summary}.
\begin{table}[htbp]
  \centering
  \begin{tabular}{p{0.19\textwidth} P{0.17\textwidth} P{0.21\textwidth} P{0.09\textwidth} P{0.09\textwidth} c}
    \toprule
     & MC-Dropout & MC-Dropout LS & EDL & DE & Conflictual DE \\
    \midrule
    1\textsuperscript{st} principle & $\checkmark$ & $\xmark$ & $\checkmark$ & $\xmark$ & $\checkmark$ \\
    2\textsuperscript{nd} principle & $\xmark$ & $\xmark$ & $\xmark$ & $\xmark$ & $\checkmark$ \\
    \midrule
    Accuracy & $++$ & $++$ & $+$ & $++$ & $++$ \\
    Brier score & mixed & mixed & $+$ & mixed & $++$ \\
    \midrule
    Calibration & mixed & $-$ & $+$ & mixed & $++$ \\
    \midrule
    OOD & $+$ & mixed & mixed & $++$ & $++$ \\
    Misclassification & $+$ & $-$ & $+$ & $++$ & $++$ \\
    \bottomrule
  \end{tabular}
  \caption{A comparative summary of the performance of the tested methods. Heatmaps of accuracy, Brier score, and misclassification can be found in Fig.~\ref{fig:accuracy}, Fig.~\ref{fig:brier} and Fig.~\ref{fig:auc-mis} respectively (see Appendix).}
  \label{table:summary}
\end{table}

\section{Conclusion\label{sec:conclusion}}

We have shown in this paper that, contrary to expectations, epistemic uncertainty as produced by state-of-the-art models, does not decrease steadily as the training data increases or as the model complexity decreases.
We then introduced \emph{conflictual deep ensembles} and showed that they 
restore not only the first but both principles of epistemic uncertainty, without compromising performance.

Still, this work raises several questions and many perspectives of research:
The exact reasons why epistemic uncertainty paradoxically collapses when network complexity grows, have yet to be found.
While conflictual deep ensembles have been specially designed to satisfy the first principle, it is surprising how well this technique solves the second principle as well. Why is this so?
Although the conflictual loss naturally suits deep ensembles, nothing is preventing it from being applied to BNNs, provided that the samples of the prior can be efficiently partitioned into class-specific subsets. The validity of such an approach remains to be demonstrated.
The question remains whether the observed phenomena generalize to models more complex than MLPs or to other problems like regression.
We hope that these and other perspectives will entail further studies on this subject.

\begin{credits}
\subsubsection{\discintname}
The authors have no competing interests to declare that are relevant to the content of this article.
\end{credits}
%
%
%
\bibliographystyle{splncs04}
\bibliography{biblio}

\clearpage
\newpage
\begin{subappendices}
\section{Implementation details\label{app:implementation-details}}

\paragraph{Datasets.} As detailed in the paper, the presented models were trained on MNIST and CIFAR10. A $20\%$ validation-train split is first applied and then subsets were taken from the train sets (a total size of $48000$ for MNIST, $58606$ for SVHN, and $40000$ for CIFAR10) for training. We made sure that the subsets used were balanced. CIFAR10 is encoded using a pre-trained ResNet34 model which is equivalent to the training of a ResNet34 model where the feature blocks are fixed and only the classification part is learned.

\paragraph{Data transformations.} We apply a standard normalization (mean of $0$ and standard deviation of $1$) to the datasets. The same transformation is applied then to the test samples (whether there are ID or OOD samples). Only the training samples are used to train the models and no data augmentation is applied.

\paragraph{Training.} The models were trained for $500$ epochs on MNIST, $600$ epochs on SVHN and $700$ on CIFAR10. We used the SGD optimizer with weight decay, parameterized with (learning rate, momentum):  ($0.01$, $0.95$) for MNIST, ($0.02$, $0.95$) for SVHN, and ($0.04$, $0.9$) for CIFAR10. Each ensemble was trained on a single GPU and the best model (based on the validation loss) was tracked during training and used for early stopping and the learning rate scheduler.

\paragraph{Duration.} The experiments took a total of $311$, $186$, and $70$ GPU hours on MNIST, SVHN, and CIFAR10 respectively. The training was done on a cluster with several GPUs. See Tab.~\ref{tab-app:duration-mlp} for more details. The difference is mainly due to how CIFAR10 experiments are implemented: we first compute the embeddings of dimension $512$ (once) using a pre-trained ResNet34 which are stored on disk.

\begin{table}[ht]
  \centering
  \begin{tabular}{p{0.12\textwidth} P{0.17\textwidth} P{0.21\textwidth} P{0.09\textwidth} P{0.09\textwidth} c}
    \toprule
     & MC-Dropout & MC-Dropout LS & EDL & DE & Conflictual DE \\
    \midrule
    MNIST   & $21.75$ & $20.19$ & $41.35$ & $111$   & $116.73$ \\
    SVHN    & $12.36$ & $13.37$ & $14$    & $68.33$ & $76.40$  \\
    CIFAR10 & $6.83$  & $6.64$  & $7.01$  & $23.85$ & $26$     \\
    \bottomrule
  \end{tabular}
  \caption{Training duration for each method and each dataset in GPU hours.}
  \label{tab-app:duration-mlp}
\end{table}
\paragraph{\textbf{NOTE:}} the MNIST experiments took longer than the experiments of SVHN and CIFAR10 due to 2 main reasons:
\begin{itemize}
    \item We don't take into account the time needed to compute and save the embeddings for CIFAR10 and which is done only once.
    \item Most importantly, the format of the files for the CIFAR10' embeddings was optimized and thus it is faster. We further apply the same optimization to the MNIST dataset (by using an \emph{identity} "embedding") and MNIST experiments should run faster with the new changes. This format was applied to SVHN which explains the performance gains. Refer to the code for more details.
\end{itemize}
The execution time for MNIST, after changes, should be less than the training time of SVHN. 

\paragraph{Implementation.} All experiments were implemented with PyTorch. Code available at: \url{https://github.com/fellajimed/Conflictual-Loss}

\section{Additional results\label{app:additional-results}}
We report additional metrics for the experiments of Sec~\ref{sec:experiments} and we use the same representation: for each heatmap, the hidden layers on the x-axis and the number of samples used for training on the y-axis. They both have logarithmic scales. Results are on the same datasets and methods.

\begin{figure}[ht]
  \centering
  \begin{subfigure}[t]{\dimexpr0.185\textwidth+20pt\relax}
    \makebox[20pt]{\raisebox{25pt}{\rotatebox[origin=c]{90}{\scriptsize MNIST}}}%
    \includegraphics[width=\dimexpr\linewidth-20pt\relax]{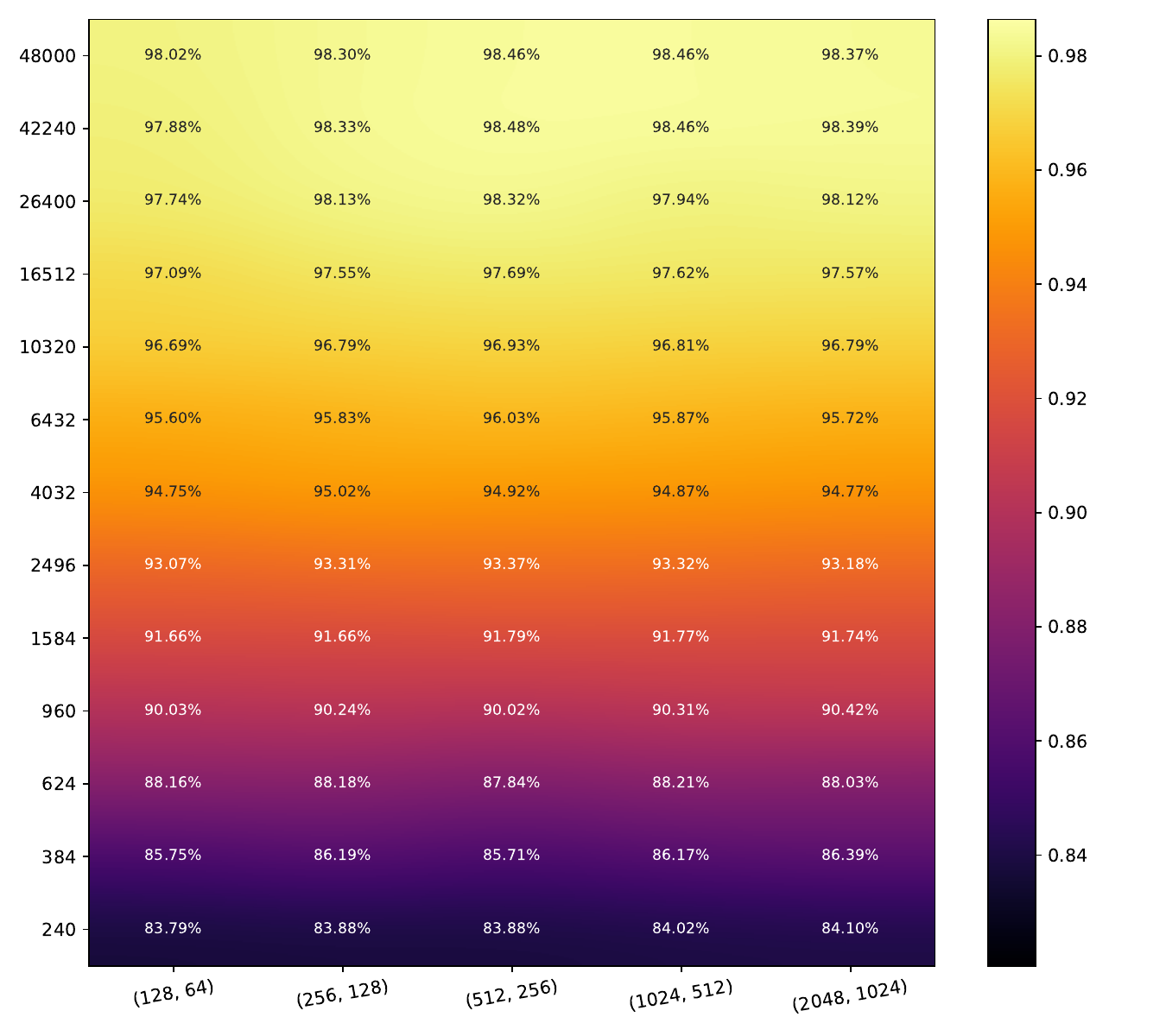}
    \makebox[20pt]{\raisebox{25pt}{\rotatebox[origin=c]{90}{\scriptsize SVHN}}}%
    \includegraphics[width=\dimexpr\linewidth-20pt\relax]{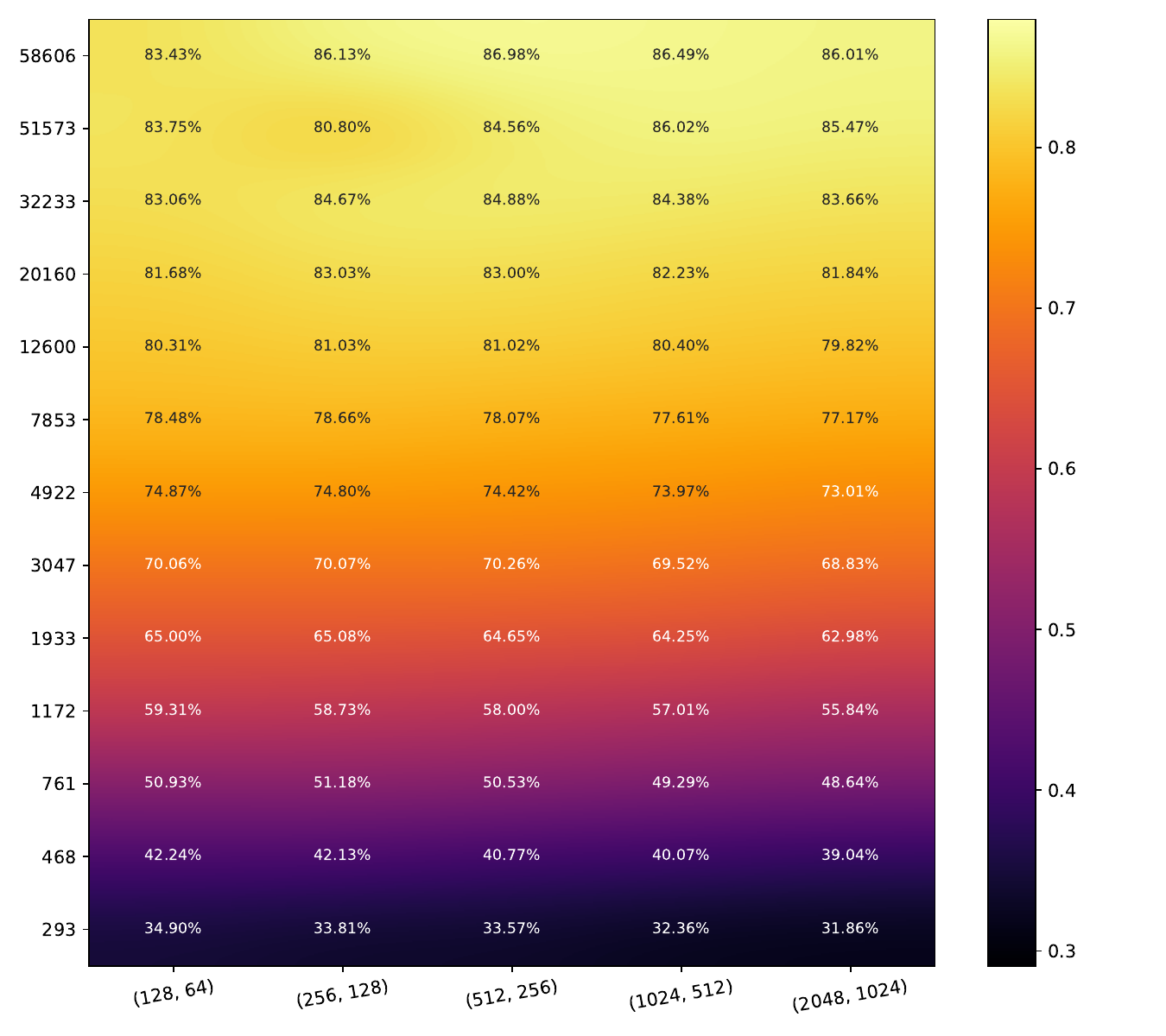}
    \makebox[20pt]{\raisebox{25pt}{\rotatebox[origin=c]{90}{\scriptsize CIFAR10}}}%
    \includegraphics[width=\dimexpr\linewidth-20pt\relax]{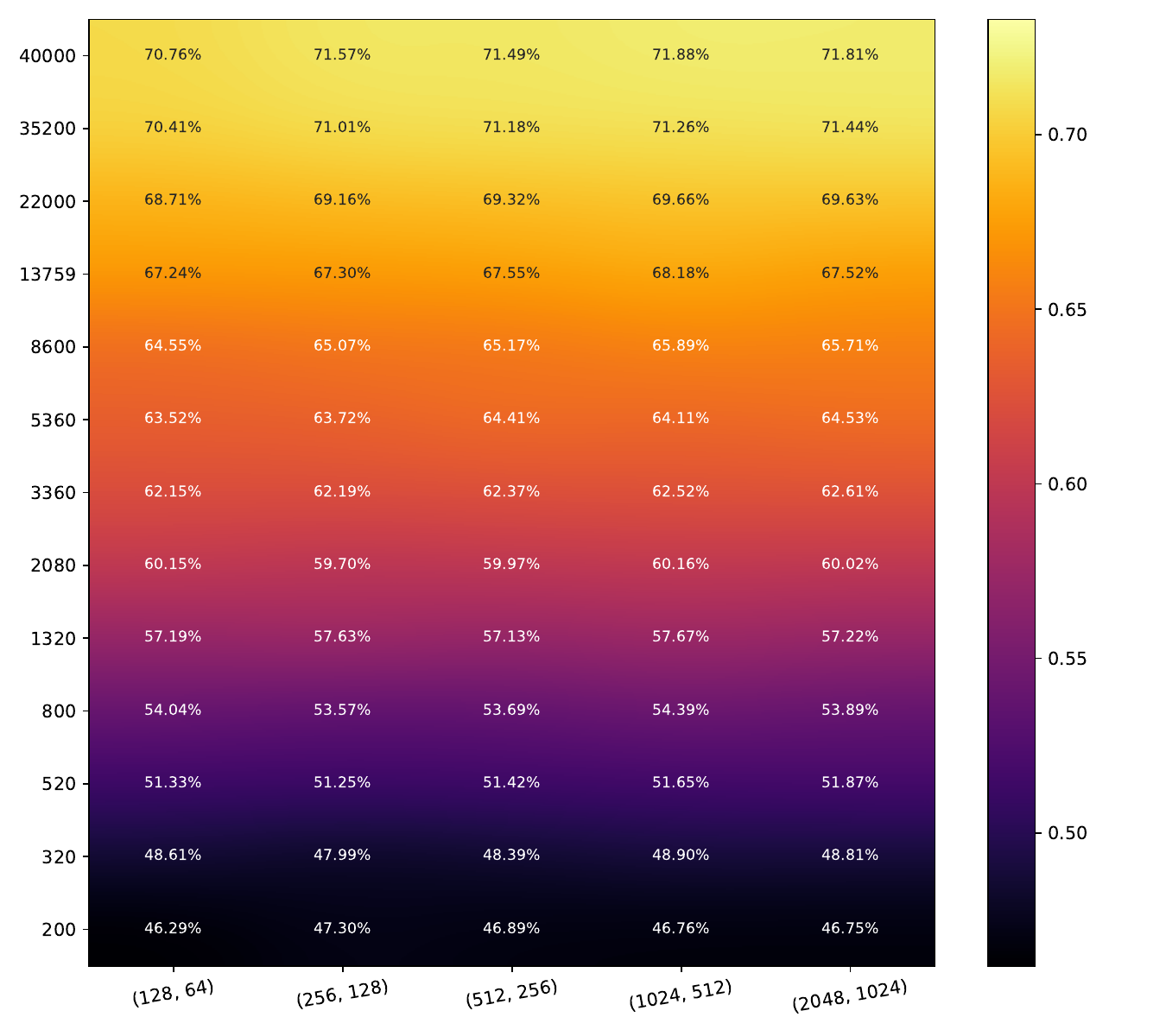}
    \caption*{\qquad MC-Dropout}
  \end{subfigure}\hfill
  \begin{subfigure}[t]{0.185\textwidth}
    \includegraphics[width=\textwidth]{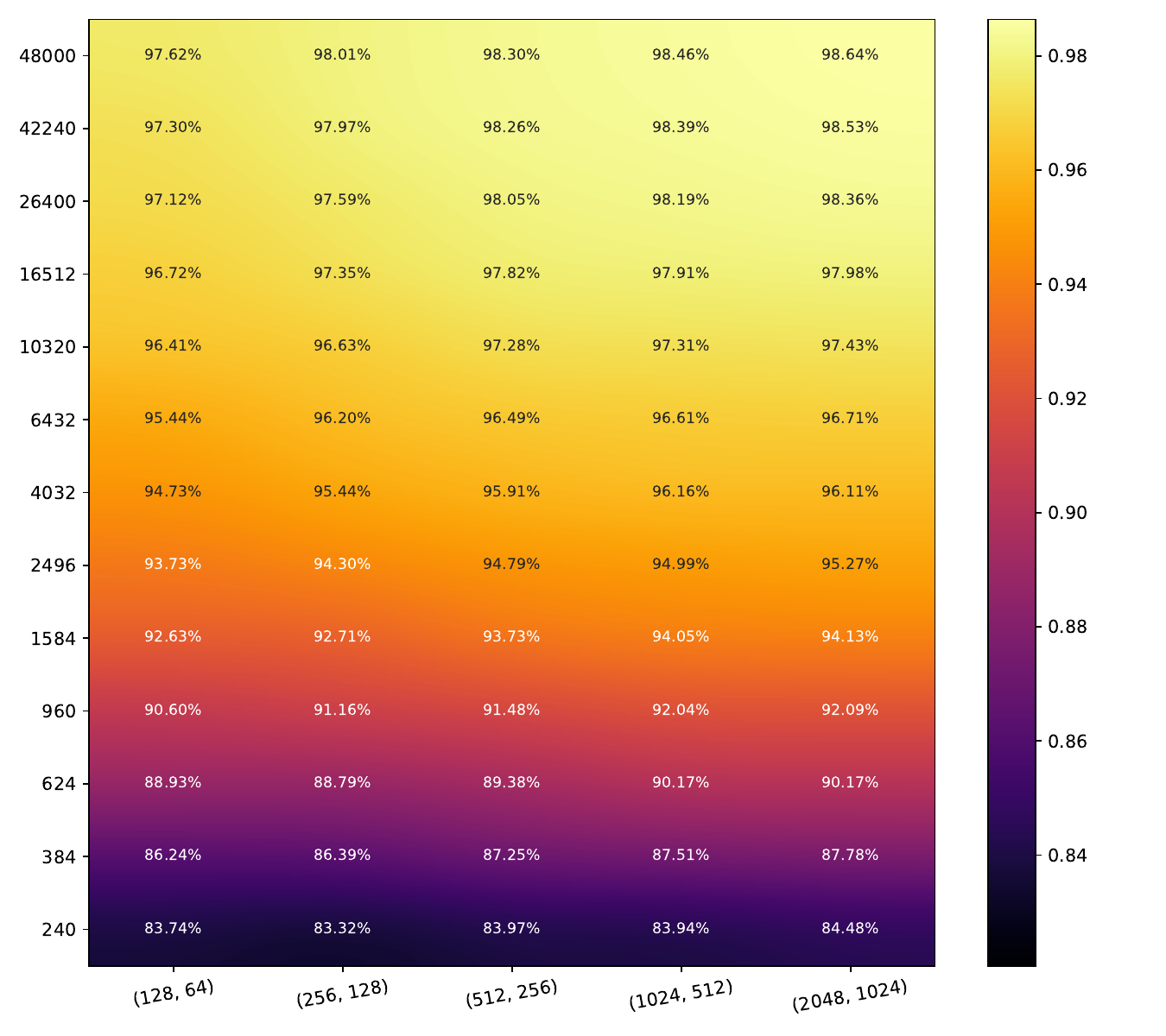}
    \includegraphics[width=\textwidth]{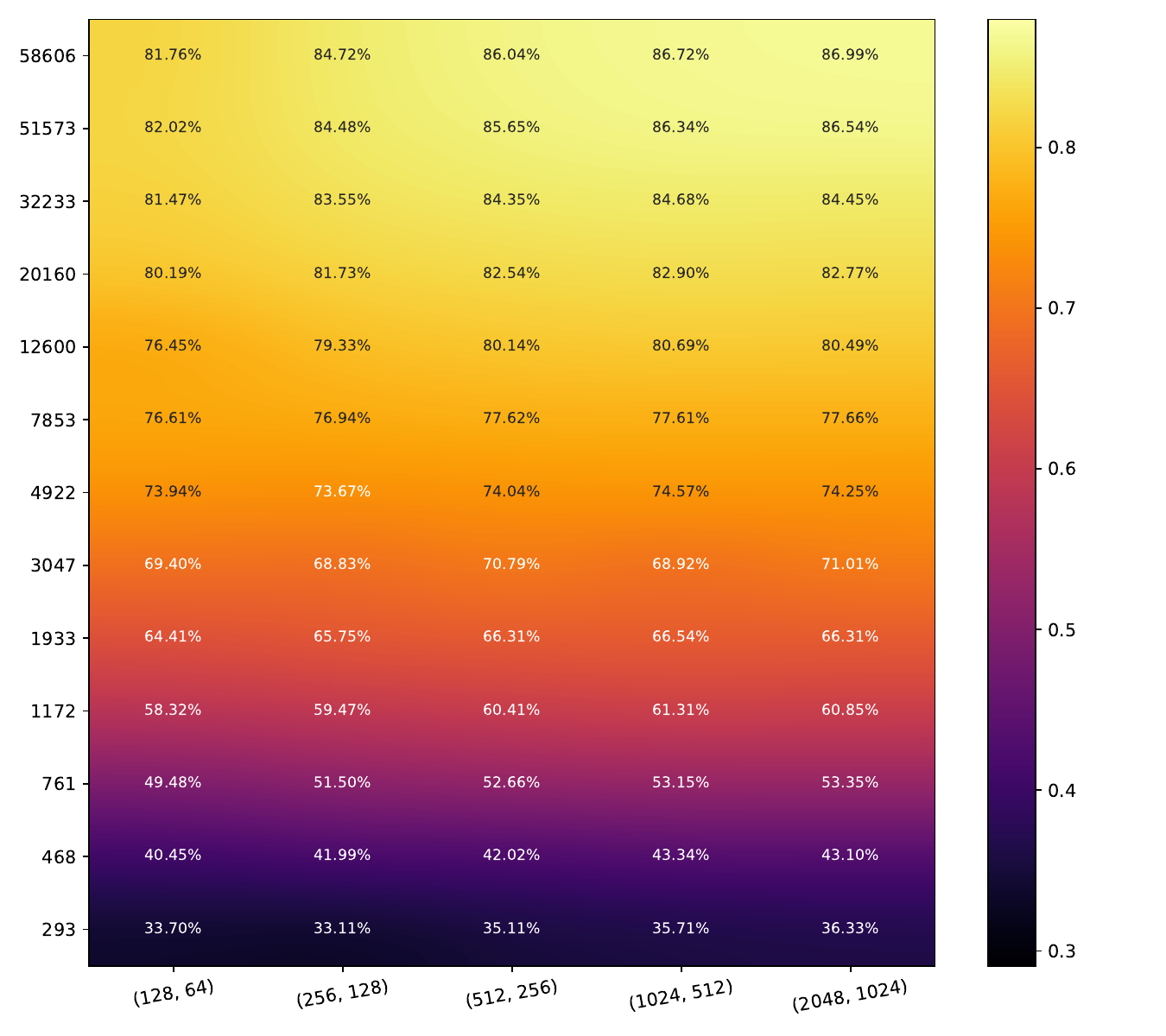}
    \includegraphics[width=\textwidth]{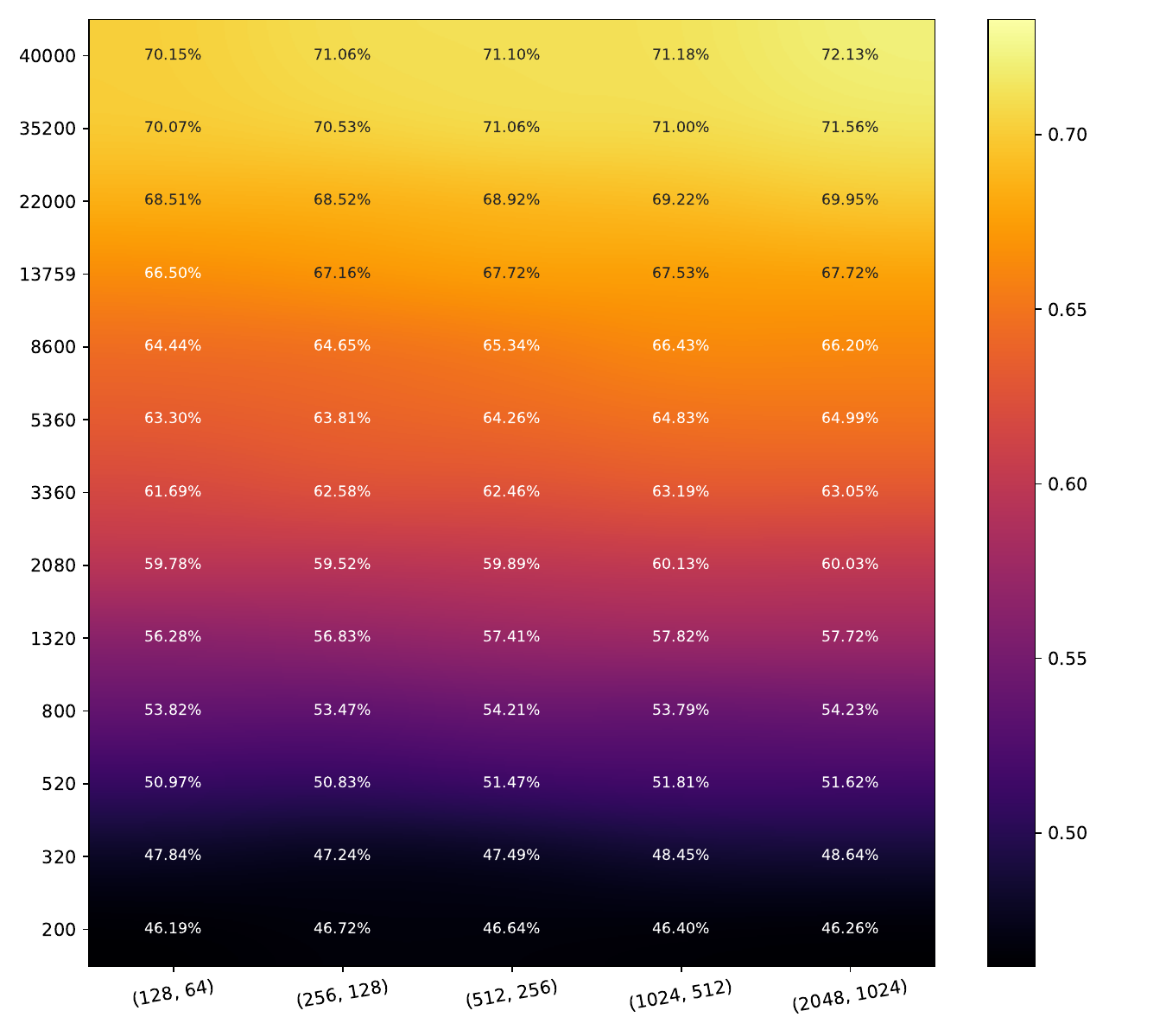}
    \caption*{MC-Dropout LS}
  \end{subfigure}\hfill
  \begin{subfigure}[t]{0.185\textwidth}
    \includegraphics[width=\textwidth]{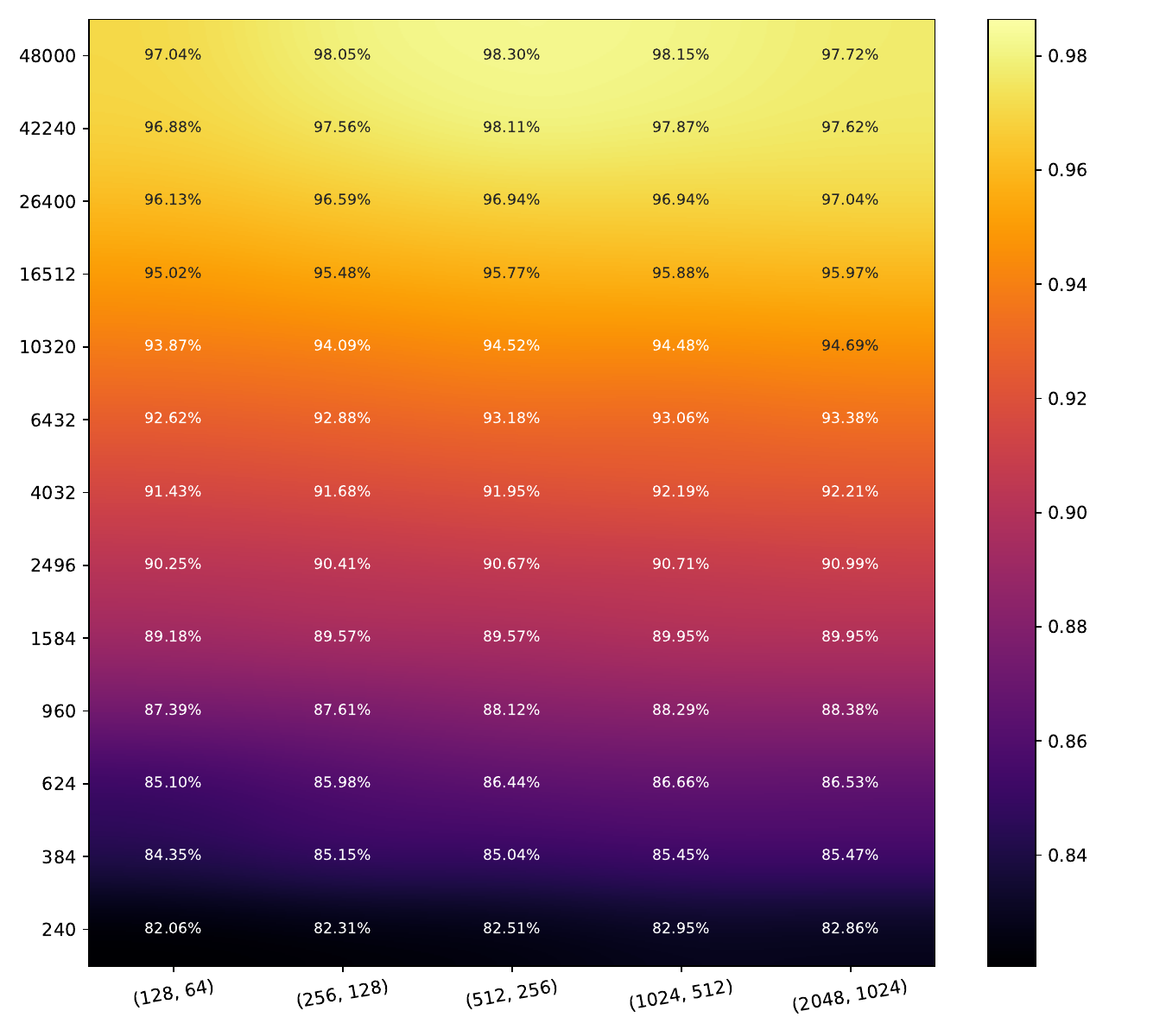}
    \includegraphics[width=\textwidth]{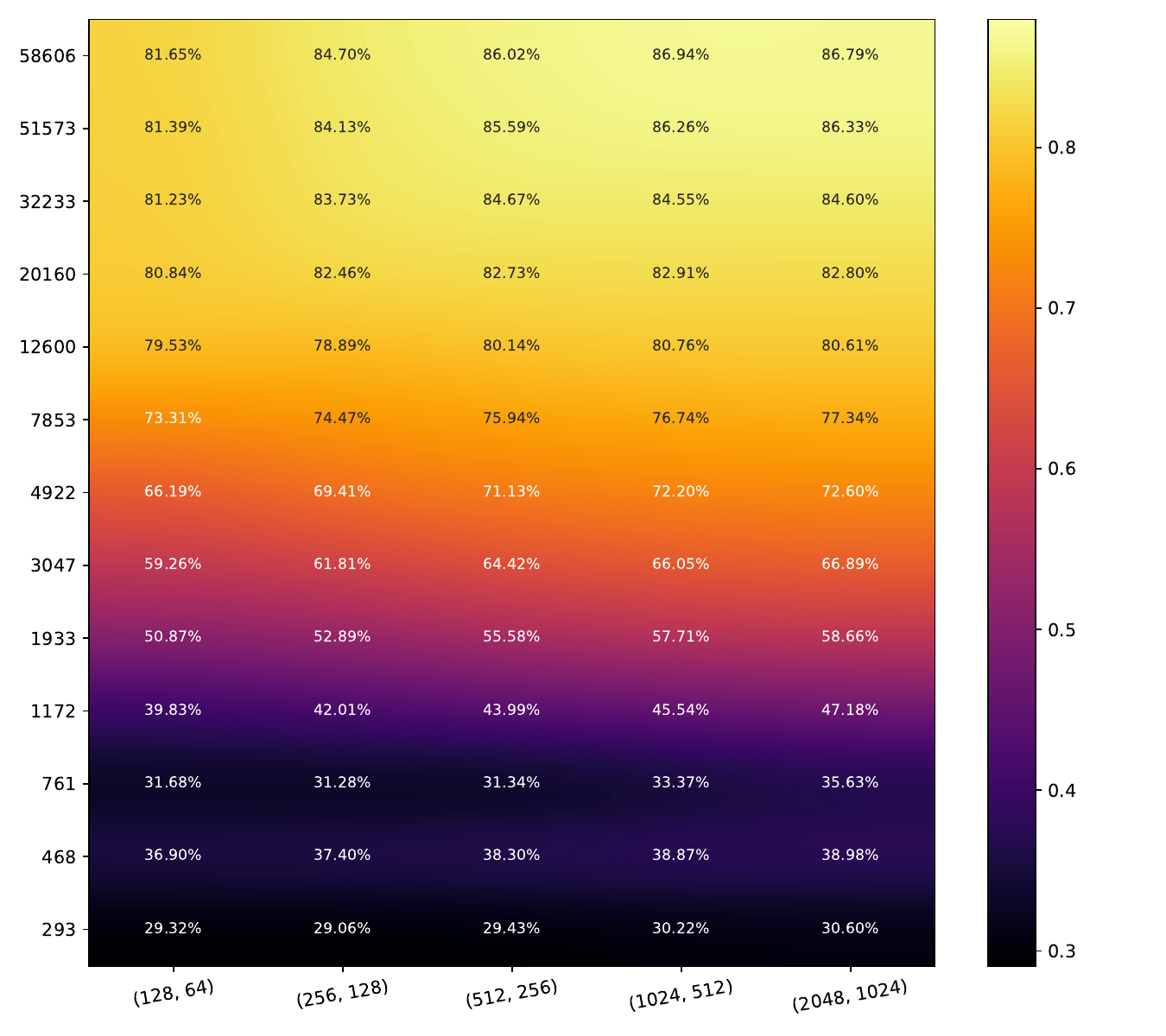}
    \includegraphics[width=\textwidth]{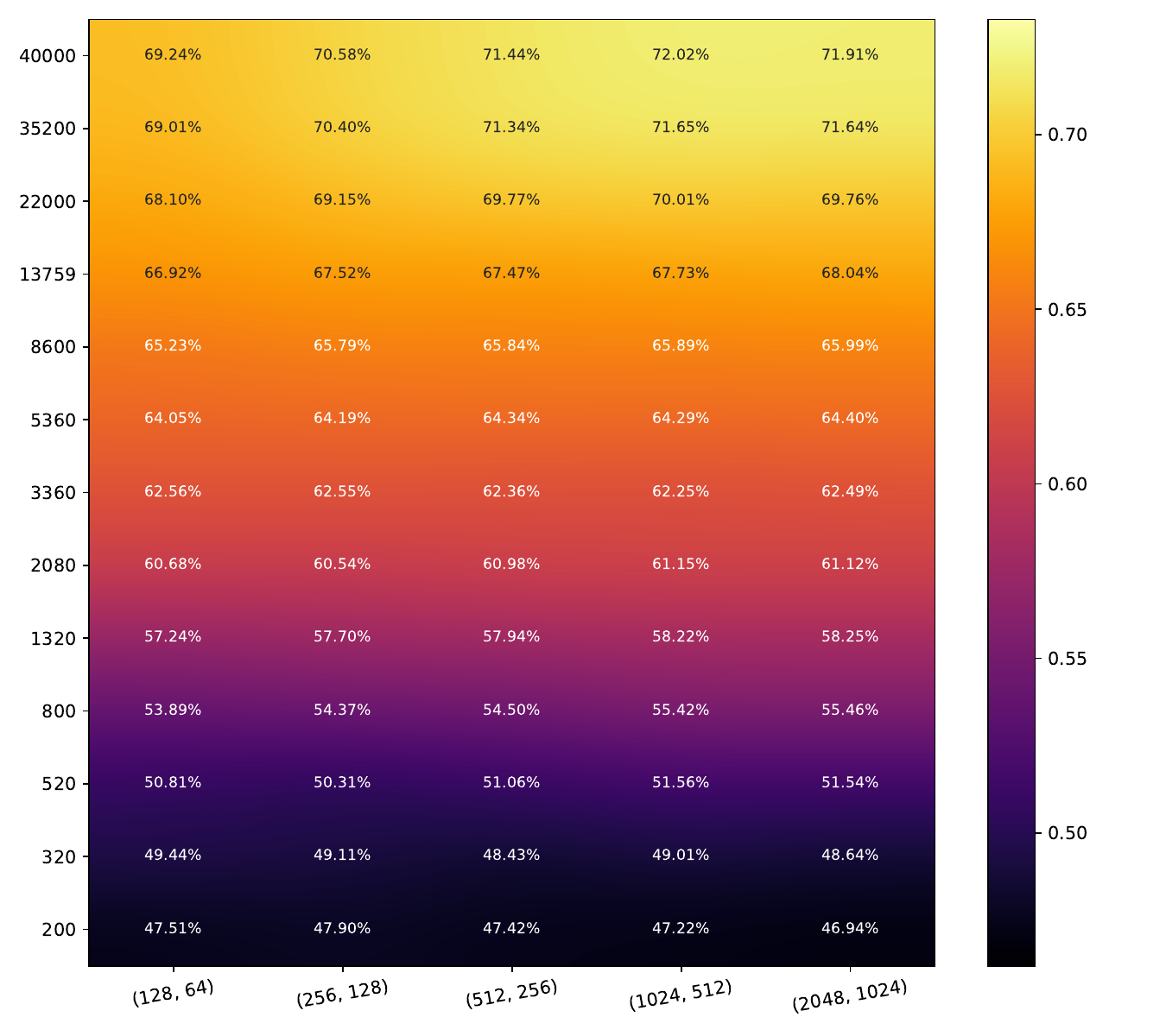}
    \caption*{EDL}
  \end{subfigure}\hfill
  \begin{subfigure}[t]{0.185\textwidth}
    \includegraphics[width=\textwidth]{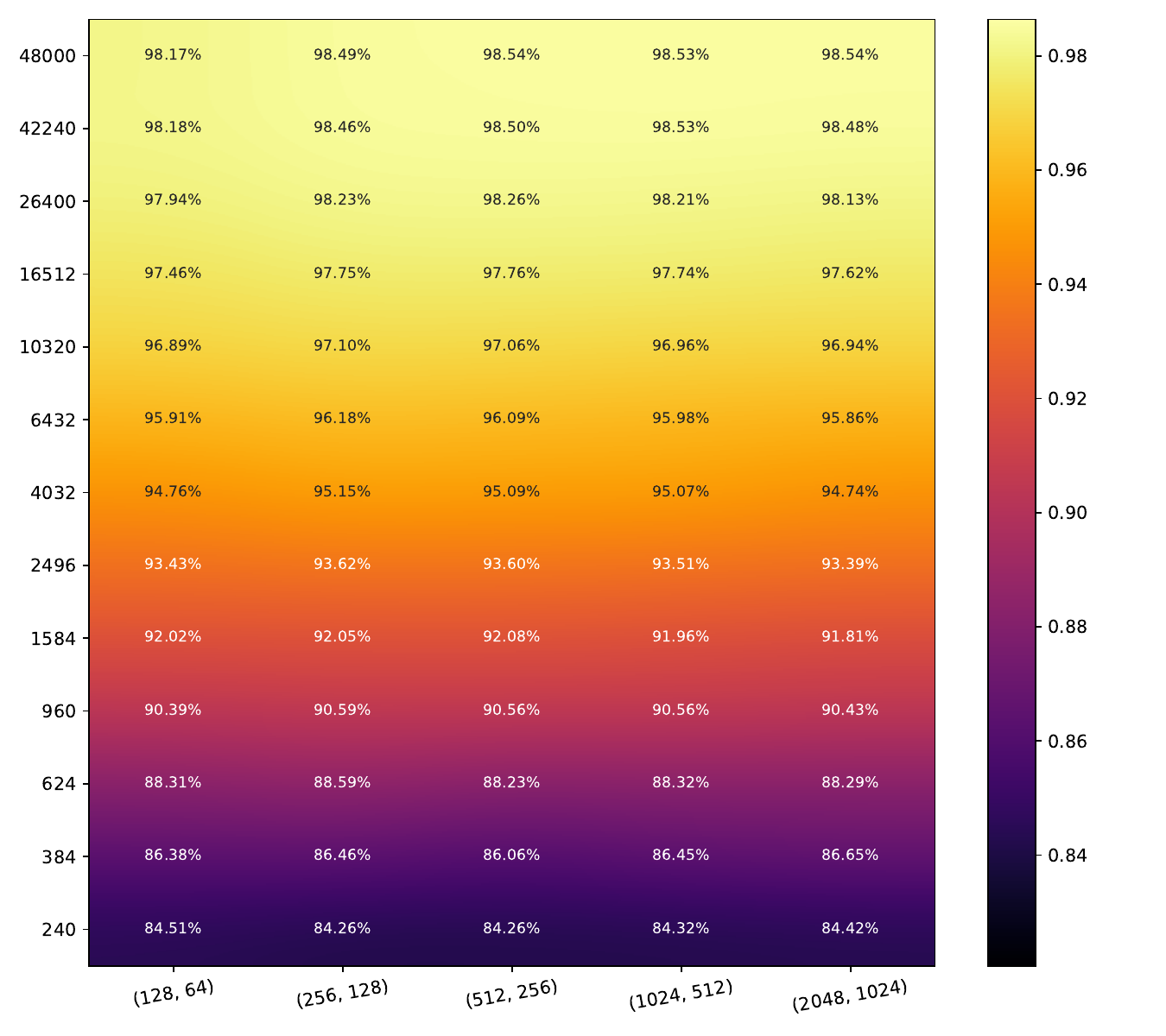}
    \includegraphics[width=\textwidth]{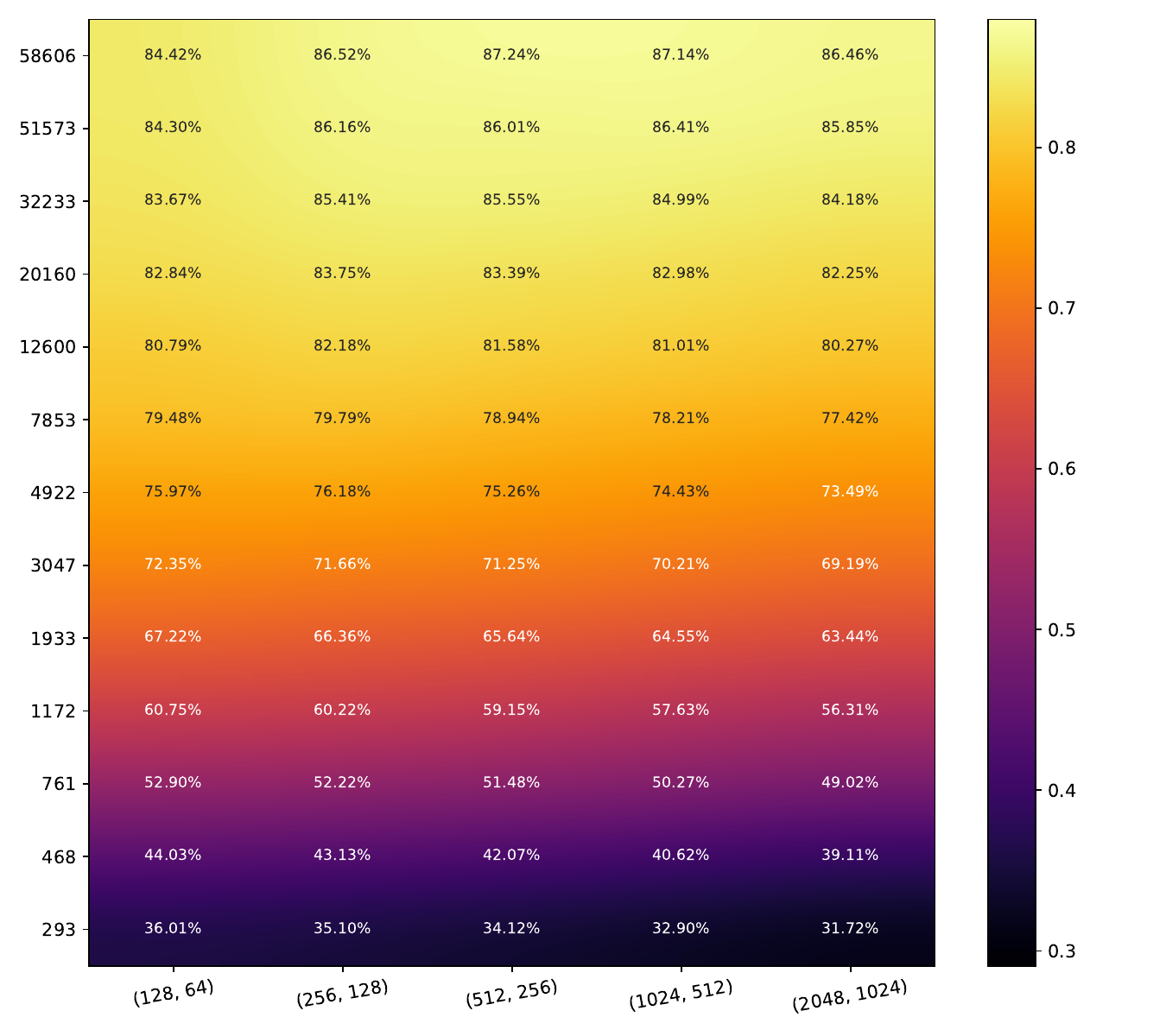}
    \includegraphics[width=\textwidth]{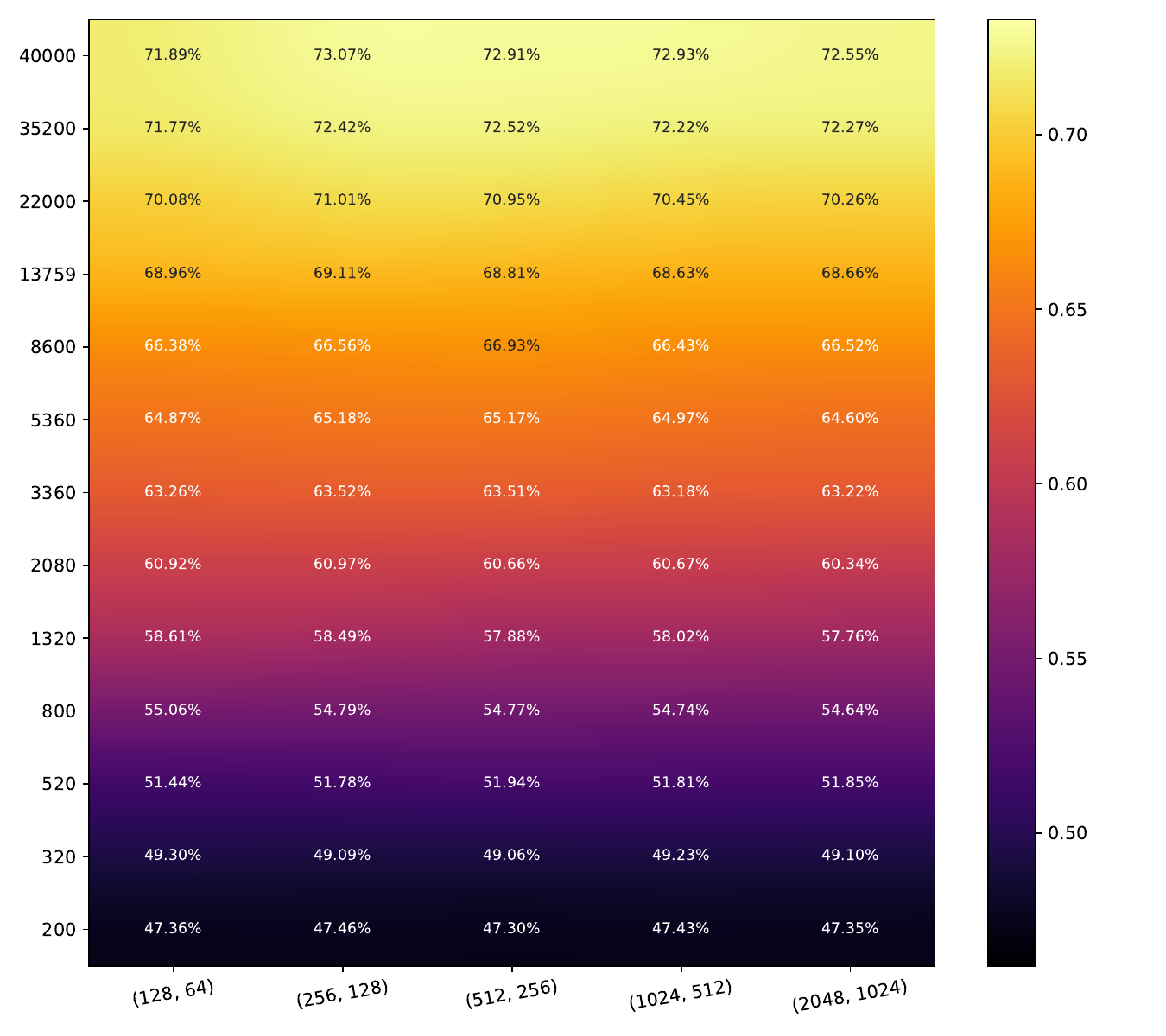}
    \caption*{DE}
  \end{subfigure}\hfill
  \begin{subfigure}[t]{0.185\textwidth}
    \includegraphics[width=\textwidth]{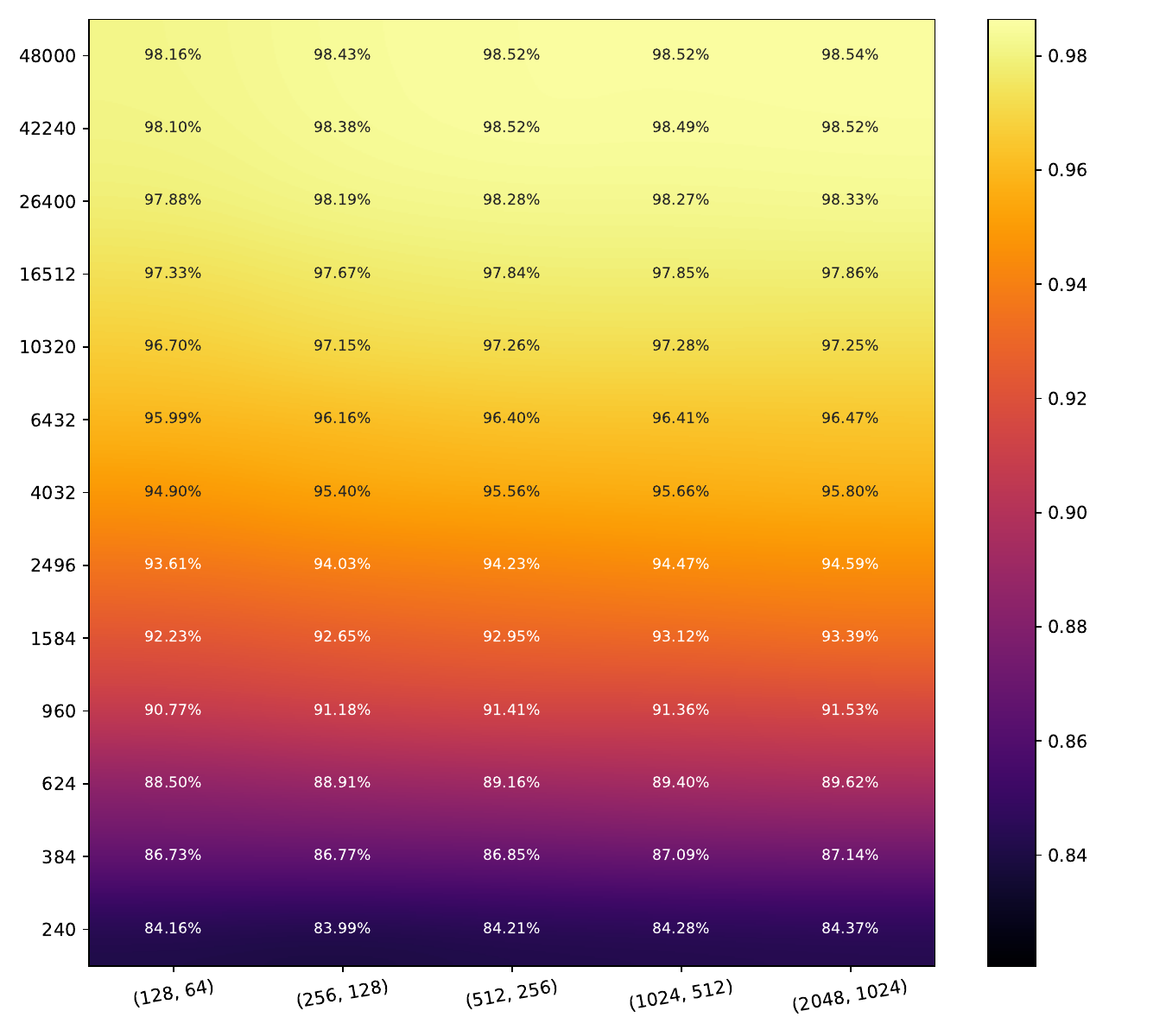}
    \includegraphics[width=\textwidth]{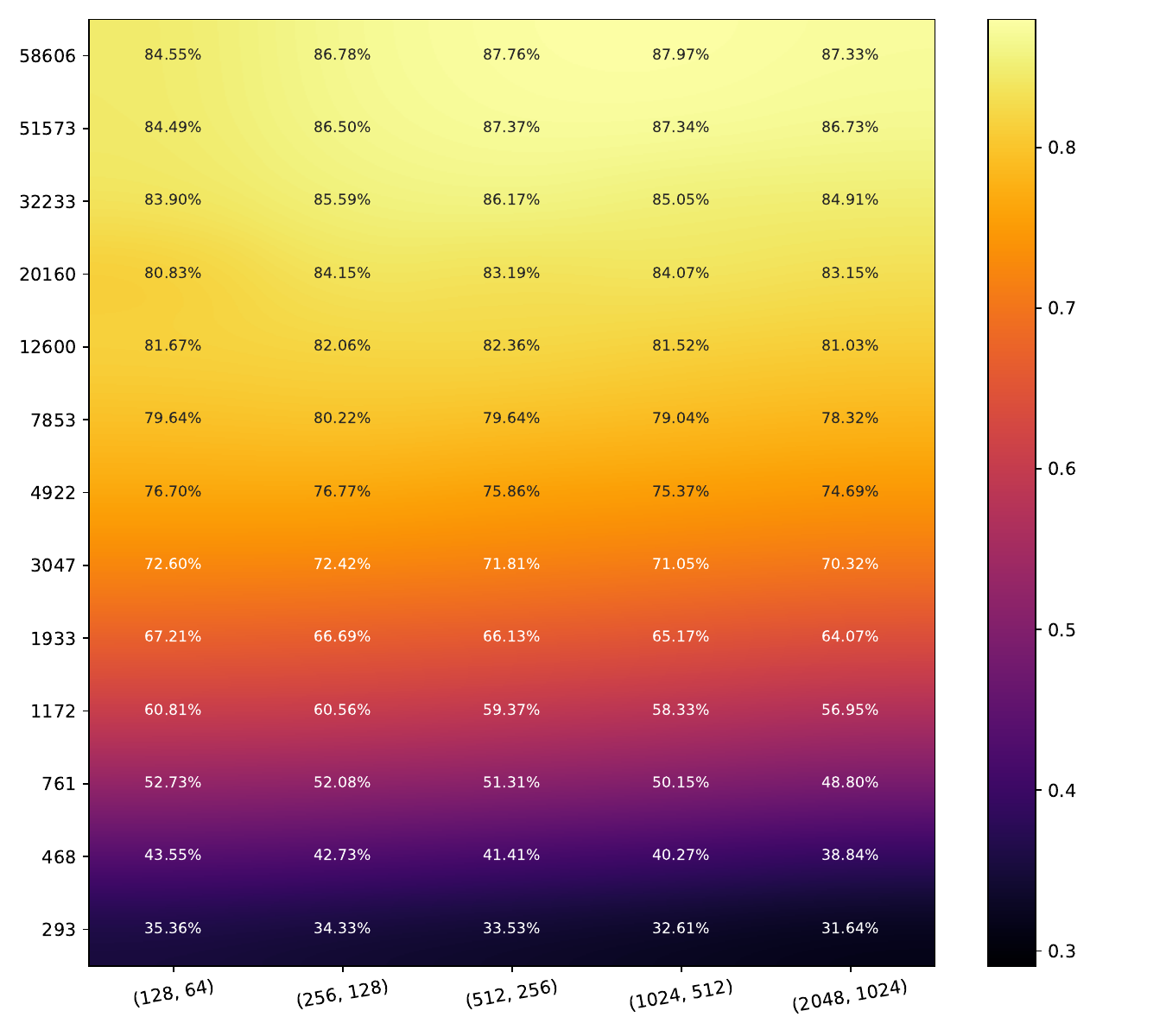}
    \includegraphics[width=\textwidth]{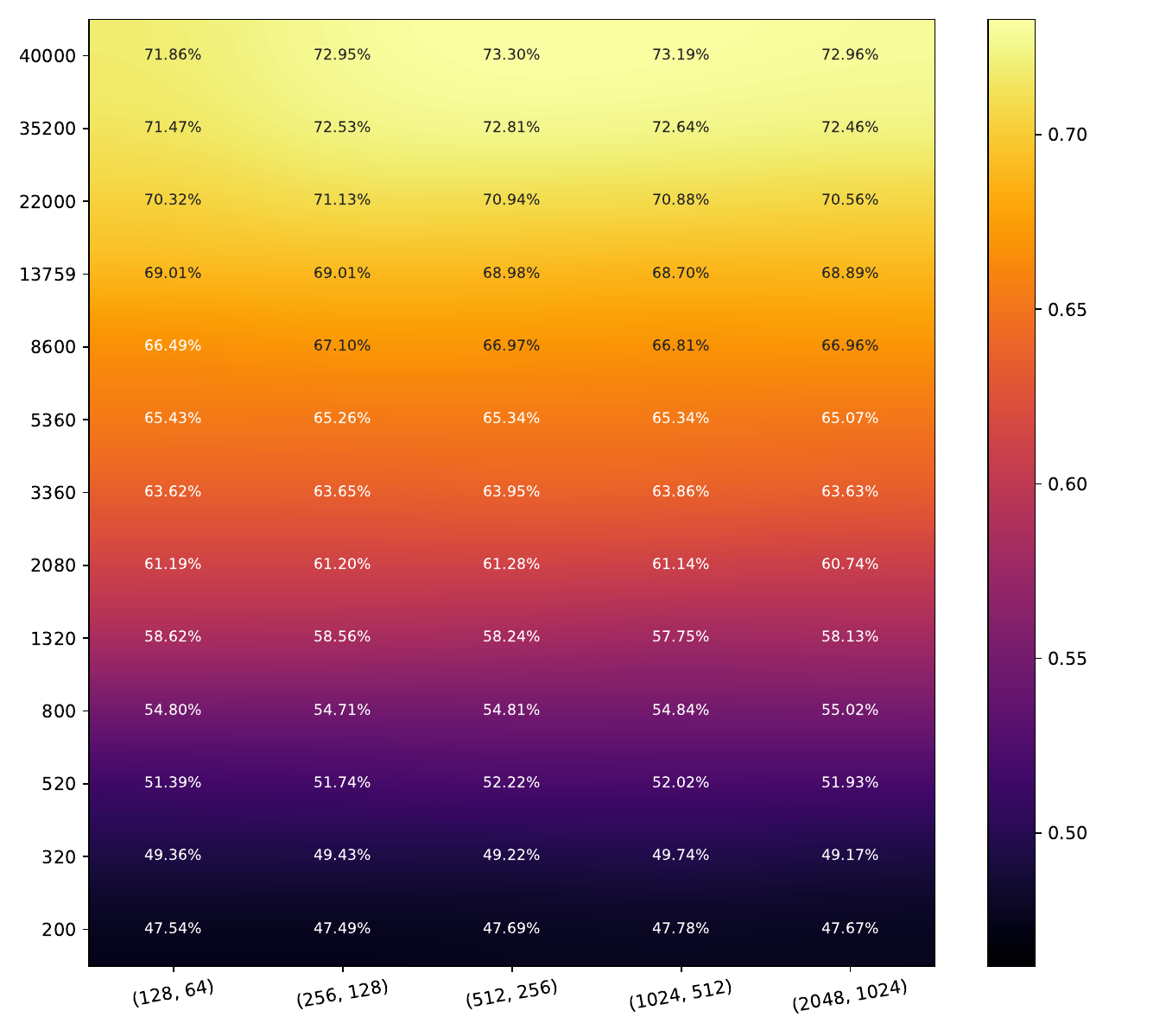}
    \caption*{Conflictual DE}
  \end{subfigure}\hfill
  \caption{Heatmaps of the accuracy. Color scales are the same per dataset.}
  \label{fig:accuracy}
\end{figure}

\begin{figure}[ht]
  \centering
  \begin{subfigure}[t]{\dimexpr0.185\textwidth+20pt\relax}
    \makebox[20pt]{\raisebox{25pt}{\rotatebox[origin=c]{90}{\scriptsize MNIST}}}%
    \includegraphics[width=\dimexpr\linewidth-20pt\relax]{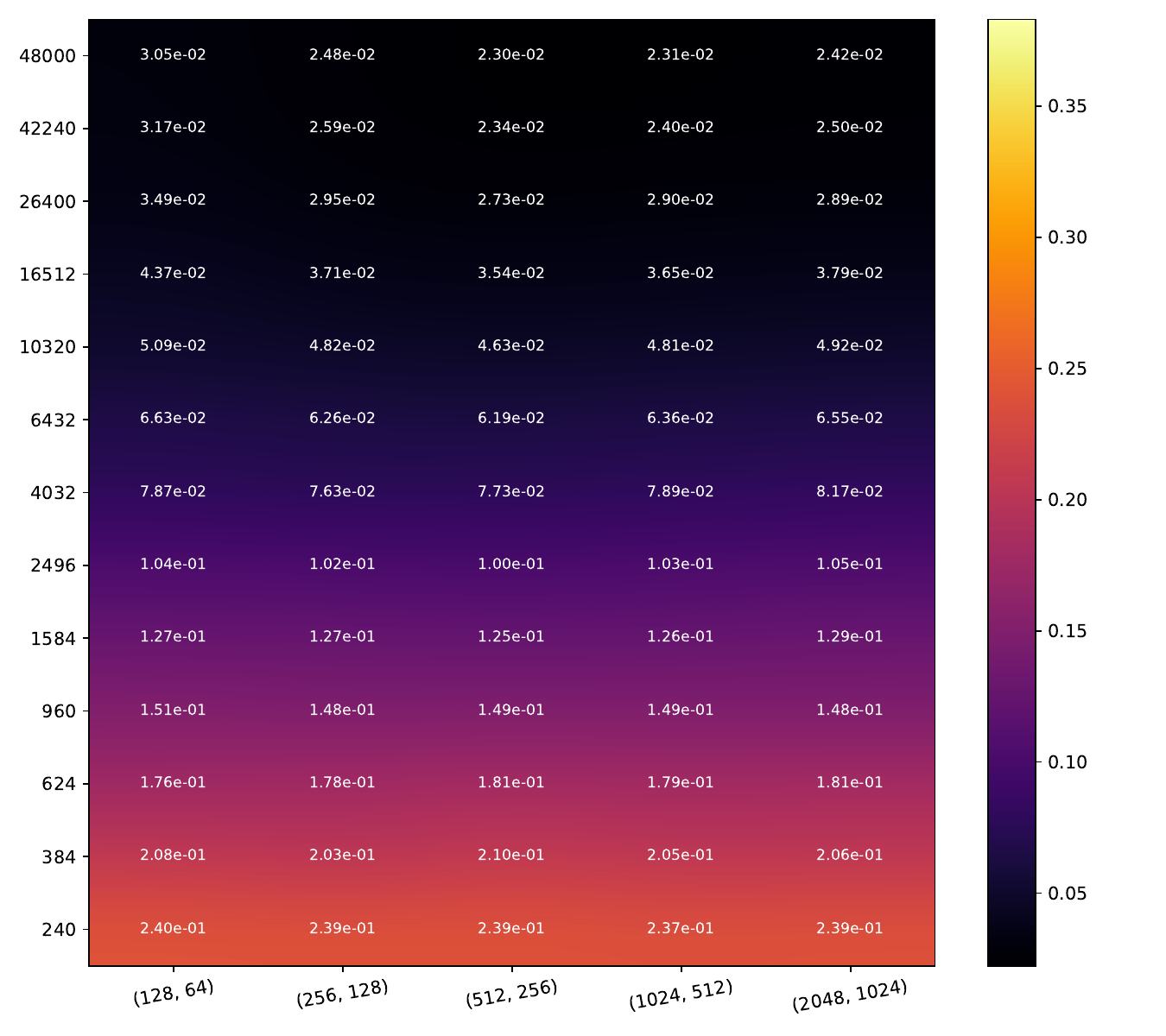}
    \makebox[20pt]{\raisebox{25pt}{\rotatebox[origin=c]{90}{\scriptsize SVHN}}}%
    \includegraphics[width=\dimexpr\linewidth-20pt\relax]{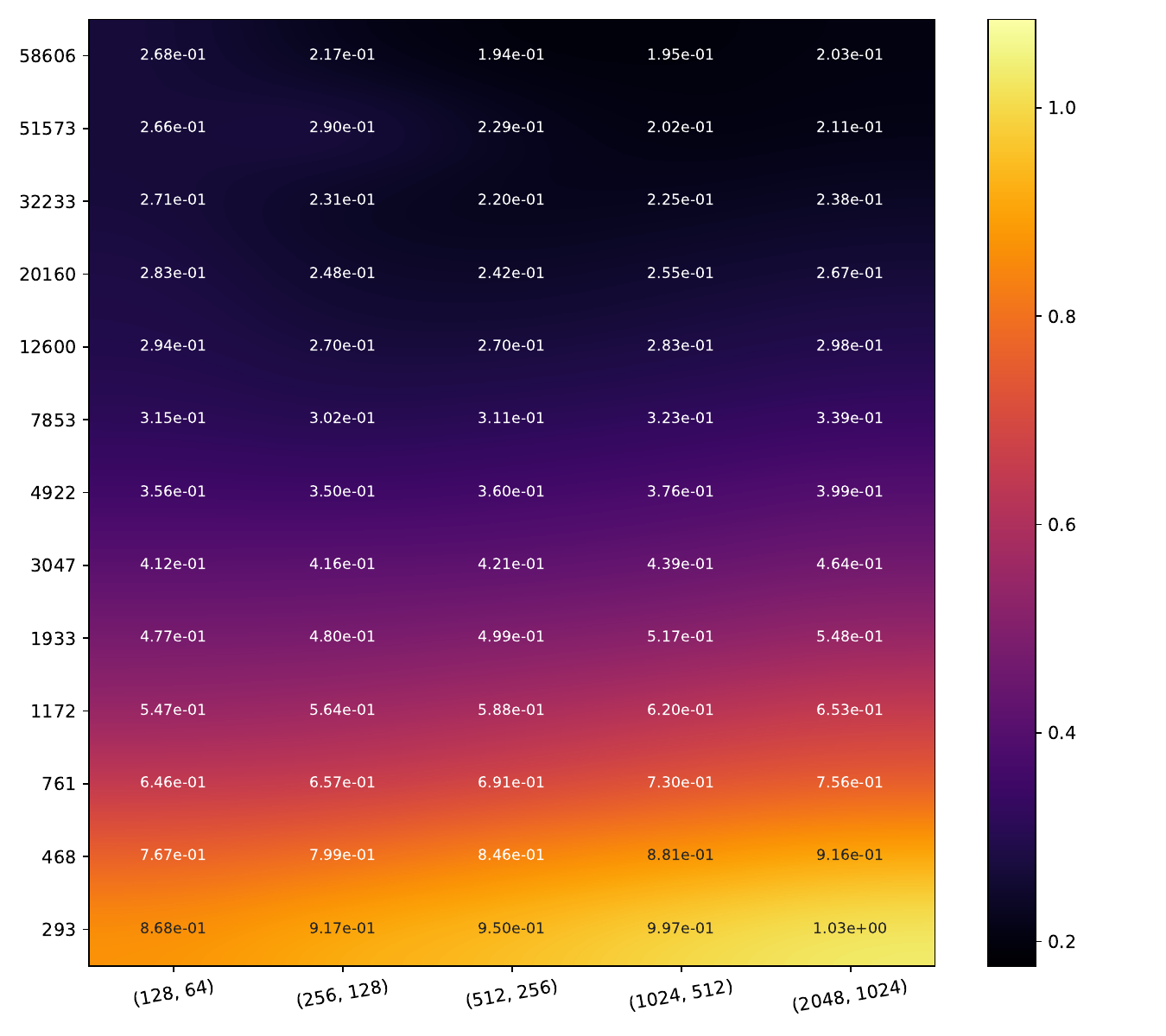}
    \makebox[20pt]{\raisebox{25pt}{\rotatebox[origin=c]{90}{\scriptsize CIFAR10}}}%
    \includegraphics[width=\dimexpr\linewidth-20pt\relax]{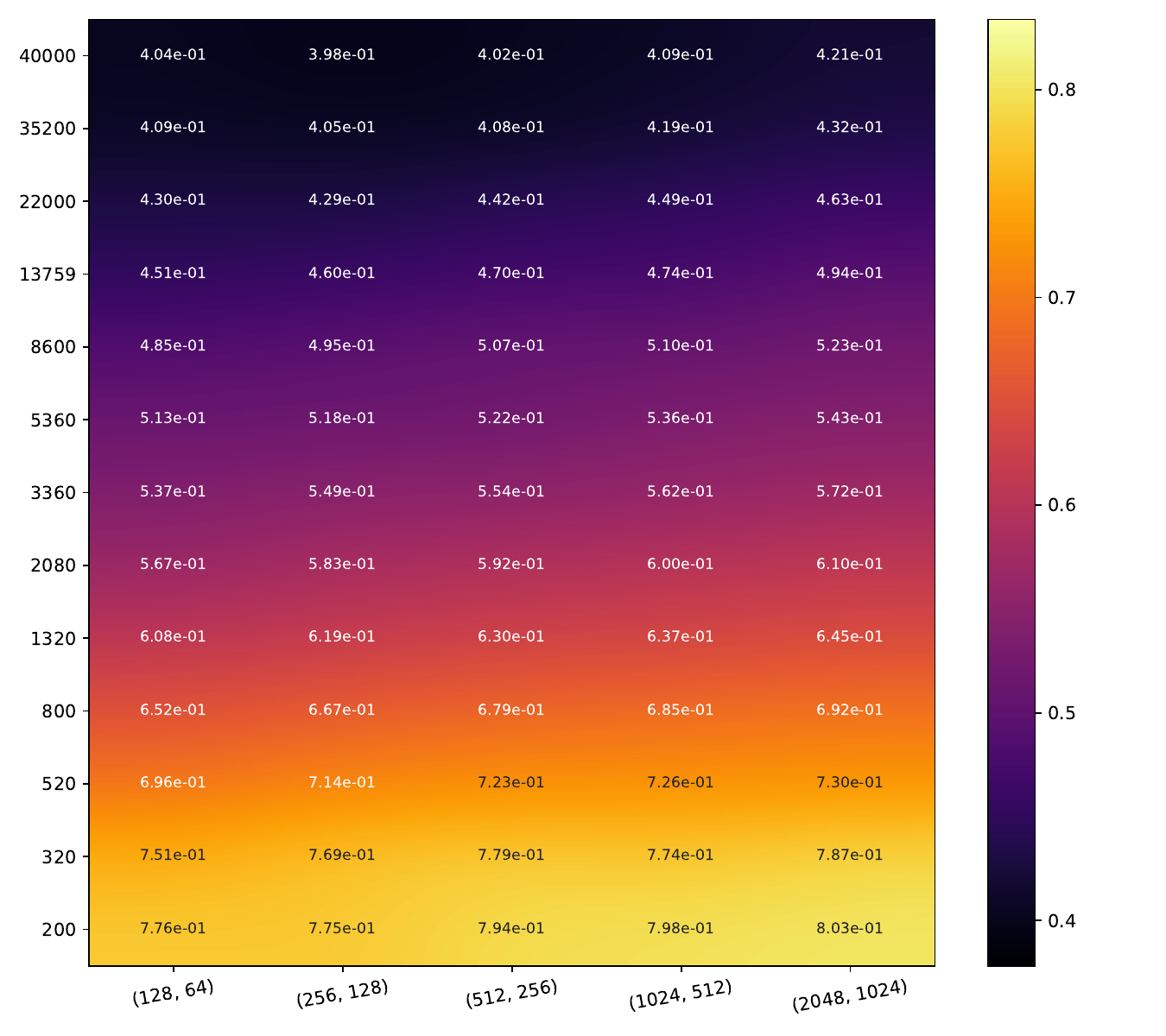}
    \caption*{\qquad MC-Dropout}
  \end{subfigure}\hfill
  \begin{subfigure}[t]{0.185\textwidth}
    \includegraphics[width=\textwidth]{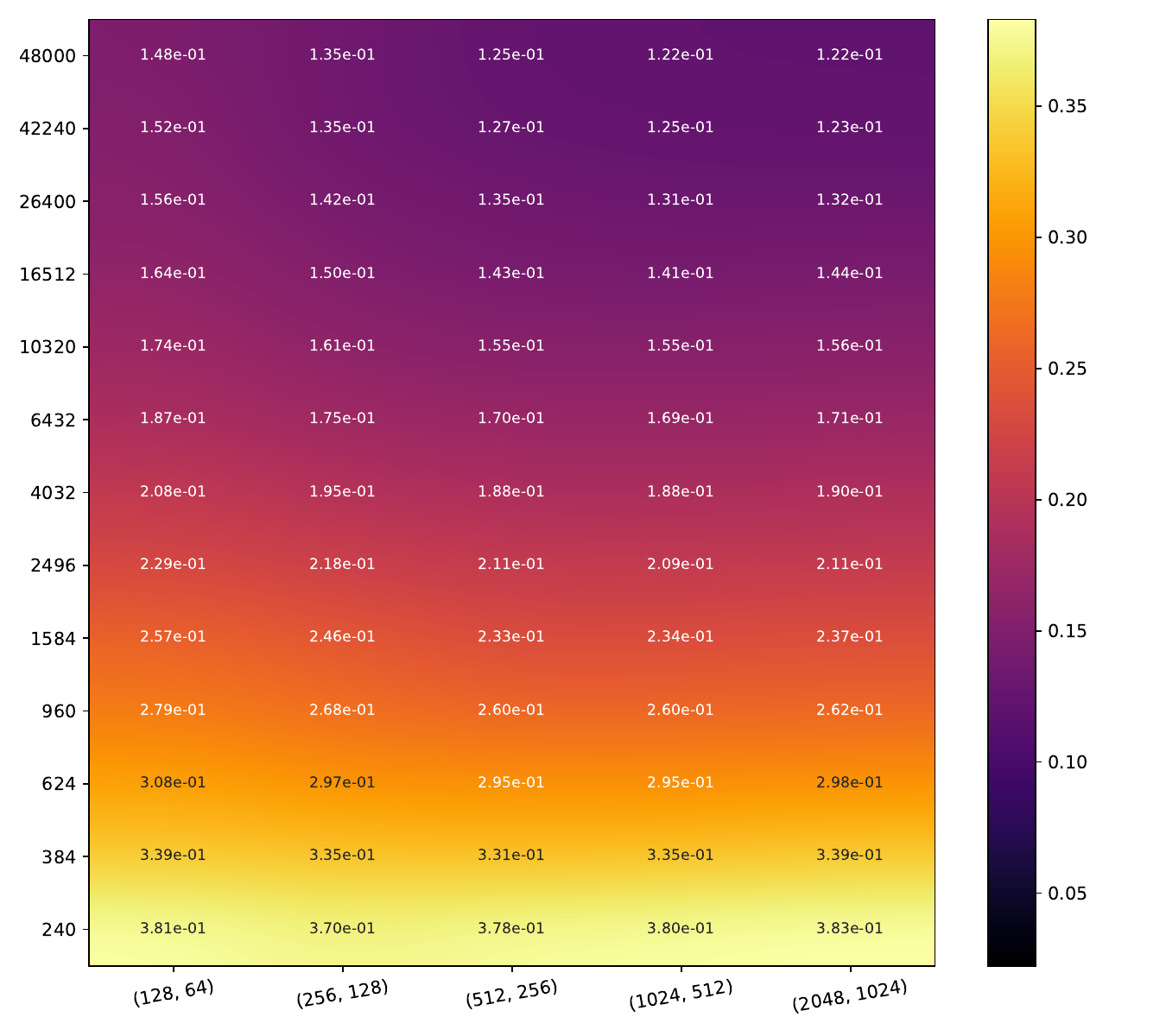}
    \includegraphics[width=\textwidth]{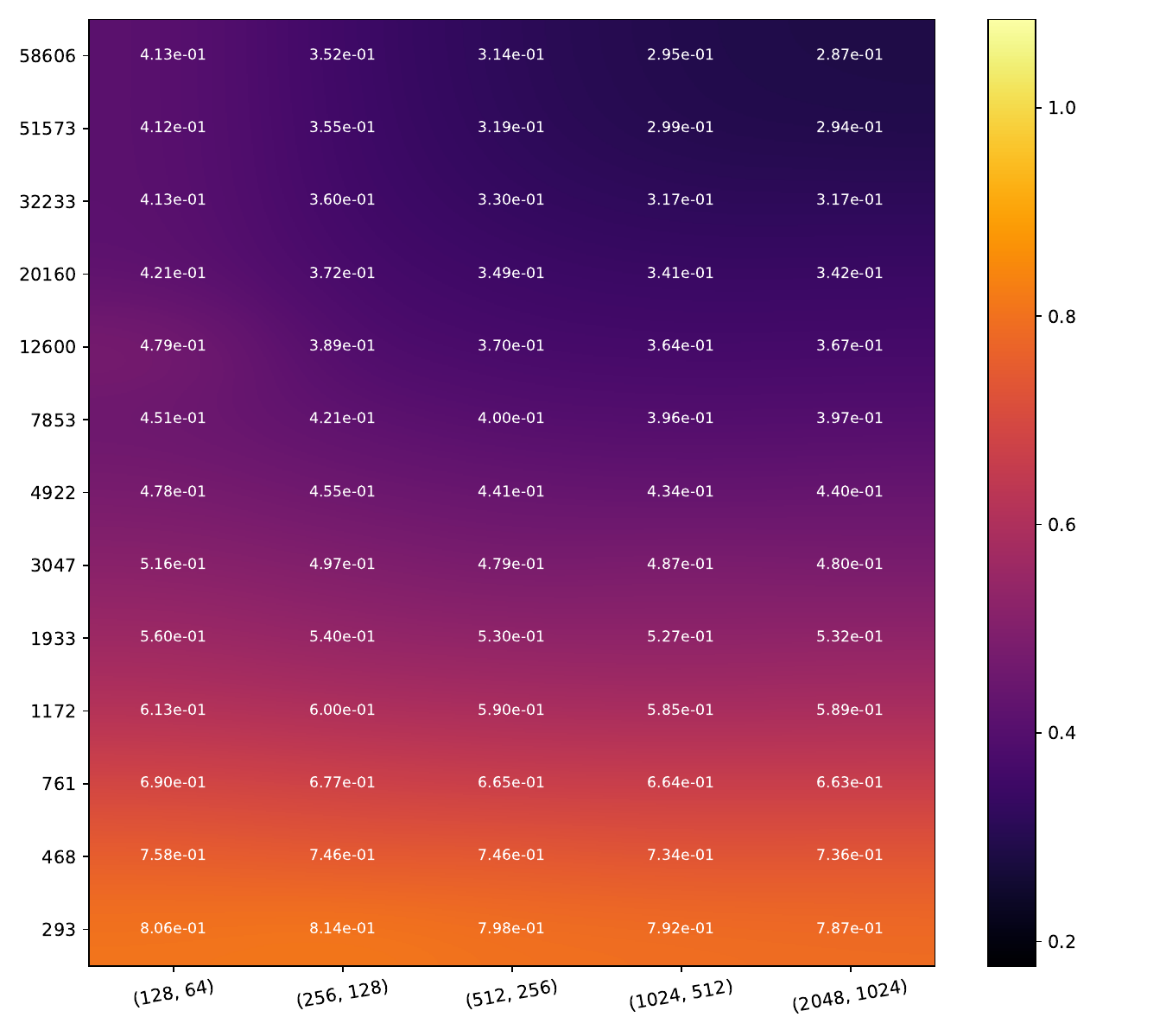}
    \includegraphics[width=\textwidth]{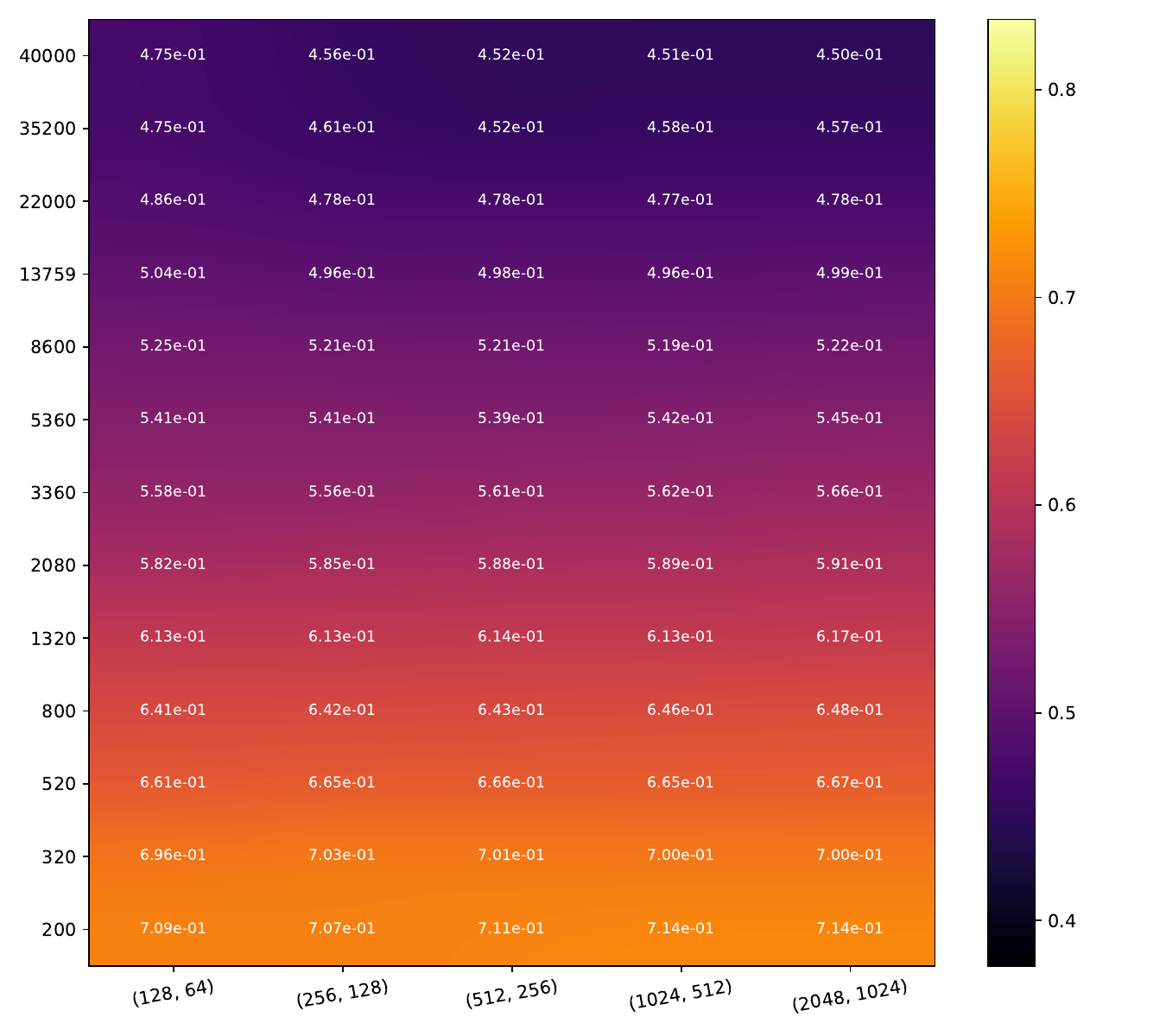}
    \caption*{MC-Dropout LS}
  \end{subfigure}\hfill
  \begin{subfigure}[t]{0.185\textwidth}
    \includegraphics[width=\textwidth]{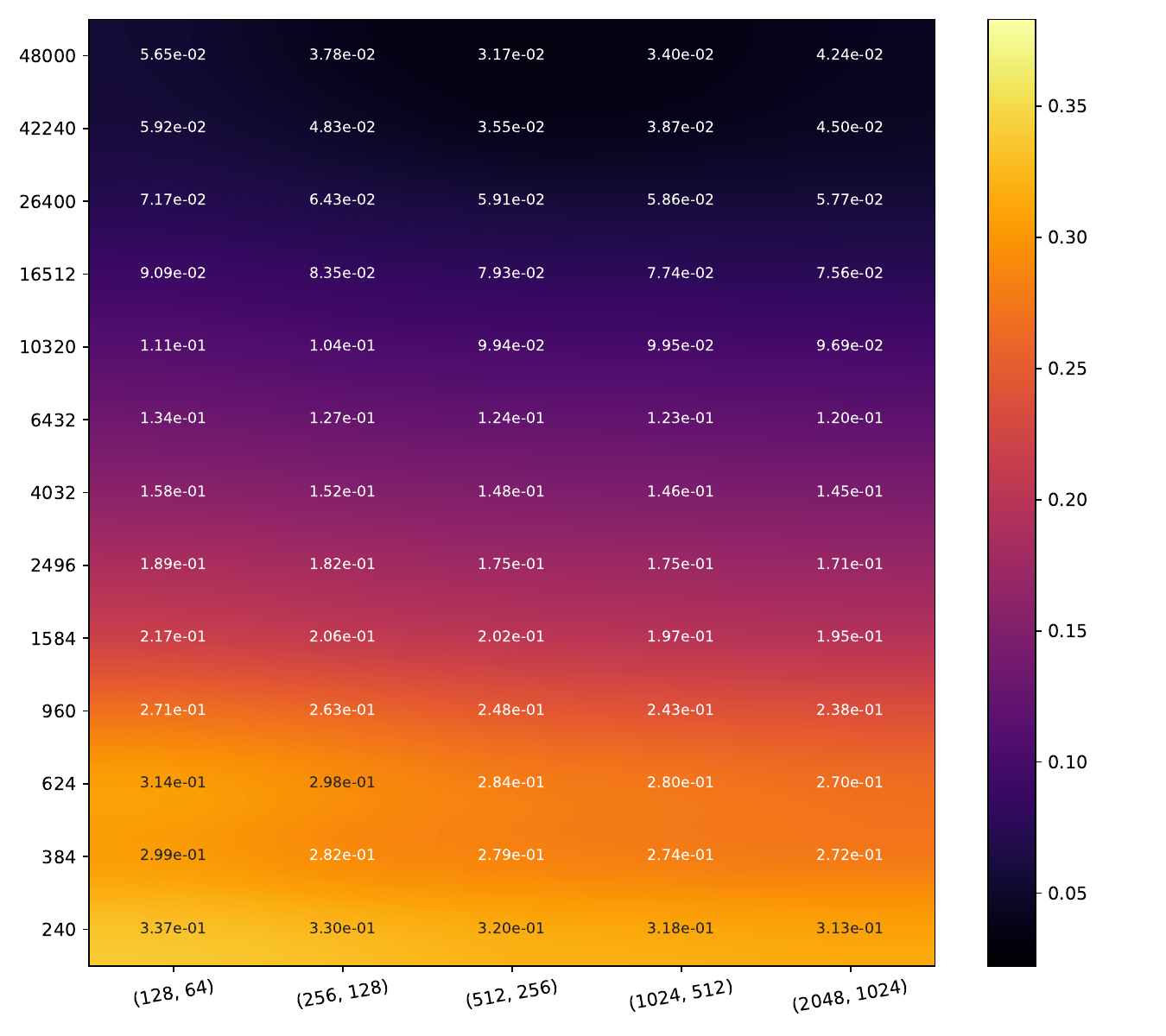}
    \includegraphics[width=\textwidth]{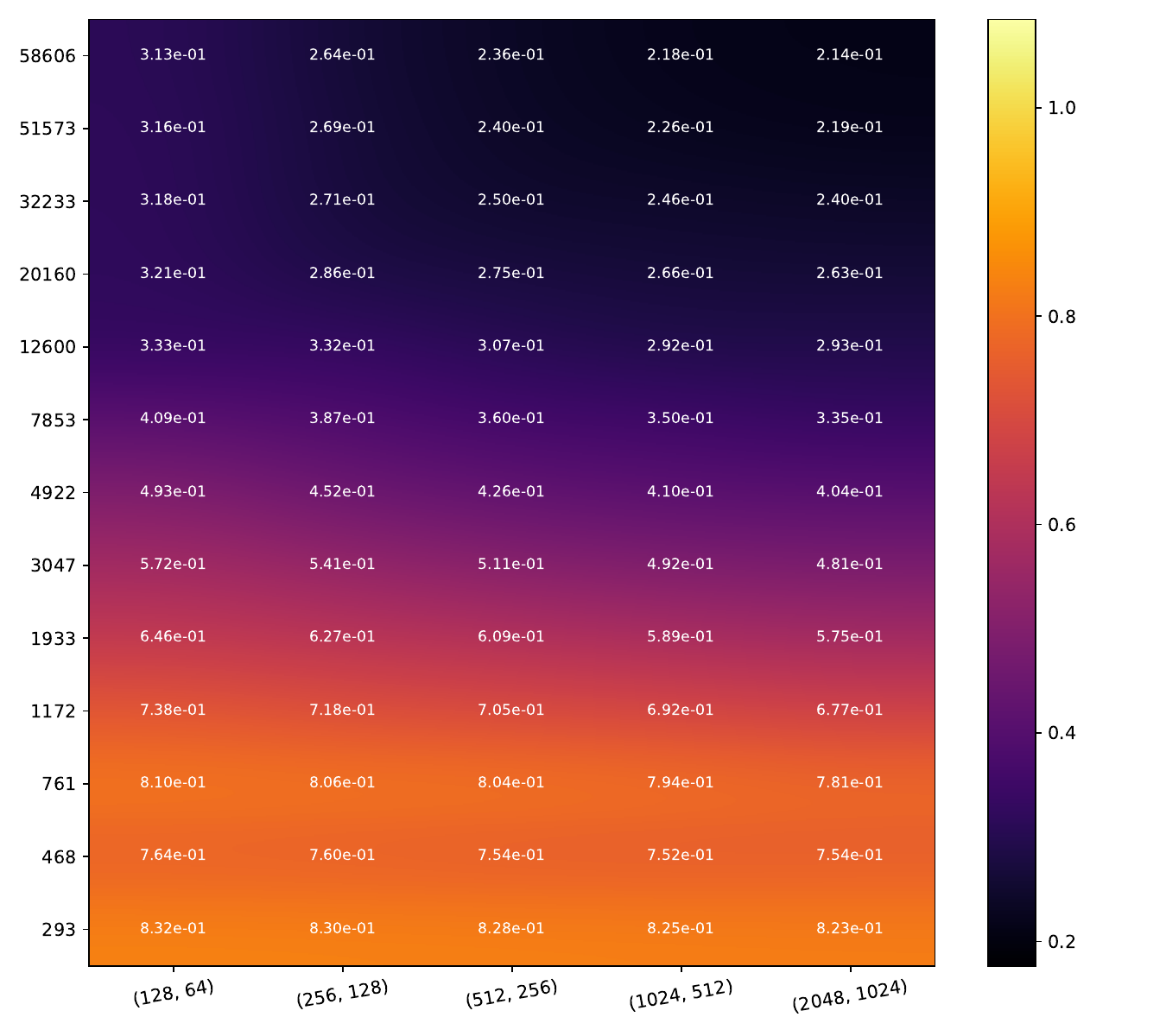}
    \includegraphics[width=\textwidth]{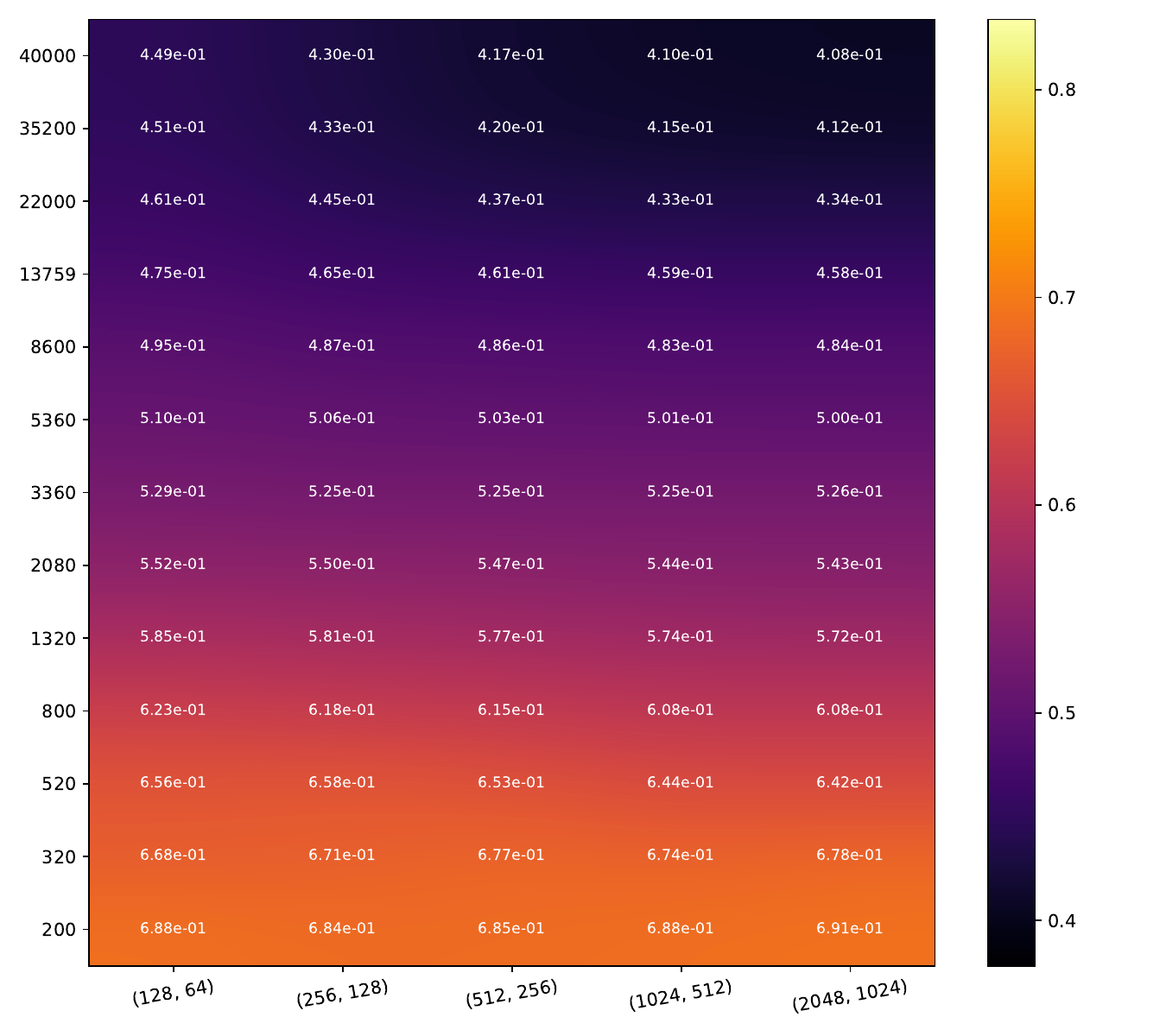}
    \caption*{EDL}
  \end{subfigure}\hfill
  \begin{subfigure}[t]{0.185\textwidth}
    \includegraphics[width=\textwidth]{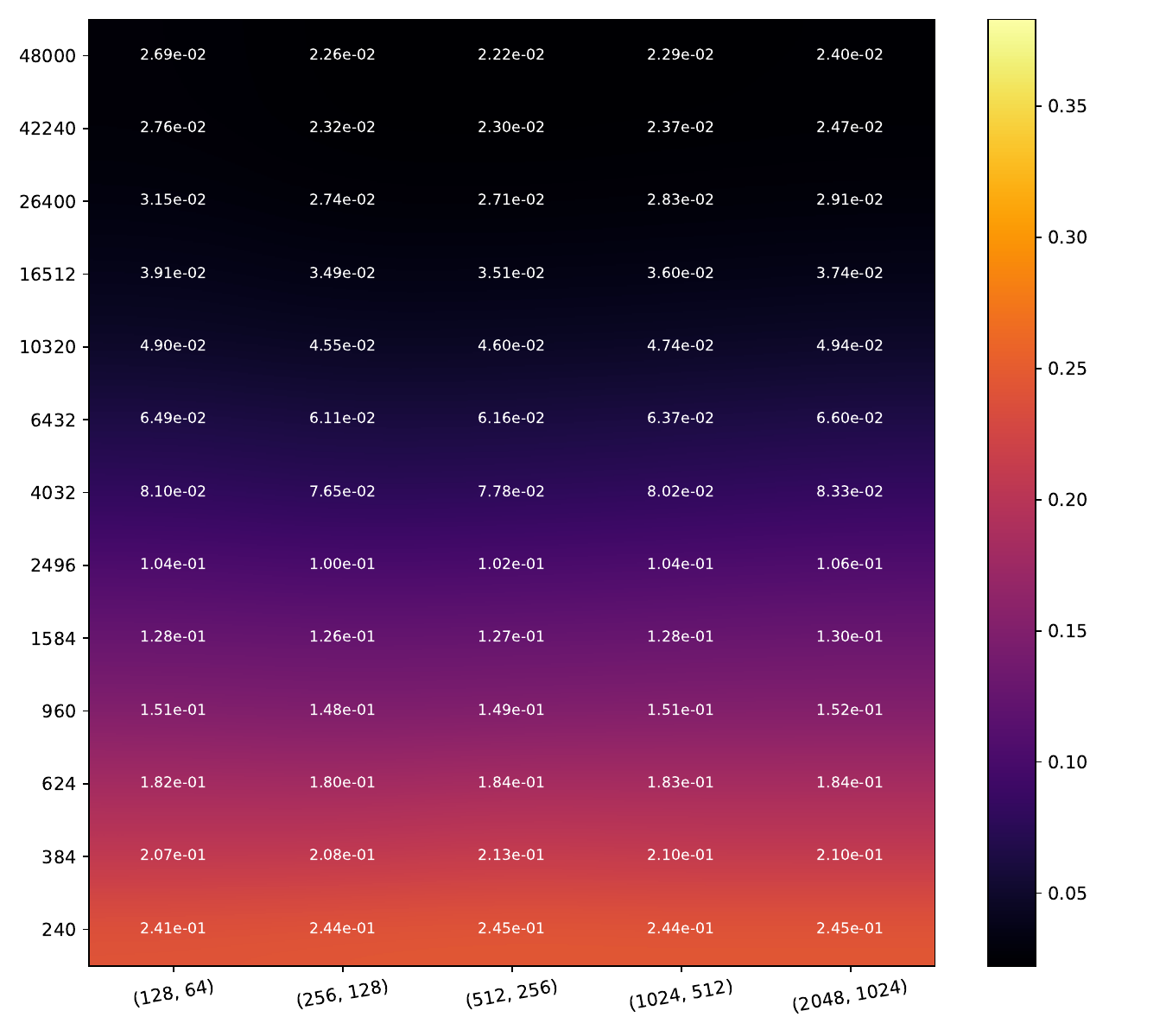}
    \includegraphics[width=\textwidth]{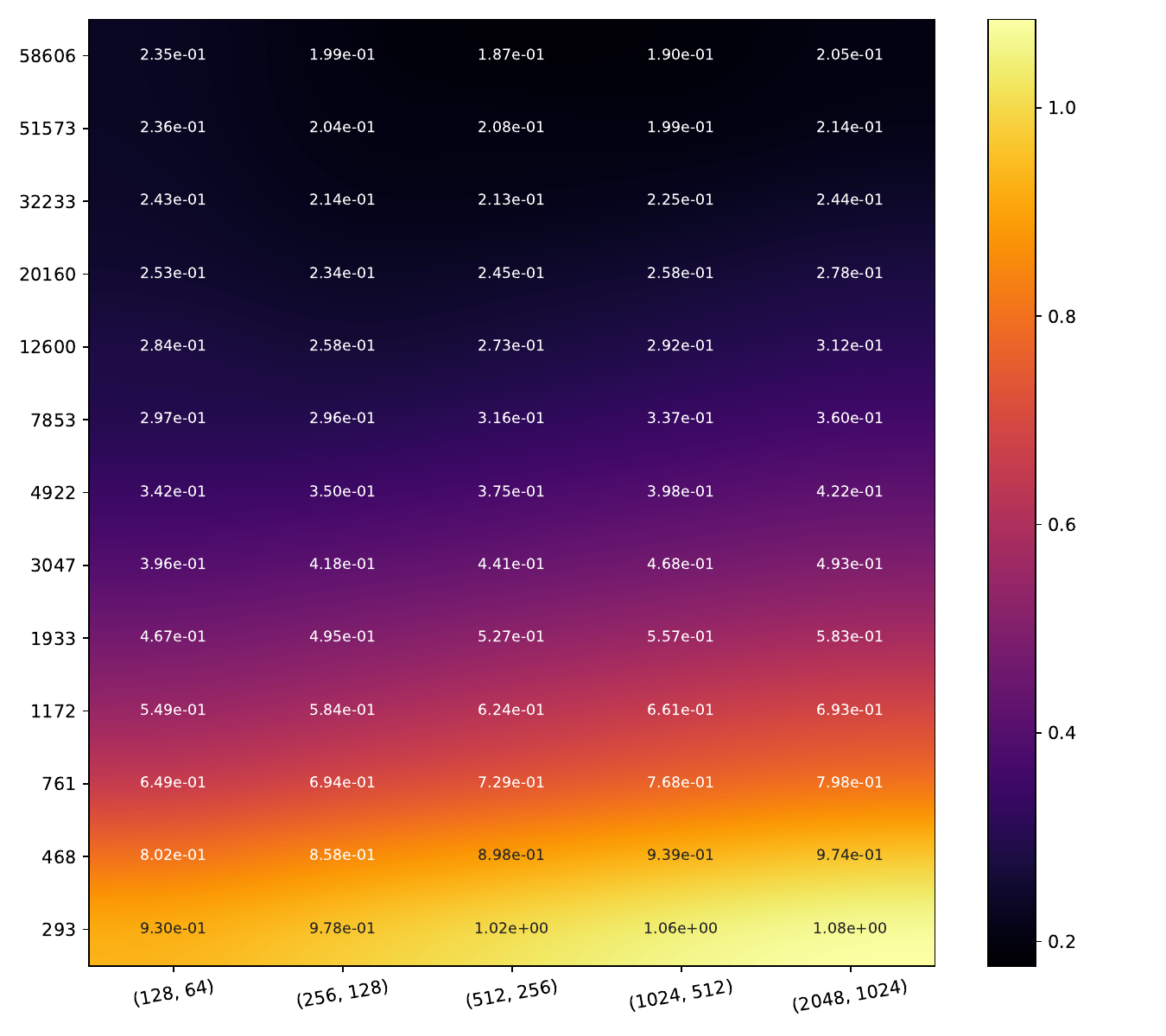}
    \includegraphics[width=\textwidth]{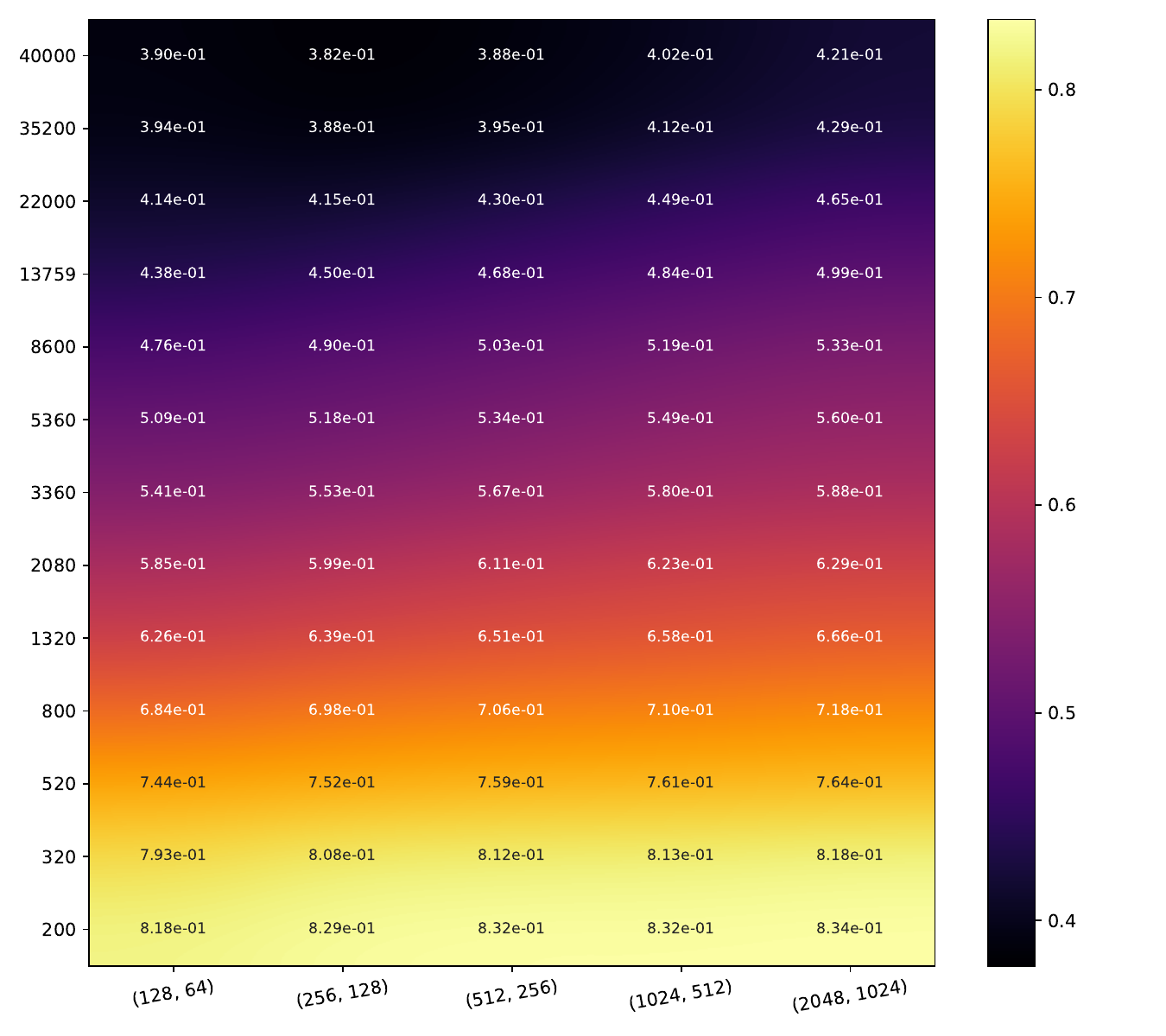}
    \caption*{DE}
  \end{subfigure}\hfill
  \begin{subfigure}[t]{0.185\textwidth}
    \includegraphics[width=\textwidth]{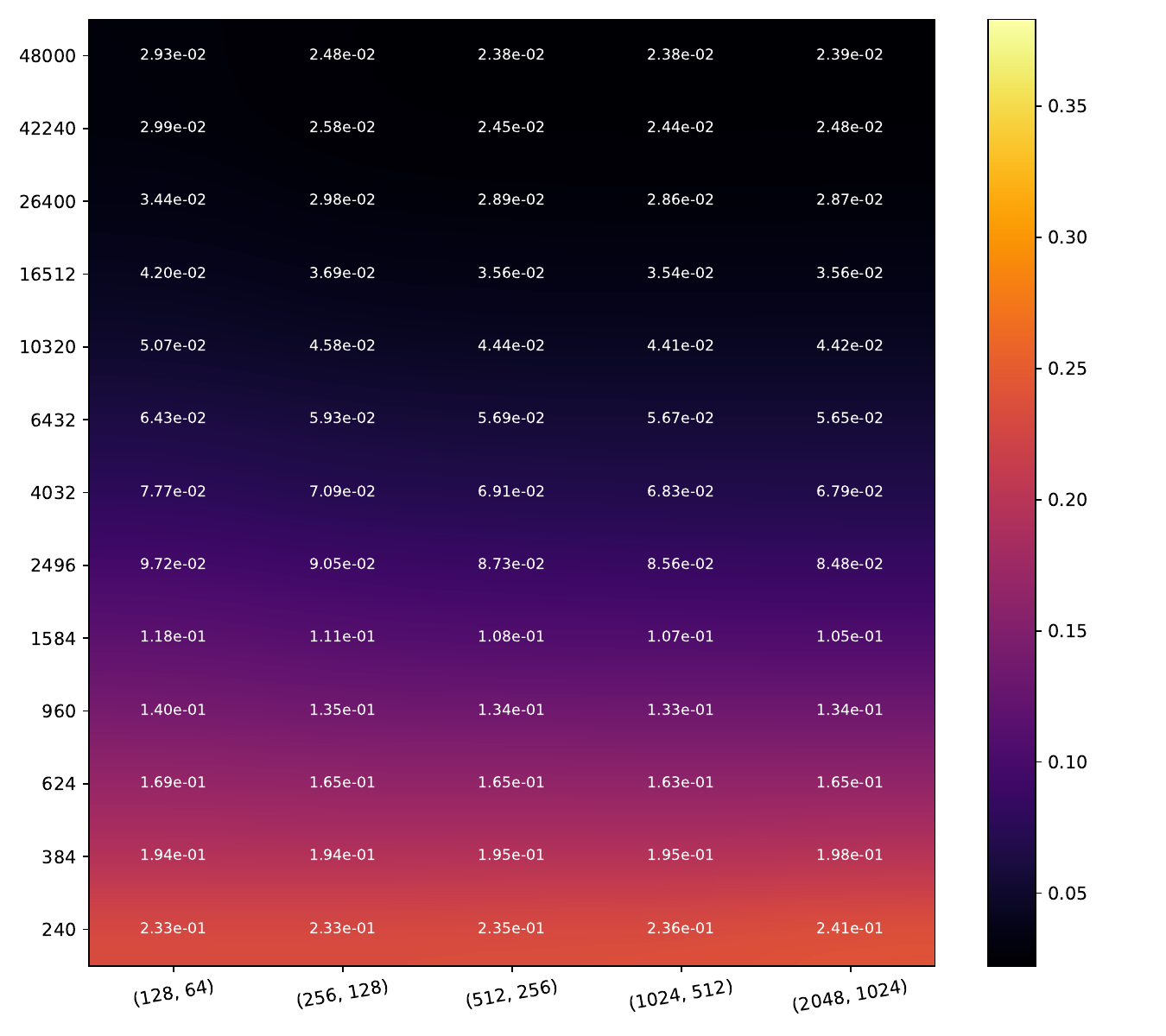}
    \includegraphics[width=\textwidth]{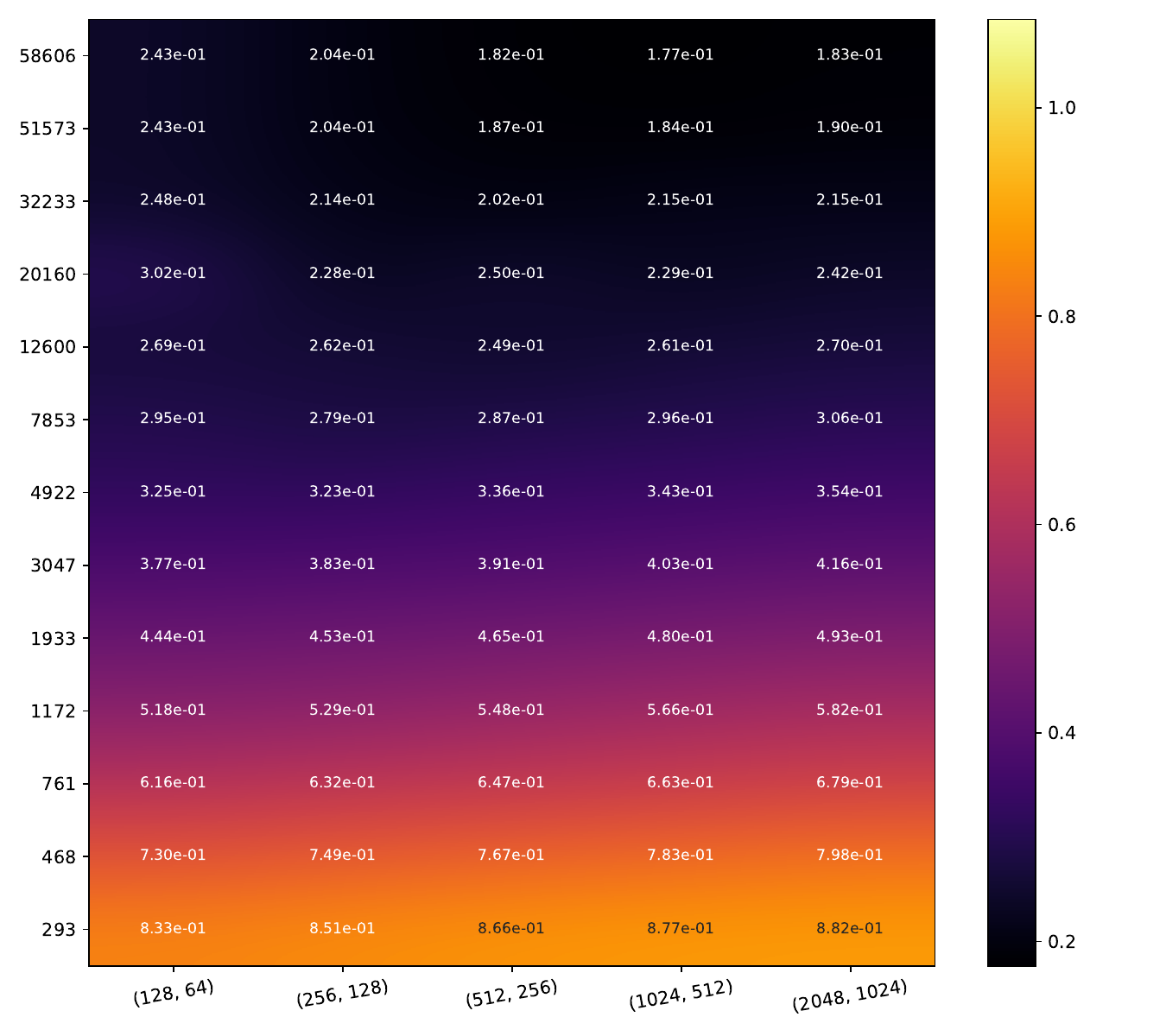}
    \includegraphics[width=\textwidth]{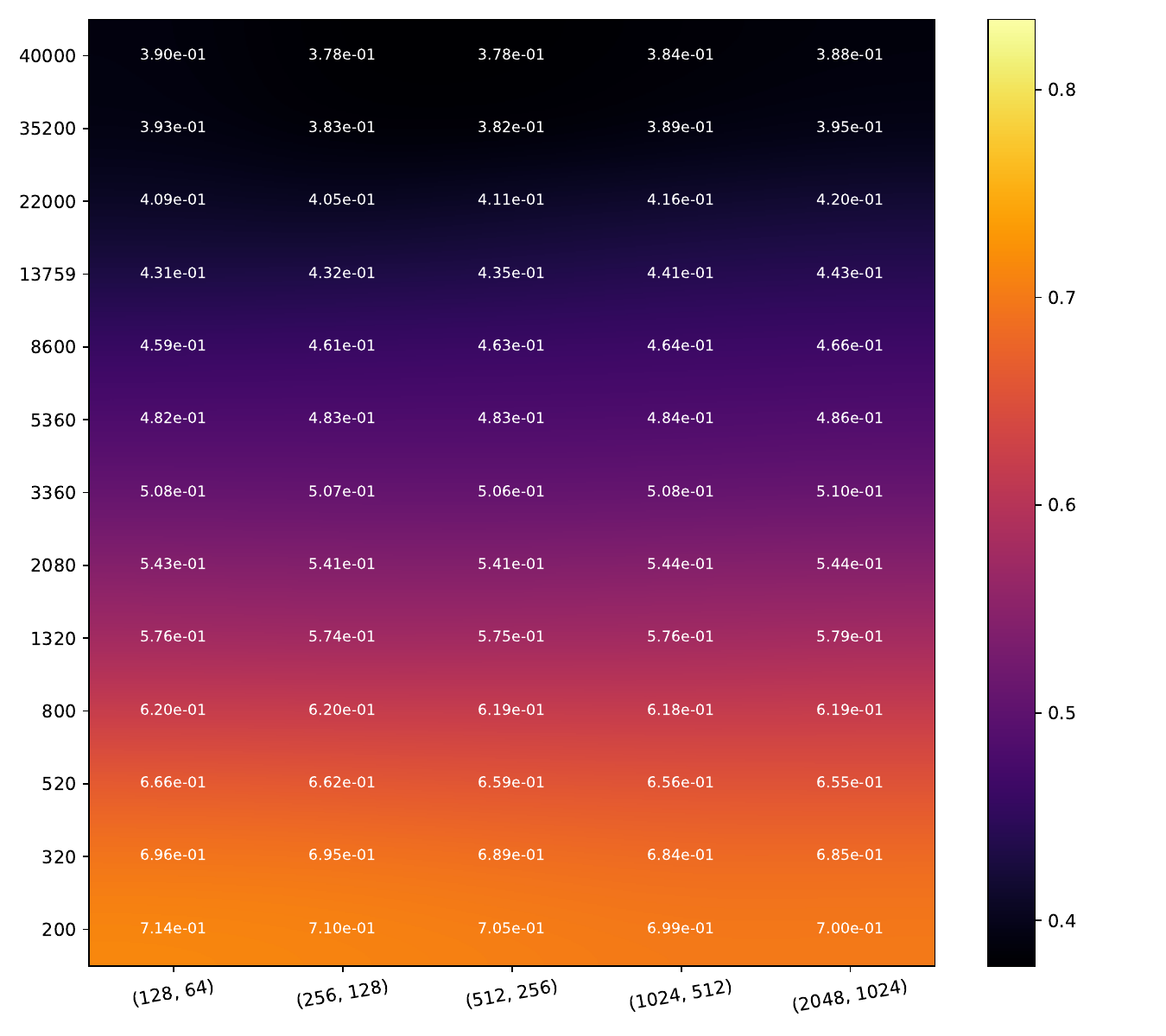}
    \caption*{Conflictual DE}
  \end{subfigure}\hfill
  \caption{Heatmaps of the Brier score. Color scales are the same per dataset.}
  \label{fig:brier}
\end{figure}

\begin{figure}[ht]
  \begin{subfigure}[t]{\dimexpr0.185\textwidth+20pt\relax}
    \makebox[20pt]{\raisebox{25pt}{\rotatebox[origin=c]{90}{\scriptsize MNIST}}}%
    \includegraphics[width=\dimexpr\linewidth-20pt\relax]{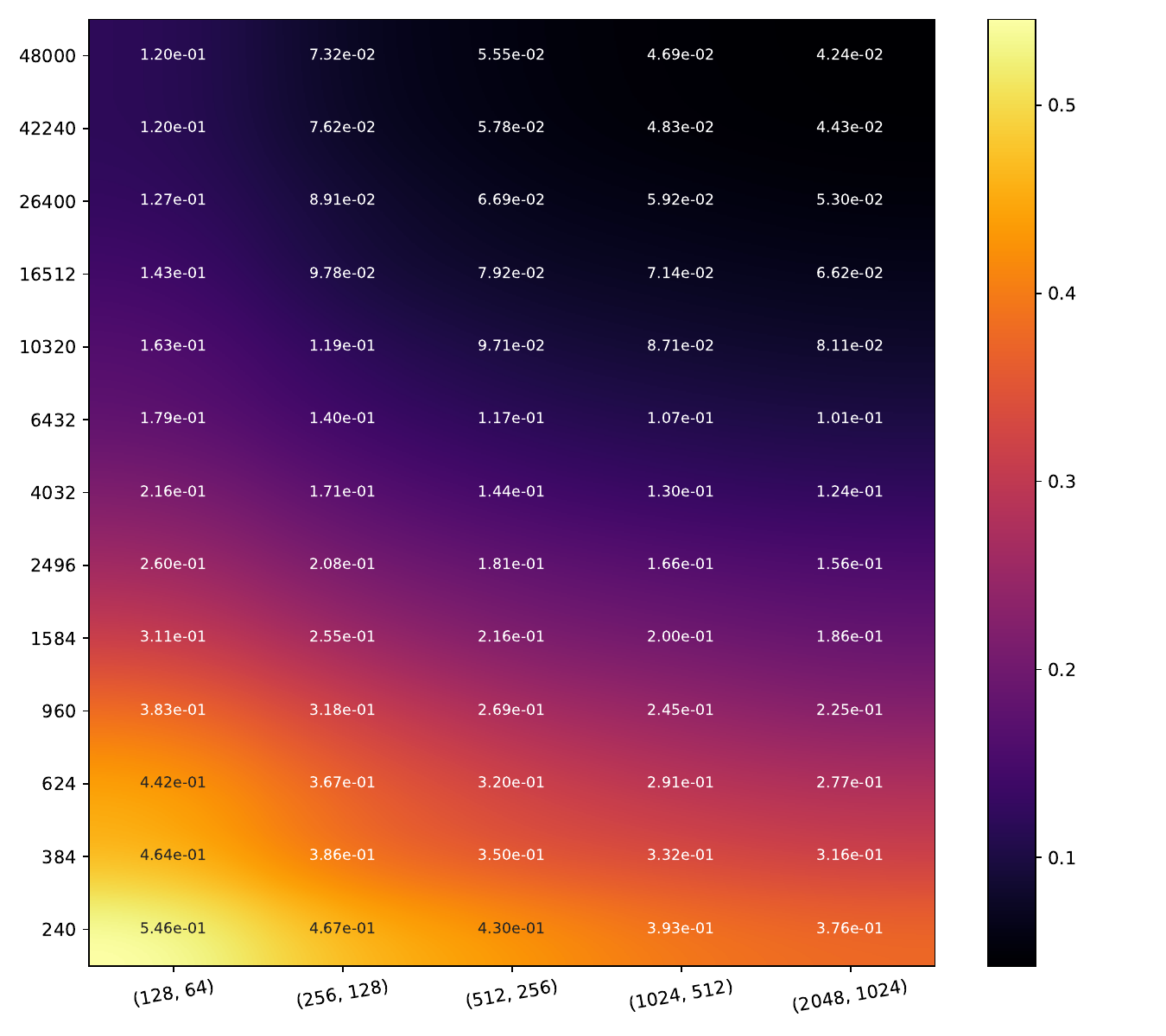}
    \makebox[20pt]{\raisebox{25pt}{\rotatebox[origin=c]{90}{\scriptsize SVHN}}}%
    \includegraphics[width=\dimexpr\linewidth-20pt\relax]{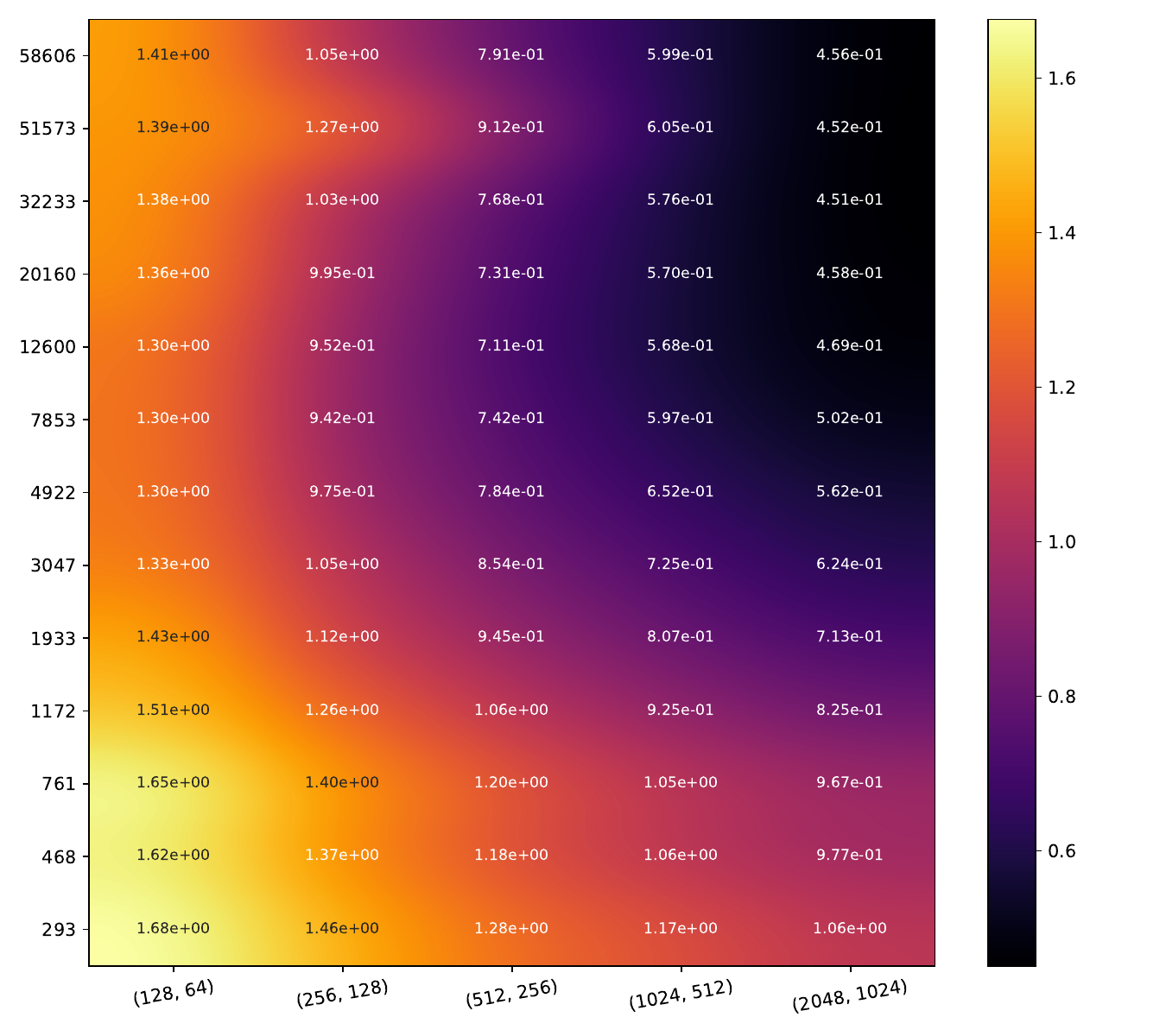}
    \makebox[20pt]{\raisebox{25pt}{\rotatebox[origin=c]{90}{\scriptsize CIFAR10}}}%
    \includegraphics[width=\dimexpr\linewidth-20pt\relax]{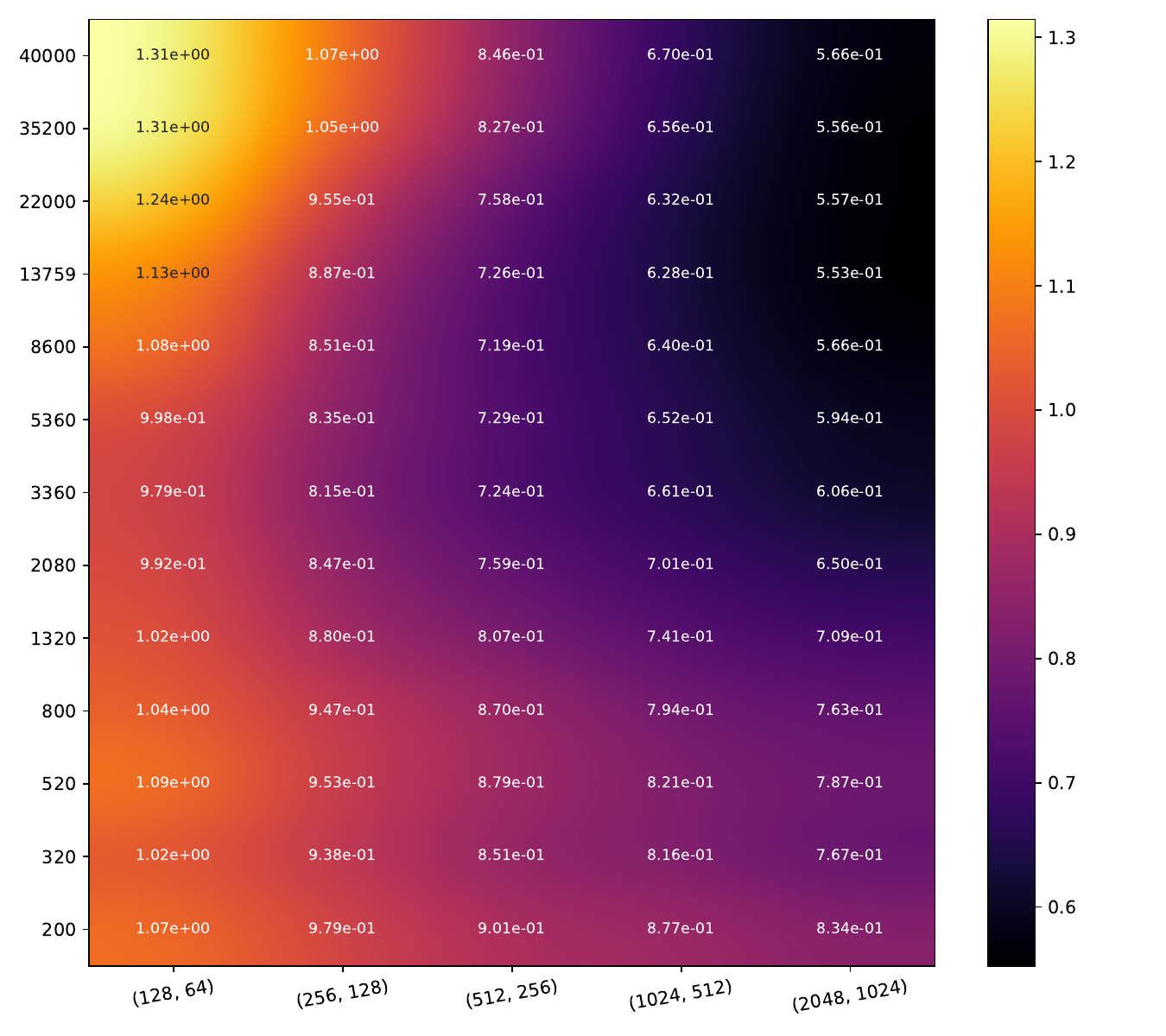}
    \caption*{\qquad MC-Dropout}
  \end{subfigure}\hfill
  \begin{subfigure}[t]{0.185\textwidth}
    \includegraphics[width=\textwidth]{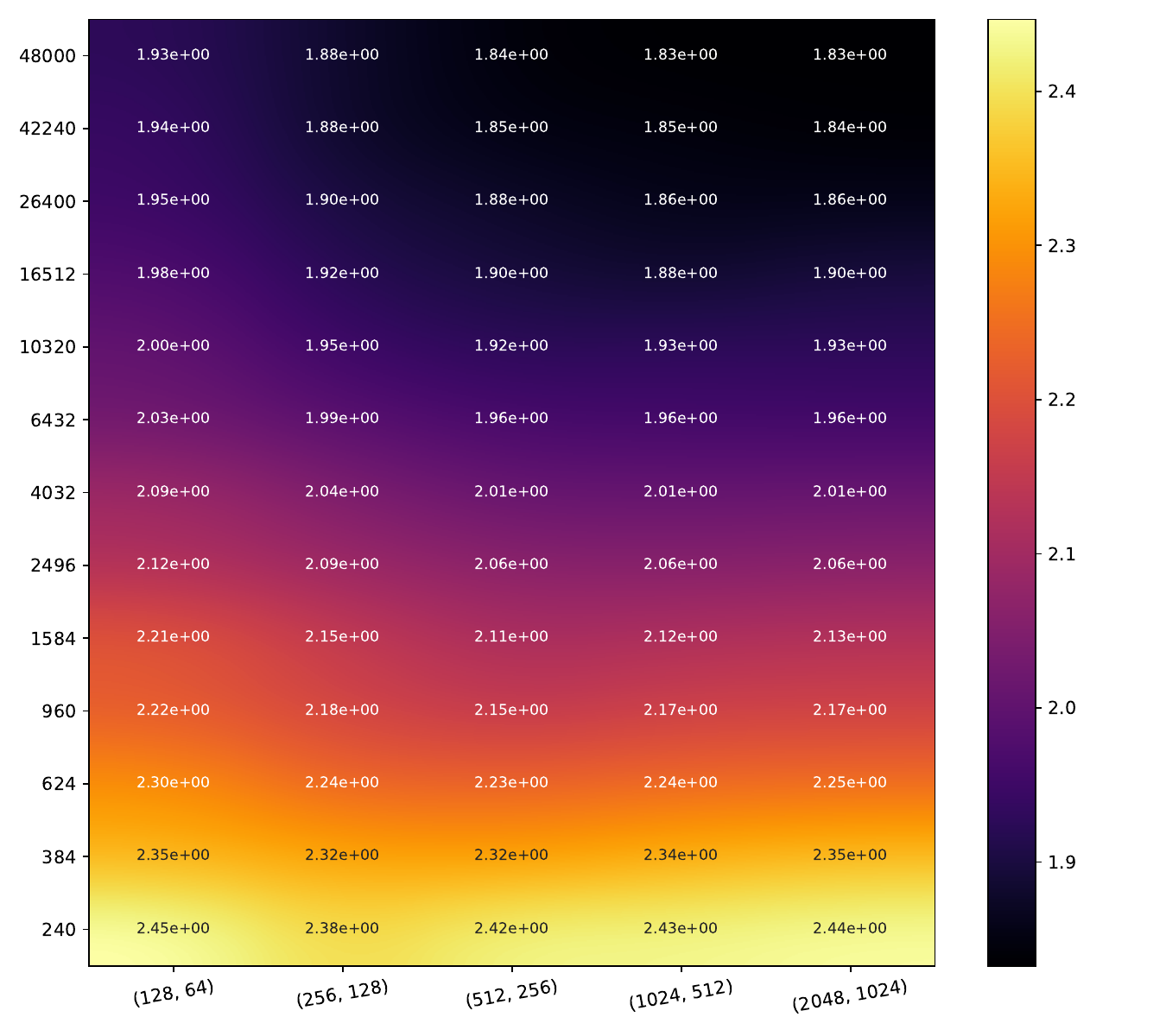}
    \includegraphics[width=\textwidth]{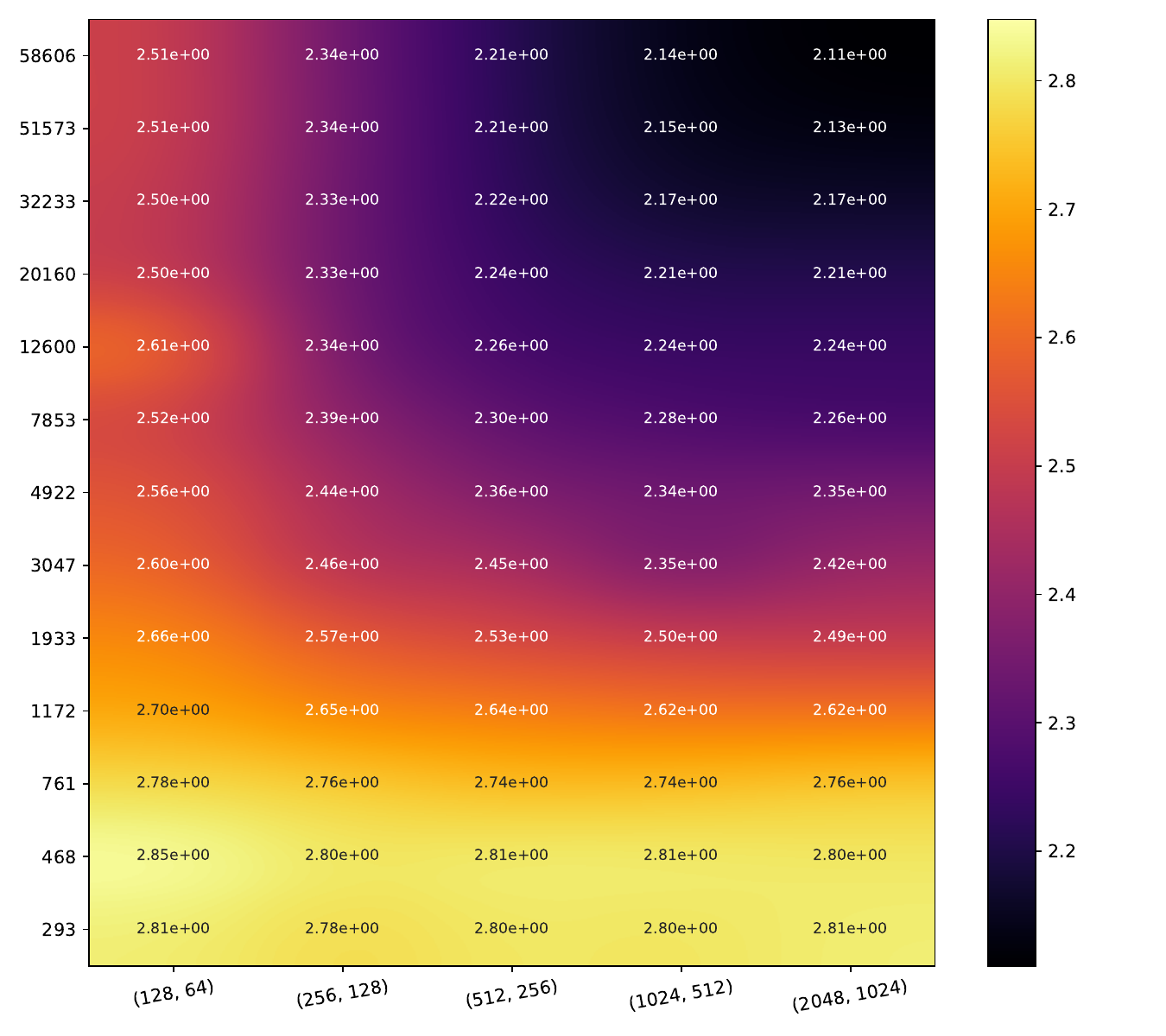}
    \includegraphics[width=\textwidth]{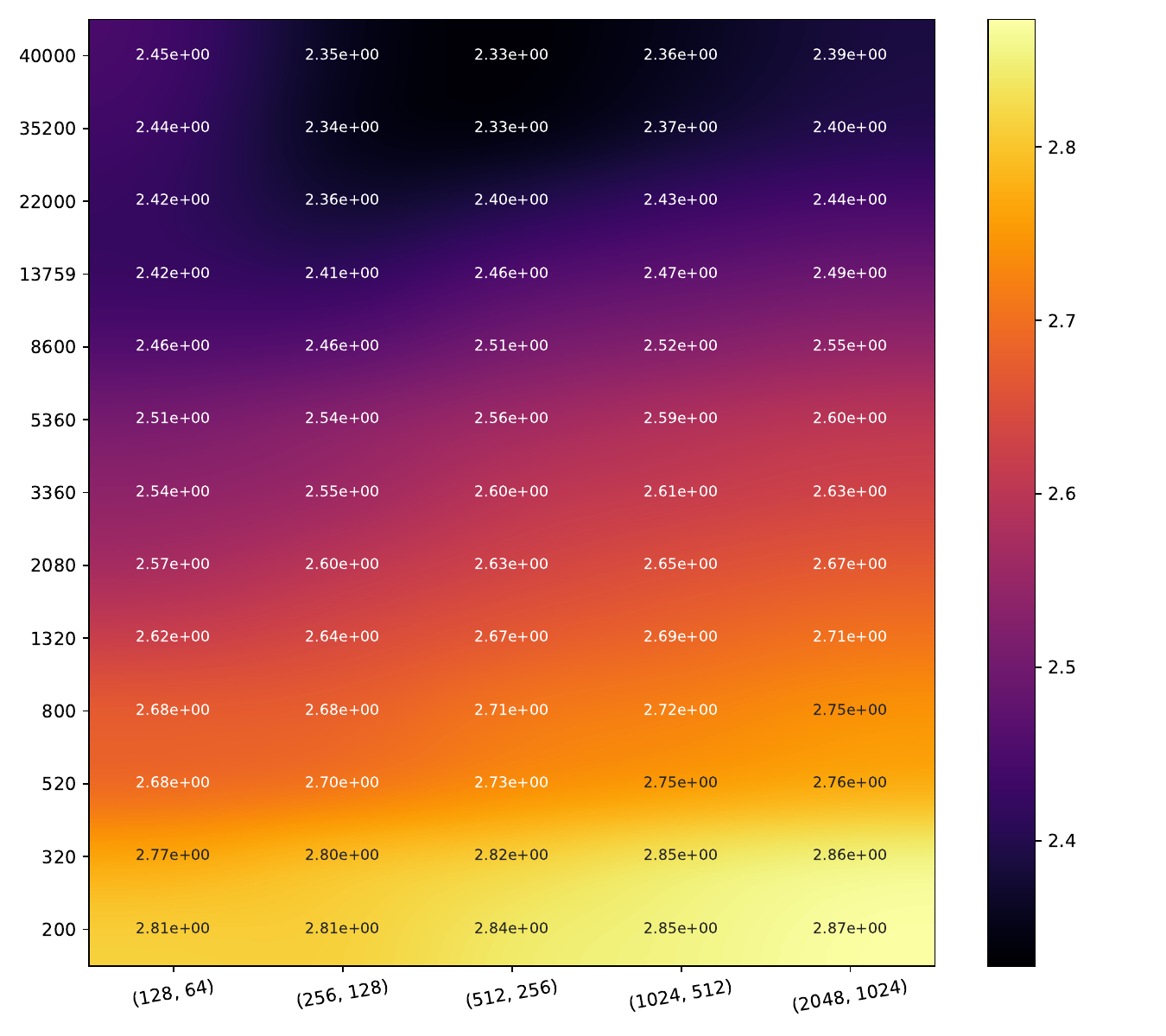}
    \caption*{MC-Dropout LS}
  \end{subfigure}\hfill
  \begin{subfigure}[t]{0.185\textwidth}
    \includegraphics[width=\textwidth]{figures/latest_cifar10_mcdropout_ls_default_mean_entropy_id.pdf}
    \includegraphics[width=\textwidth]{figures/latest_svhn_mcdropout_ls_default_mean_entropy_id.pdf}
    \includegraphics[width=\textwidth]{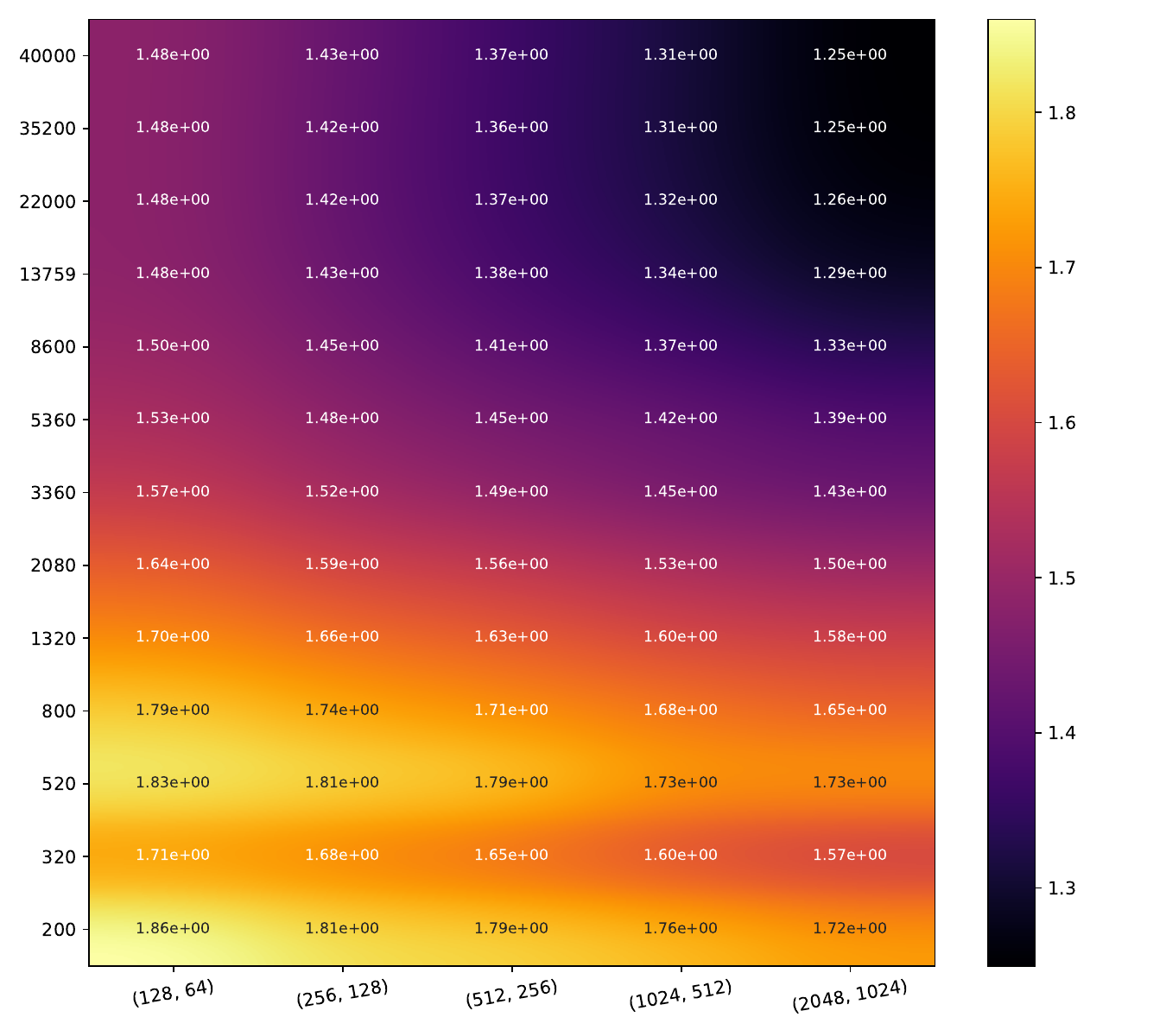}
    \caption*{EDL}
  \end{subfigure}\hfill
  \begin{subfigure}[t]{0.185\textwidth}
    \includegraphics[width=\textwidth]{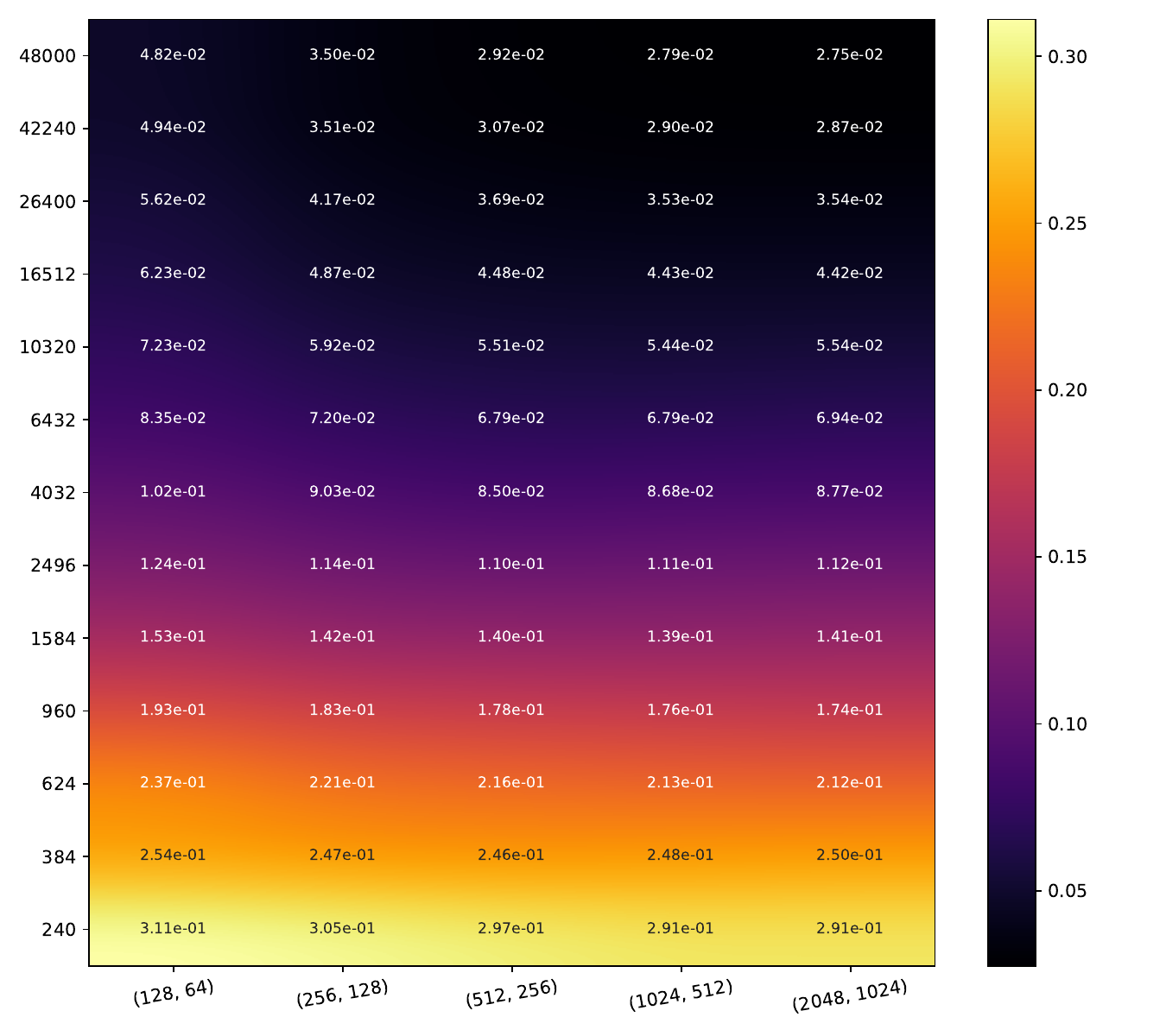}
    \includegraphics[width=\textwidth]{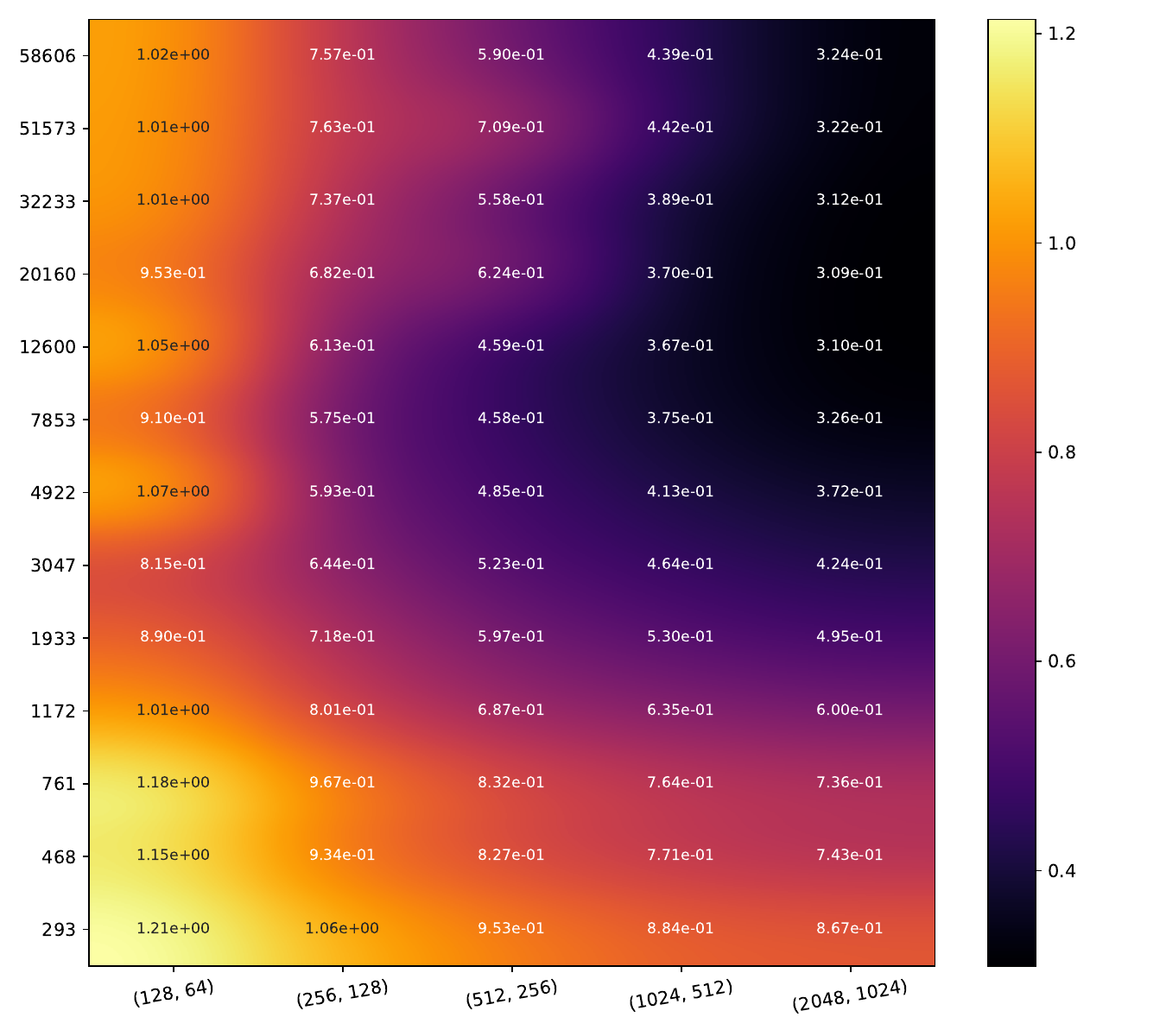}
    \includegraphics[width=\textwidth]{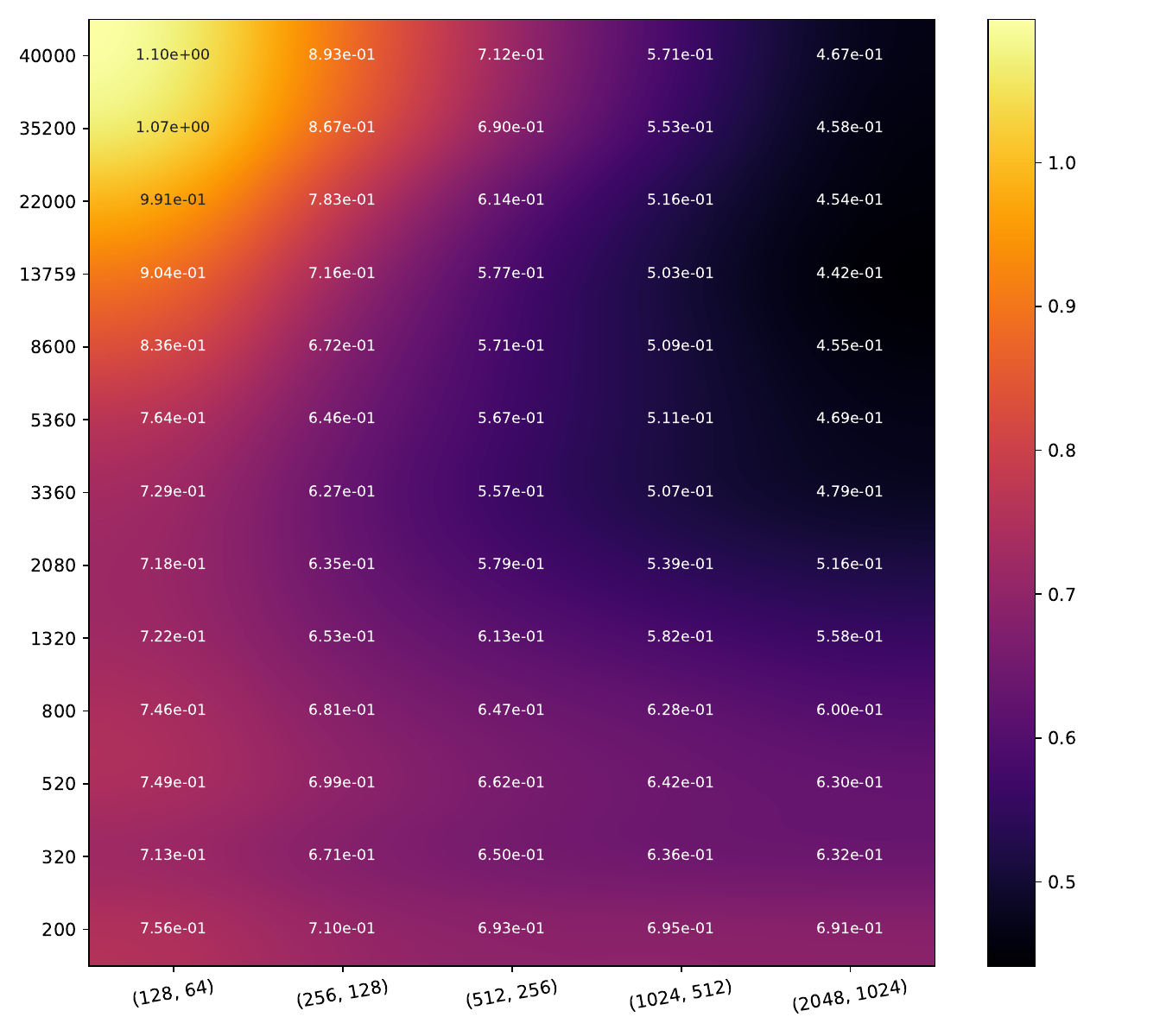}
    \caption*{DE}
  \end{subfigure}\hfill
  \begin{subfigure}[t]{0.185\textwidth}
    \includegraphics[width=\textwidth]{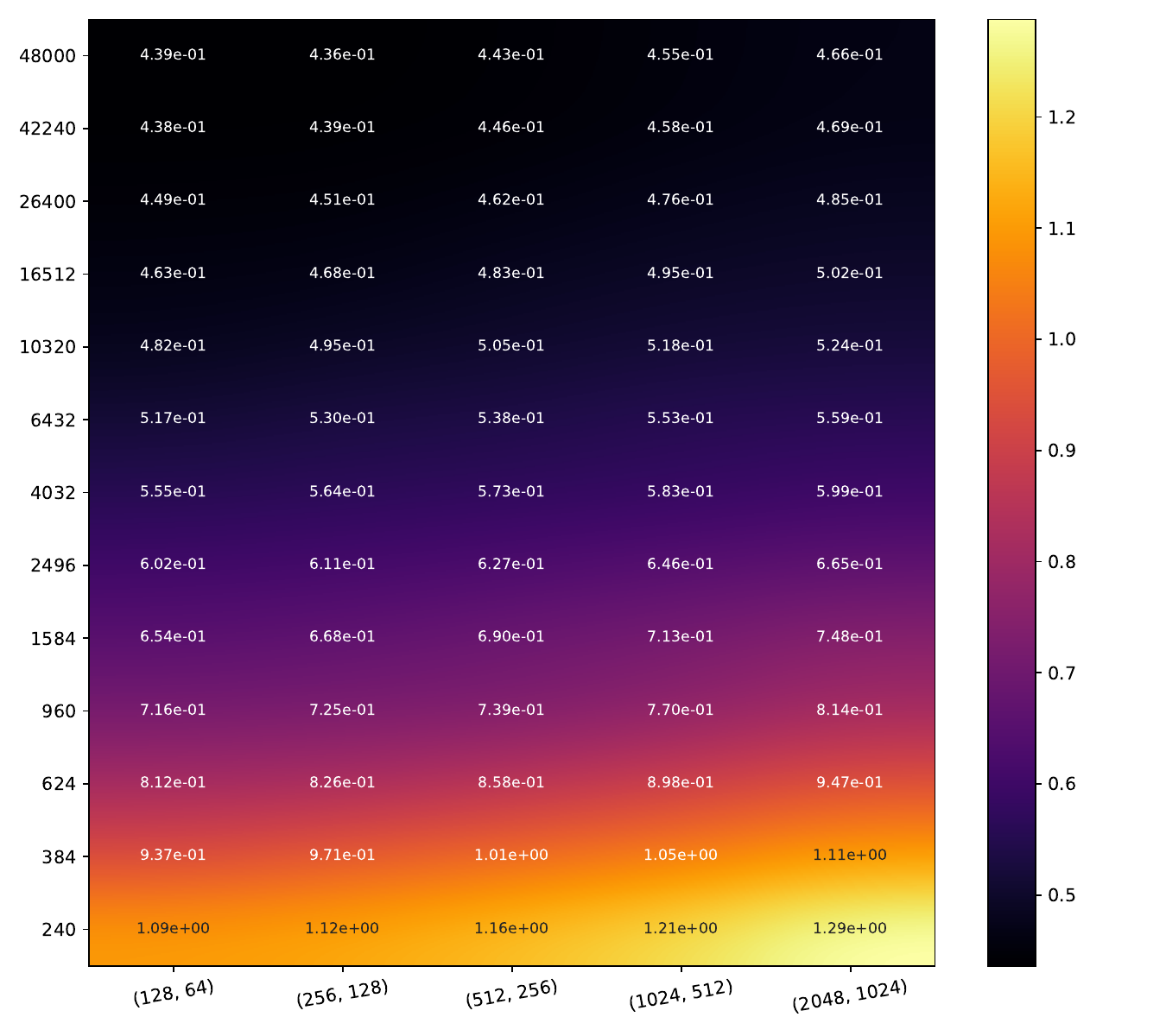}
    \includegraphics[width=\textwidth]{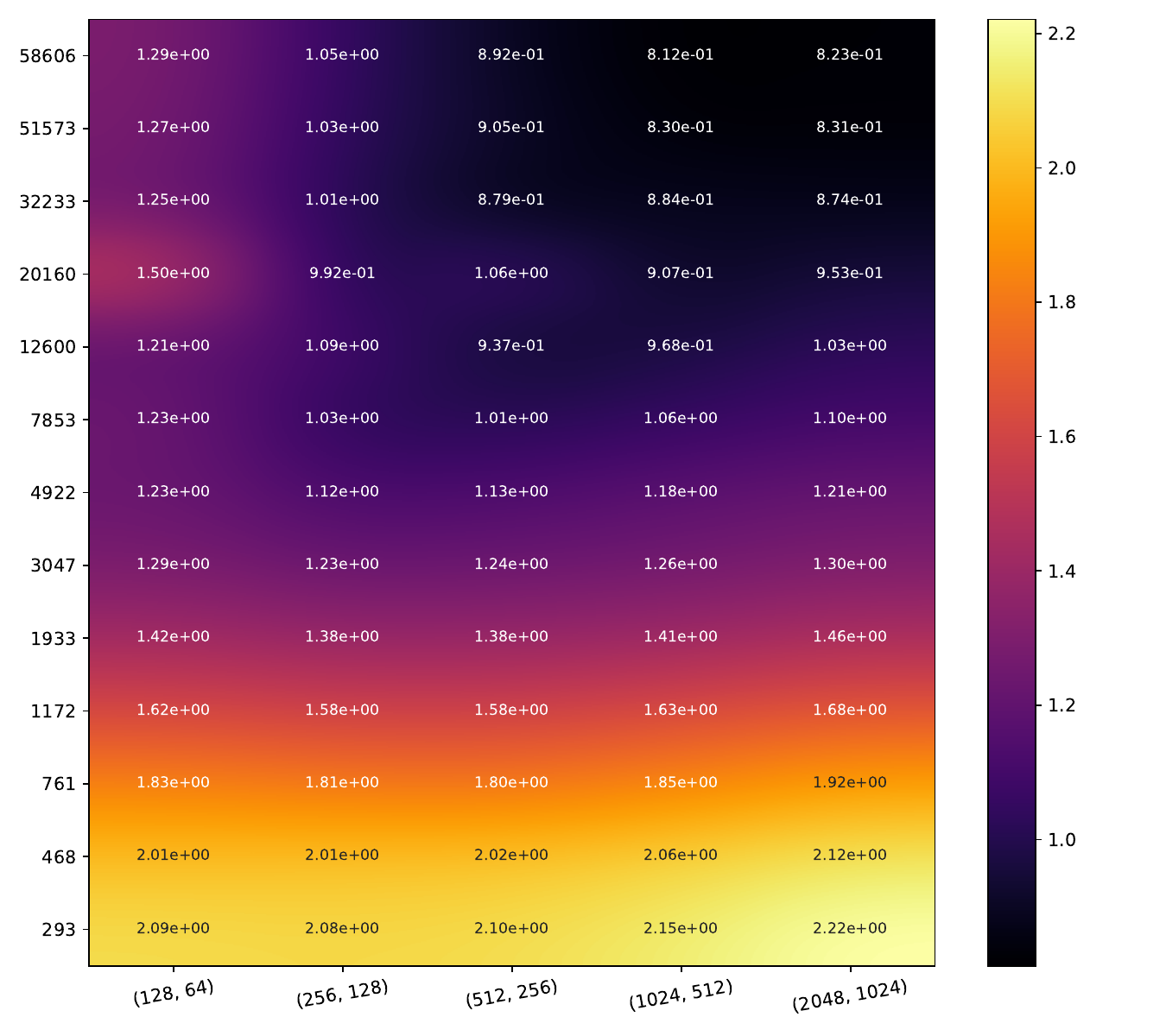}
    \includegraphics[width=\textwidth]{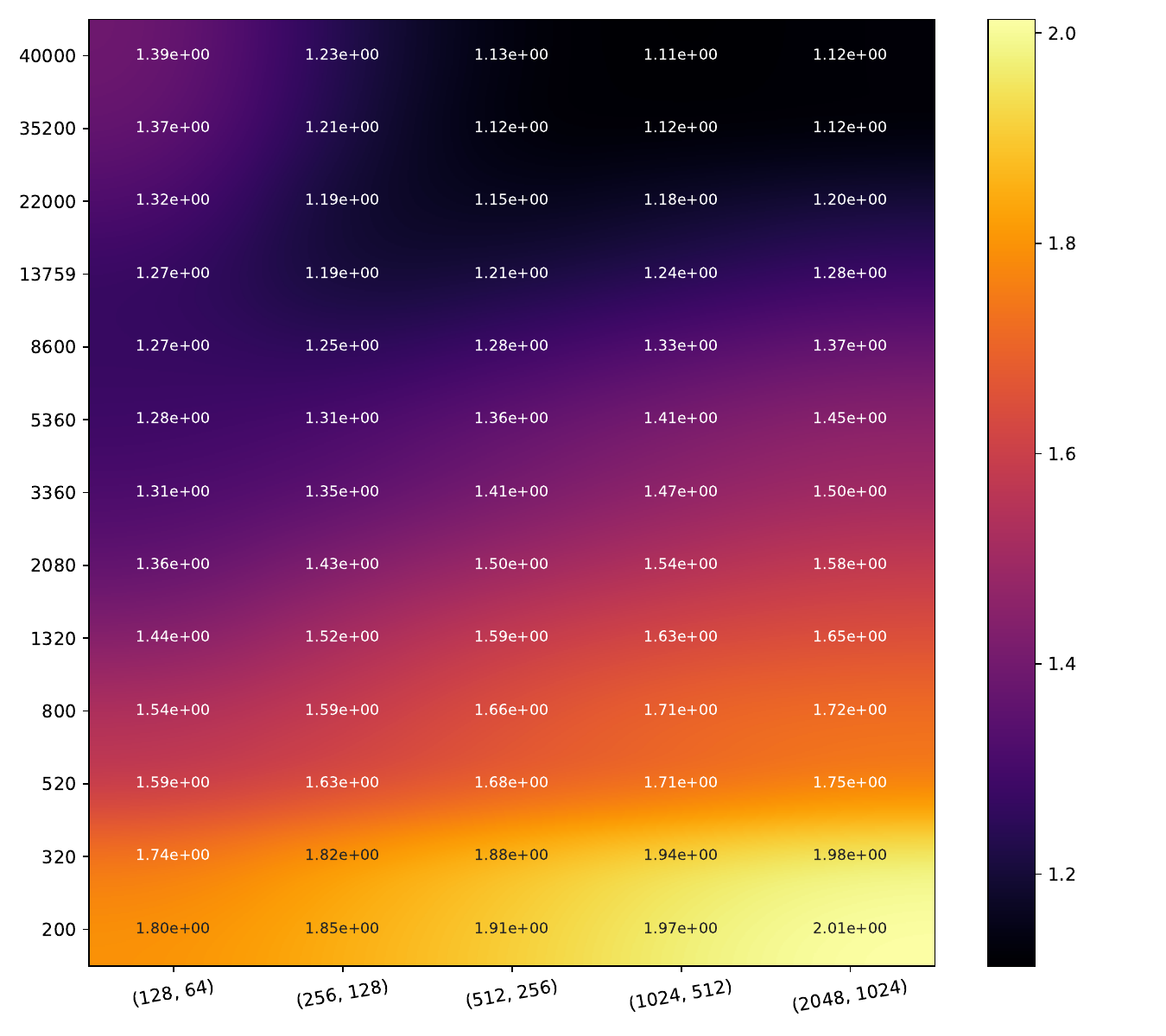}
    \caption*{Conflictual DE}
  \end{subfigure}\hfill
  \caption{Heatmaps of total uncertainty. Color scales are different for each heatmap.}
\end{figure}

\begin{figure}[ht]
  \centering
  \begin{subfigure}[t]{\dimexpr0.185\textwidth+20pt\relax}
    \makebox[20pt]{\raisebox{25pt}{\rotatebox[origin=c]{90}{\scriptsize MNIST}}}%
    \includegraphics[width=\dimexpr\linewidth-20pt\relax]{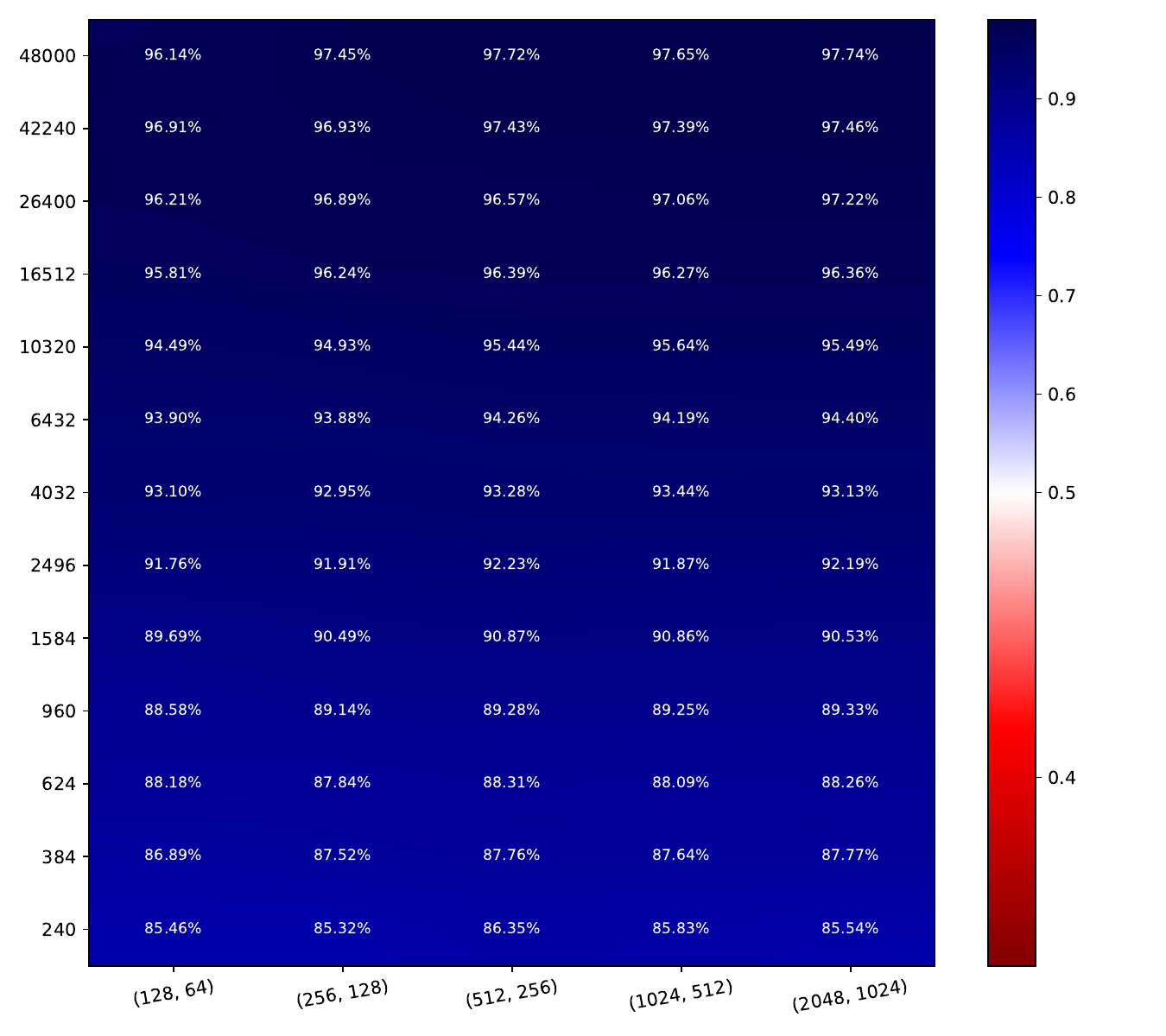}
    \makebox[20pt]{\raisebox{25pt}{\rotatebox[origin=c]{90}{\scriptsize SVHN}}}%
    \includegraphics[width=\dimexpr\linewidth-20pt\relax]{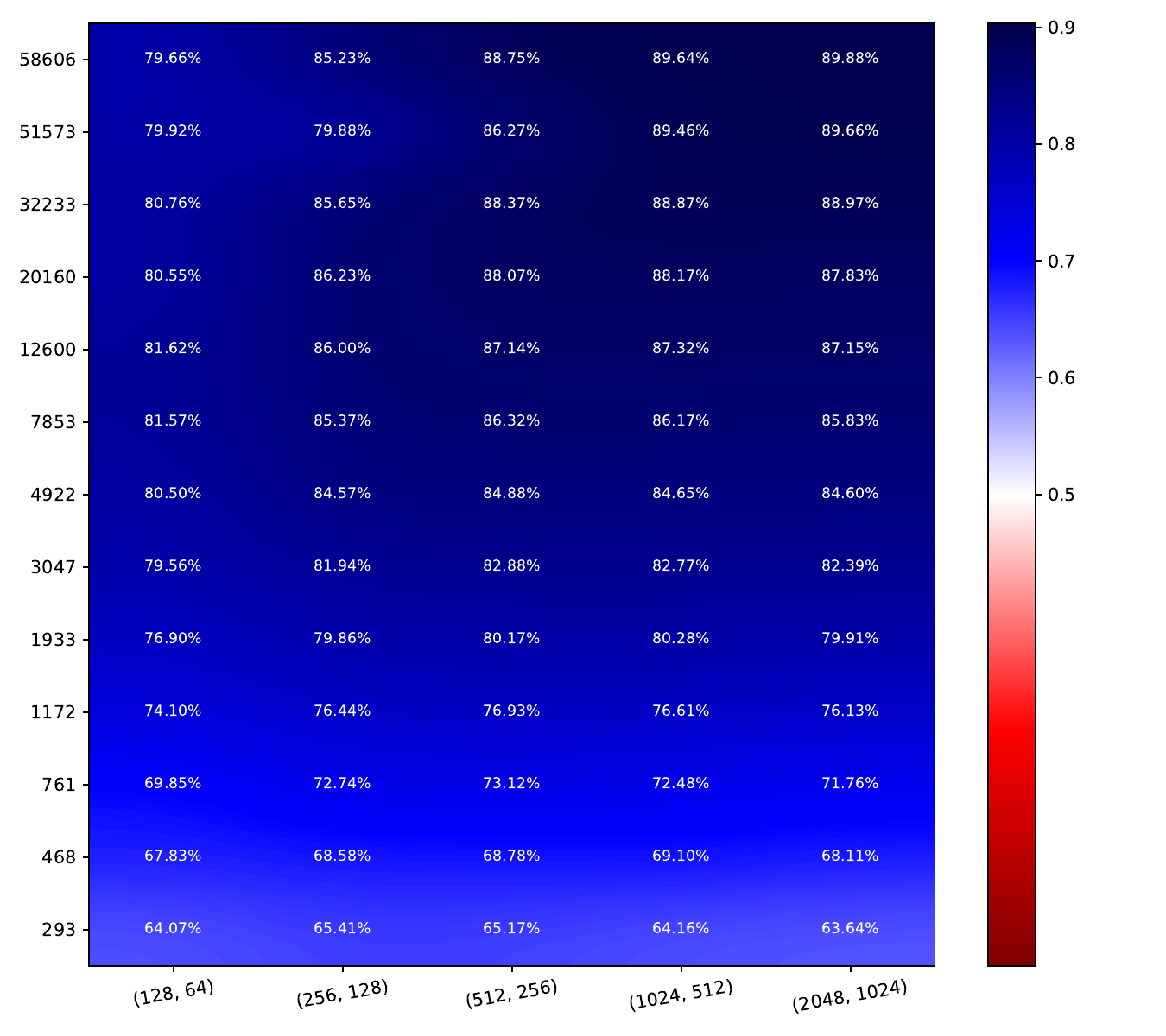}
    \makebox[20pt]{\raisebox{25pt}{\rotatebox[origin=c]{90}{\scriptsize CIFAR10}}}%
    \includegraphics[width=\dimexpr\linewidth-20pt\relax]{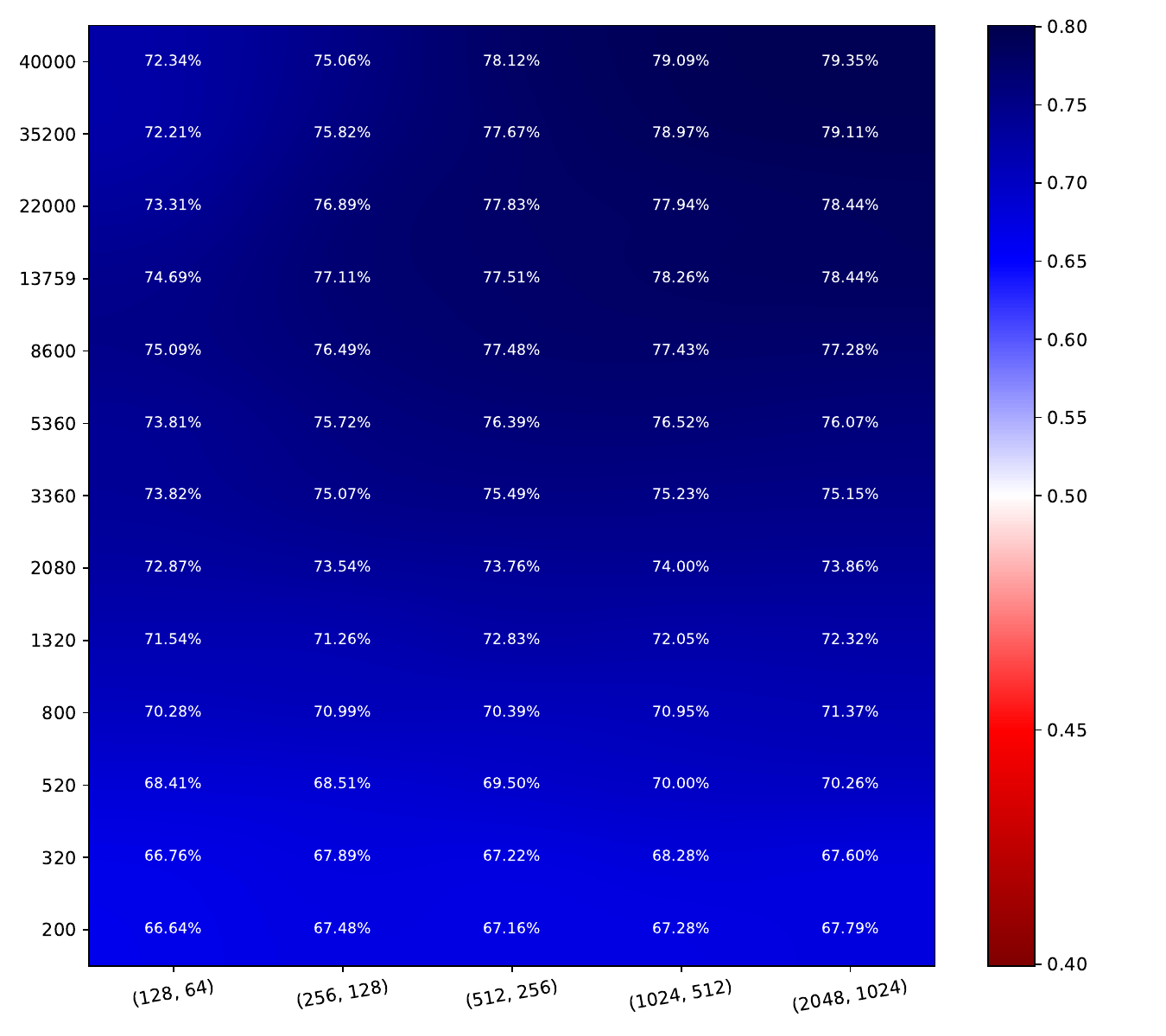}
    \caption*{\qquad MC-Dropout}
  \end{subfigure}\hfill
  \begin{subfigure}[t]{0.185\textwidth}
    \includegraphics[width=\textwidth]{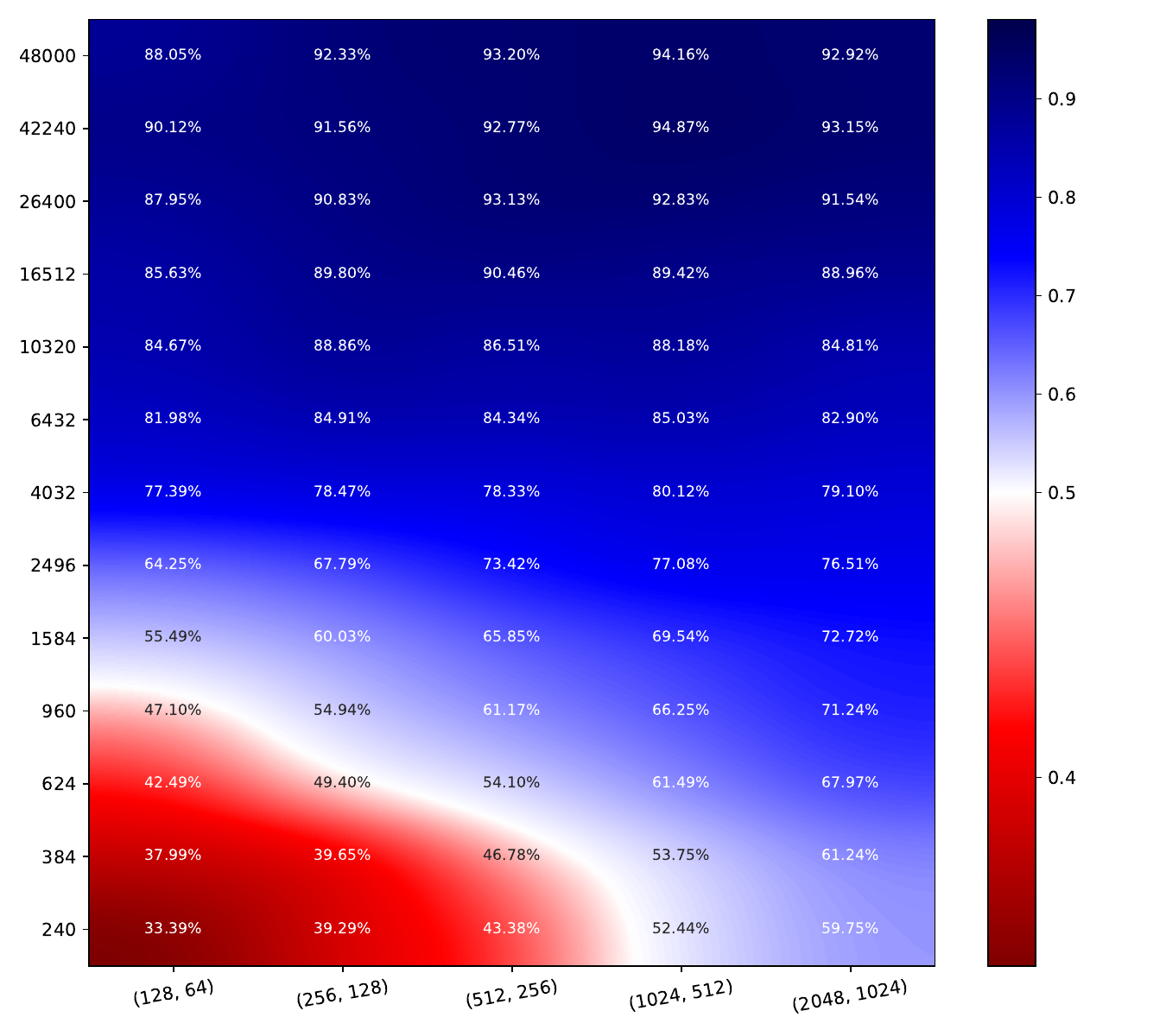}
    \includegraphics[width=\textwidth]{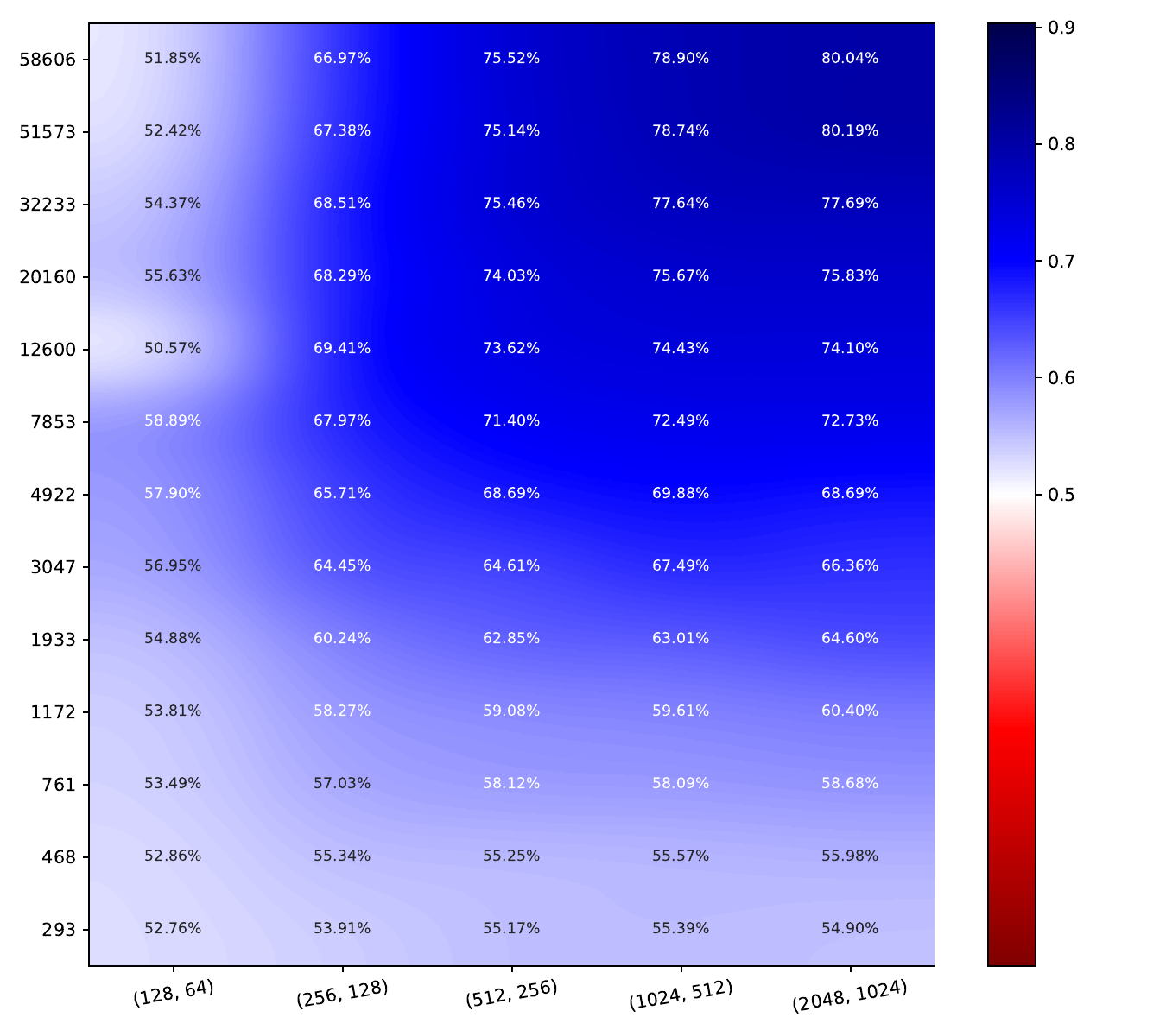}
    \includegraphics[width=\textwidth]{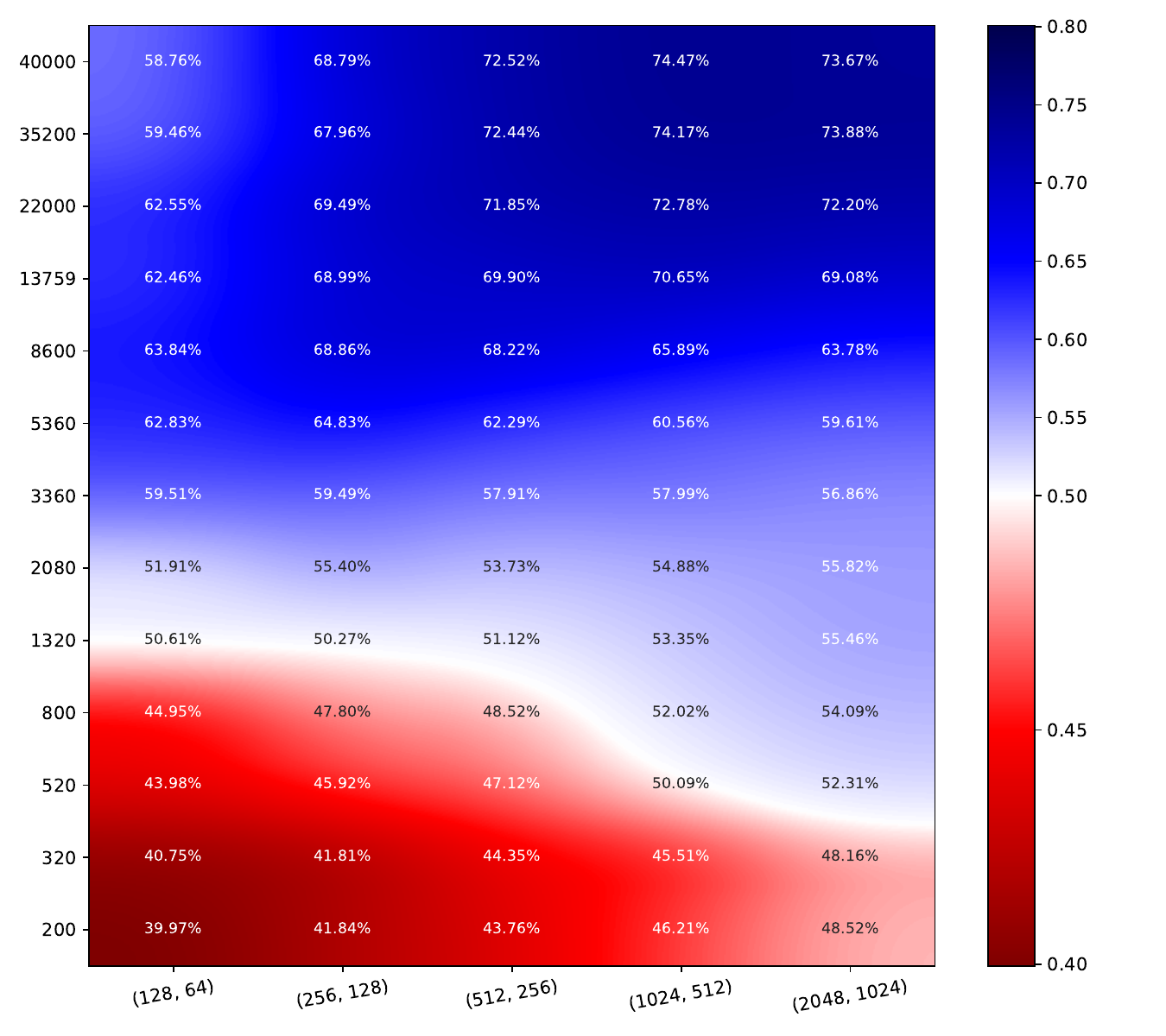}
    \caption*{MC-Dropout LS}
  \end{subfigure}\hfill
  \begin{subfigure}[t]{0.185\textwidth}
    \includegraphics[width=\textwidth]{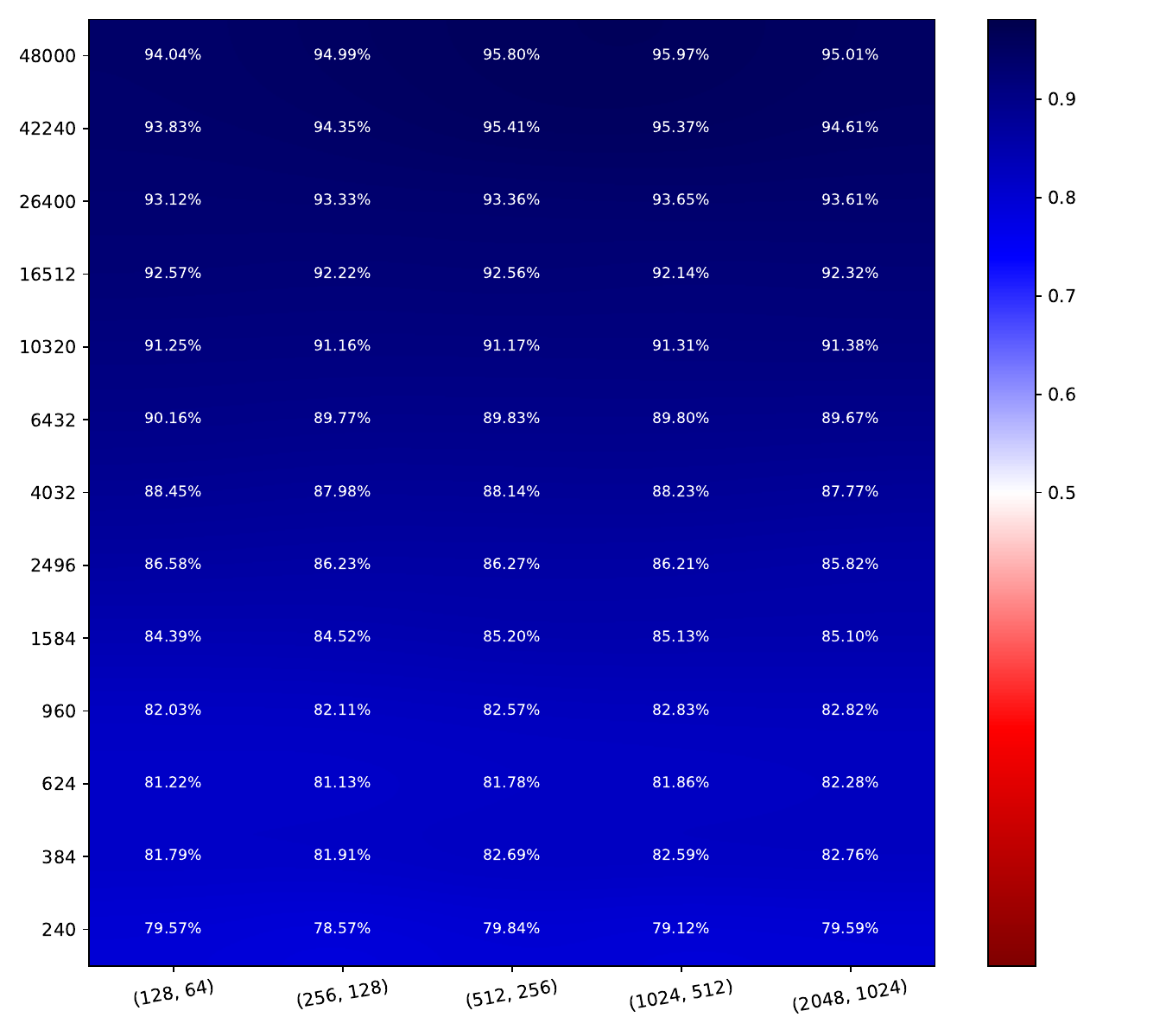}
    \includegraphics[width=\textwidth]{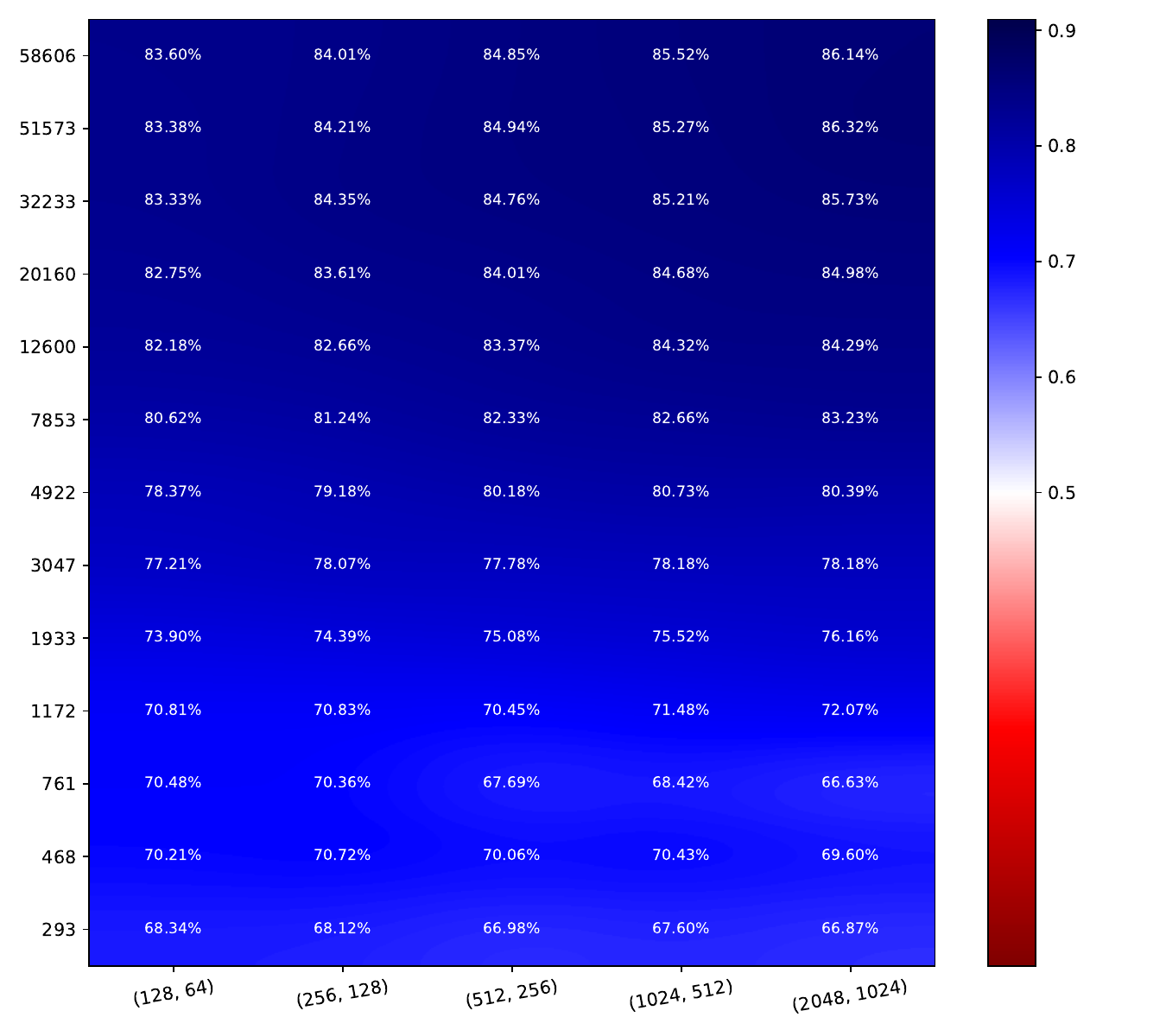}
    \includegraphics[width=\textwidth]{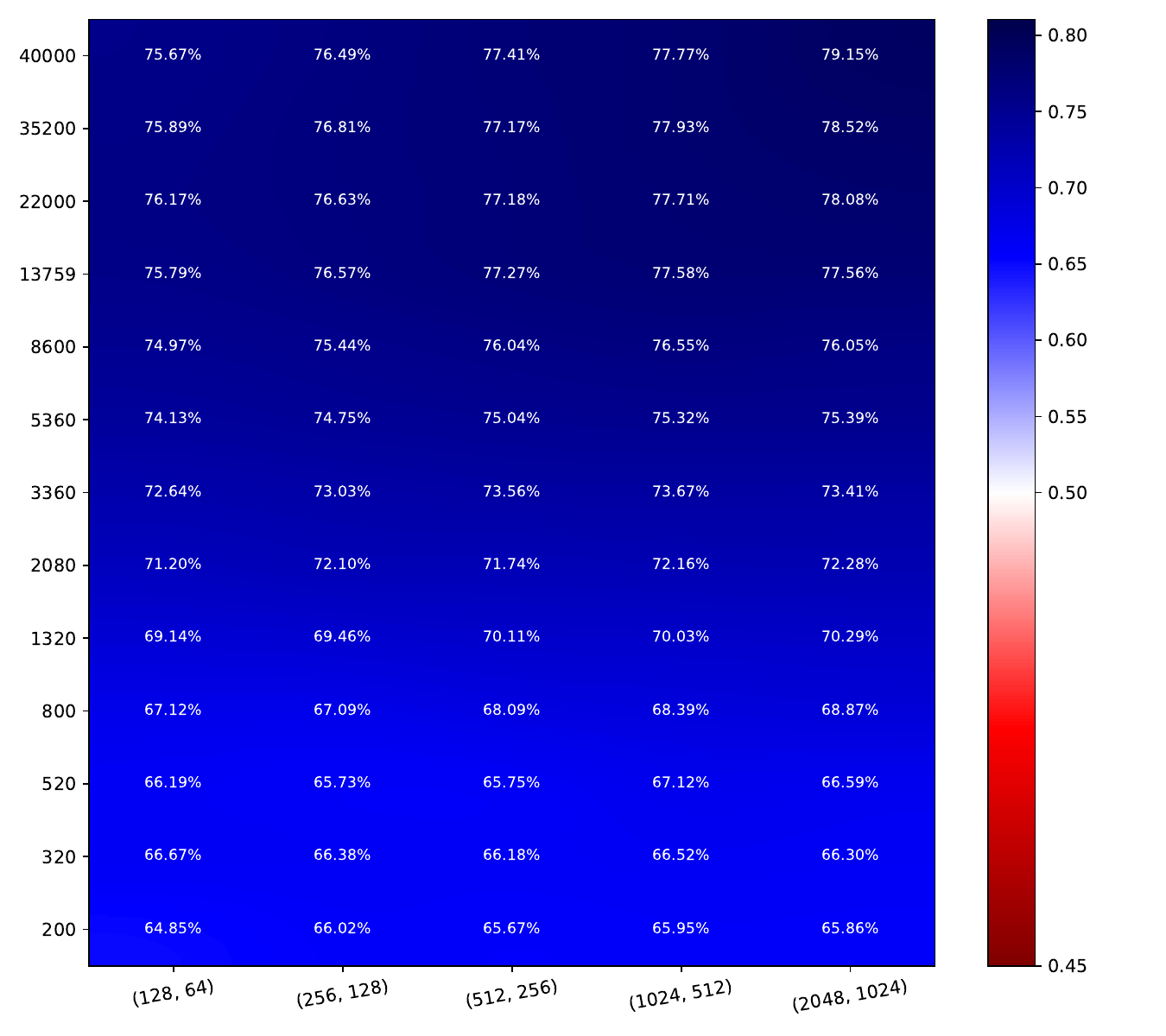}
    \caption*{EDL}
  \end{subfigure}\hfill
  \begin{subfigure}[t]{0.185\textwidth}
    \includegraphics[width=\textwidth]{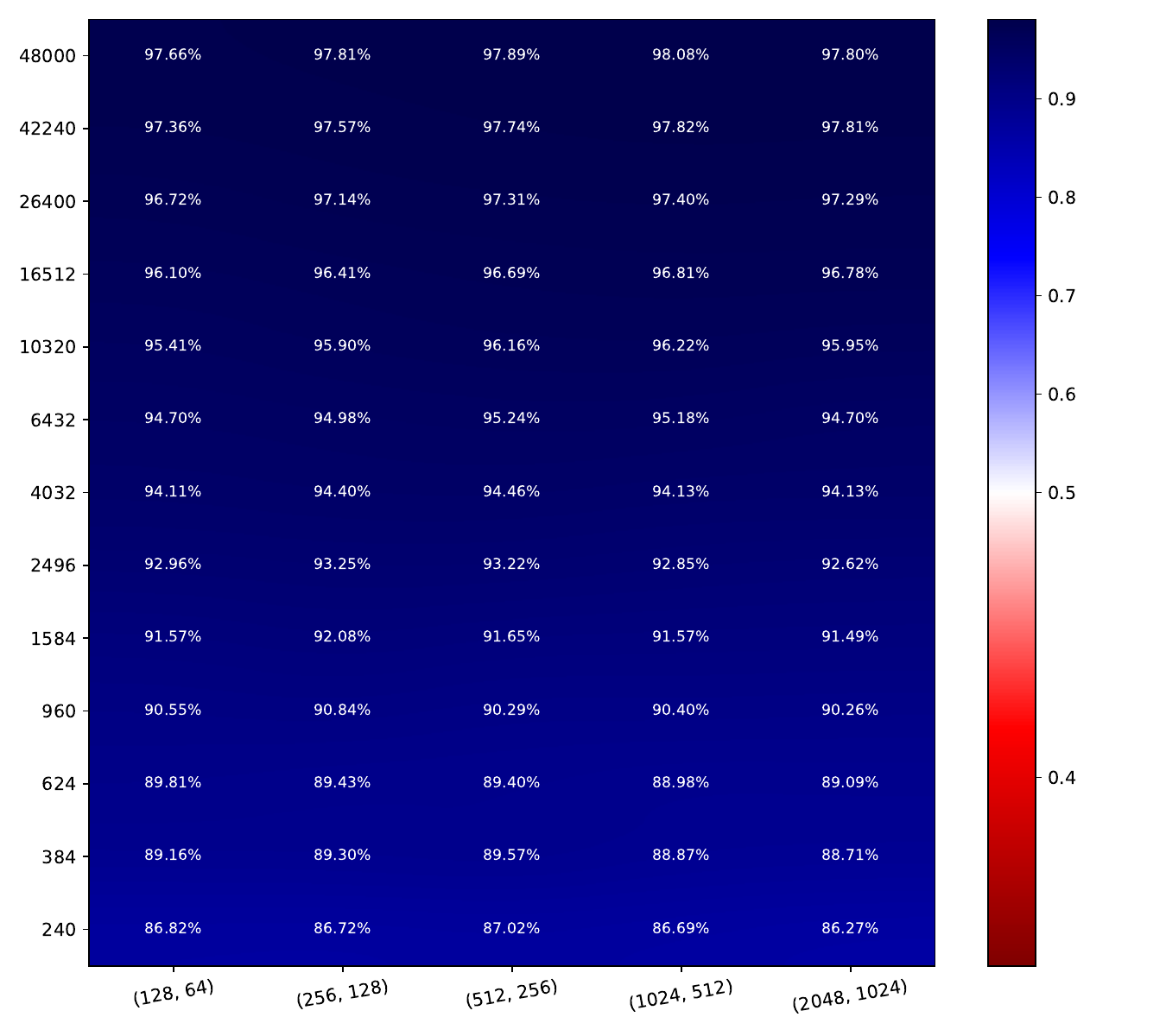}
    \includegraphics[width=\textwidth]{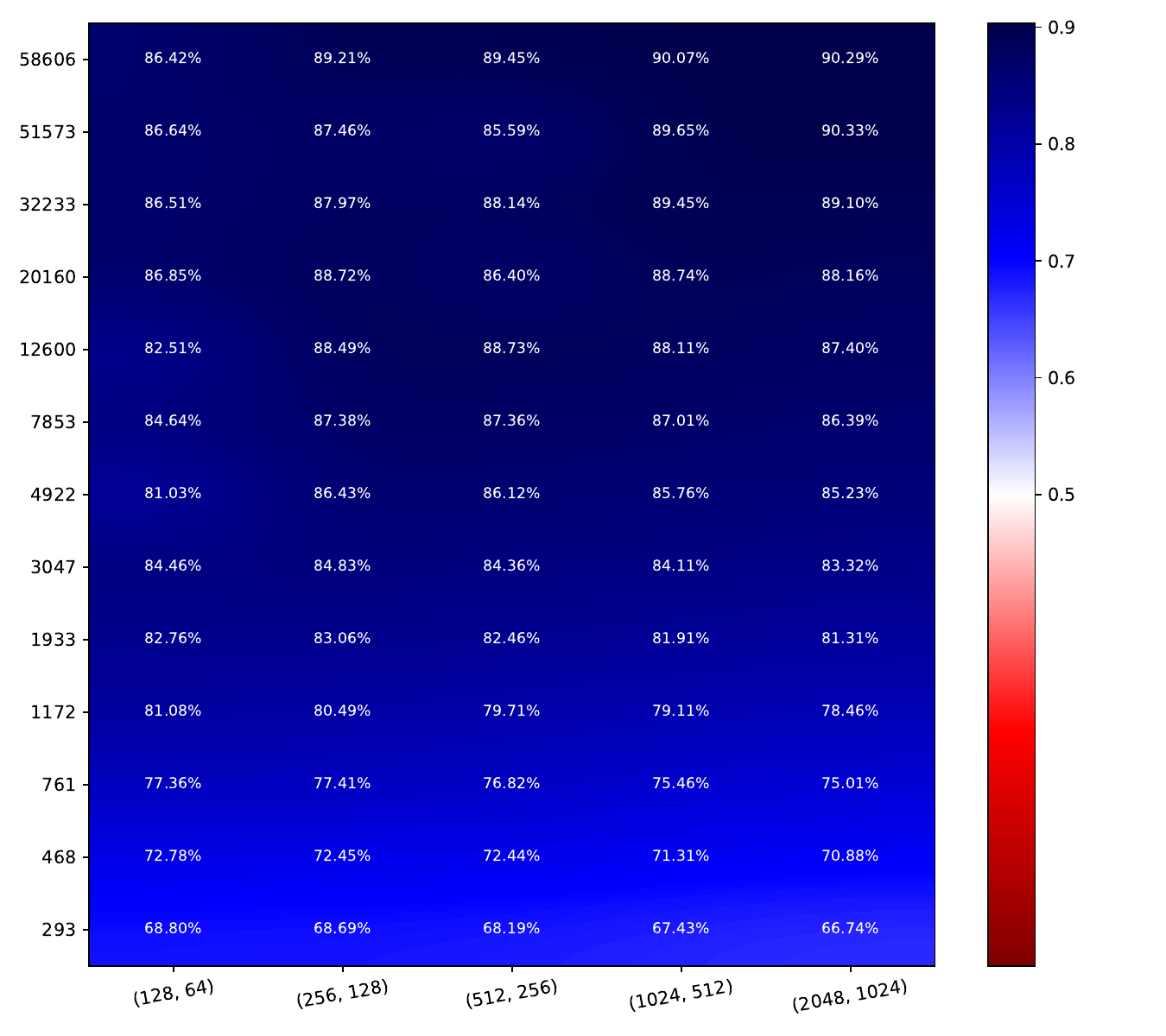}
    \includegraphics[width=\textwidth]{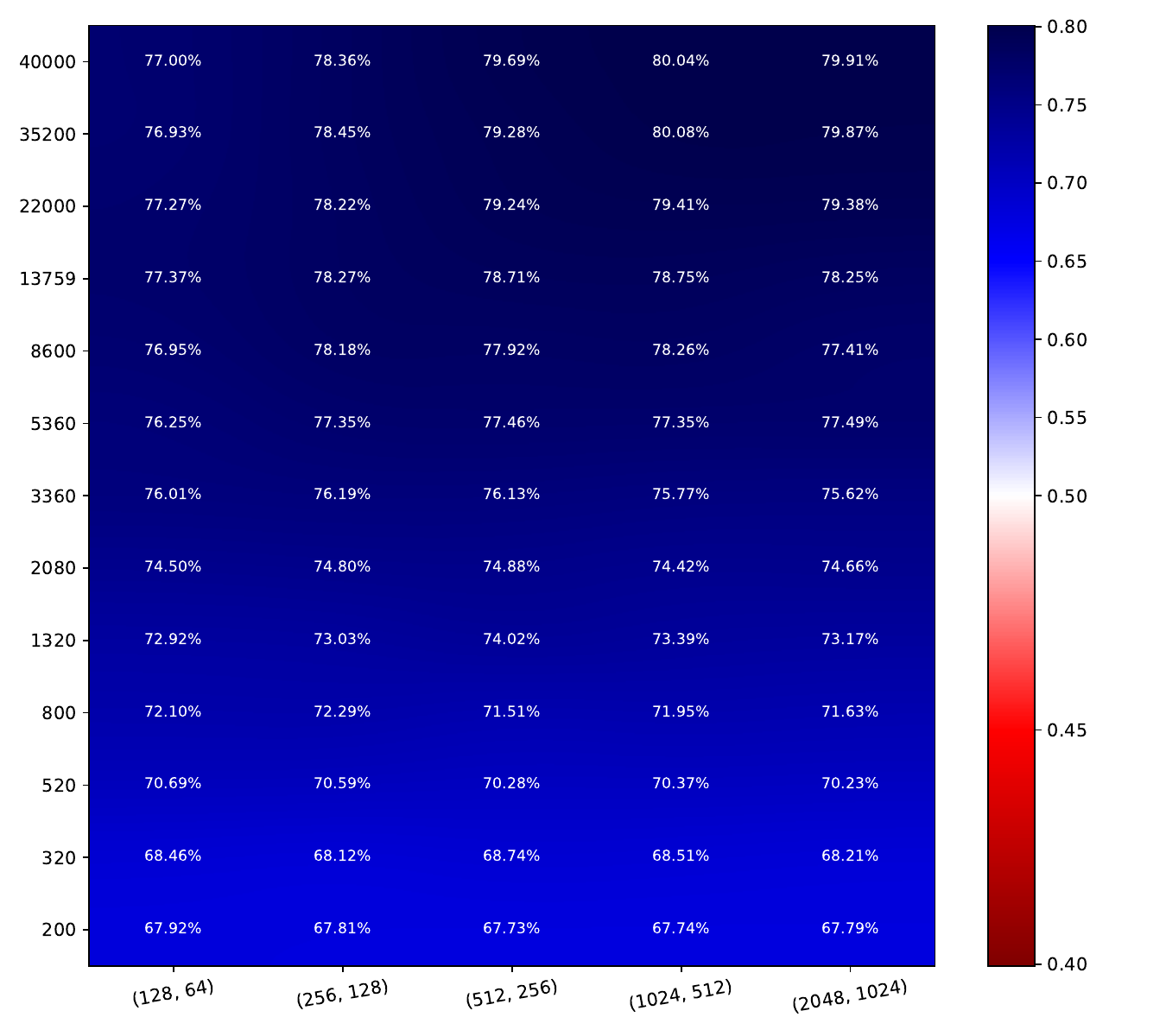}
    \caption*{DE}
  \end{subfigure}\hfill
  \begin{subfigure}[t]{0.185\textwidth}
    \includegraphics[width=\textwidth]{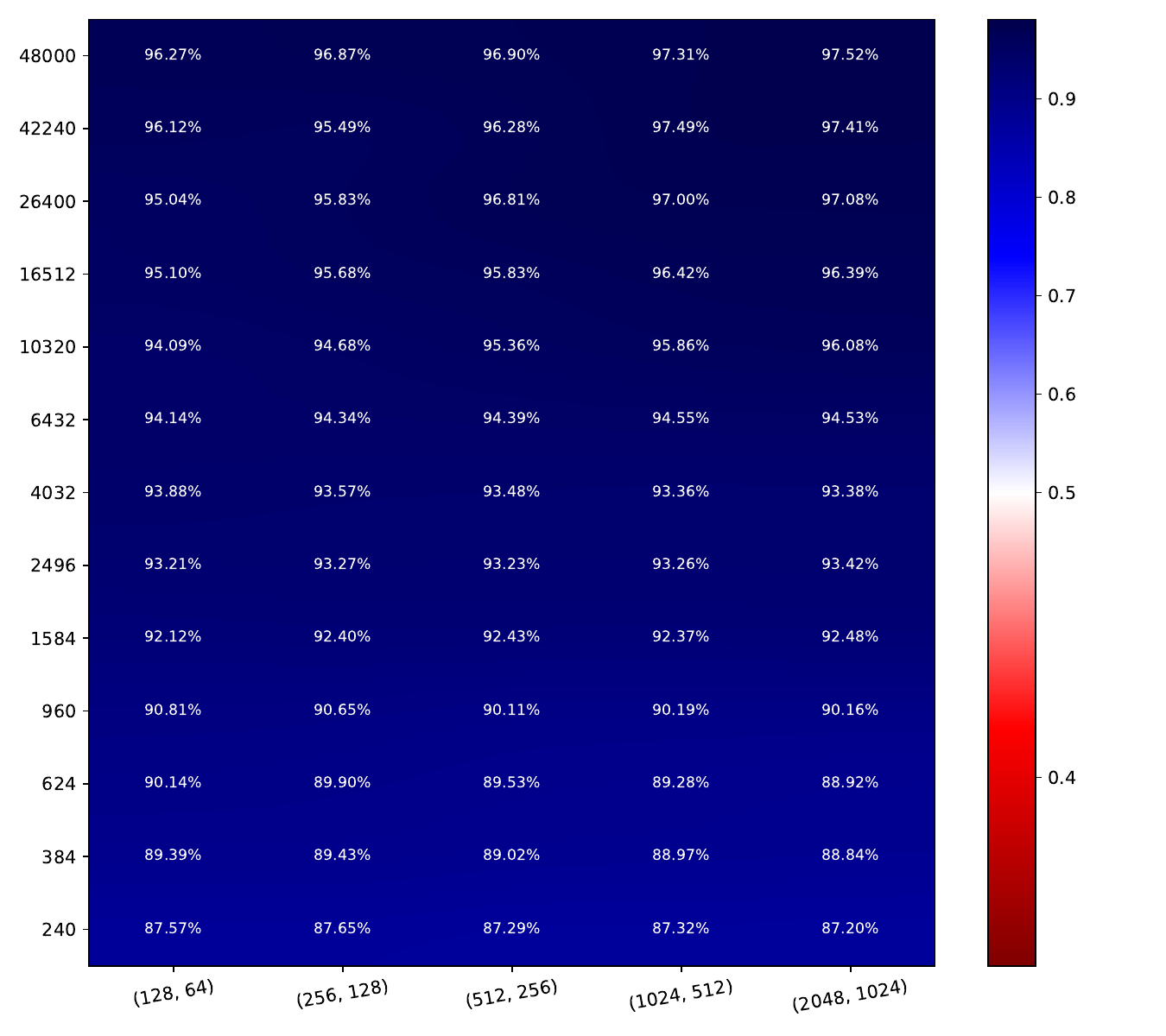}
    \includegraphics[width=\textwidth]{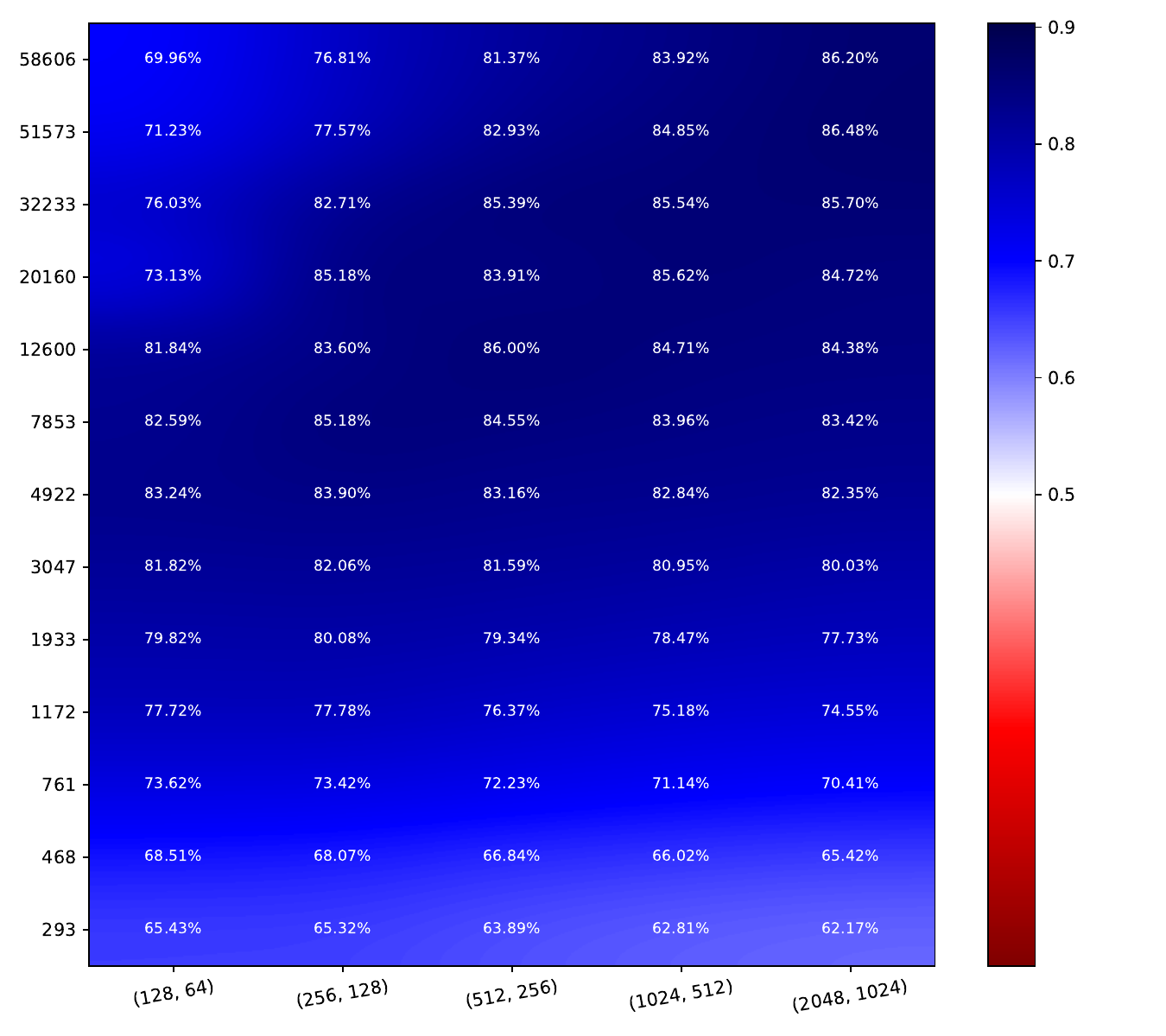}
    \includegraphics[width=\textwidth]{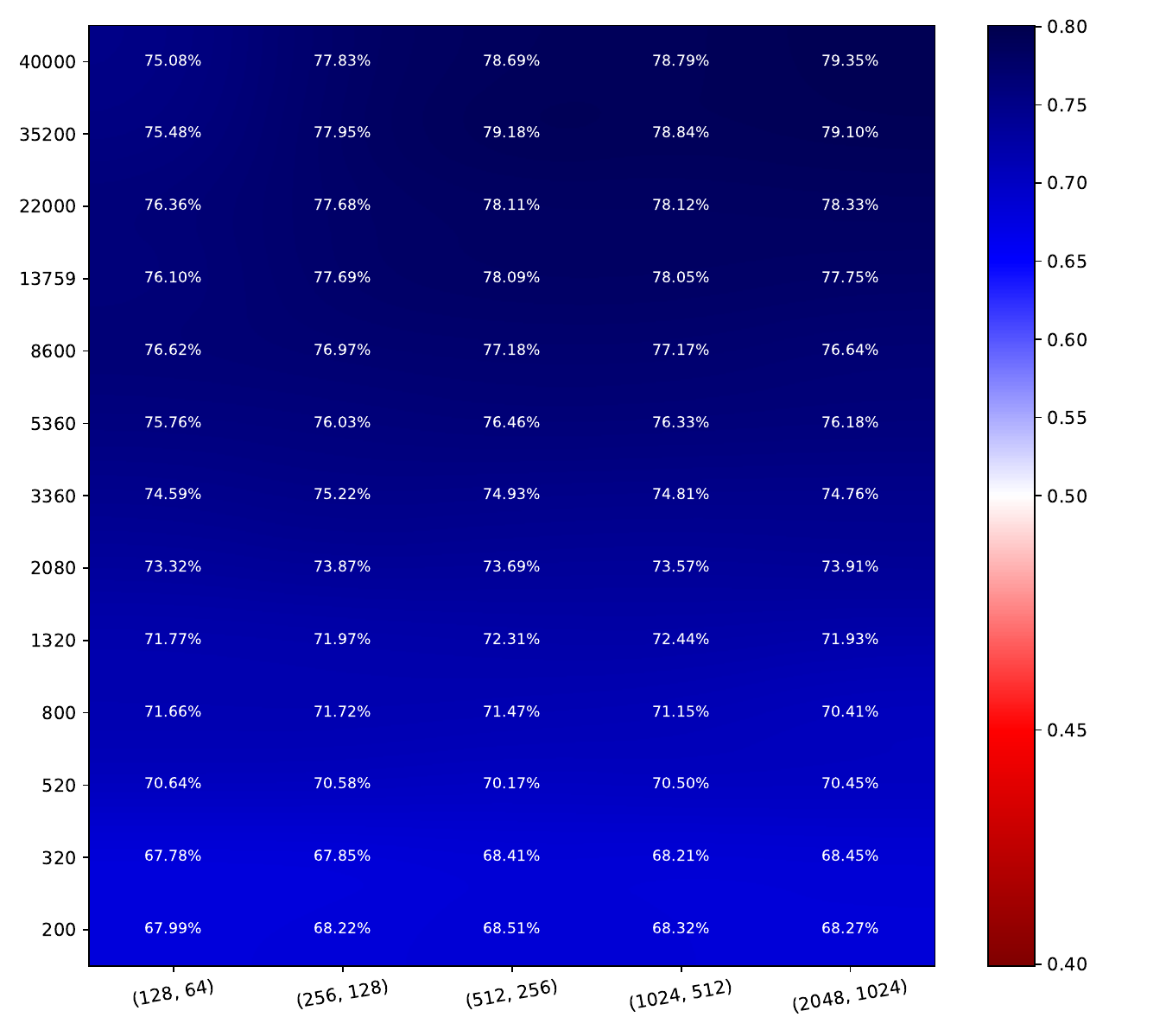}
    \caption*{Conflictual DE}
  \end{subfigure}\hfill
  \caption{Heatmaps of AUROC based on epistemic uncertainty for misclassification detection. Color scales are the same per dataset.}
  \label{fig:auc-mis}
\end{figure}

\clearpage
\newpage

\section{Results for Confidence Penalty\label{app:confidence-penalty}}

In this section, we report the results of Sect.~\ref{sec:experiments} in the case of MC-Dropout with confidence penalty (MC-Dropout CP). We notice that, for both MNIST and CIFAR10, the results are comparable to those on MC-Dropout trained only with cross-entropy loss. The color scales are set per heatmap.

\begin{table}
    \tiny
    \setlength{\tabcolsep}{0.09em}
    \begin{tabular}{l@{\hspace{0.5em}}cccccc}
        \rotatebox[origin=c]{90}{MNIST} %
	& \includegraphics[width=0.159\textwidth,valign=m]{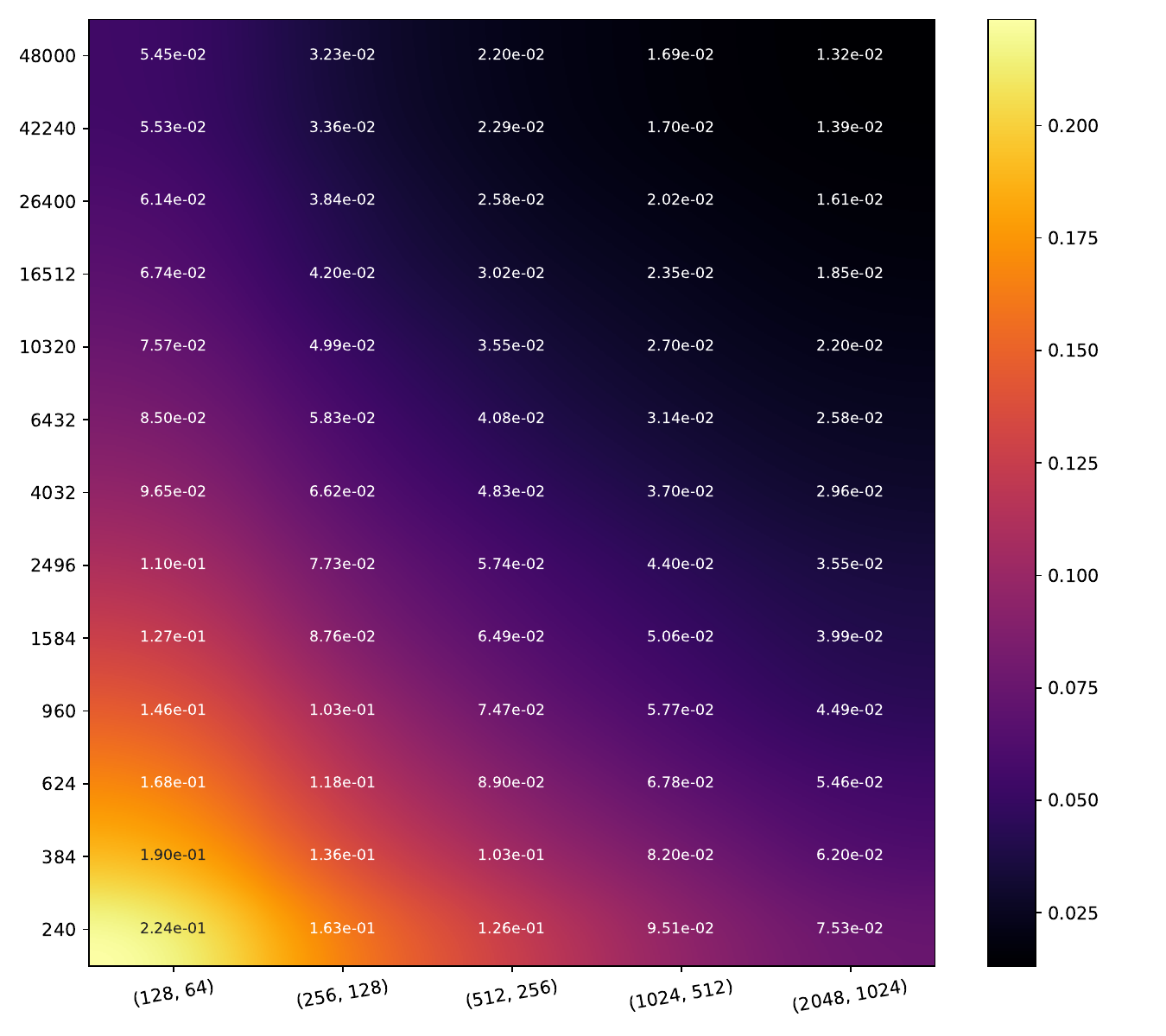} %
        & \includegraphics[width=0.159\textwidth,valign=m]{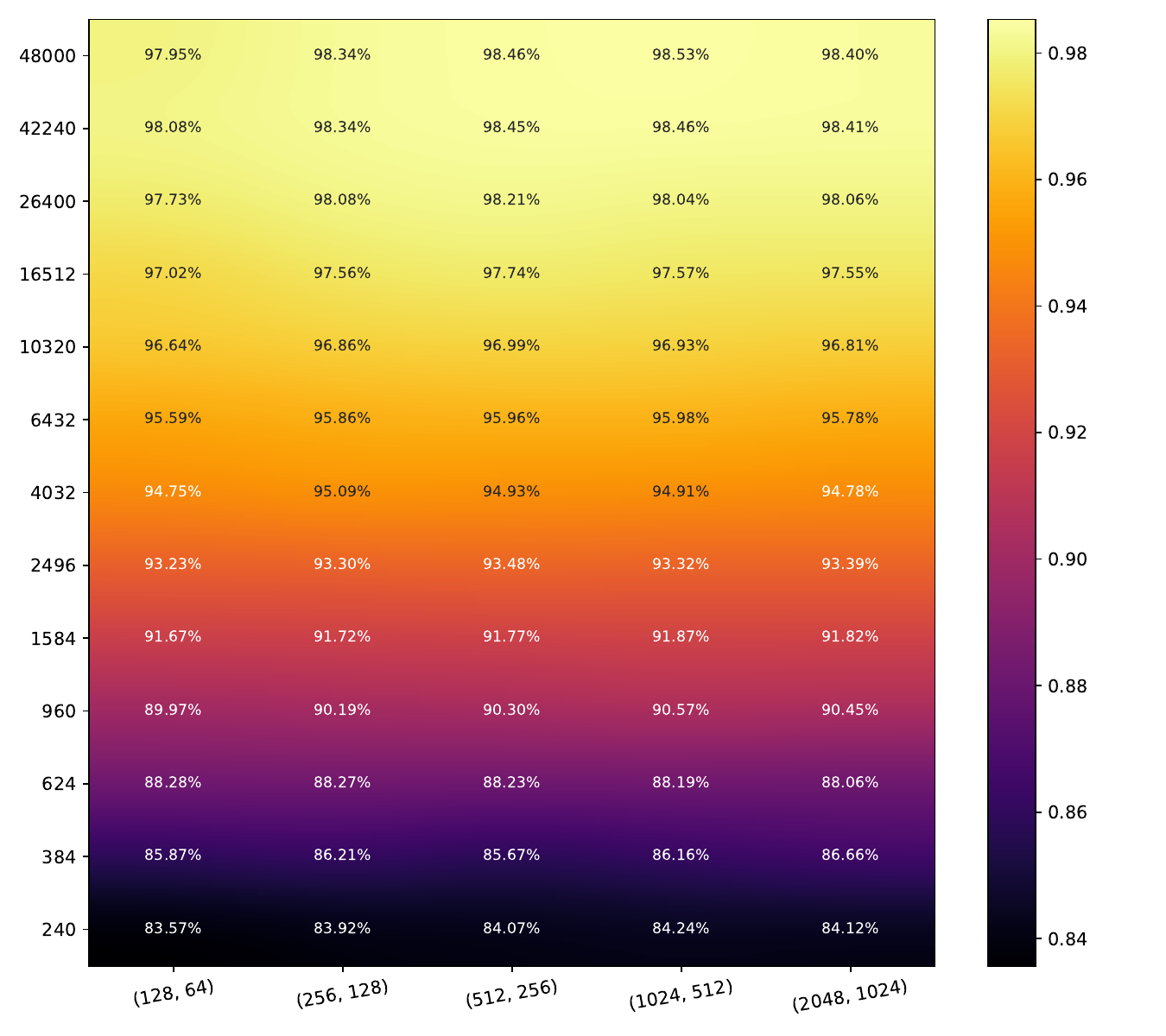} %
        & \includegraphics[width=0.159\textwidth,valign=m]{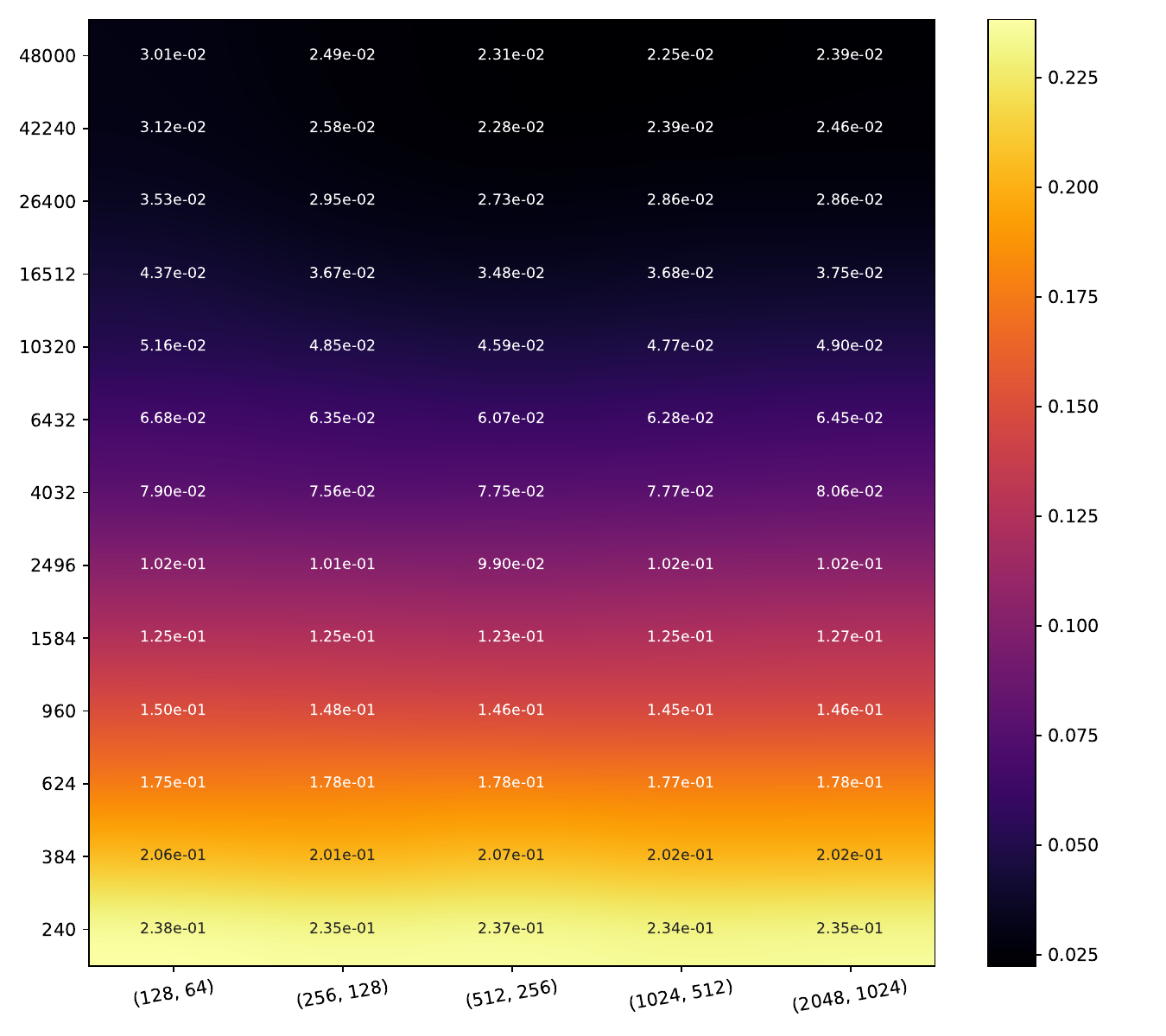} %
        & \includegraphics[width=0.159\textwidth,valign=m]{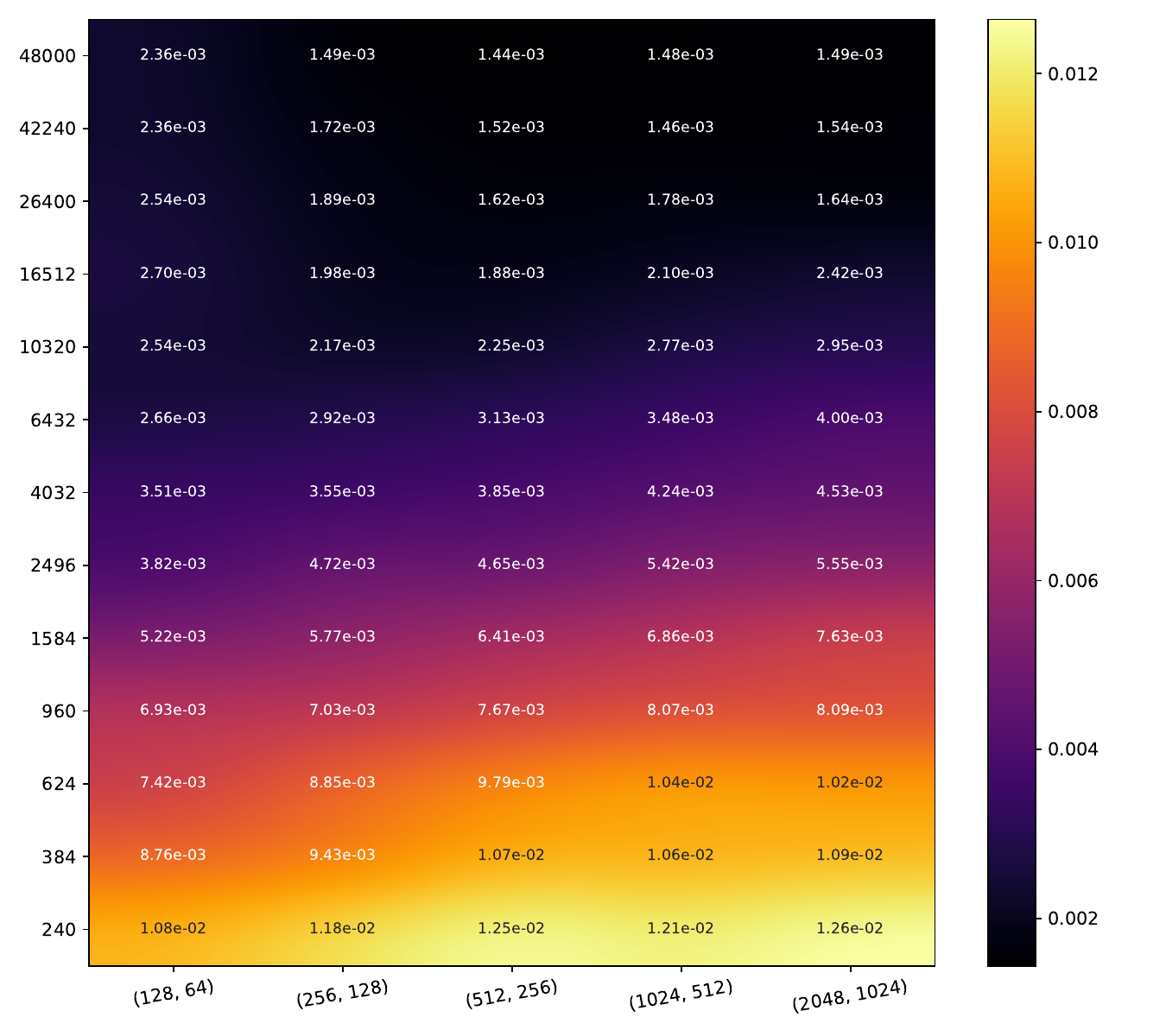} %
	& \includegraphics[width=0.159\textwidth,valign=m]{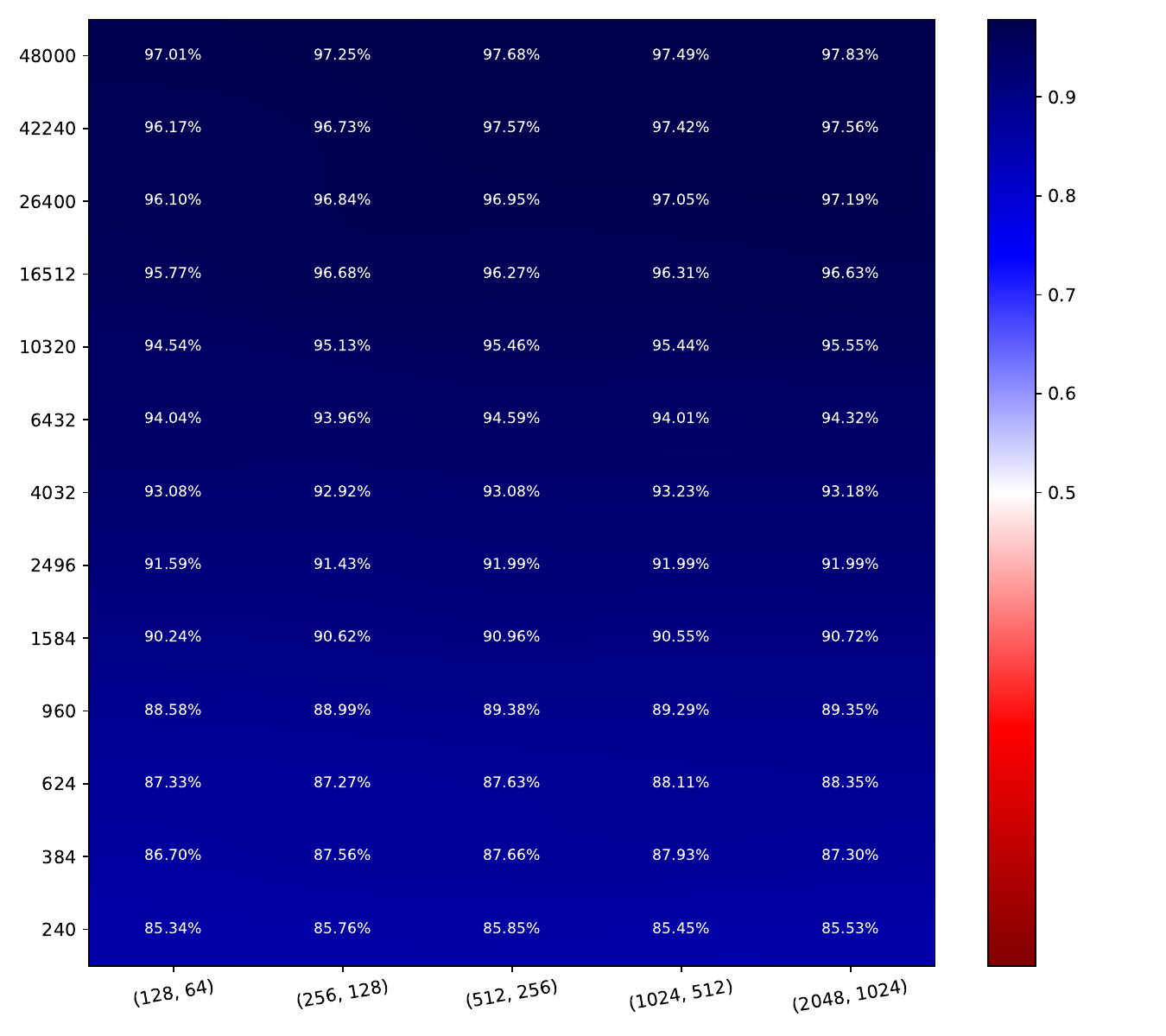} %
        & \includegraphics[width=0.159\textwidth,valign=m]{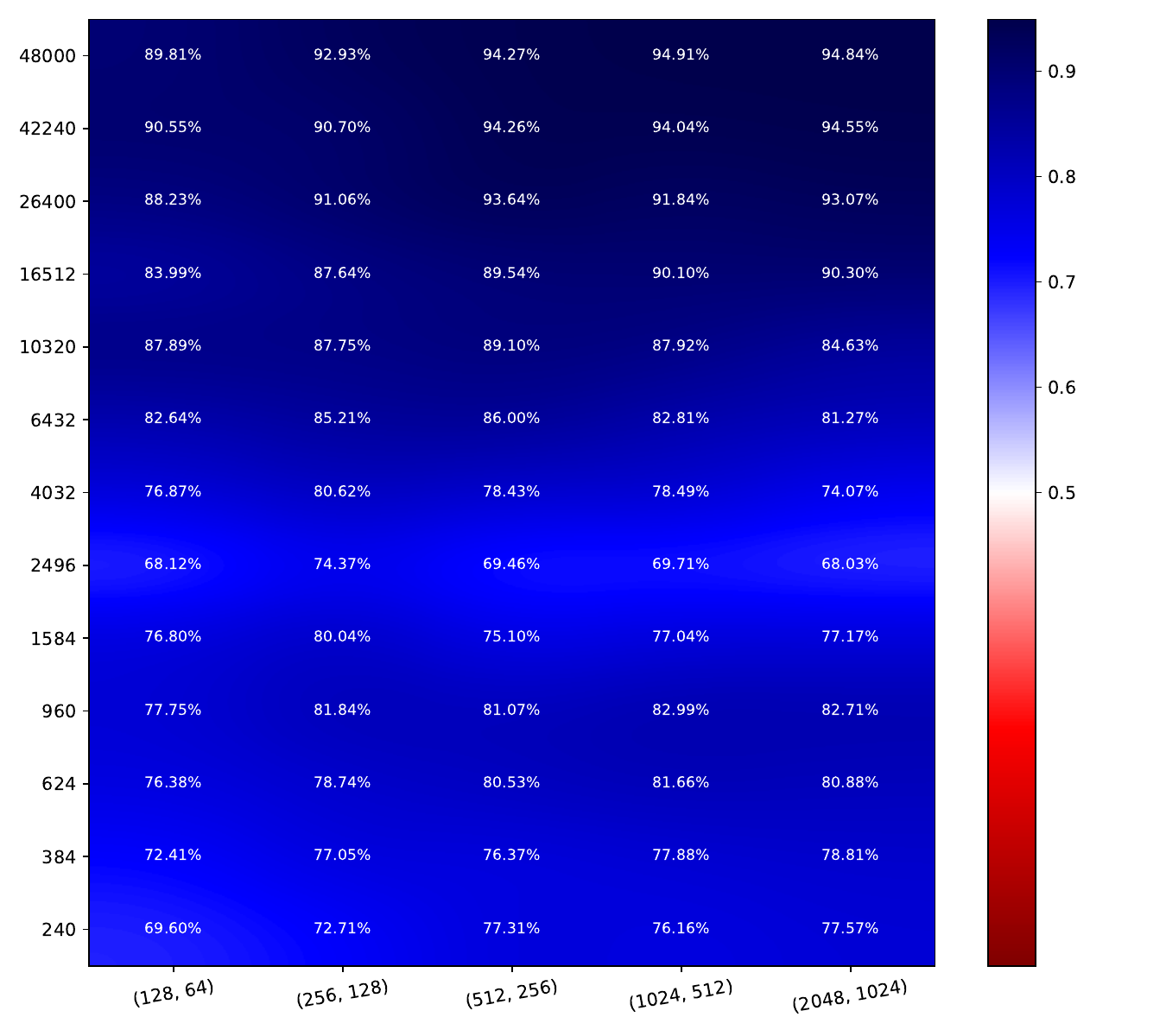} %
        \\
        \rotatebox[origin=c]{90}{SVHN} %
	& \includegraphics[width=0.159\textwidth,valign=m]{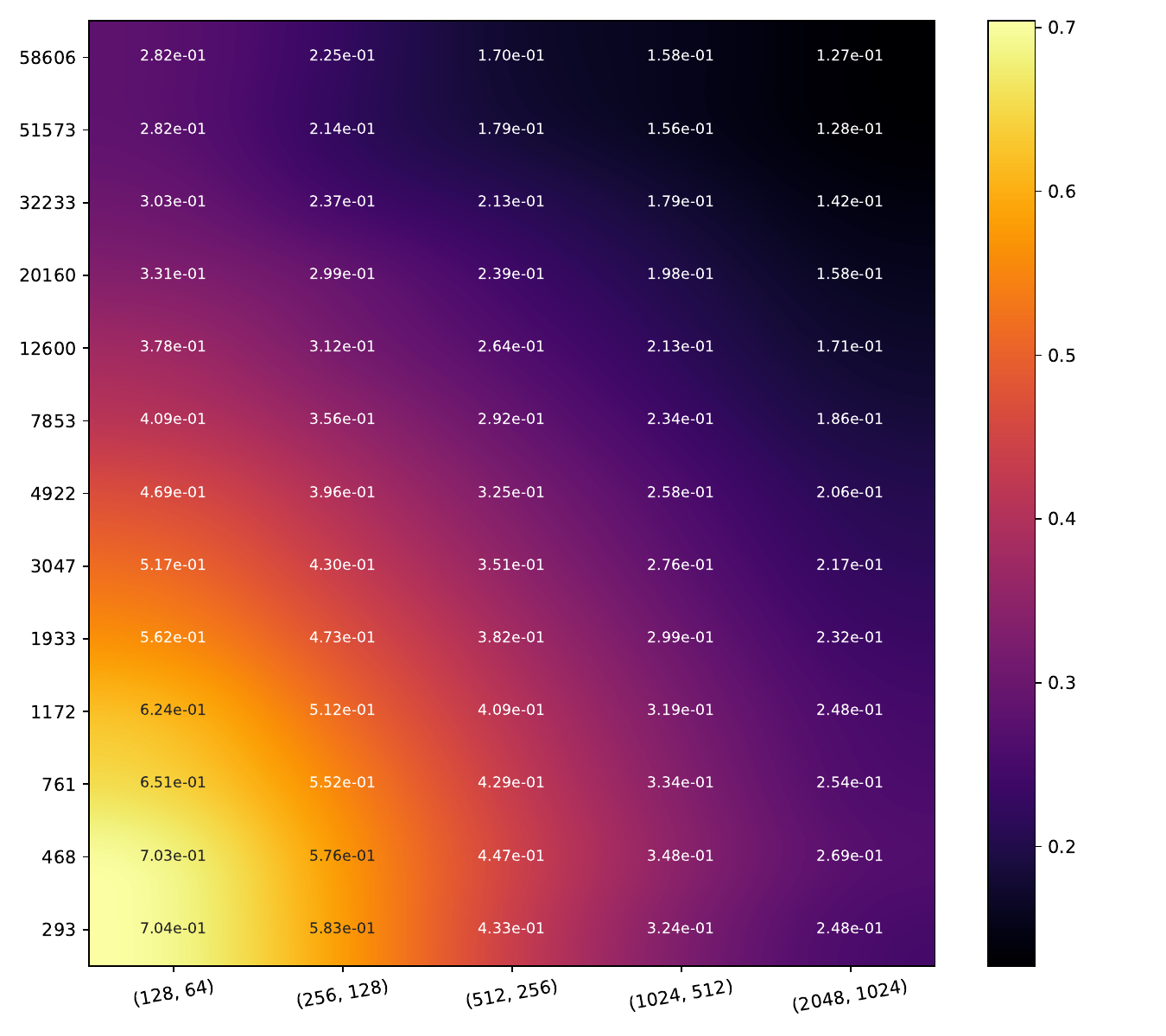} %
        & \includegraphics[width=0.159\textwidth,valign=m]{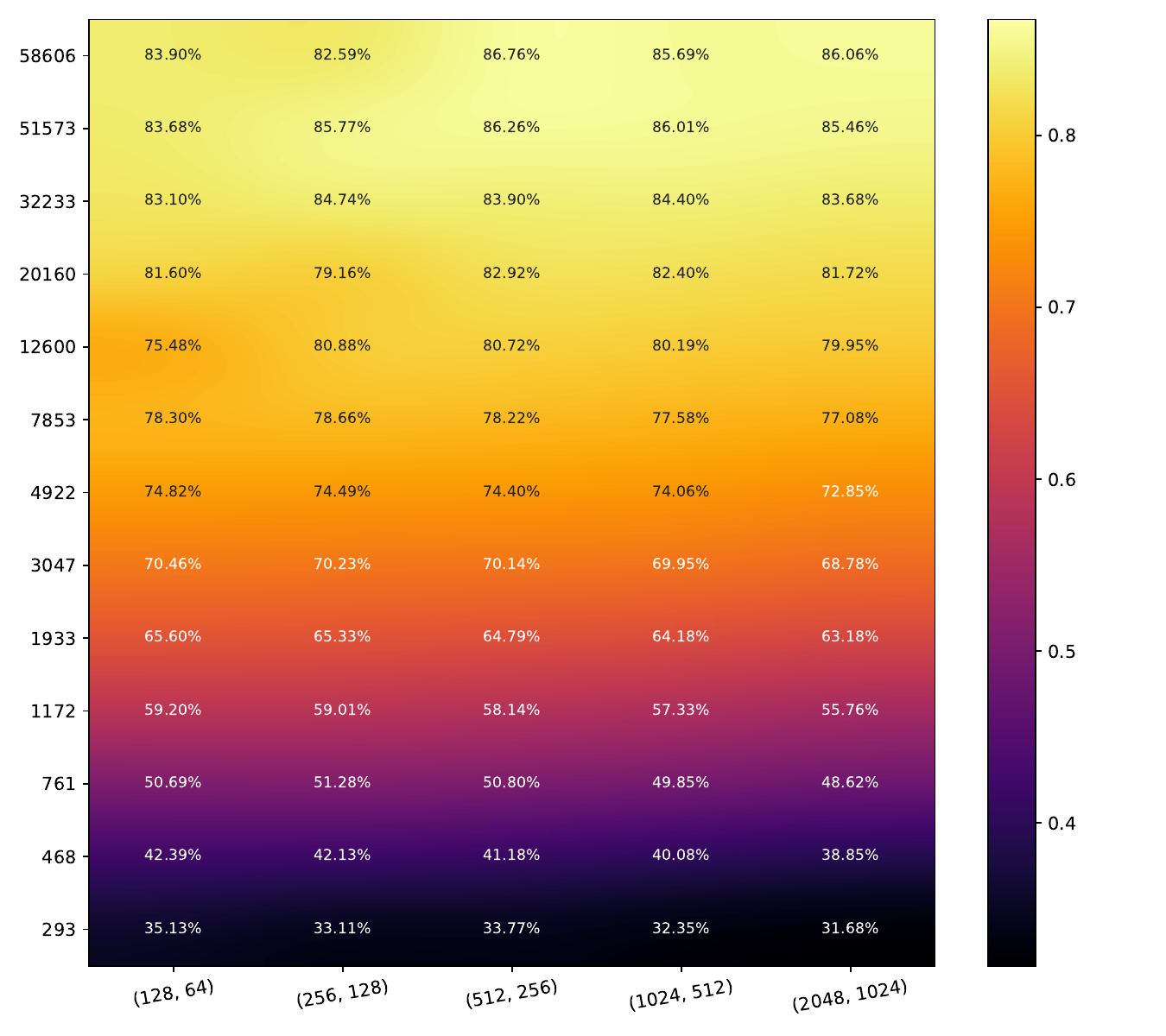} %
        & \includegraphics[width=0.159\textwidth,valign=m]{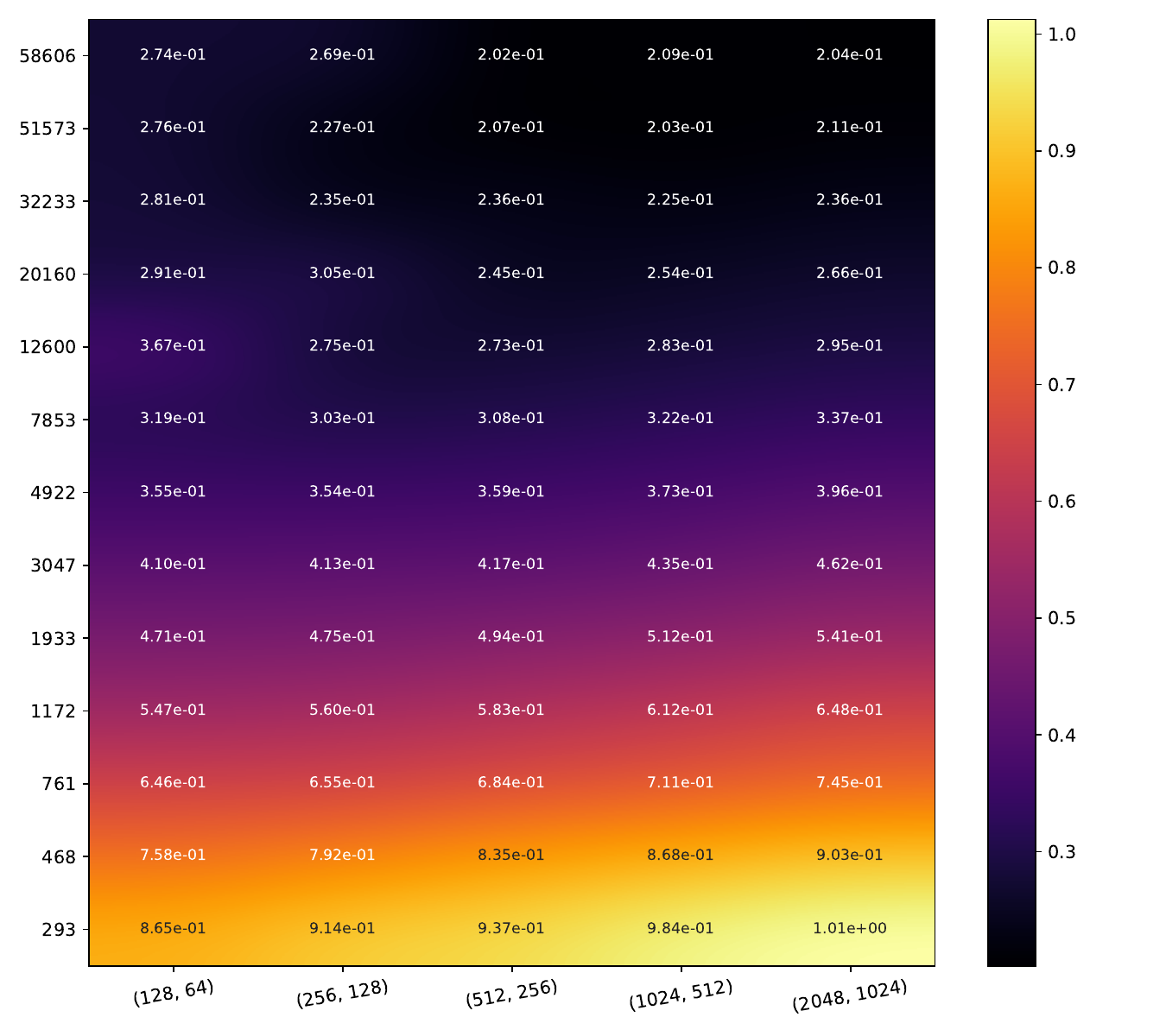} %
        & \includegraphics[width=0.159\textwidth,valign=m]{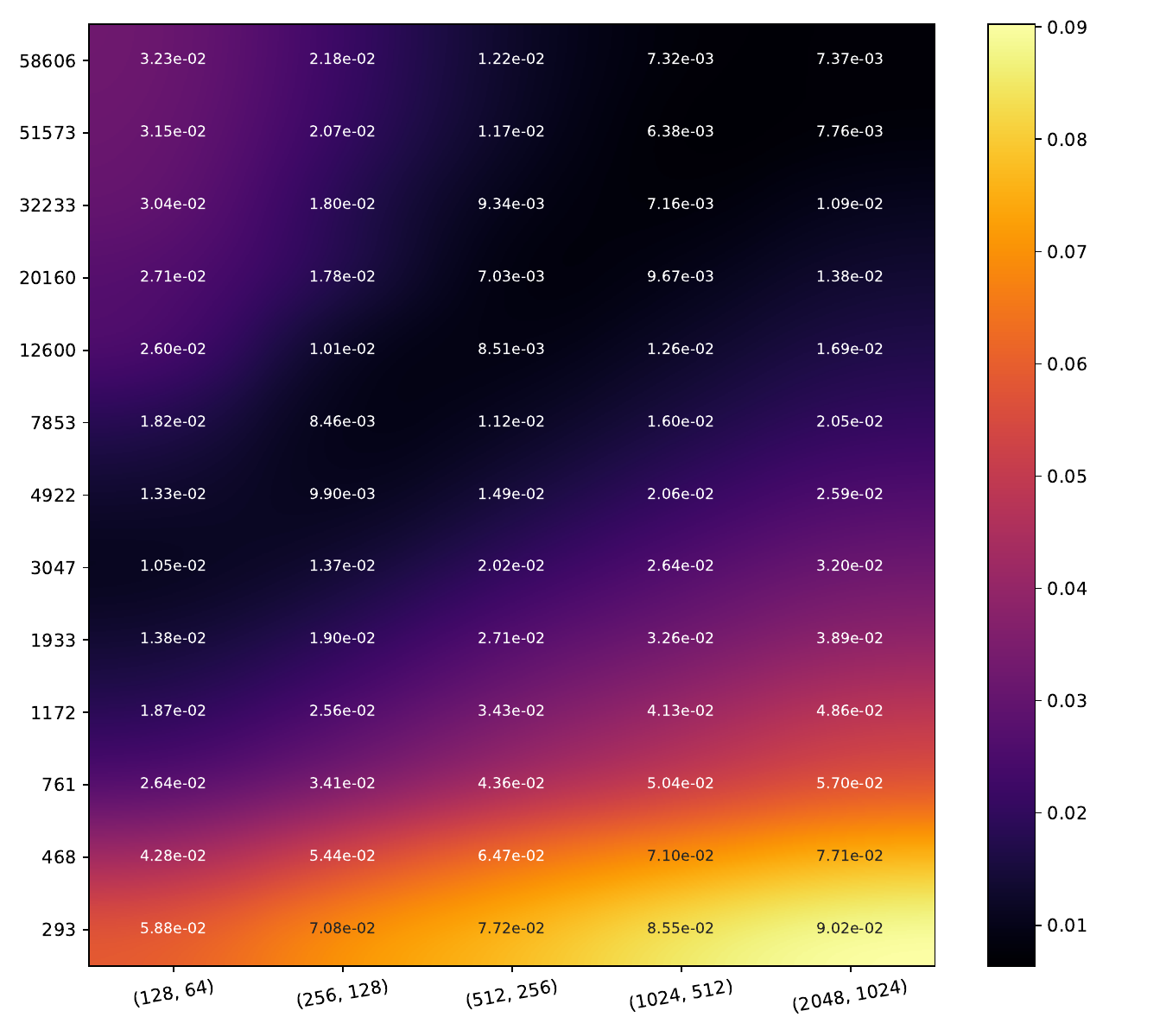} %
	& \includegraphics[width=0.159\textwidth,valign=m]{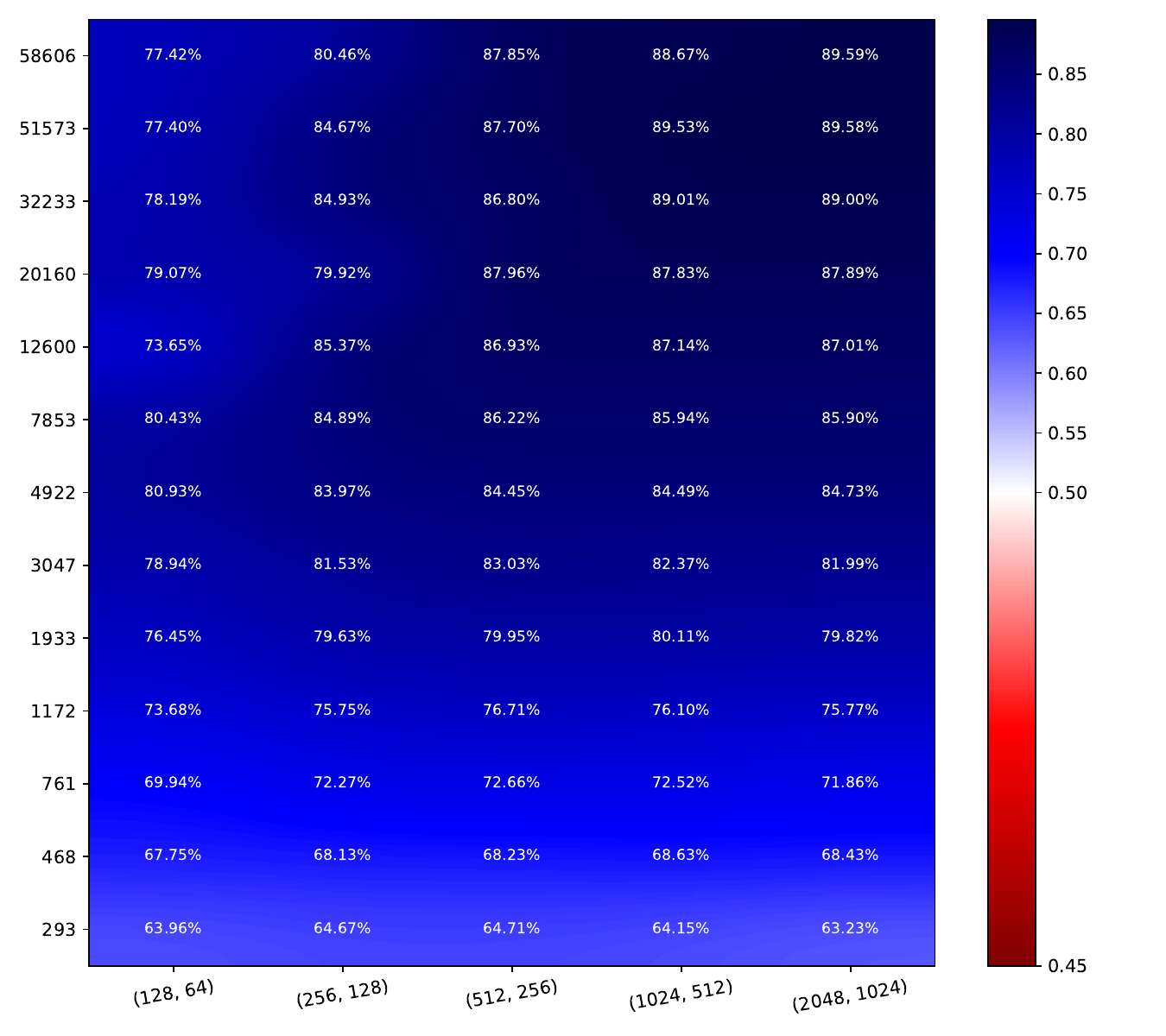} %
        & \includegraphics[width=0.159\textwidth,valign=m]{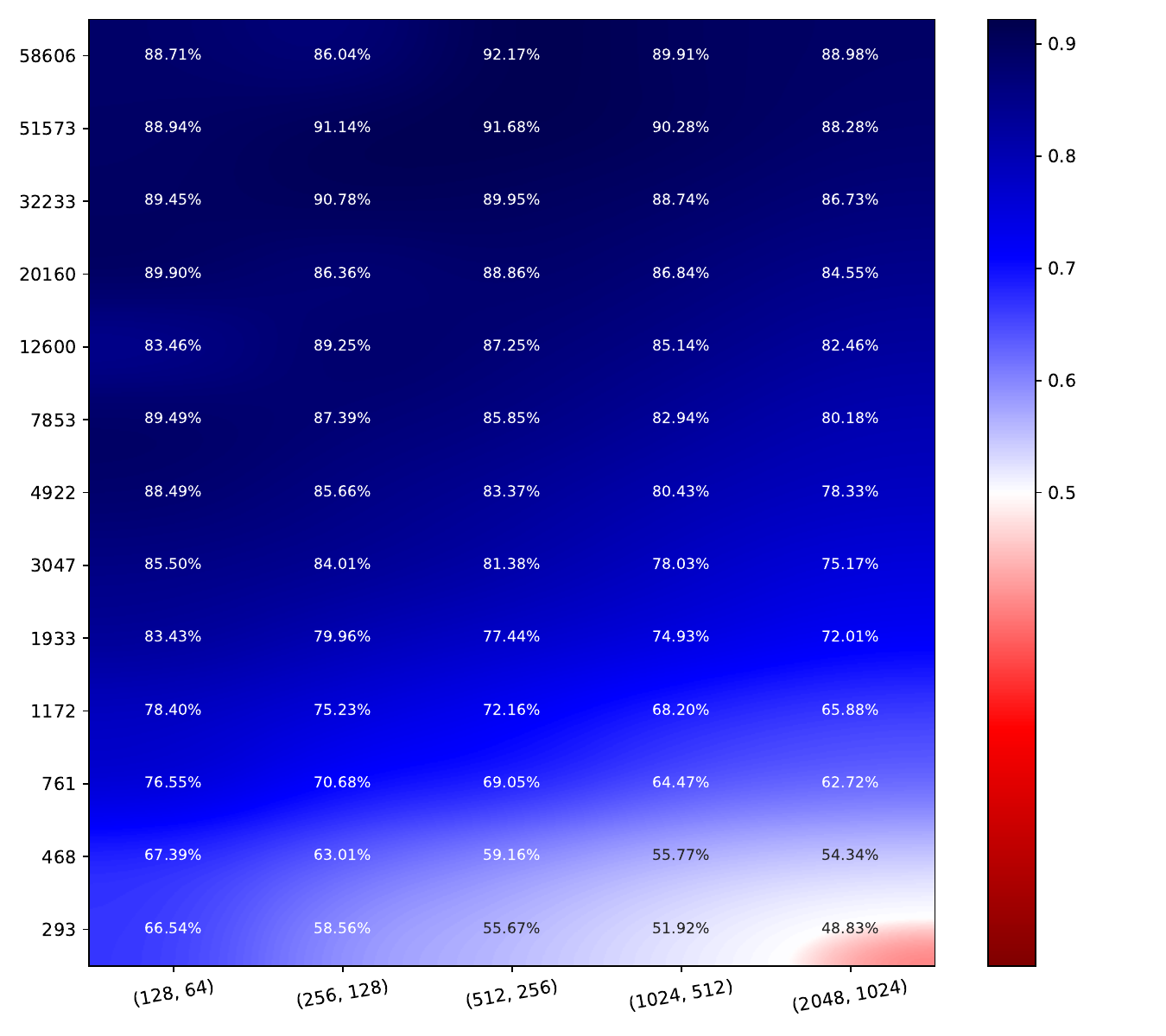} %
        \\
        \rotatebox[origin=c]{90}{CIFAR10} %
	& \includegraphics[width=0.159\textwidth,valign=m]{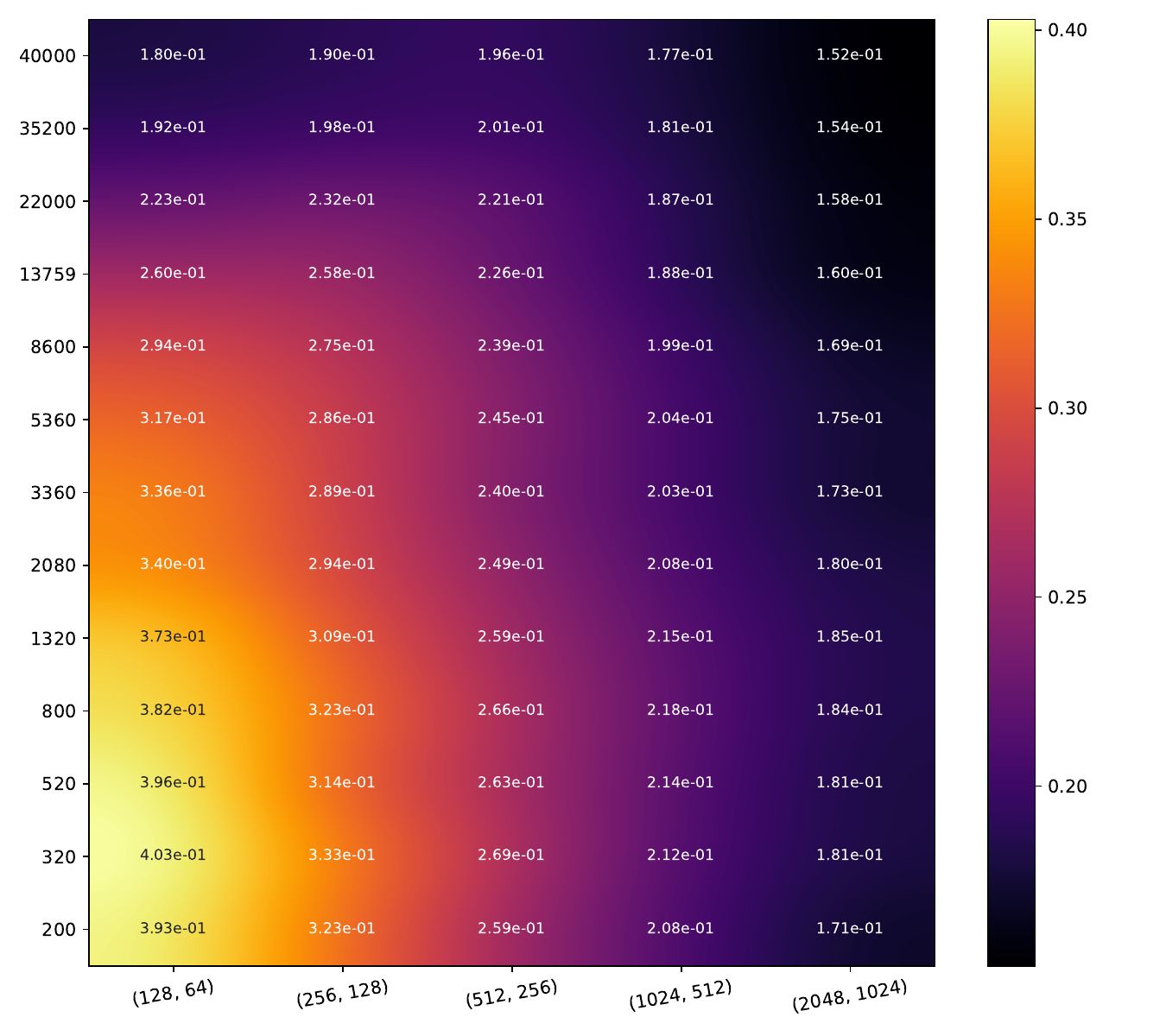} %
        & \includegraphics[width=0.159\textwidth,valign=m]{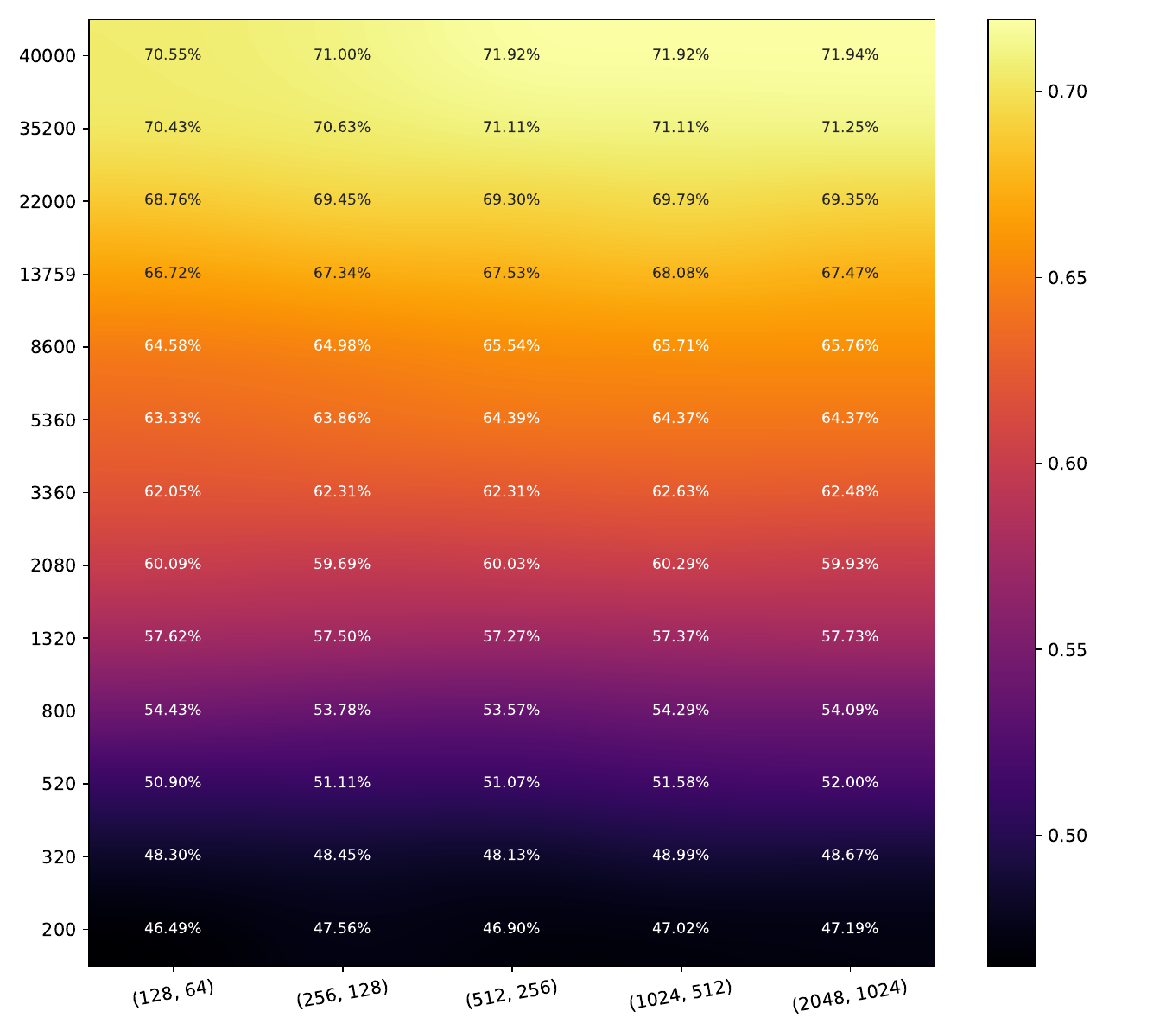} %
        & \includegraphics[width=0.159\textwidth,valign=m]{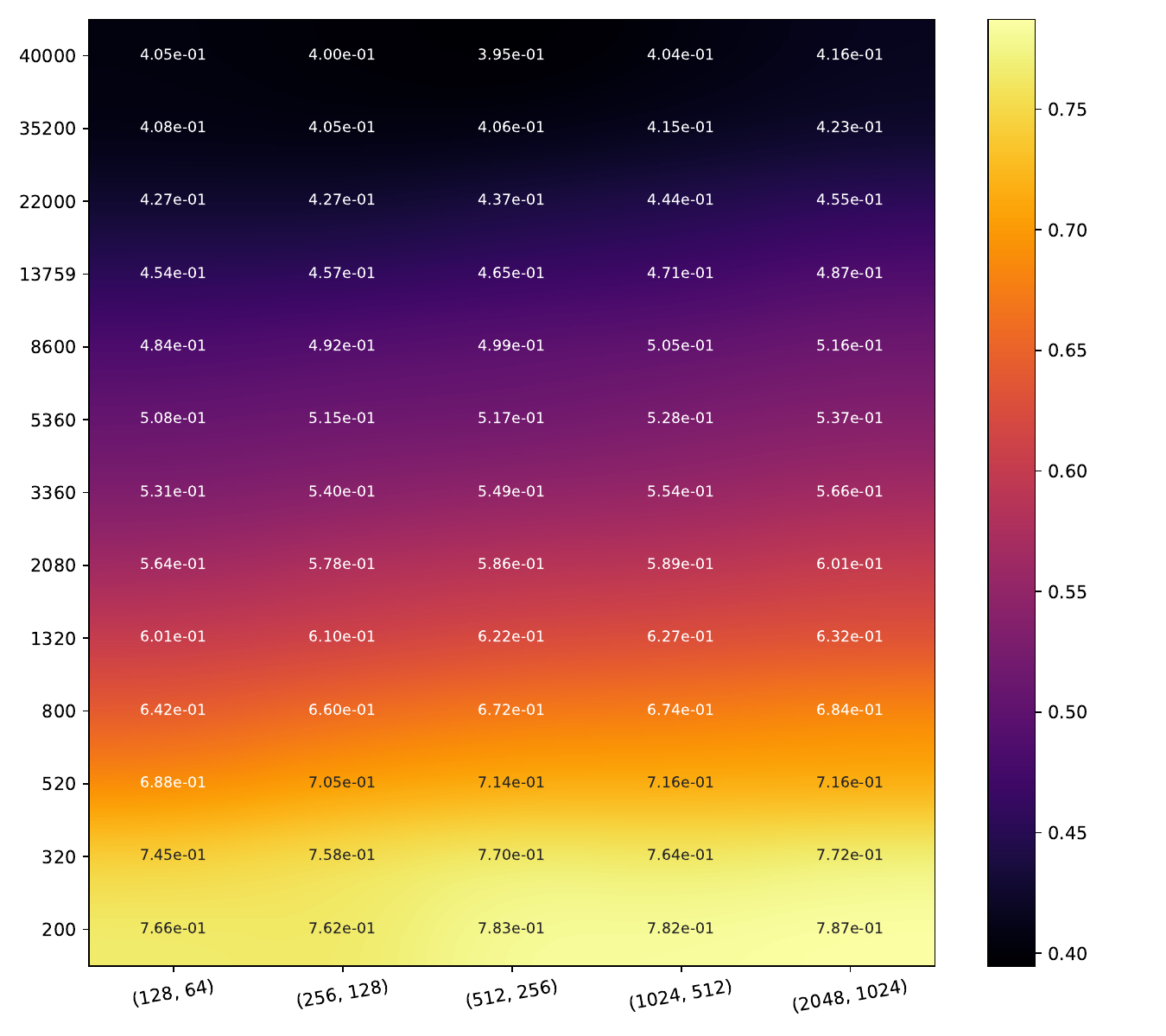} %
        & \includegraphics[width=0.159\textwidth,valign=m]{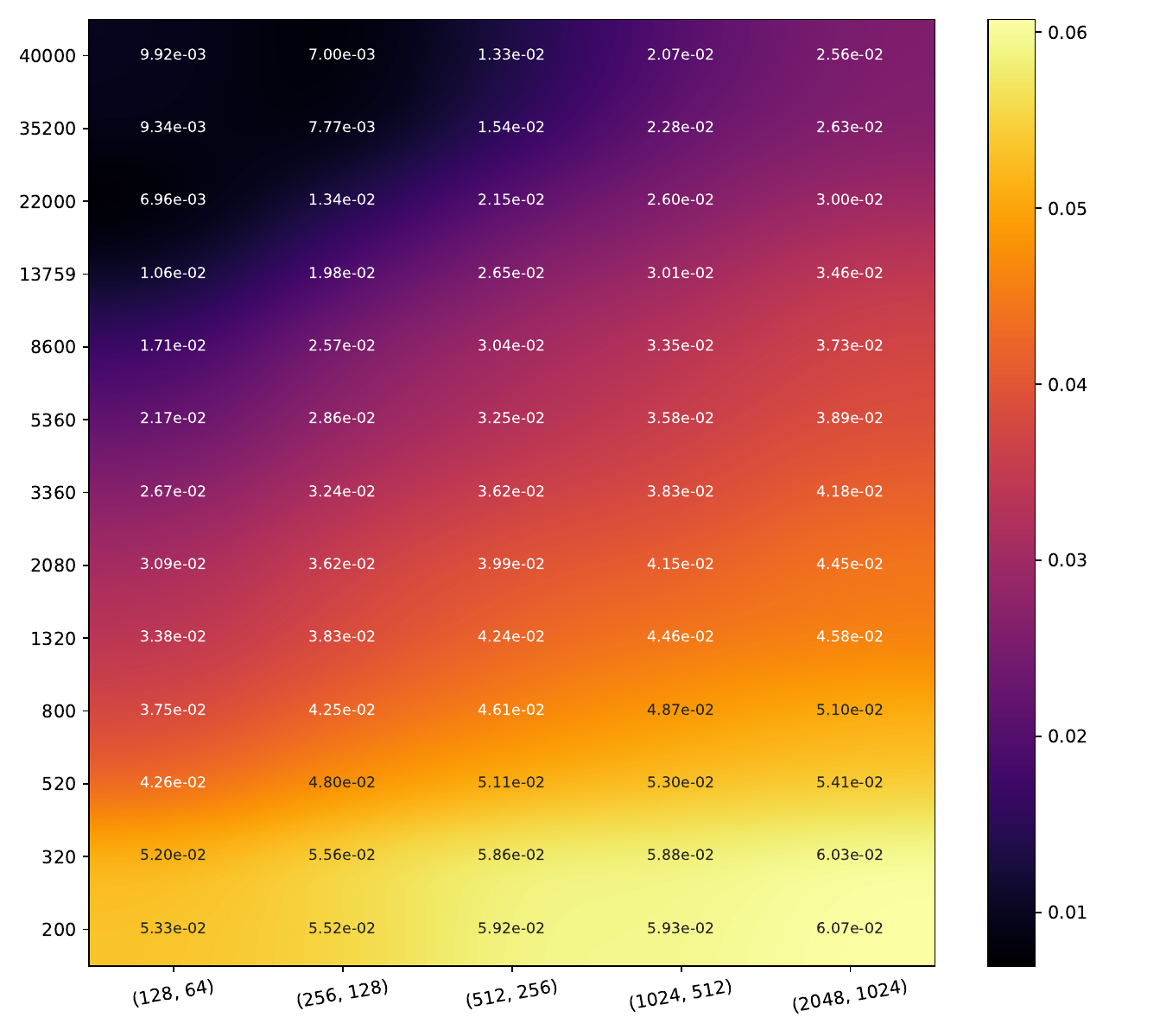} %
	& \includegraphics[width=0.159\textwidth,valign=m]{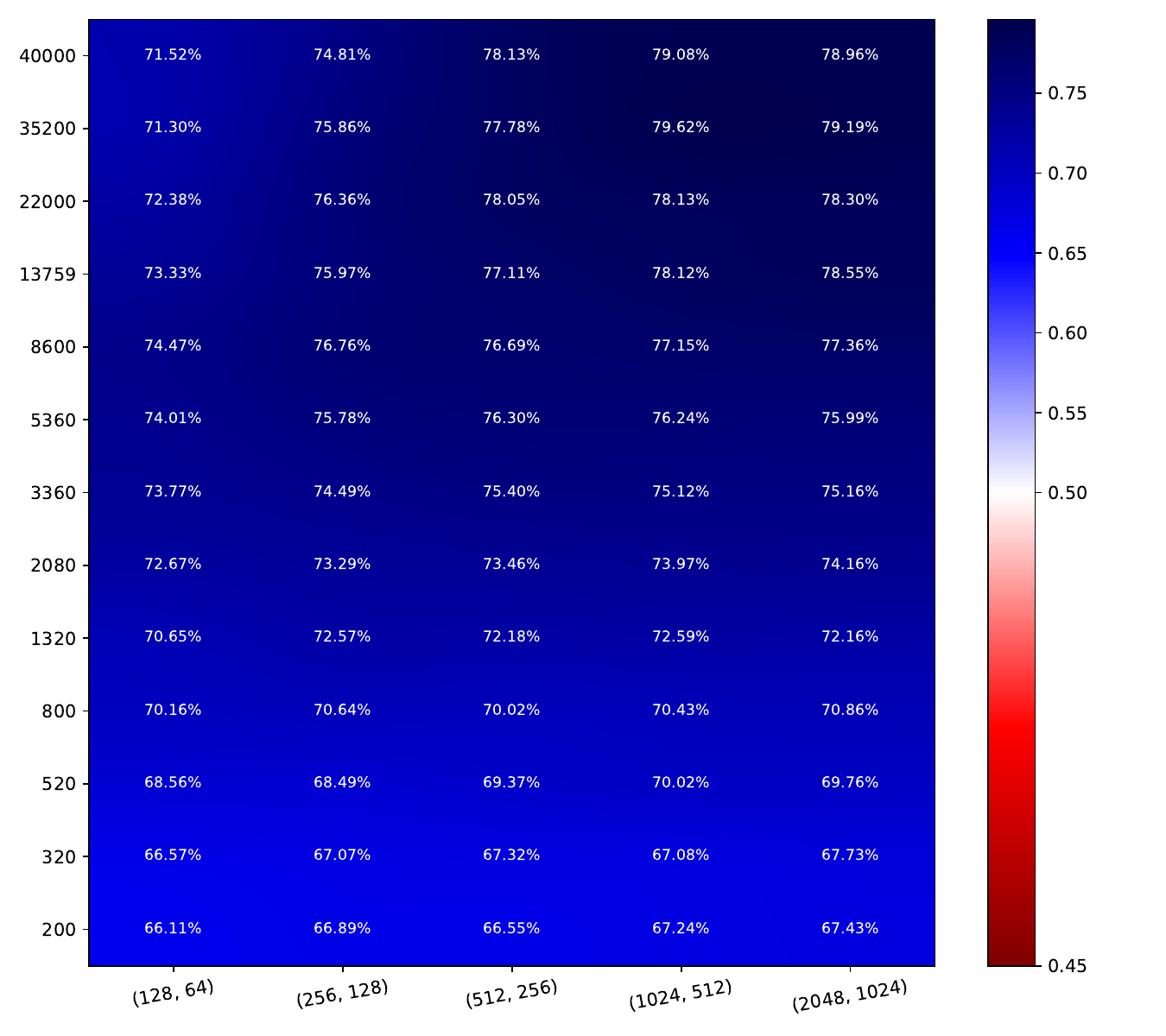} %
        & \includegraphics[width=0.159\textwidth,valign=m]{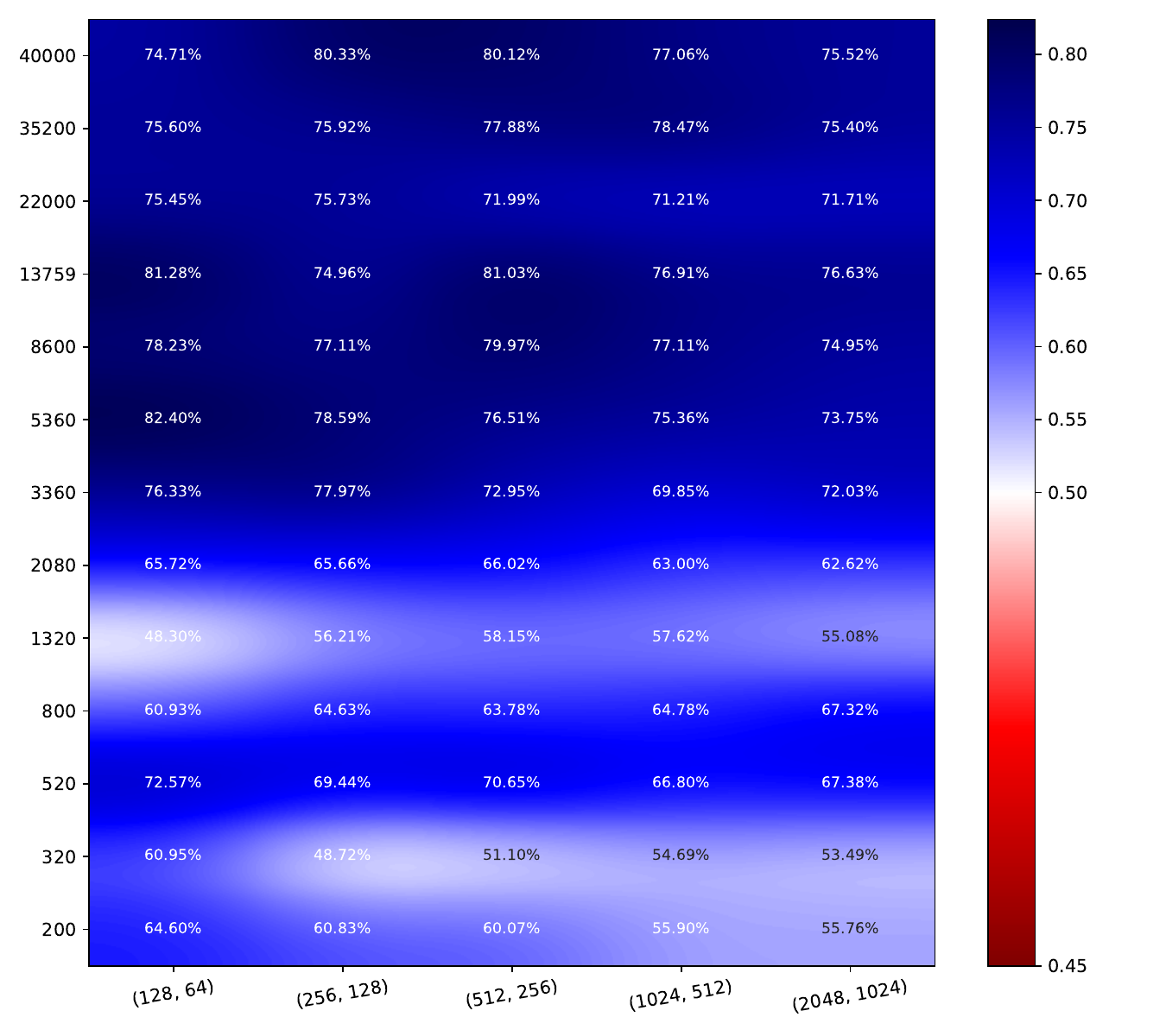} %
        \\
        \addlinespace
        & E.U. & Accuracy & Brier score & SCE & Misclassification & OOD
    \end{tabular}
    \label{fig-app:MC-Dropout-CP}
\end{table}

\begin{table}[ht]
  \centering
  \begin{tabular}{l P{0.1\textwidth} c c P{0.1\textwidth} c c}
    \toprule
     & E.U. & Accuracy & Brier score & SCE & Misclassification & OOD \\
    \midrule
    MNIST & $0.065$ & $93.34\%$ & $0.099$ & $0.0052$ & $92.50\%$ & $82.62\%$ \\
    CIFAR10 & $0.241$ & $60.71\%$ & $0.569$ & $0.0354$ & $73.28\%$ & $69.29\%$ \\
    \bottomrule
  \end{tabular}
  \caption{Aggregation of the results with MC-Dropout CP. Same metrics as in Table~\ref{tab:mean-heatmaps}.}
  \label{tab-app:agg-MC-CE}
\end{table}

\clearpage
\newpage

\section{Models without weight decay\label{app:no-wd}}
We also tested the same setup presented in Sect.~\ref{sec:experiments} but without weight decay. We use the same representation as in Sect.~\ref{sec:experiments} and Appendix~\ref{app:additional-results}. We notice that Conflictual DE stays consistent in all metrics: accuracy, calibration, Brier score, OOD detection, and misclassification detection. In addition, and most importantly, the $2$ principles are verified whereas, for example, both are uncorroborated in the case of DE trained on CIFAR10. Furthermore, the model-related principle is not verified in any other method.

Overall, the results with weight decay are relatively better.

\begin{table}[ht]
  \centering
  \begin{tabular}{p{0.15\textwidth} c P{0.155\textwidth} P{0.1\textwidth} P{0.1\textwidth} P{0.175\textwidth}}
    \toprule
     & & MC-Dropout & EDL & DE & Conflictual DE \\
    \midrule
    \multirow{2}{*}{E.U.} & MNIST & $0.068$ & $0.086$ & $0.014$ & $0.245$ \\
                          & CIFAR10 & $0.299$ & $0.137$ & $0.156$ & $0.544$ \\
    \midrule
    \multirow{2}{*}{Acc. $\uparrow$} & MNIST & $93.31\%$ & $91.45\%$ & $93.52\%$ & $\mathbf{93.86\%}$ \\
                              & CIFAR10 & $60.34\%$ & $60.89\%$ & $61.61\%$ & $\mathbf{61.82\%}$ \\
    \midrule
    \multirow{2}{*}{\shortstack[l]{Brier \\ score} $\downarrow$} & MNIST & $0.101$ & $0.164$ & $0.102$ & $\mathbf{0.094}$ \\
                                                                & CIFAR10 & $0.589$ & $0.539$ & $0.598$ & $\mathbf{0.532}$ \\
    \midrule
    \multirow{2}{*}{SCE $\downarrow$} & MNIST & $\mathbf{0.0057}$ & $0.0329$ & $0.0076$ & $0.0119$ \\
                        & CIFAR10 & $0.0418$ & $0.0187$ & $0.0474$ & $\mathbf{0.0207}$ \\
    \midrule
    \multirow{2}{*}{Mis. $\uparrow$} & MNIST & $92.55\%$ & $87.86\%$ & $\mathbf{93.43\%}$ & $93.09\%$ \\
                                     & CIFAR10 & $73.75\%$ & $72.98\%$ & $\mathbf{74.84\%}$ & $74.13\%$ \\    
    \midrule
    \multirow{2}{*}{OOD $\uparrow$} & MNIST   & $82.85\%$ & $81.85\%$ & $\mathbf{88.19\%}$ & $86.50\%$ \\
                                    & CIFAR10 & $67.49\%$ & $69.00\%$ & $67.68\%$ & $\mathbf{73.06\%}$ \\
    \bottomrule
  \end{tabular}
  \caption{Average value per heatmap for different metrics without weight decay. Same metrics as in Table~\ref{tab:mean-heatmaps}.}
  \label{tab-app:mean-heatmaps-nowd}
\end{table}

\begin{figure}[ht]
  \centering
  \begin{subfigure}[t]{\dimexpr0.23\textwidth+20pt\relax}
    \makebox[20pt]{\raisebox{35pt}{\rotatebox[origin=c]{90}{\scriptsize MNIST}}}%
    \includegraphics[width=\dimexpr\linewidth-20pt\relax]{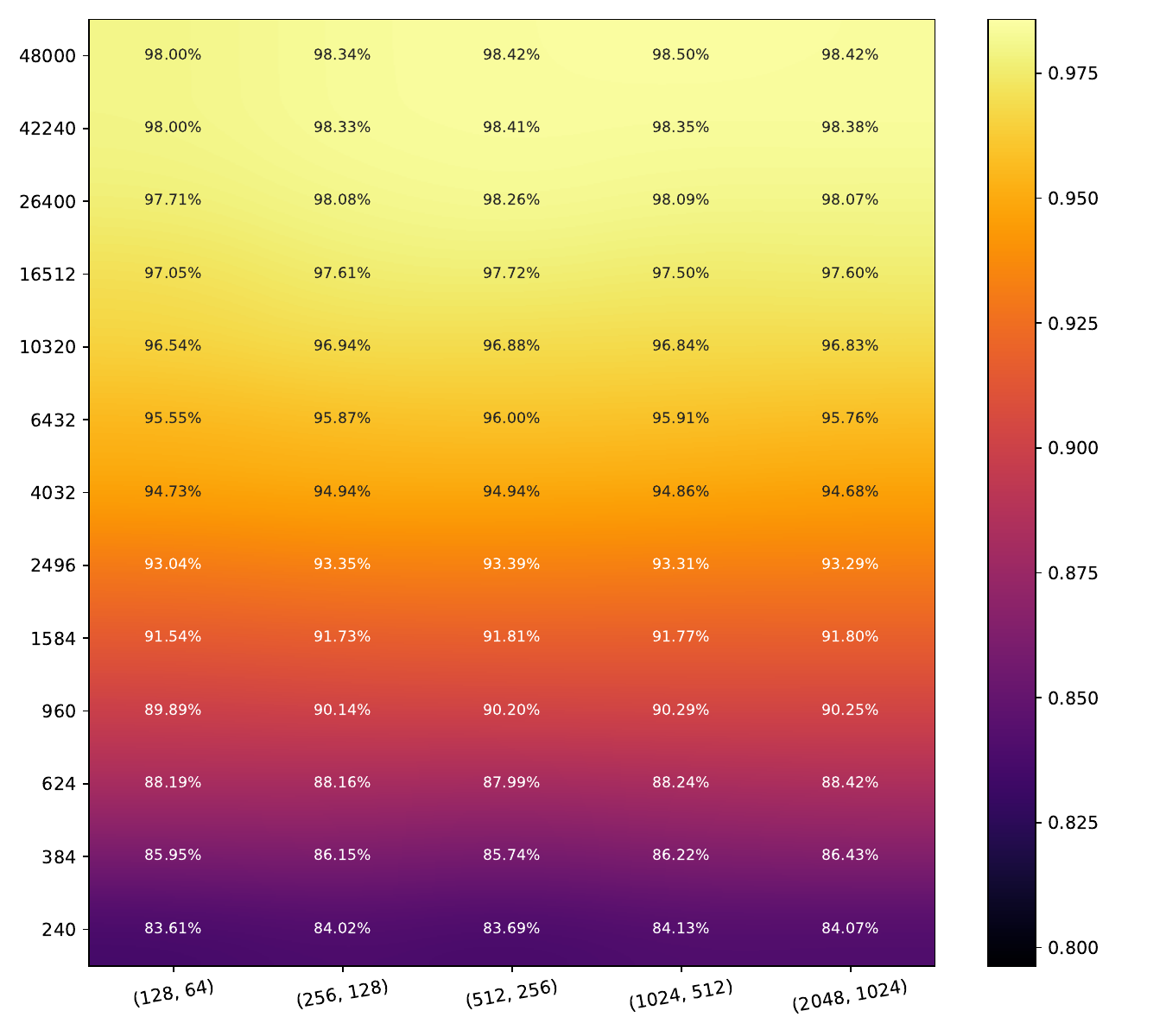}
    \makebox[20pt]{\raisebox{35pt}{\rotatebox[origin=c]{90}{\scriptsize CIFAR10}}}%
    \includegraphics[width=\dimexpr\linewidth-20pt\relax]{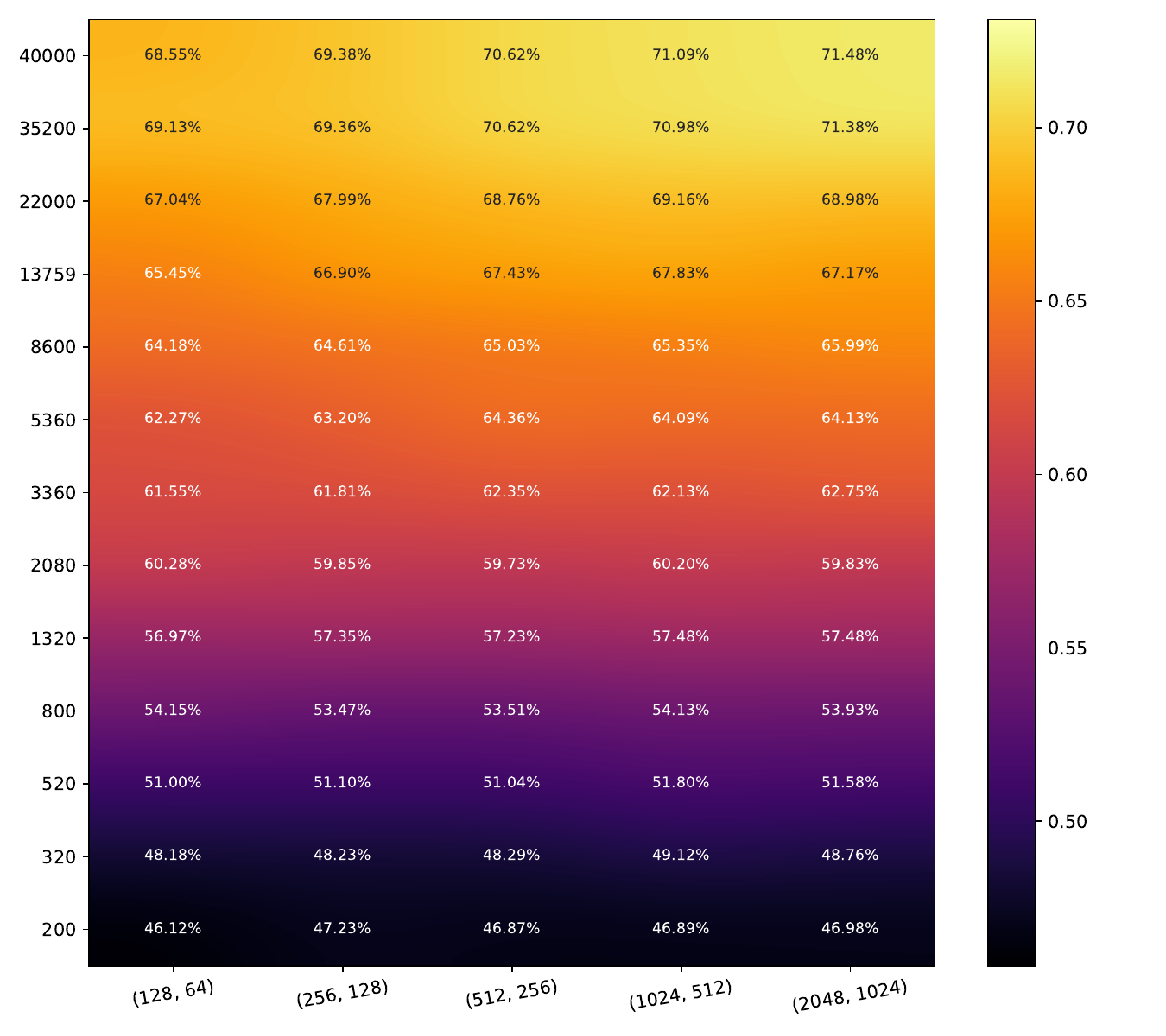}
    \caption*{\qquad MC-Dropout}
  \end{subfigure}\hfill
  \begin{subfigure}[t]{0.23\textwidth}
    \includegraphics[width=\textwidth]{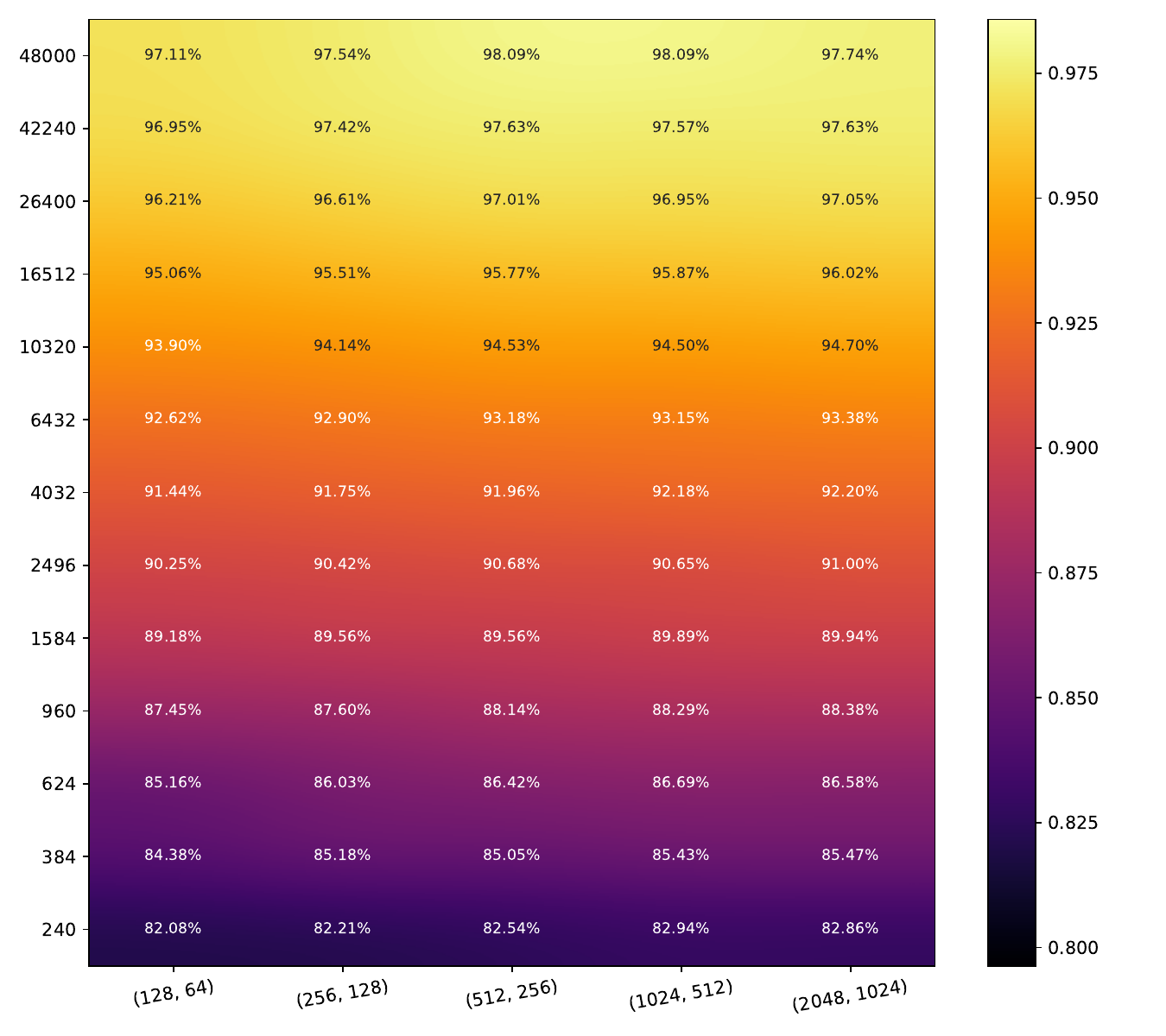}
    \includegraphics[width=\textwidth]{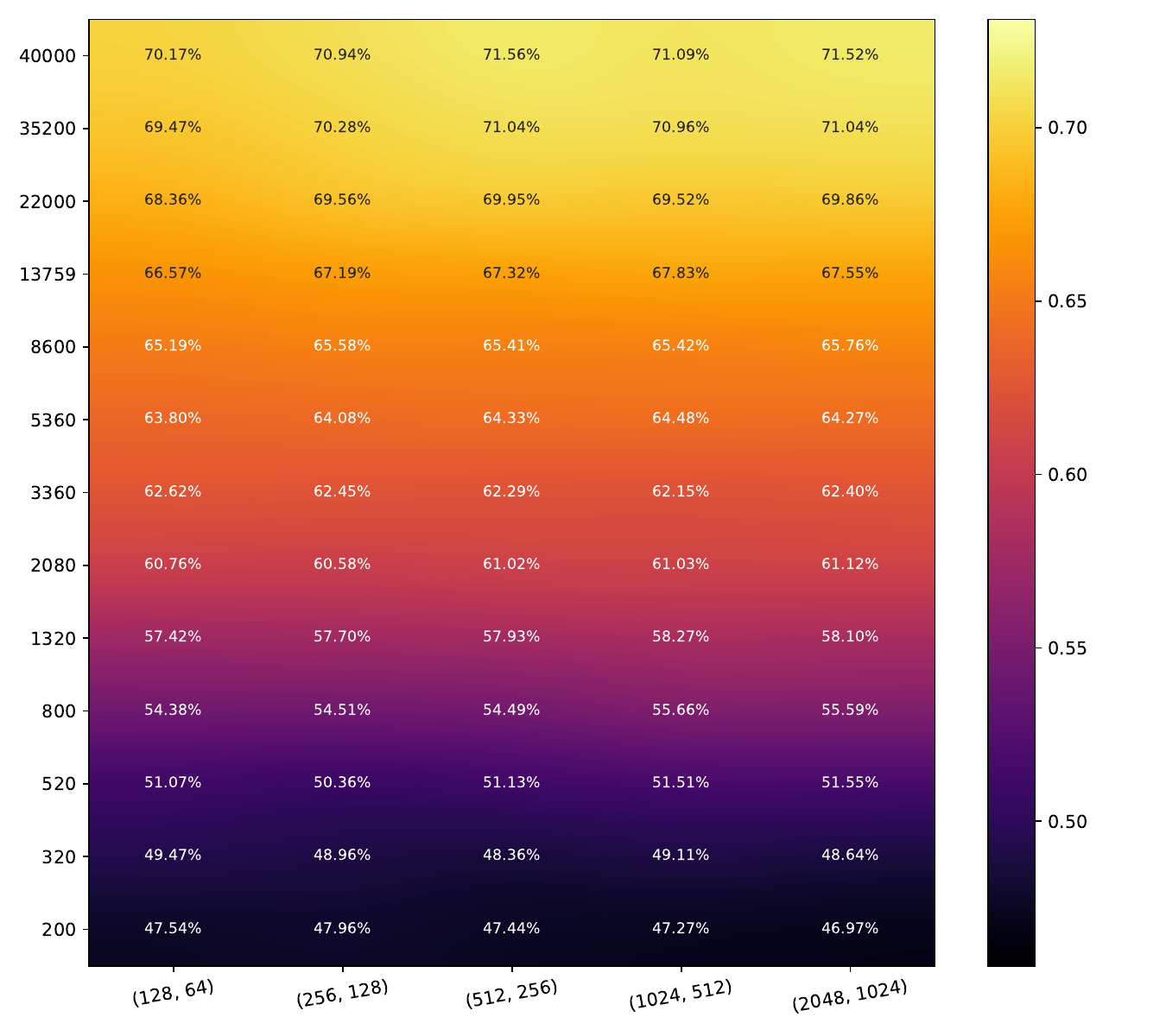}
    \caption*{EDL}
  \end{subfigure}\hfill
  \begin{subfigure}[t]{0.23\textwidth}
    \includegraphics[width=\textwidth]{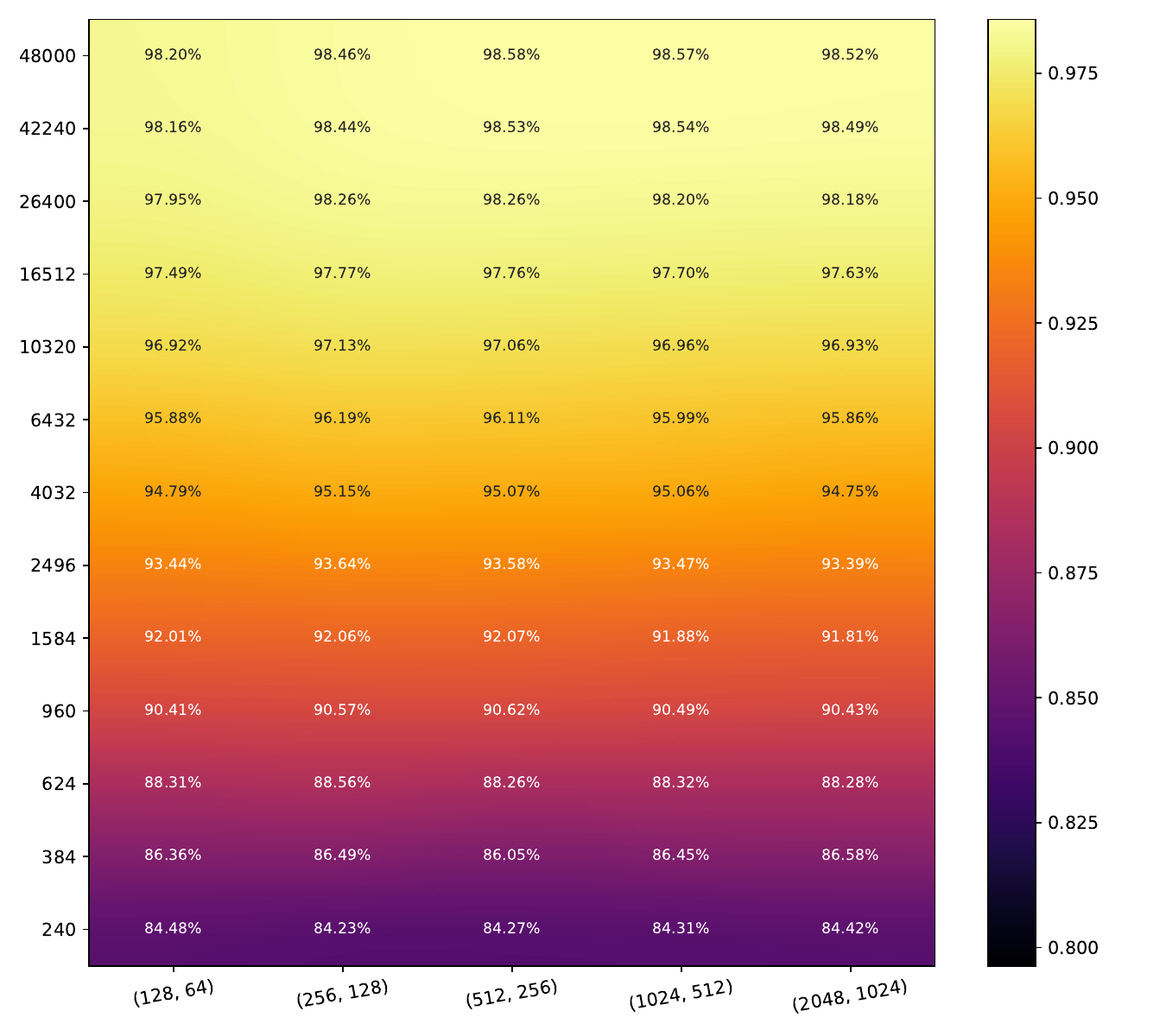}
    \includegraphics[width=\textwidth]{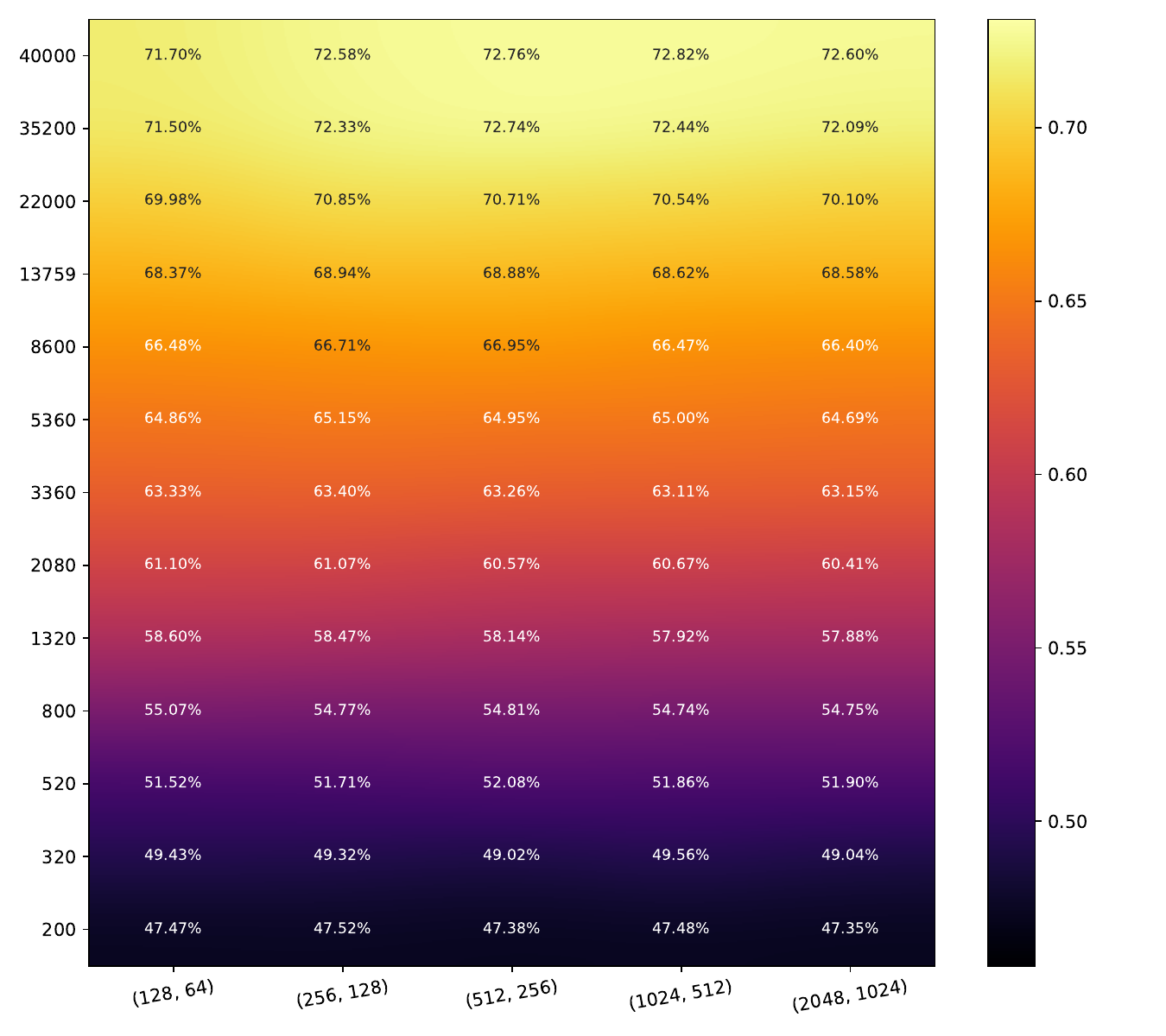}
    \caption*{DE}
  \end{subfigure}\hfill
  \begin{subfigure}[t]{0.23\textwidth}
    \includegraphics[width=\textwidth]{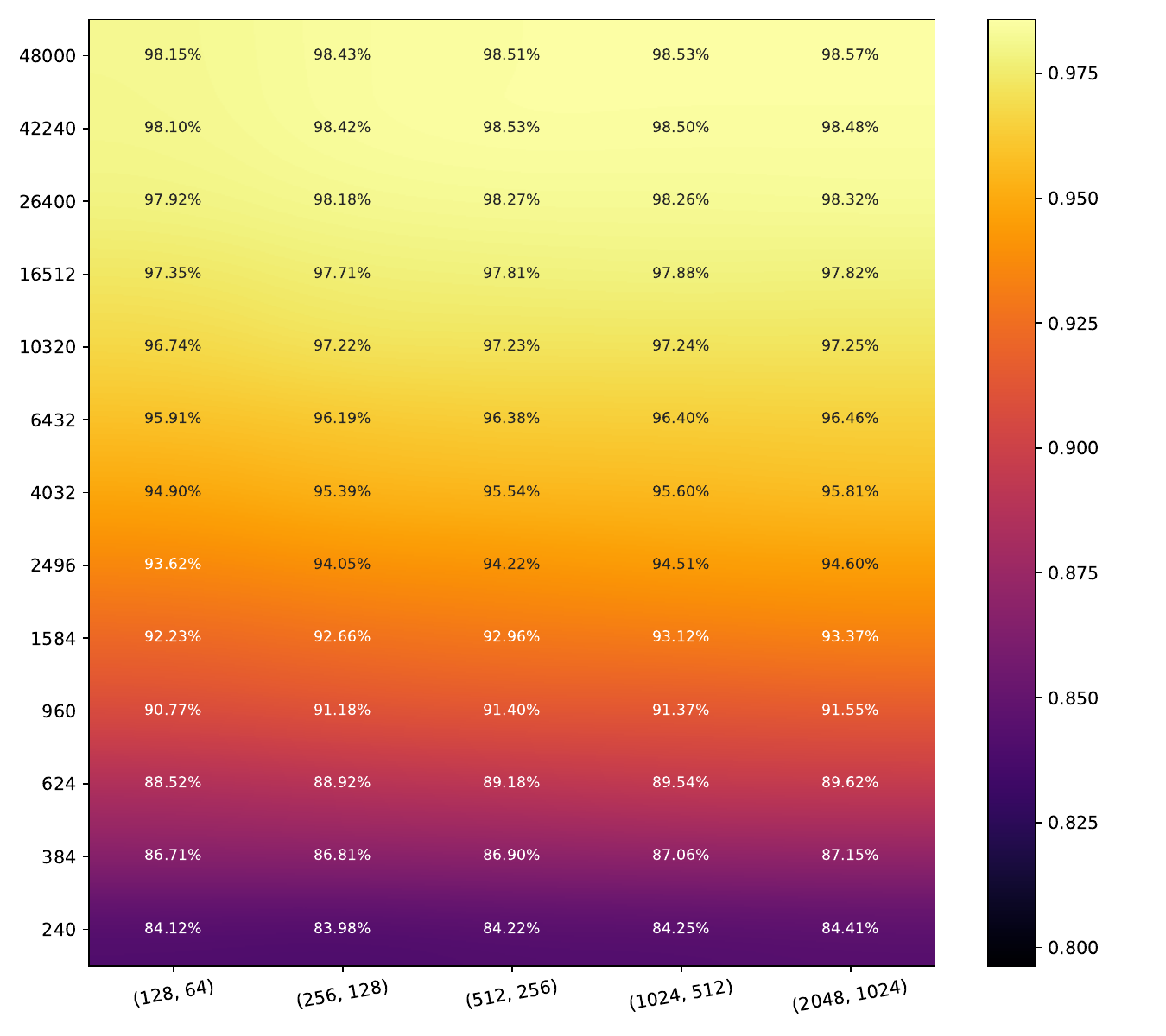}
    \includegraphics[width=\textwidth]{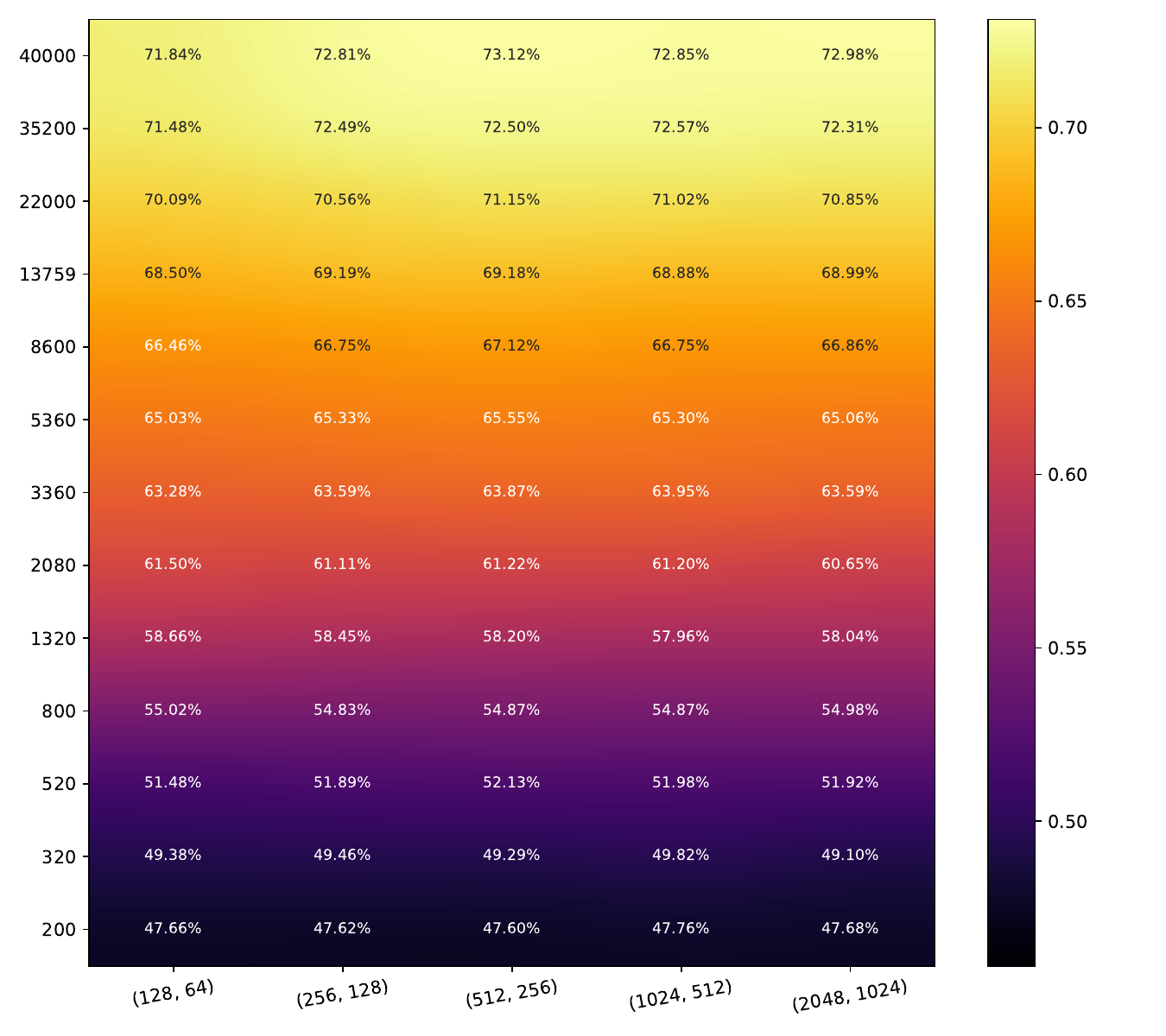}
    \caption*{Conflictual DE}
  \end{subfigure}\hfill
  \caption{Heatmaps of Accuracy. Color scales are the same per dataset.}
\end{figure}

\begin{figure}[ht]
  \begin{subfigure}[t]{\dimexpr0.23\textwidth+20pt\relax}
    \makebox[20pt]{\raisebox{35pt}{\rotatebox[origin=c]{90}{\scriptsize MNIST}}}%
    \includegraphics[width=\dimexpr\linewidth-20pt\relax]{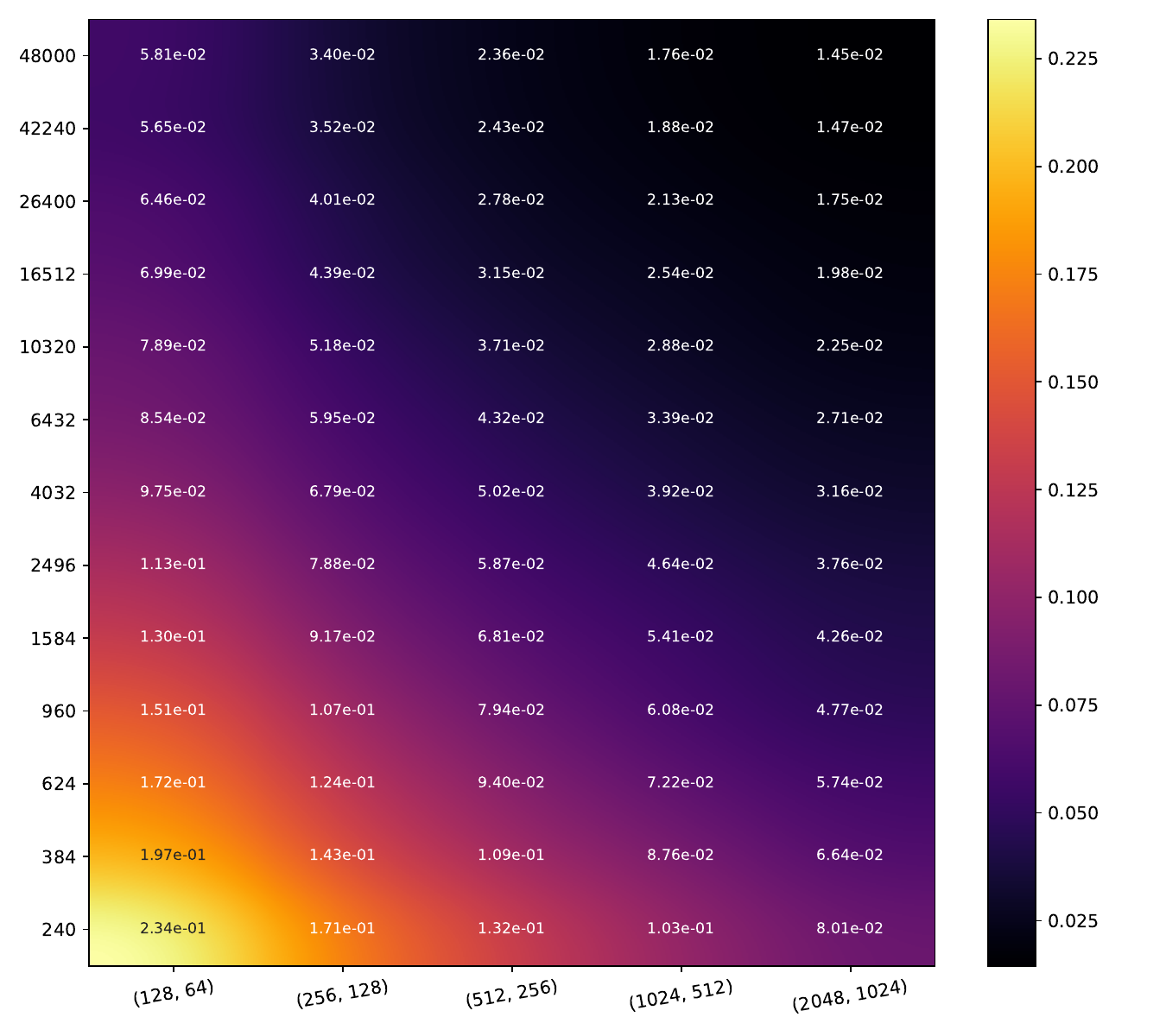}
    \makebox[20pt]{\raisebox{35pt}{\rotatebox[origin=c]{90}{\scriptsize CIFAR10}}}%
    \includegraphics[width=\dimexpr\linewidth-20pt\relax]{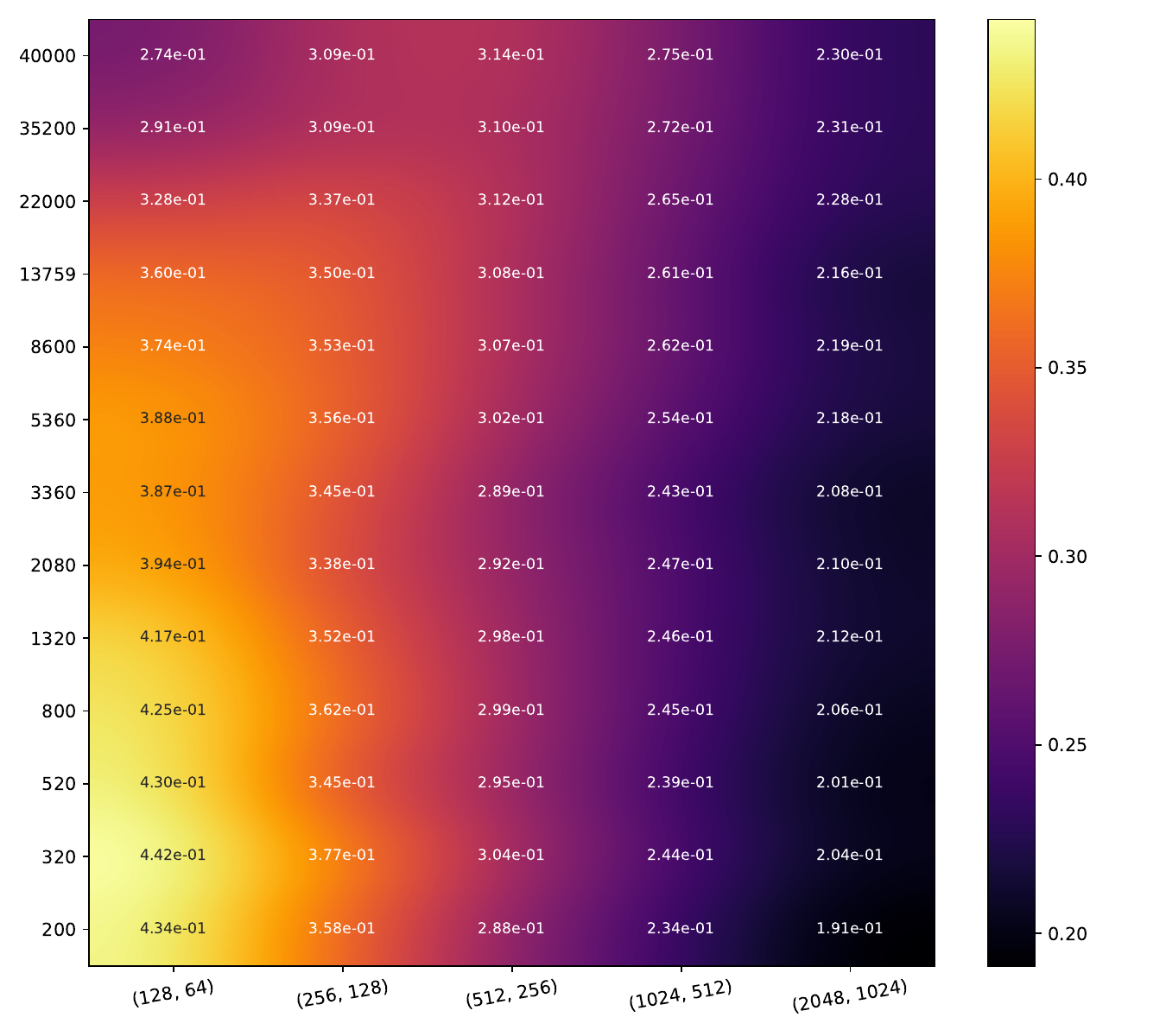}
    \caption*{\qquad MC-Dropout}
  \end{subfigure}\hfill
  \begin{subfigure}[t]{0.23\textwidth}
    \includegraphics[width=\textwidth]{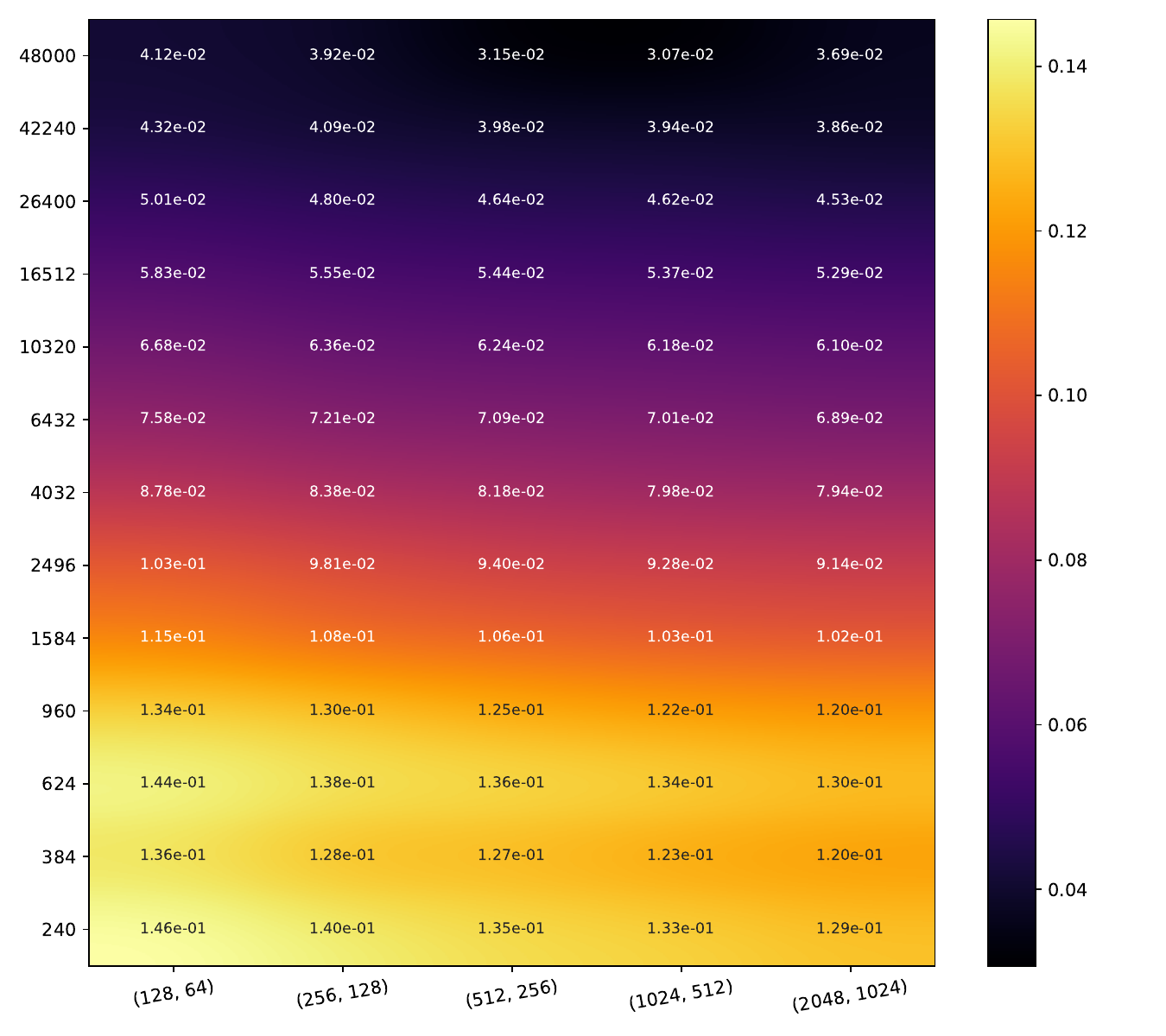}
    \includegraphics[width=\textwidth]{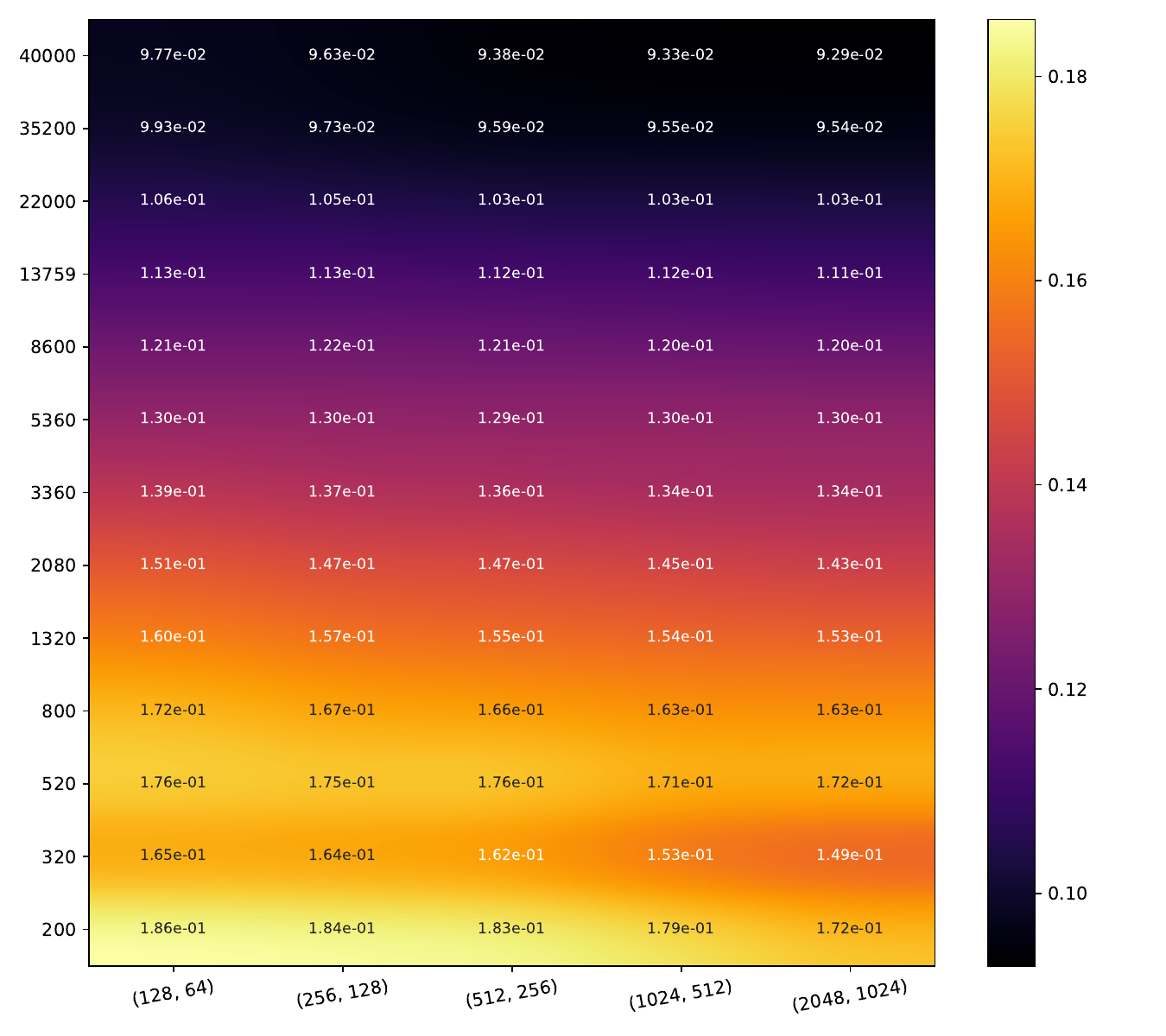}
    \caption*{EDL}
  \end{subfigure}\hfill
  \begin{subfigure}[t]{0.23\textwidth}
    \includegraphics[width=\textwidth]{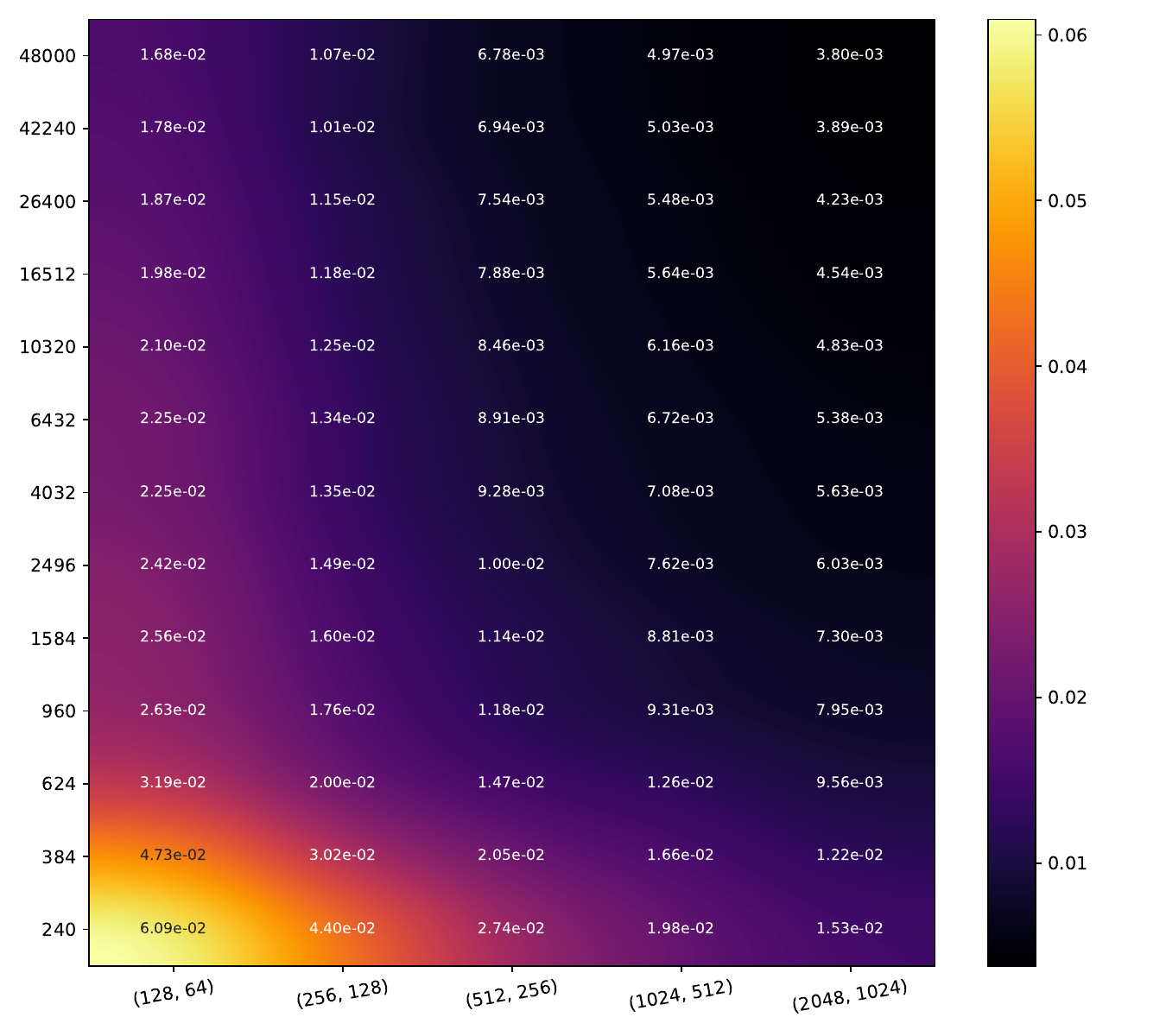}
    \includegraphics[width=\textwidth]{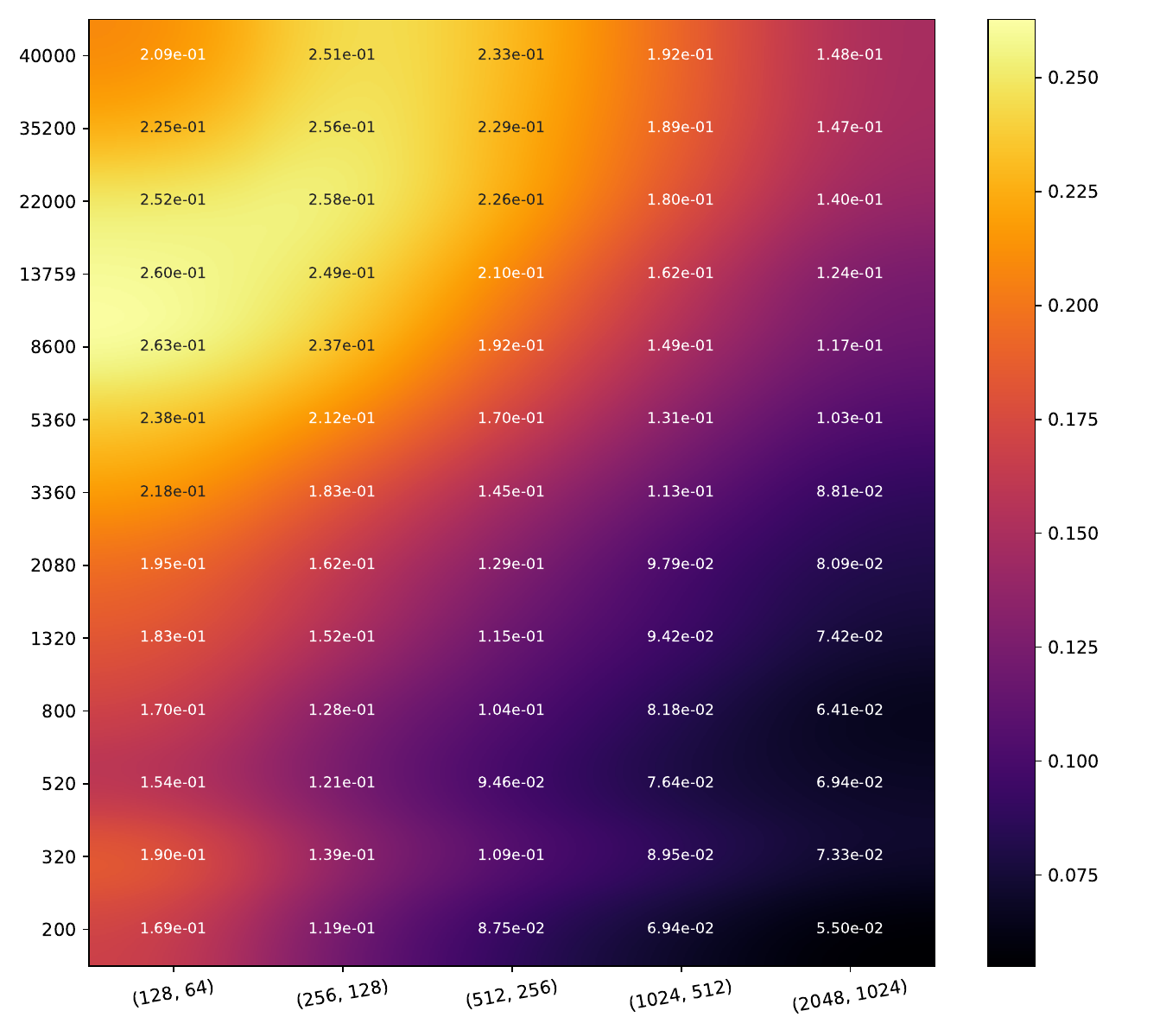}
    \caption*{DE}
  \end{subfigure}\hfill
  \begin{subfigure}[t]{0.23\textwidth}
    \includegraphics[width=\textwidth]{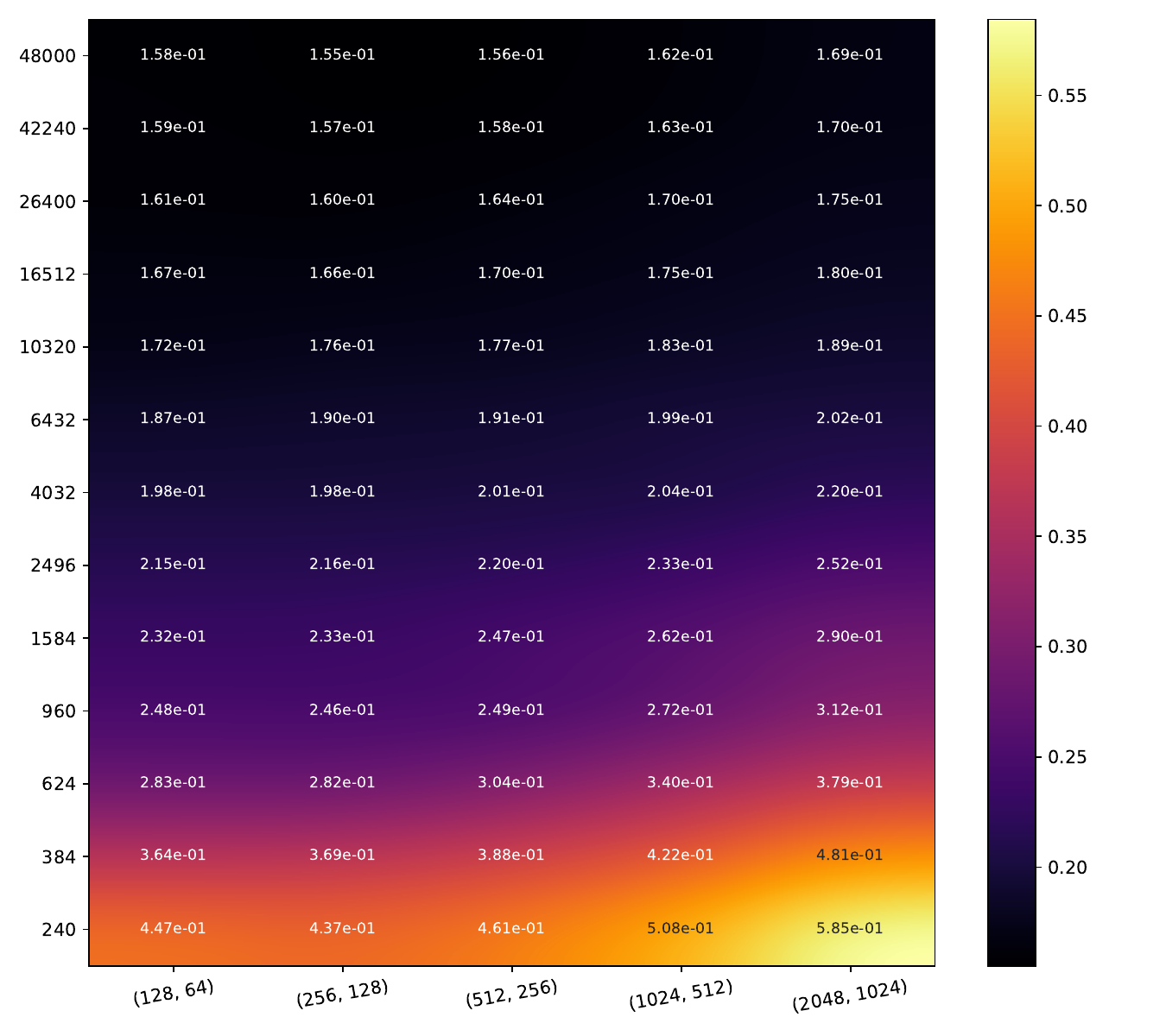}
    \includegraphics[width=\textwidth]{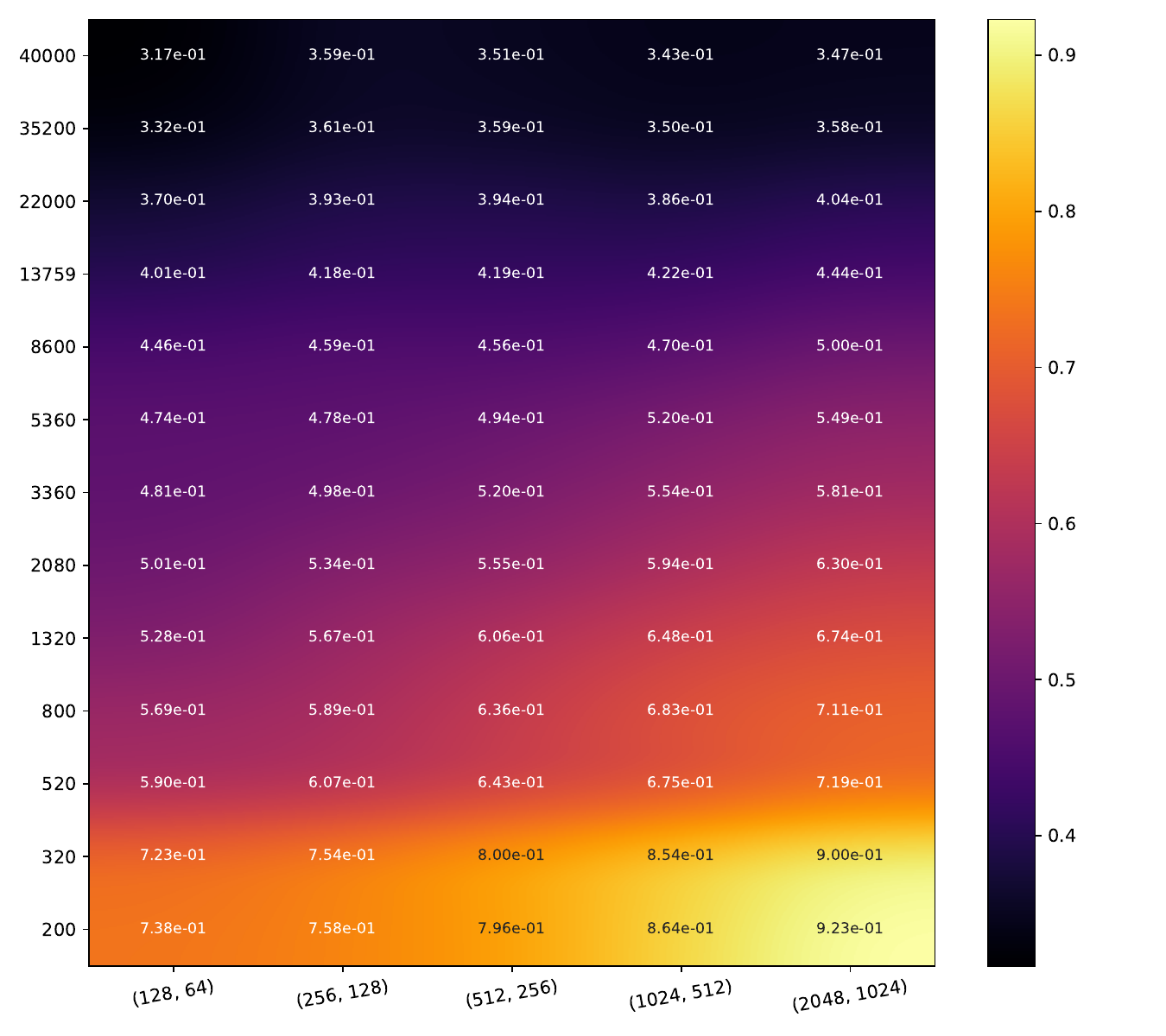}
    \caption*{Conflictual DE}
  \end{subfigure}\hfill
  \caption{Heatmaps of epistemic uncertainty (mutual information). Color scales are different for each heatmap.}
\end{figure}

\begin{figure}[ht]
  \centering
  \begin{subfigure}[t]{\dimexpr0.23\textwidth+20pt\relax}
    \makebox[20pt]{\raisebox{35pt}{\rotatebox[origin=c]{90}{\scriptsize MNIST}}}%
    \includegraphics[width=\dimexpr\linewidth-20pt\relax]{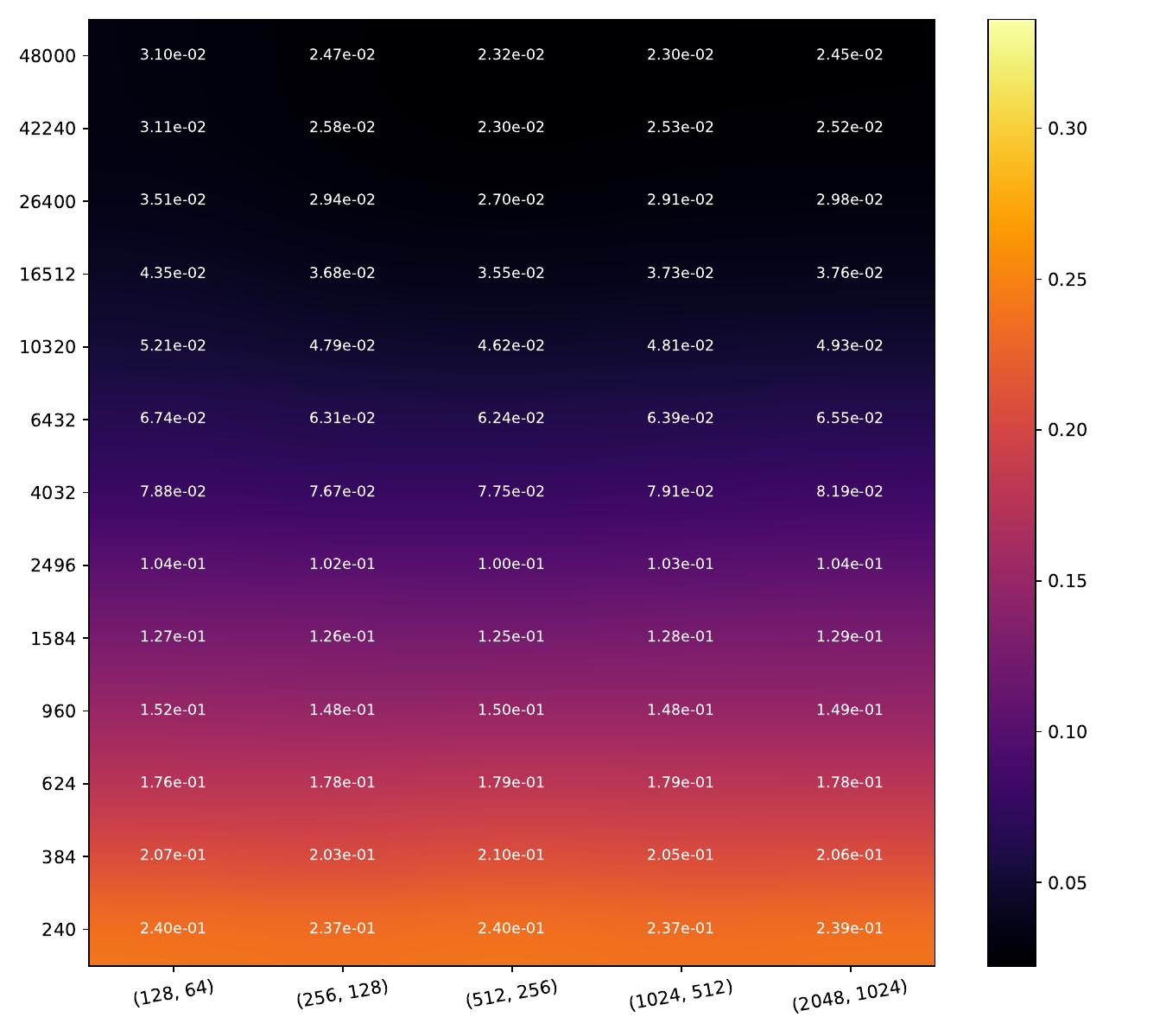}
    \makebox[20pt]{\raisebox{35pt}{\rotatebox[origin=c]{90}{\scriptsize CIFAR10}}}%
    \includegraphics[width=\dimexpr\linewidth-20pt\relax]{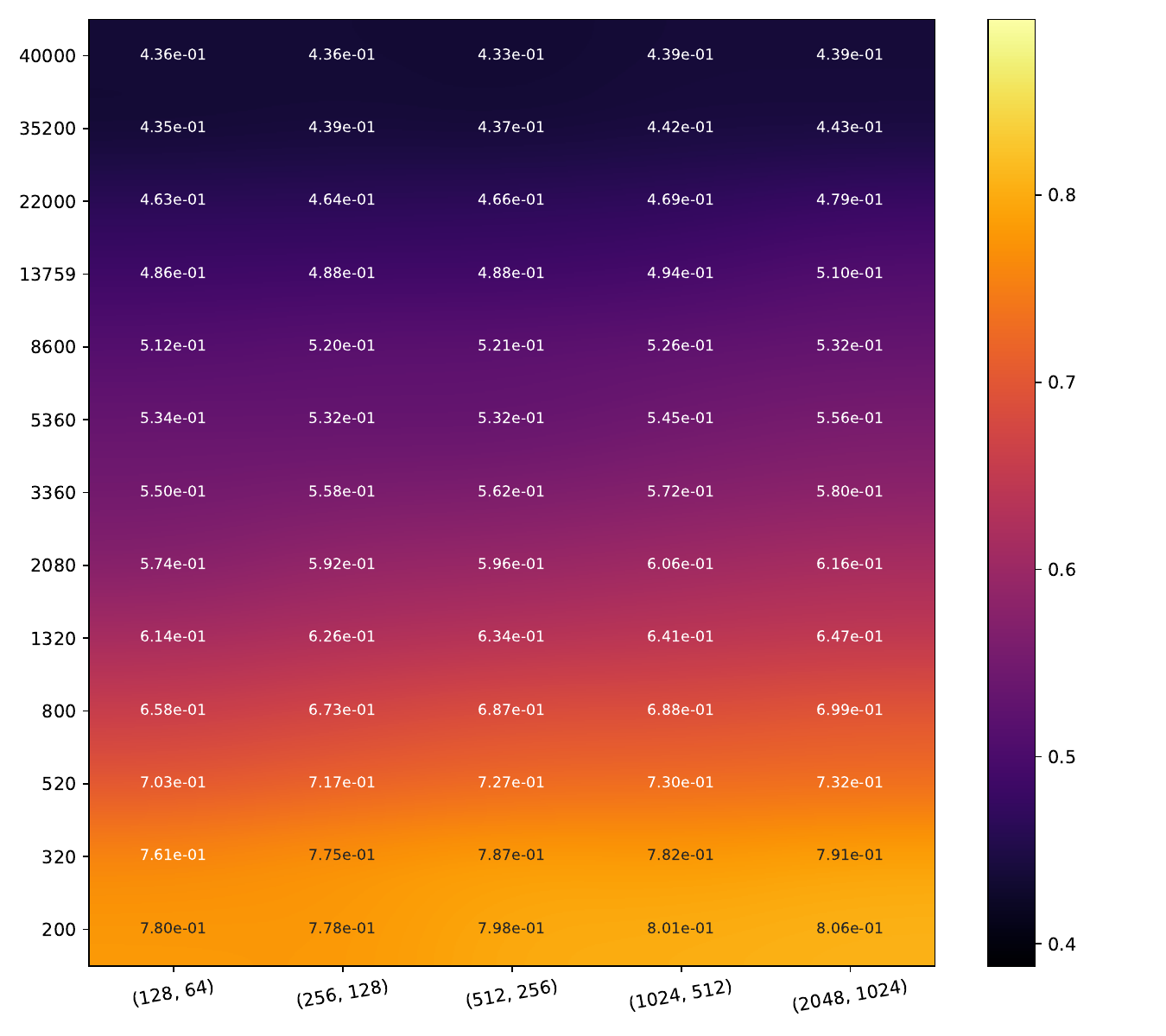}
    \caption*{\qquad MC-Dropout}
  \end{subfigure}\hfill
  \begin{subfigure}[t]{0.23\textwidth}
    \includegraphics[width=\textwidth]{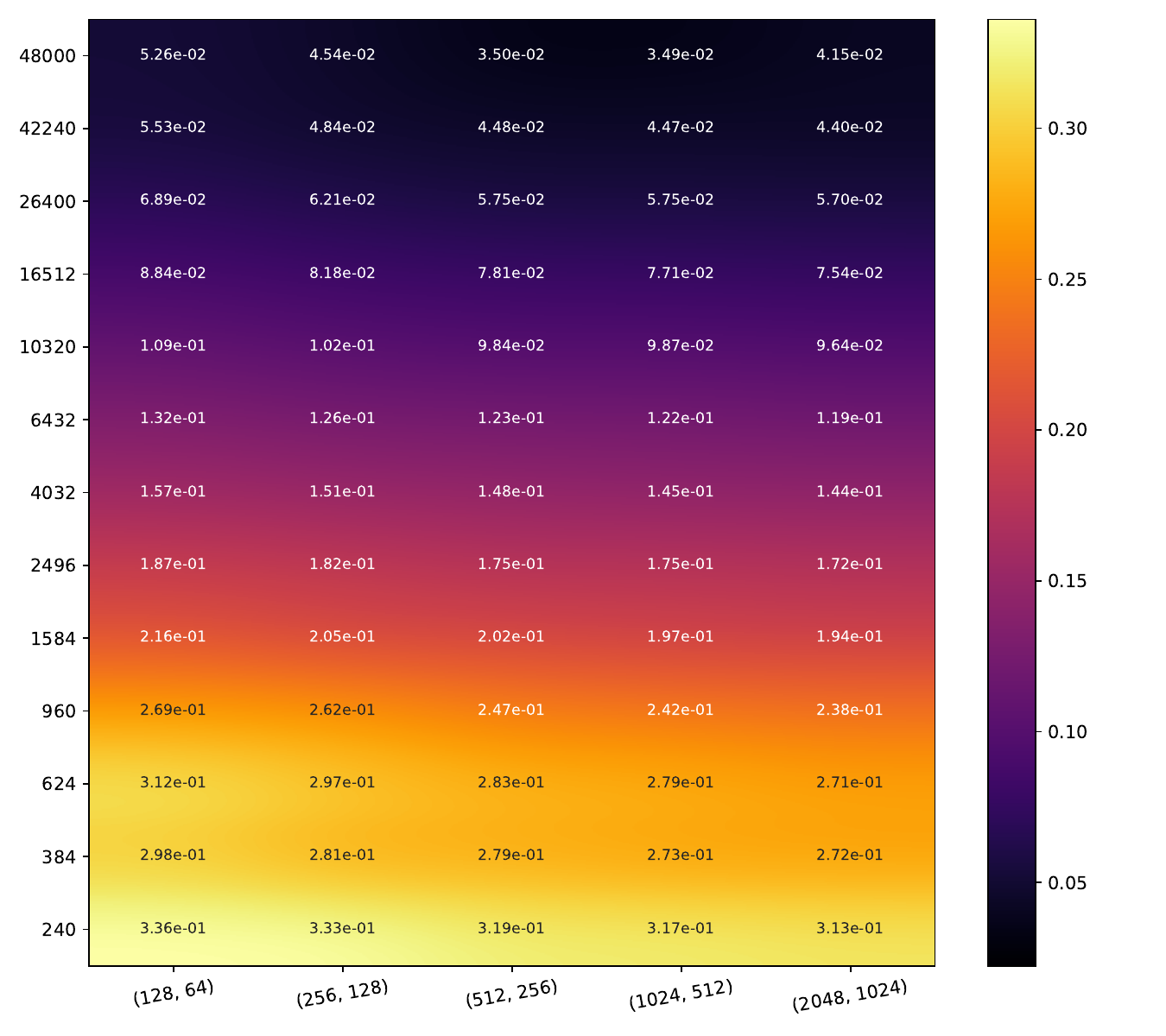}
    \includegraphics[width=\textwidth]{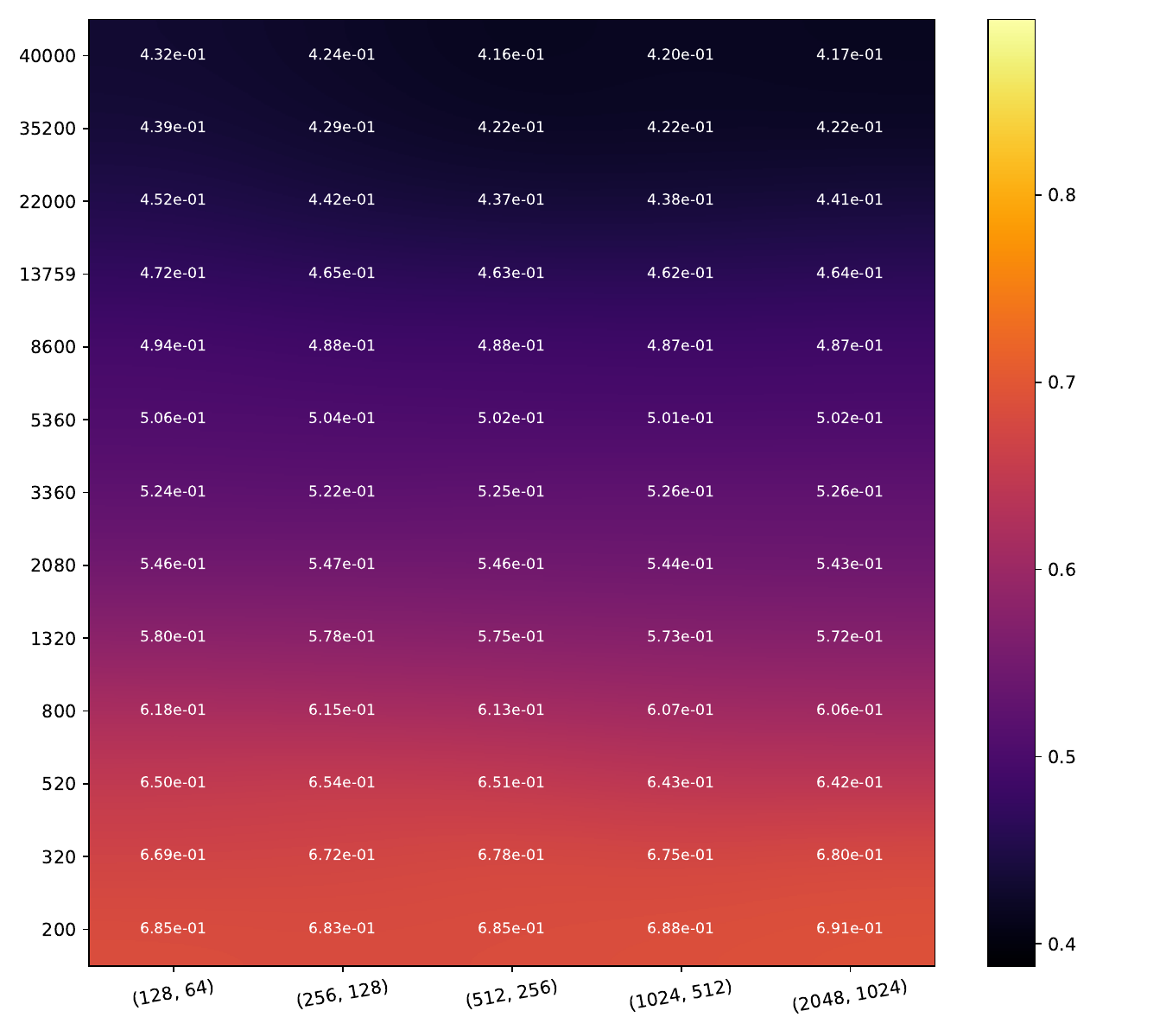}
    \caption*{EDL}
  \end{subfigure}\hfill
  \begin{subfigure}[t]{0.23\textwidth}
    \includegraphics[width=\textwidth]{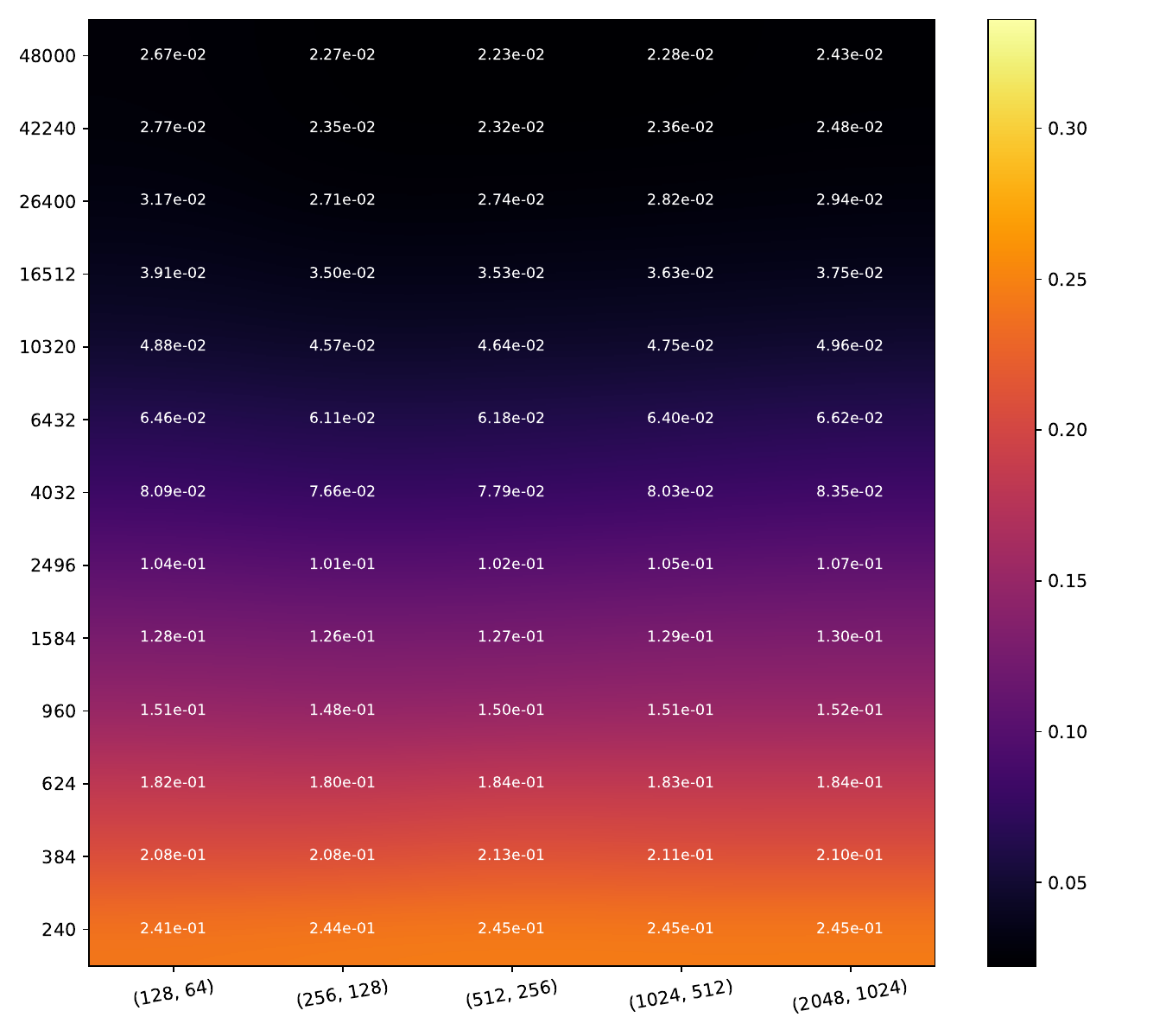}
    \includegraphics[width=\textwidth]{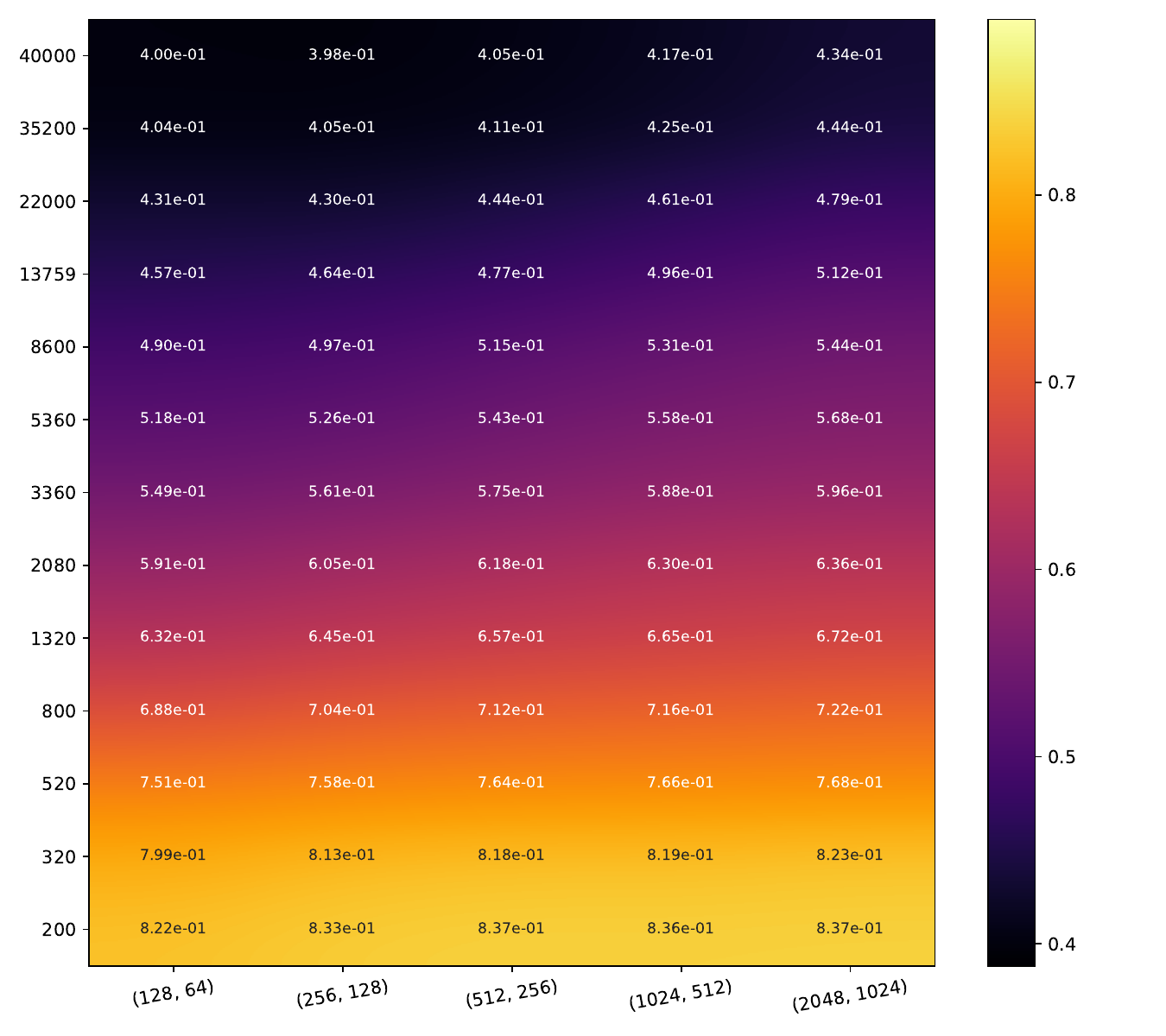}
    \caption*{DE}
  \end{subfigure}\hfill
  \begin{subfigure}[t]{0.23\textwidth}
    \includegraphics[width=\textwidth]{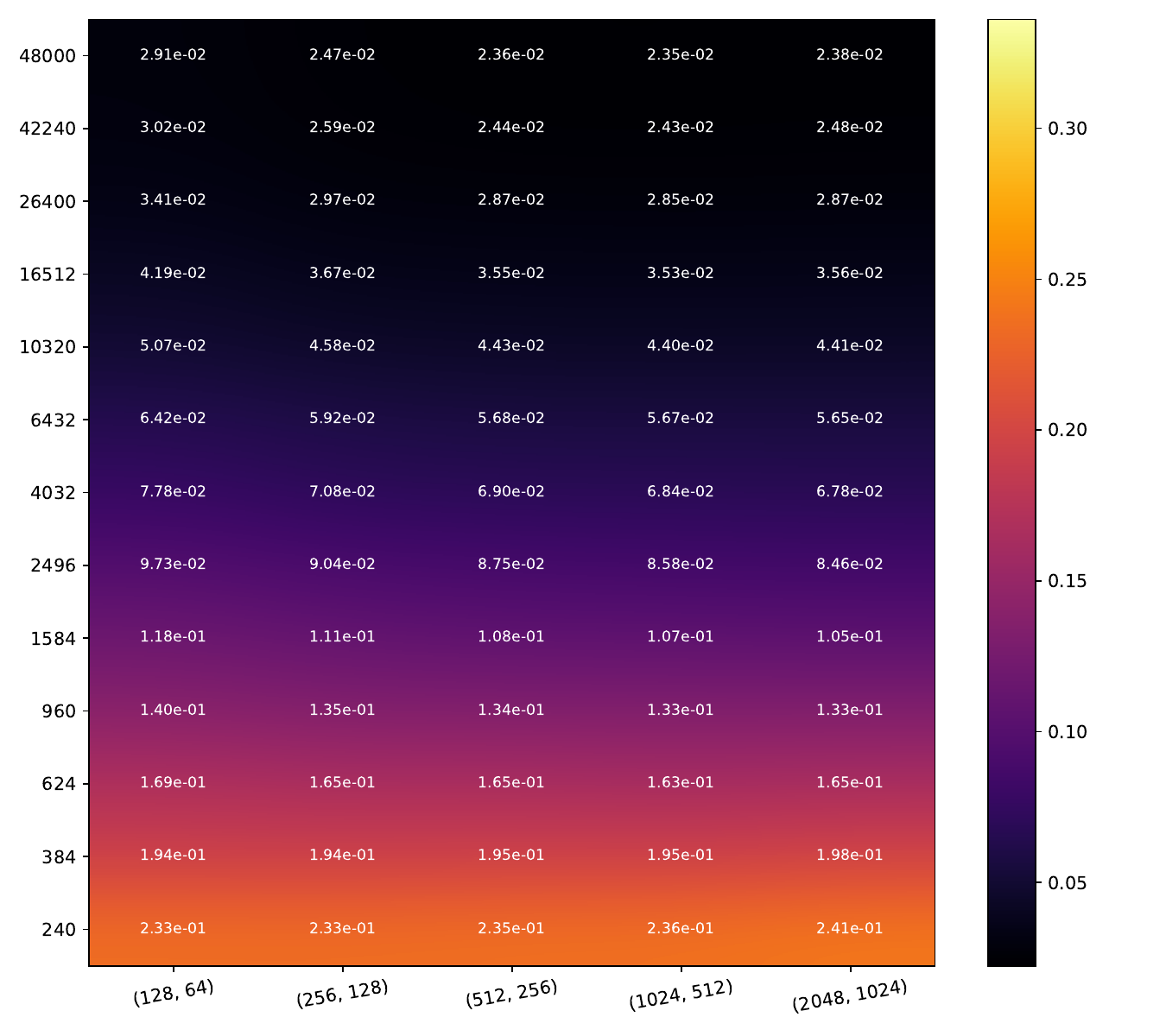}
    \includegraphics[width=\textwidth]{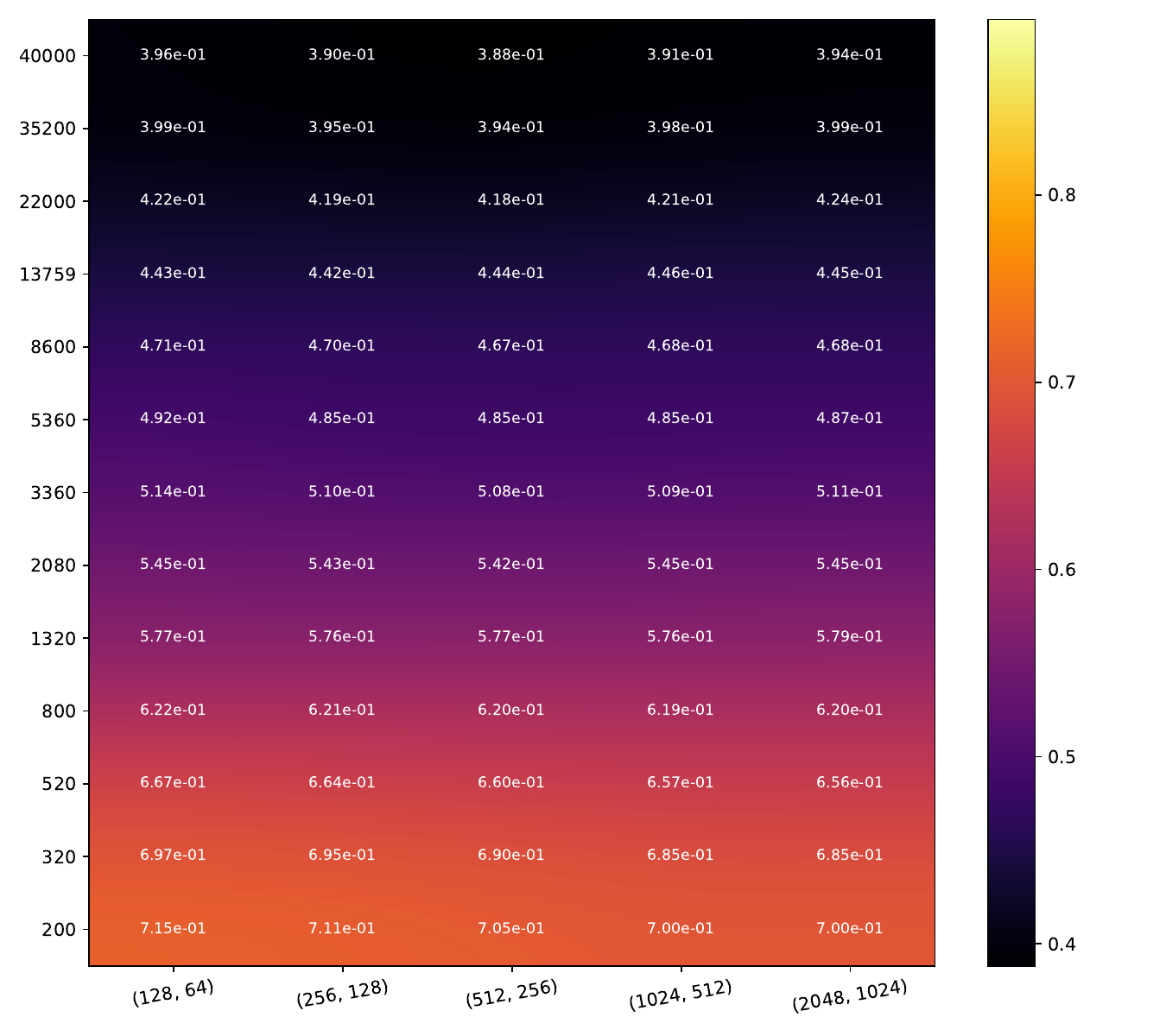}
    \caption*{Conflictual DE}
  \end{subfigure}\hfill
  \caption{Heatmaps of Brier score Color scales are the same per dataset.}
\end{figure}

\begin{figure}[ht]
  \centering
  \begin{subfigure}[t]{\dimexpr0.23\textwidth+20pt\relax}
    \makebox[20pt]{\raisebox{35pt}{\rotatebox[origin=c]{90}{\scriptsize MNIST}}}%
    \includegraphics[width=\dimexpr\linewidth-20pt\relax]{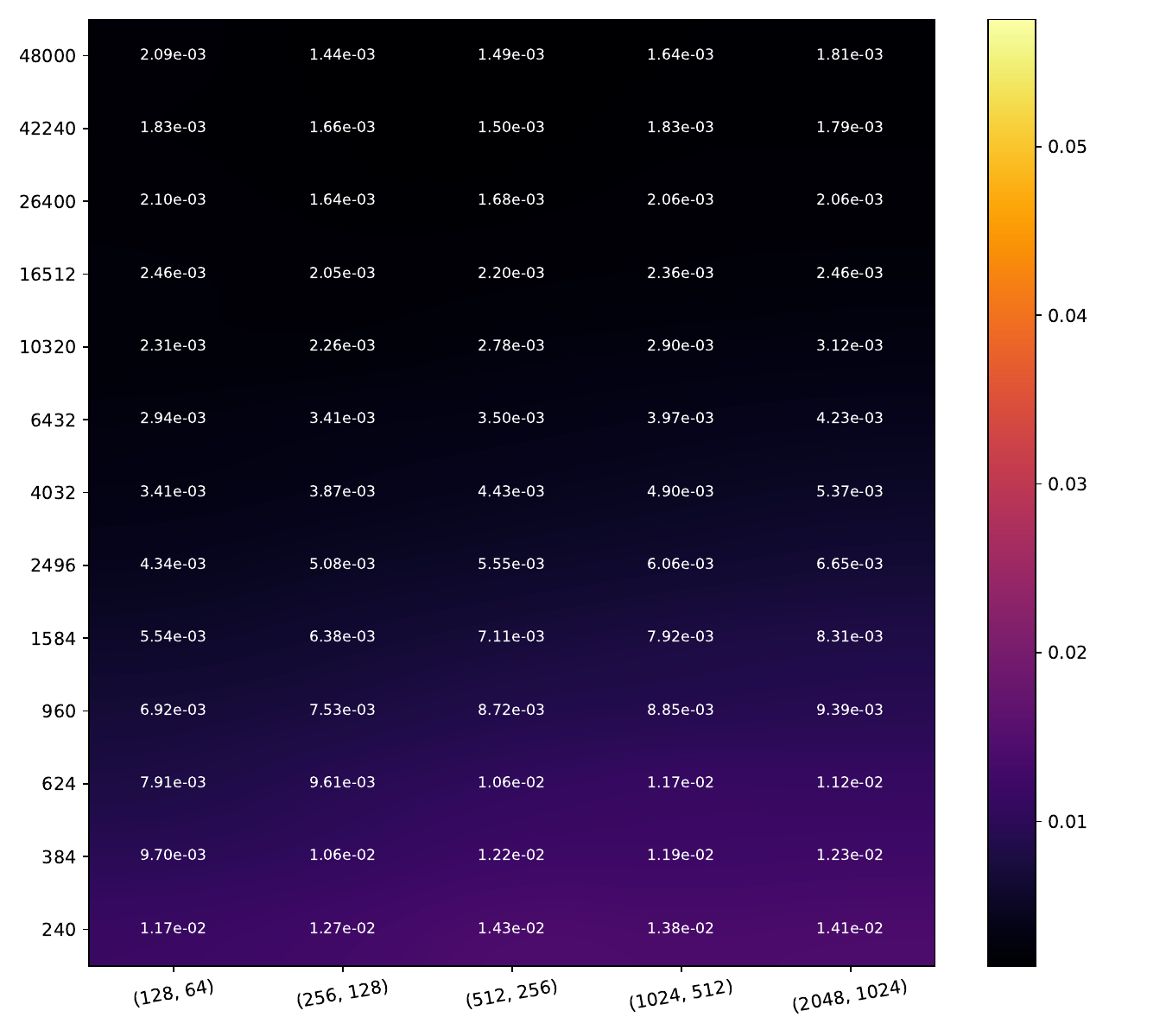}
    \makebox[20pt]{\raisebox{35pt}{\rotatebox[origin=c]{90}{\scriptsize CIFAR10}}}%
    \includegraphics[width=\dimexpr\linewidth-20pt\relax]{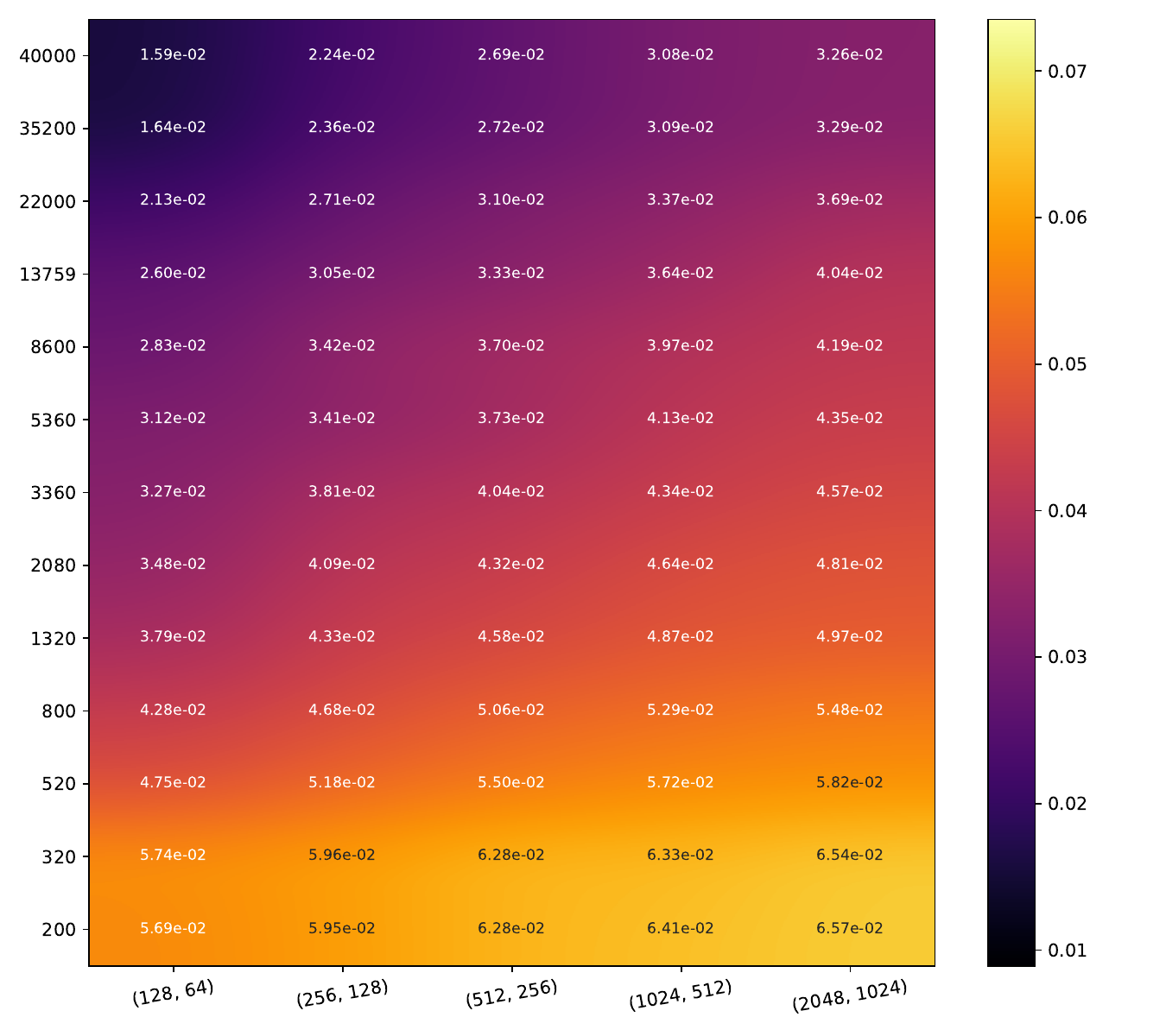}
    \caption*{\qquad MC-Dropout}
  \end{subfigure}\hfill
  \begin{subfigure}[t]{0.23\textwidth}
    \includegraphics[width=\textwidth]{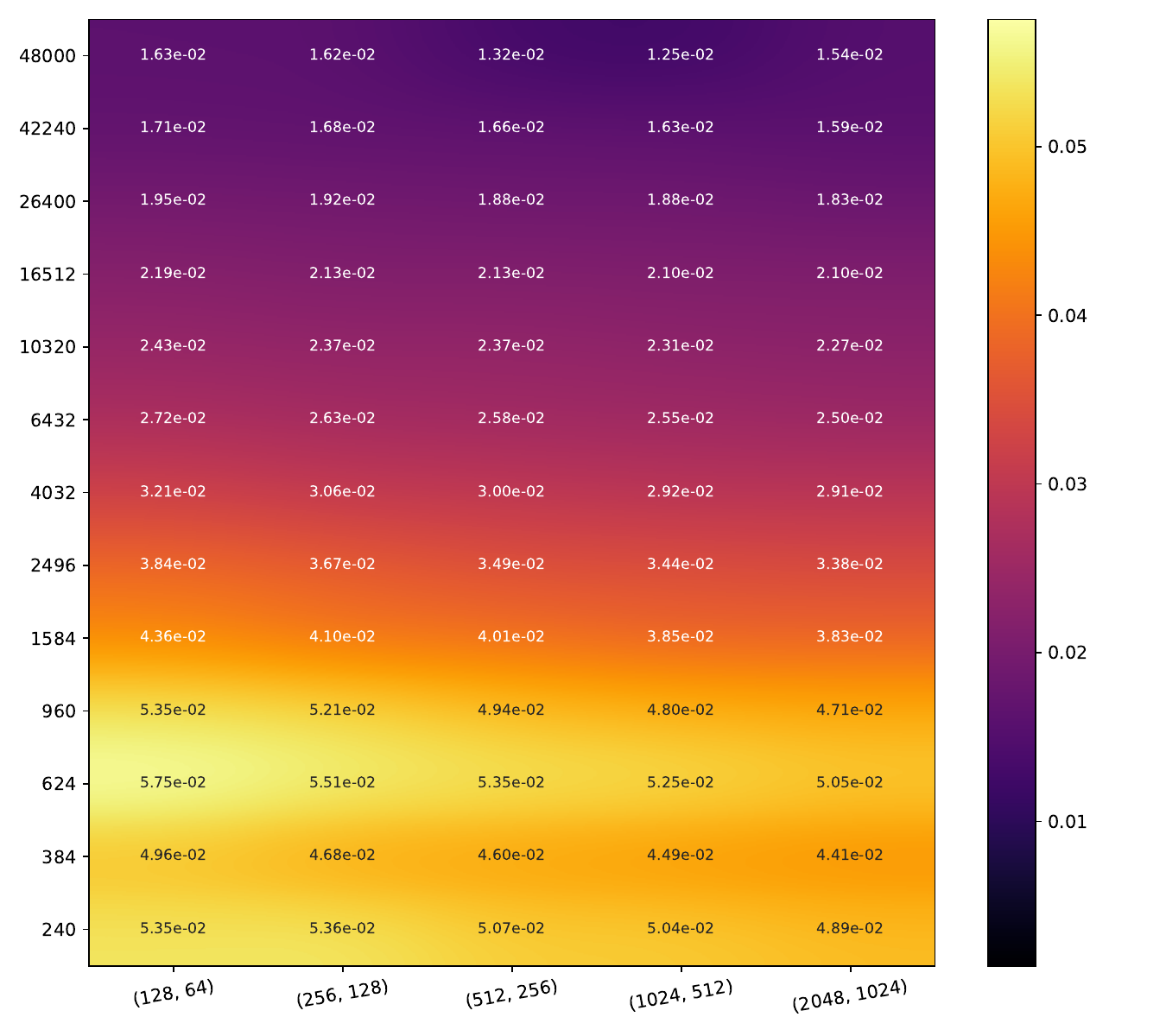}
    \includegraphics[width=\textwidth]{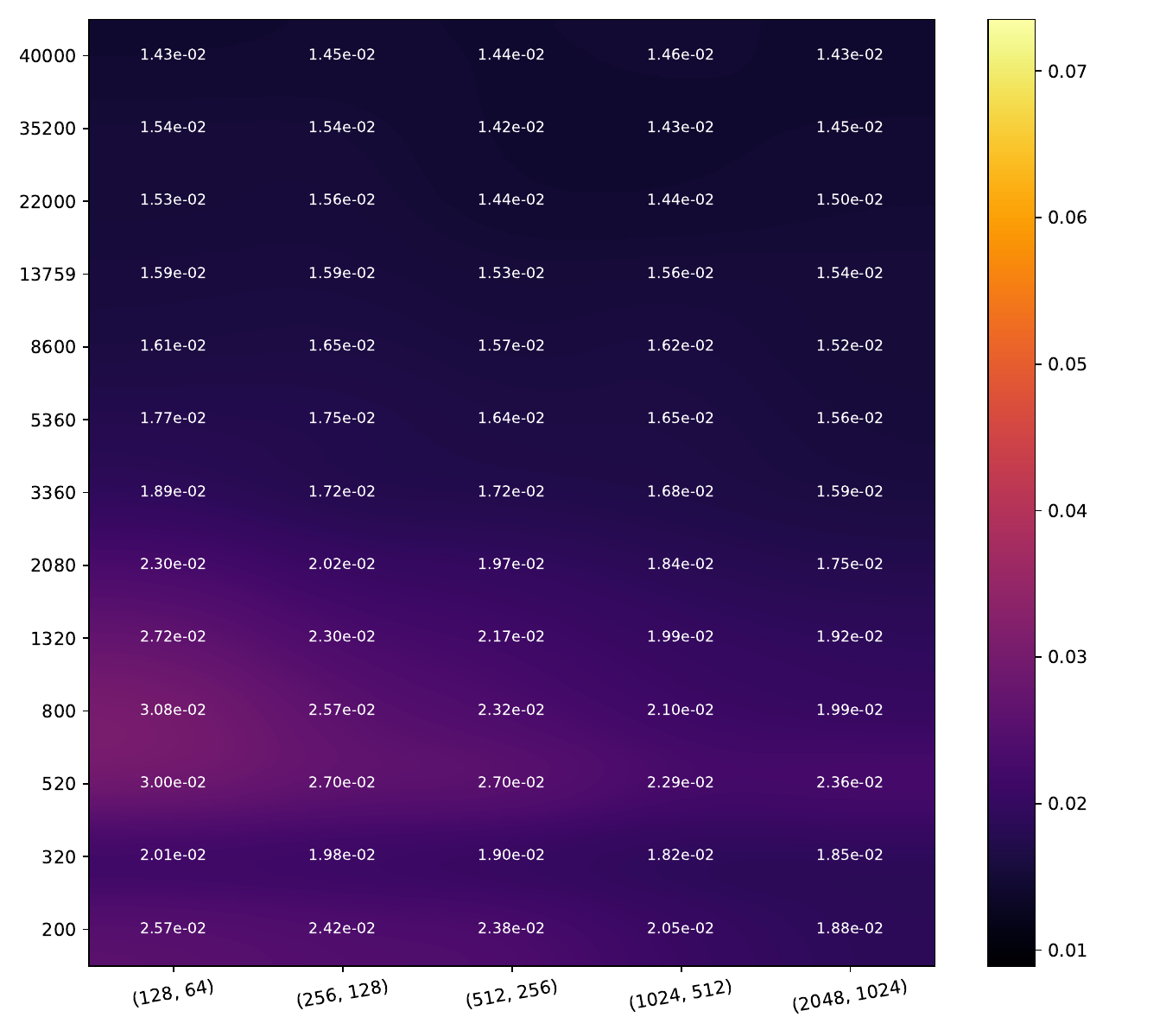}
    \caption*{EDL}
  \end{subfigure}\hfill
  \begin{subfigure}[t]{0.23\textwidth}
    \includegraphics[width=\textwidth]{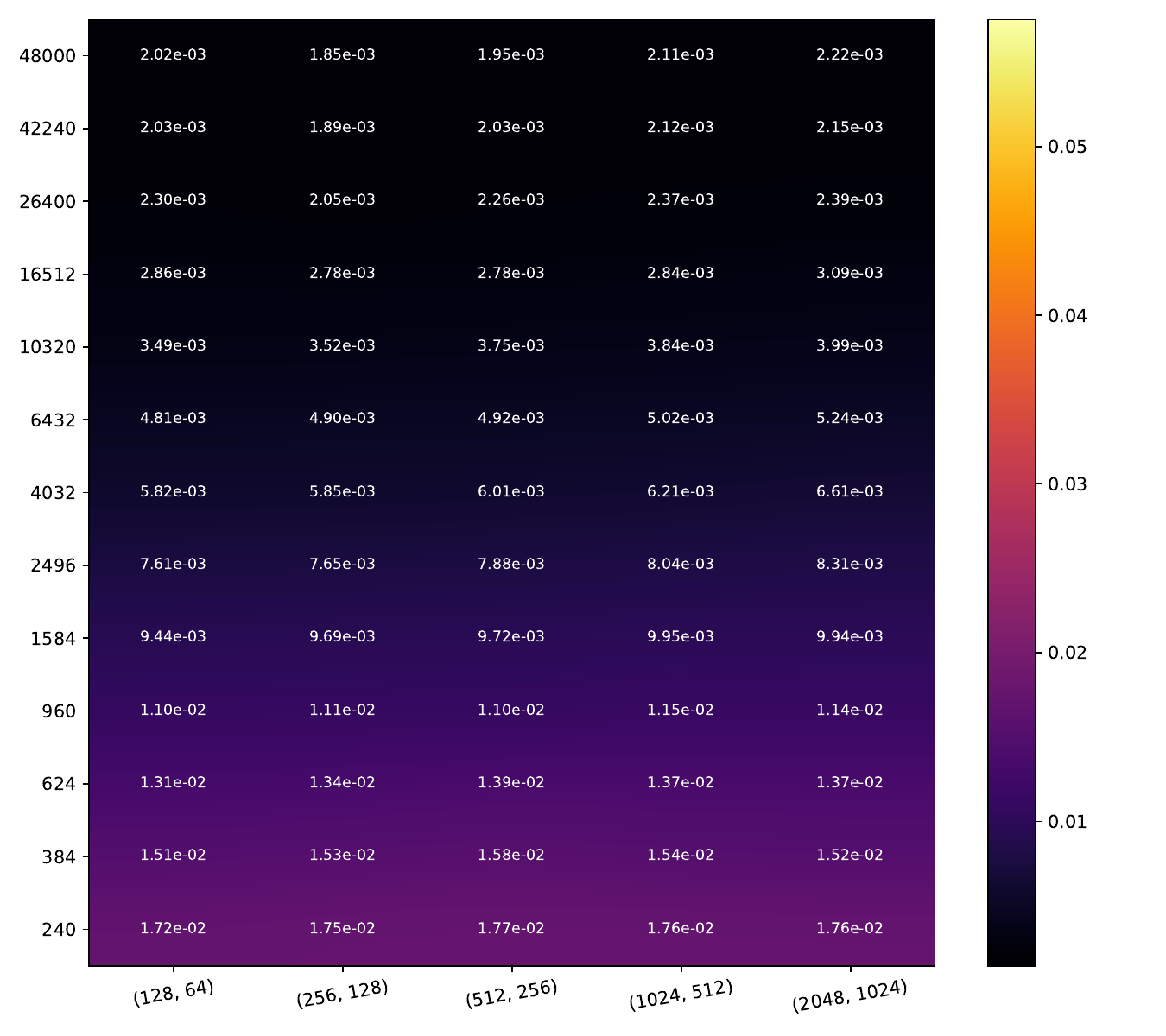}
    \includegraphics[width=\textwidth]{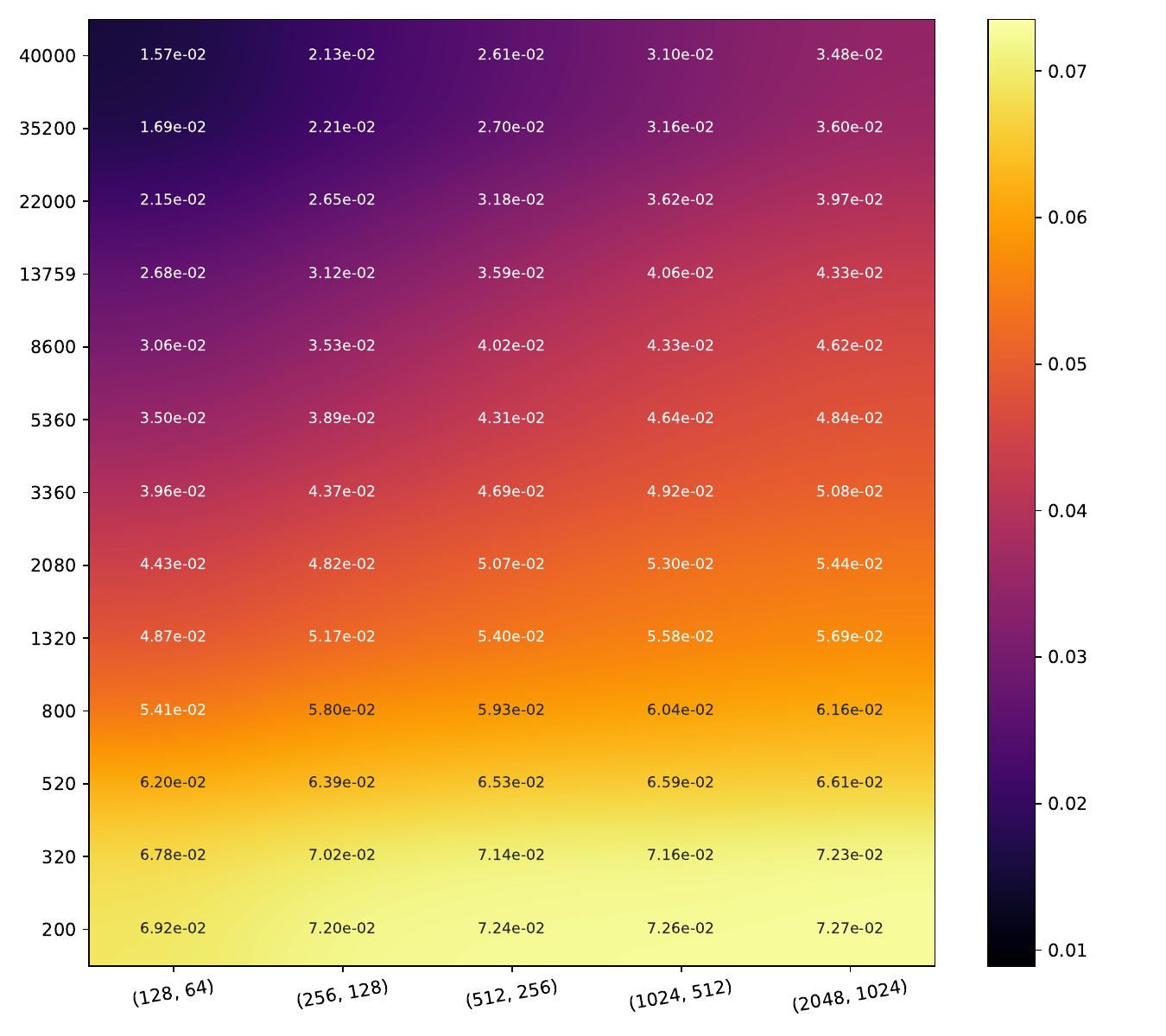}
    \caption*{DE}
  \end{subfigure}\hfill
  \begin{subfigure}[t]{0.23\textwidth}
    \includegraphics[width=\textwidth]{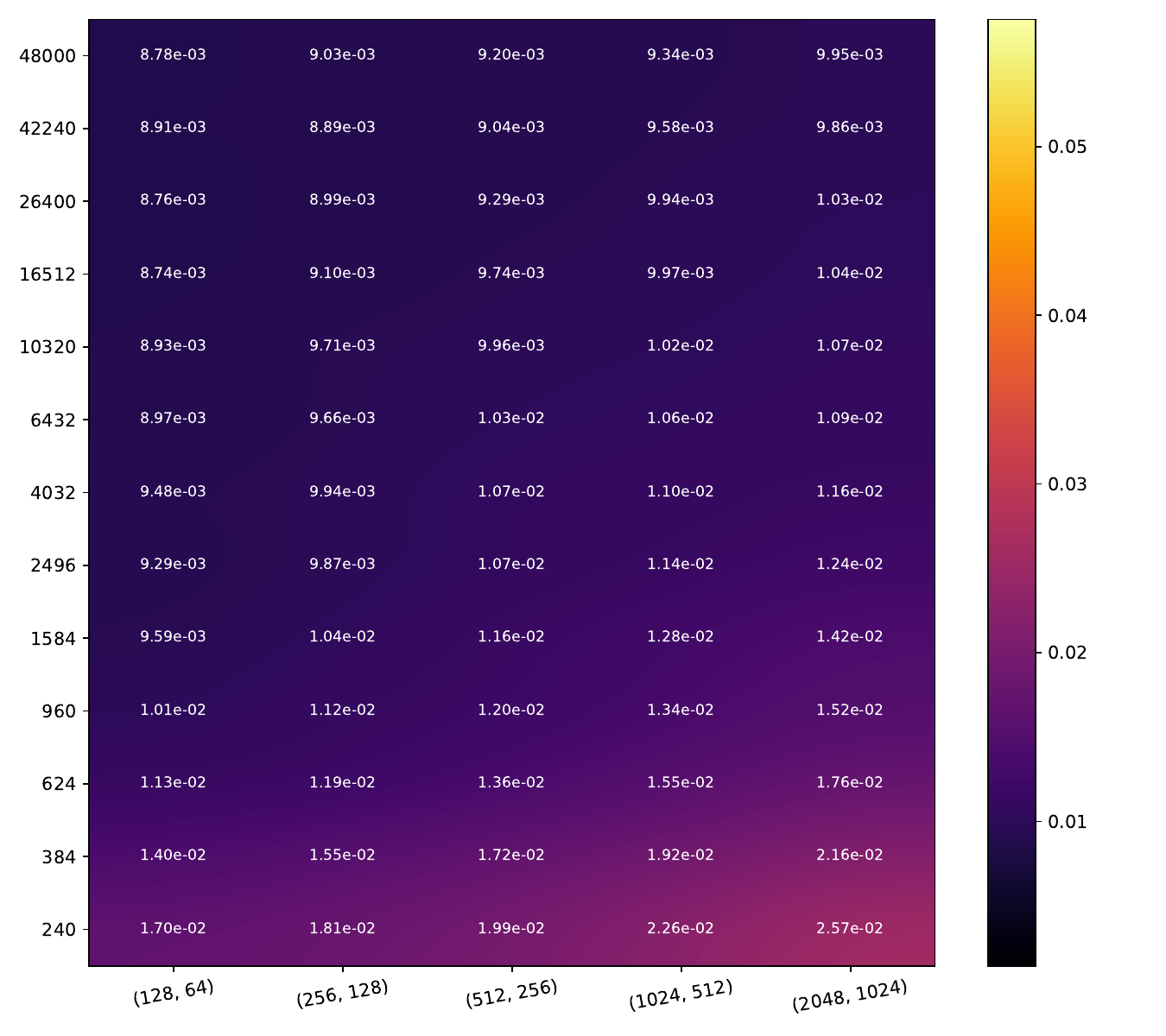}
    \includegraphics[width=\textwidth]{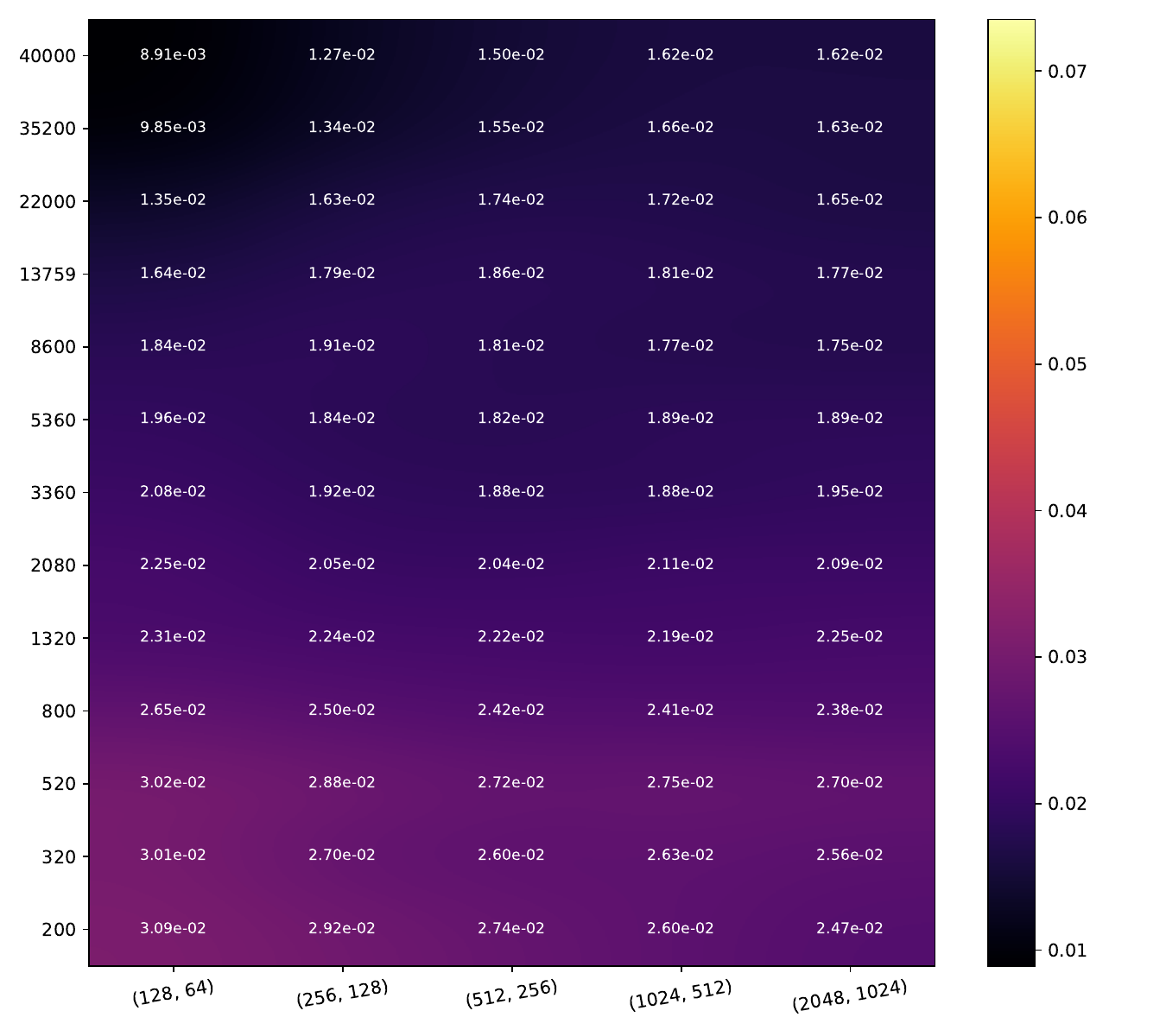}
    \caption*{Conflictual DE}
  \end{subfigure}\hfill
  \caption{Heatmaps of SCE. Color scales are the same per dataset.}
\end{figure}

\begin{figure}[ht]
  \centering
  \begin{subfigure}[t]{\dimexpr0.23\textwidth+20pt\relax}
    \makebox[20pt]{\raisebox{35pt}{\rotatebox[origin=c]{90}{\scriptsize MNIST}}}%
    \includegraphics[width=\dimexpr\linewidth-20pt\relax]{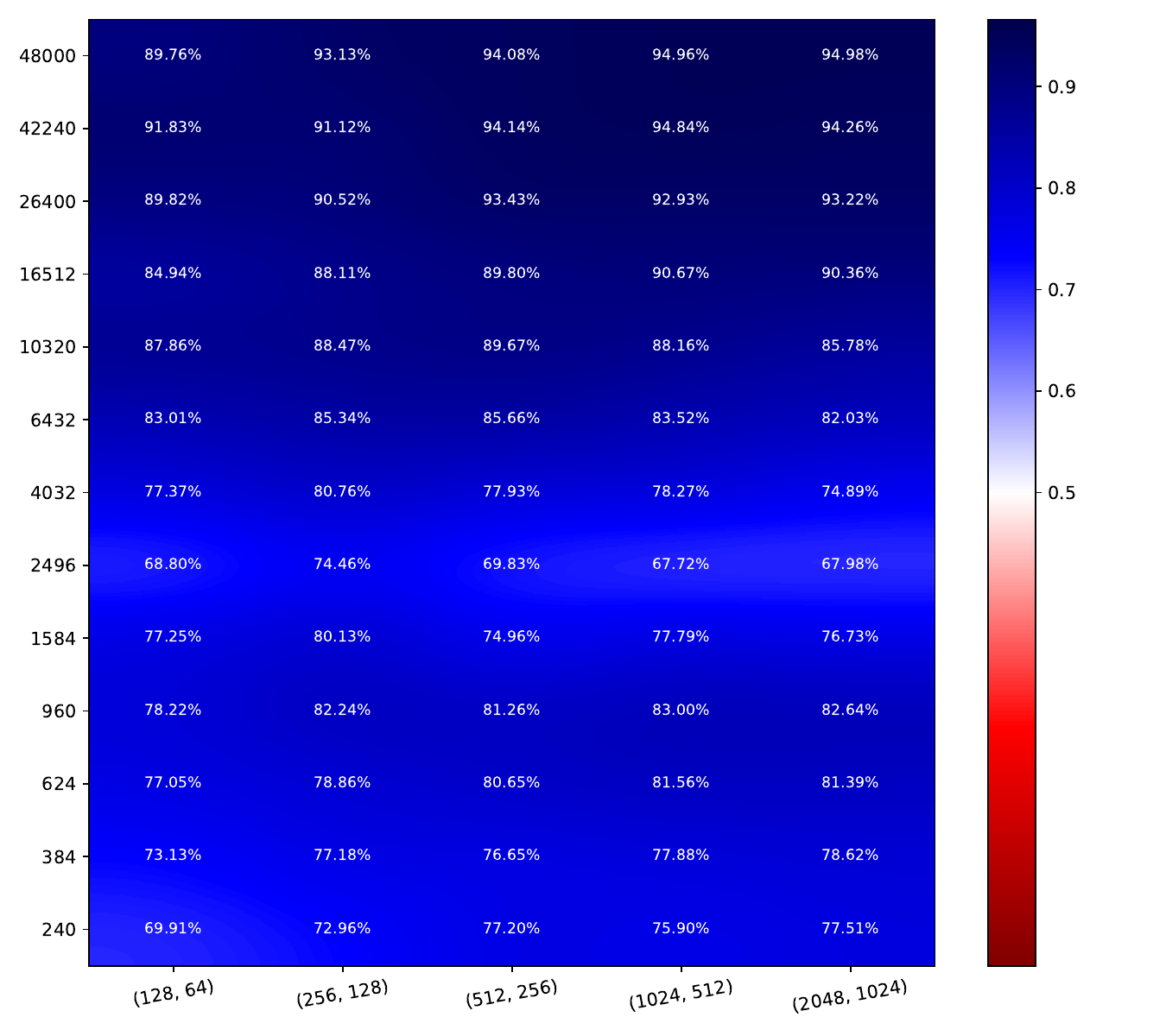}
    \makebox[20pt]{\raisebox{35pt}{\rotatebox[origin=c]{90}{\scriptsize CIFAR10}}}%
    \includegraphics[width=\dimexpr\linewidth-20pt\relax]{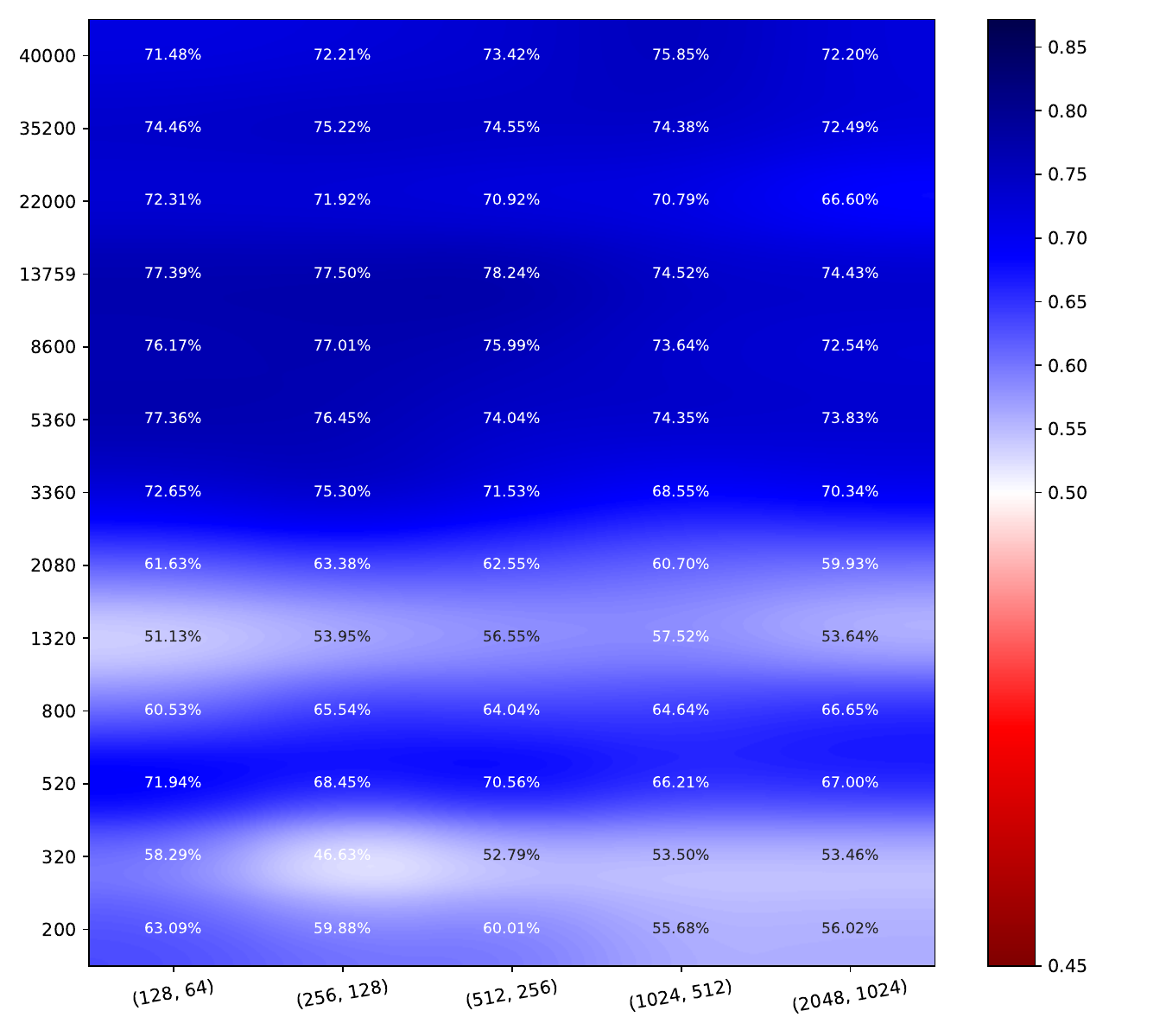}
    \caption*{\qquad MC-Dropout}
  \end{subfigure}\hfill
  \begin{subfigure}[t]{0.23\textwidth}
    \includegraphics[width=\textwidth]{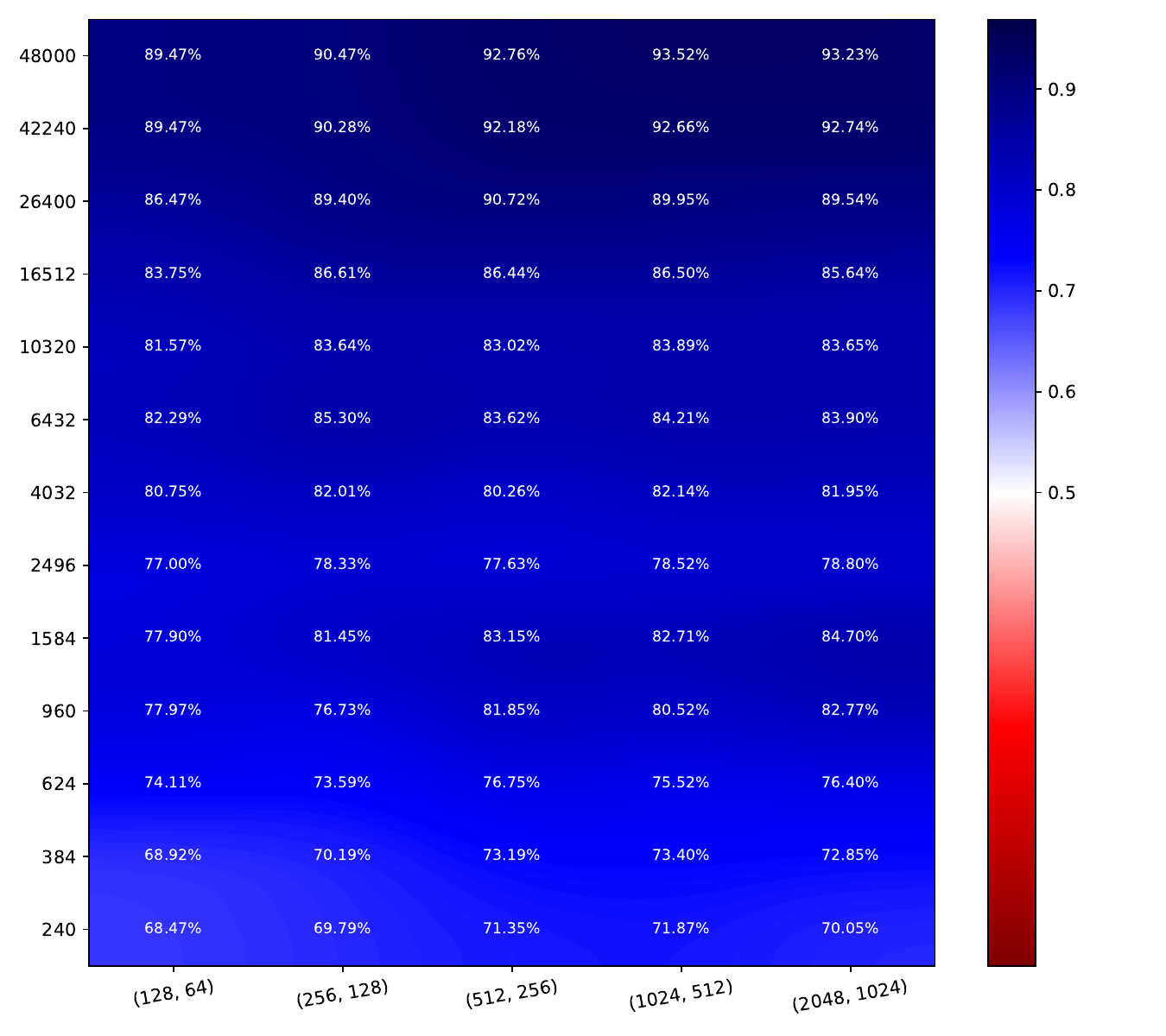}
    \includegraphics[width=\textwidth]{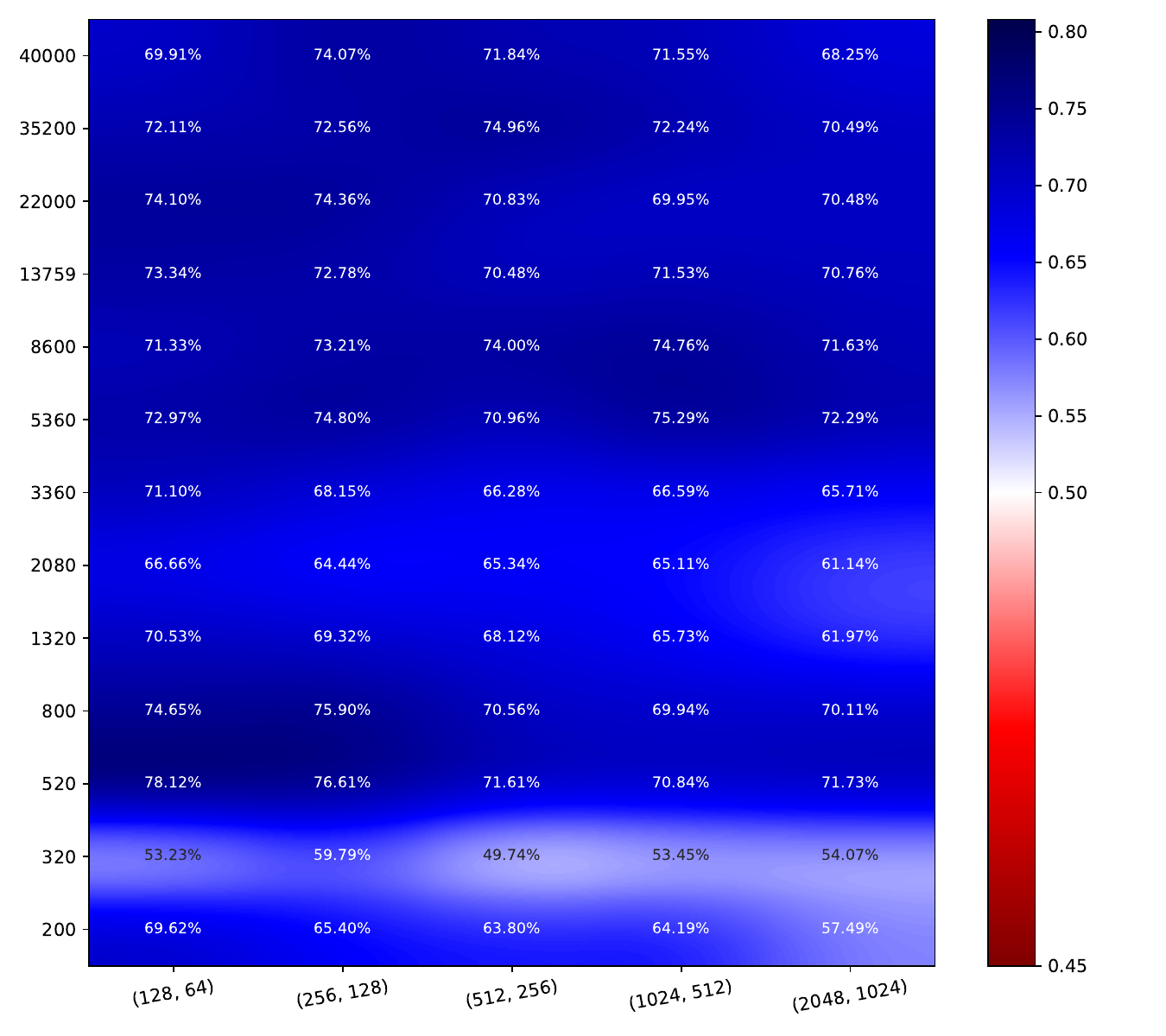}
    \caption*{EDL}
  \end{subfigure}\hfill
  \begin{subfigure}[t]{0.23\textwidth}
    \includegraphics[width=\textwidth]{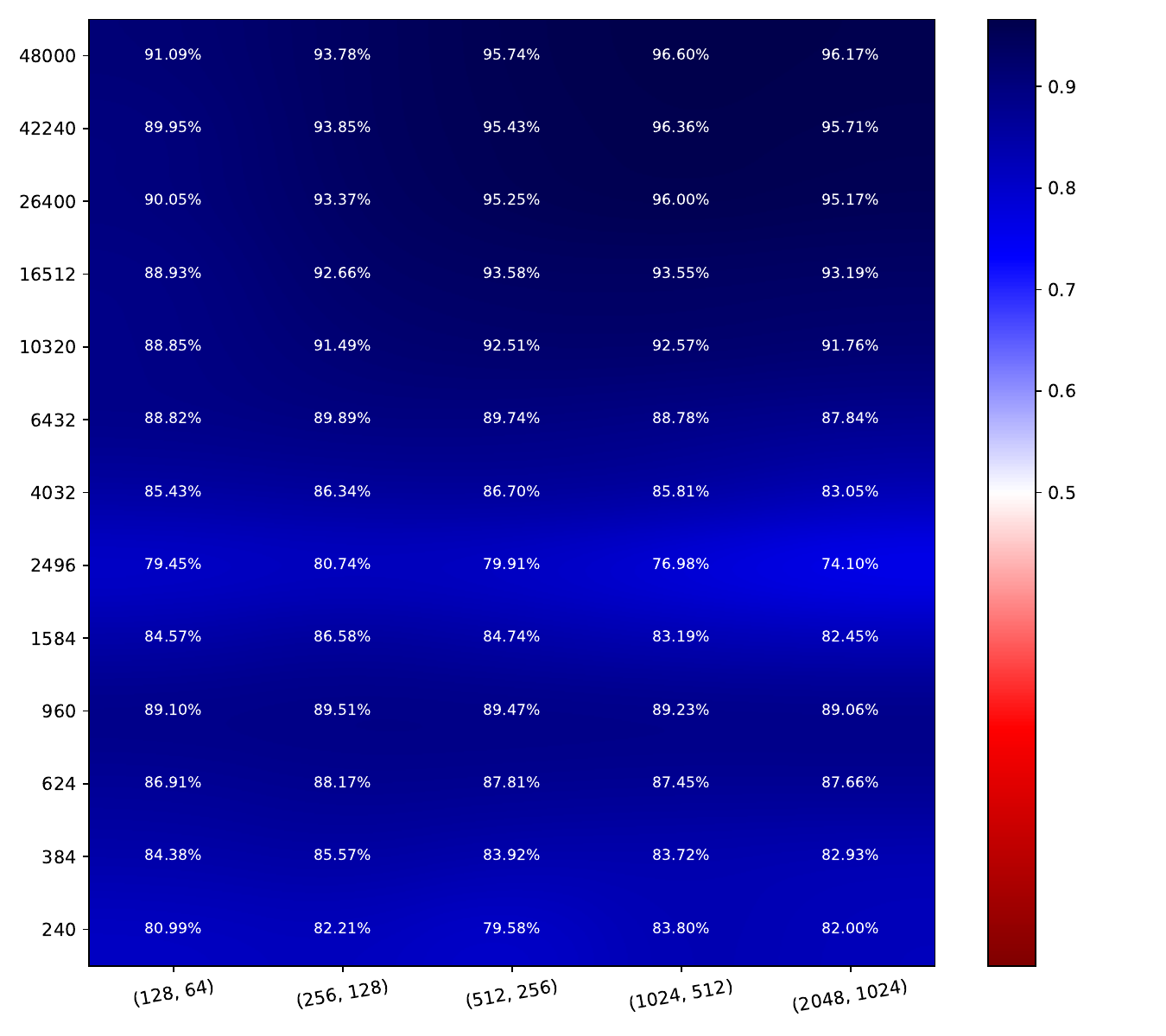}
    \includegraphics[width=\textwidth]{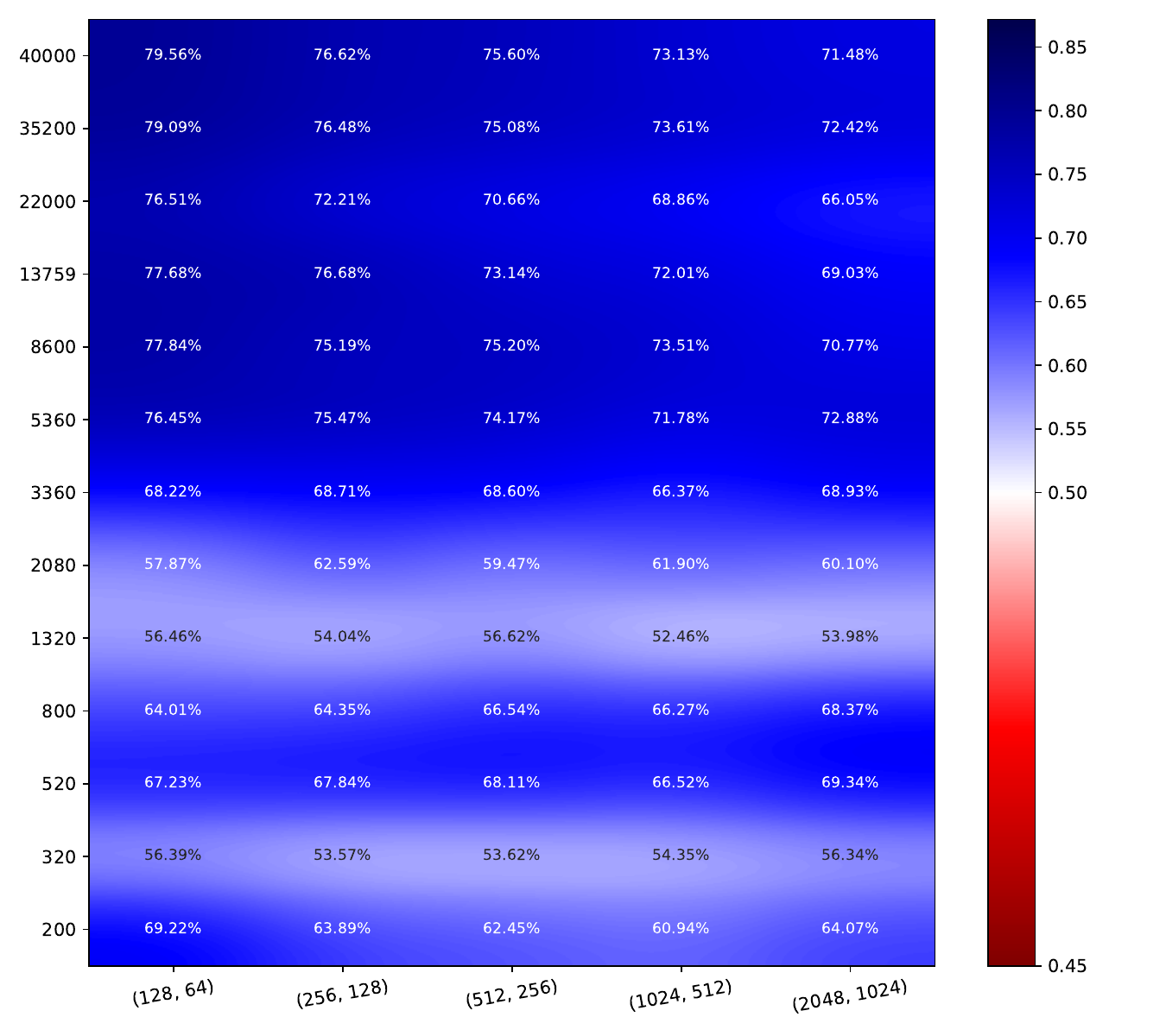}
    \caption*{DE}
  \end{subfigure}\hfill
  \begin{subfigure}[t]{0.23\textwidth}
    \includegraphics[width=\textwidth]{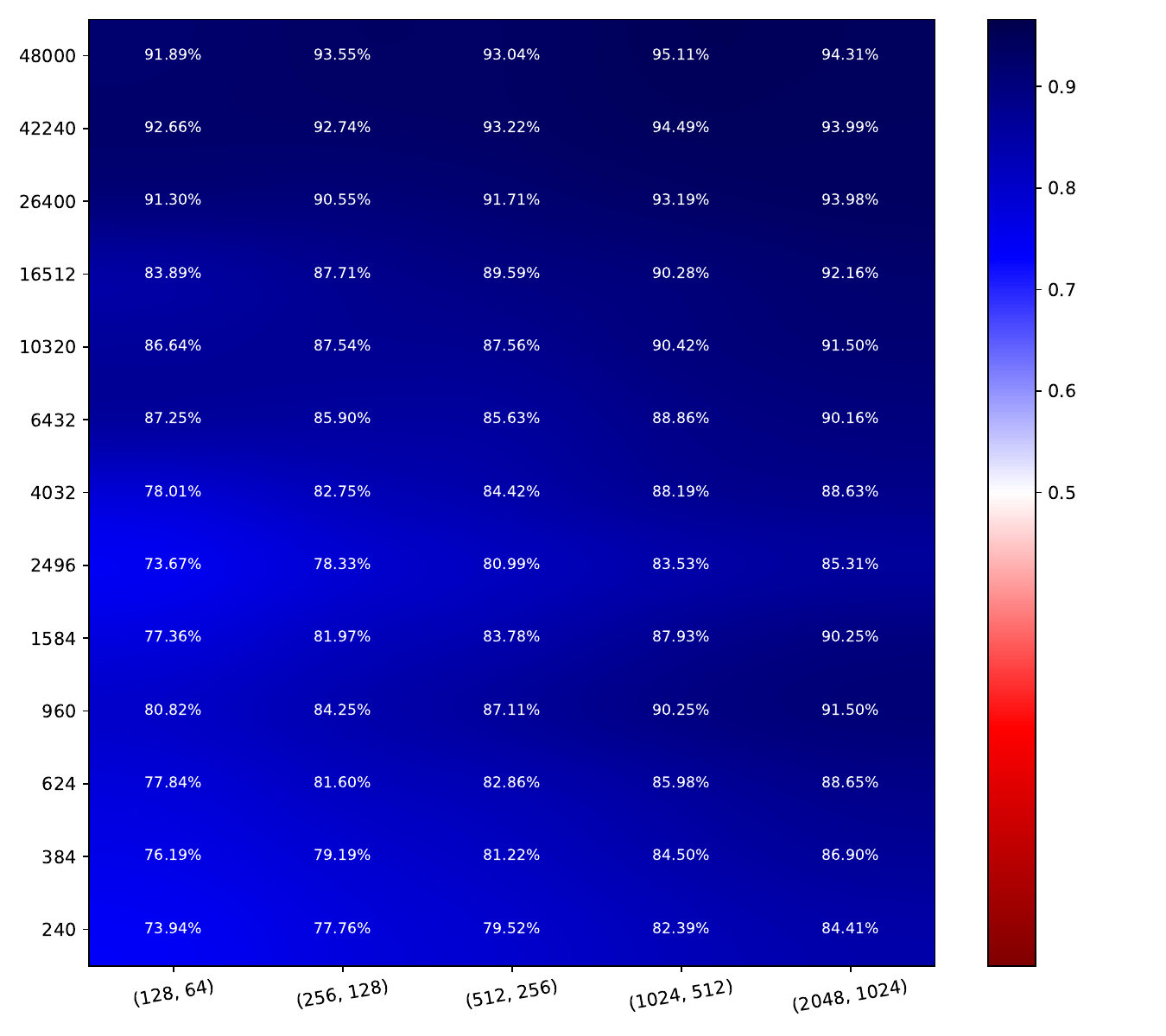}
    \includegraphics[width=\textwidth]{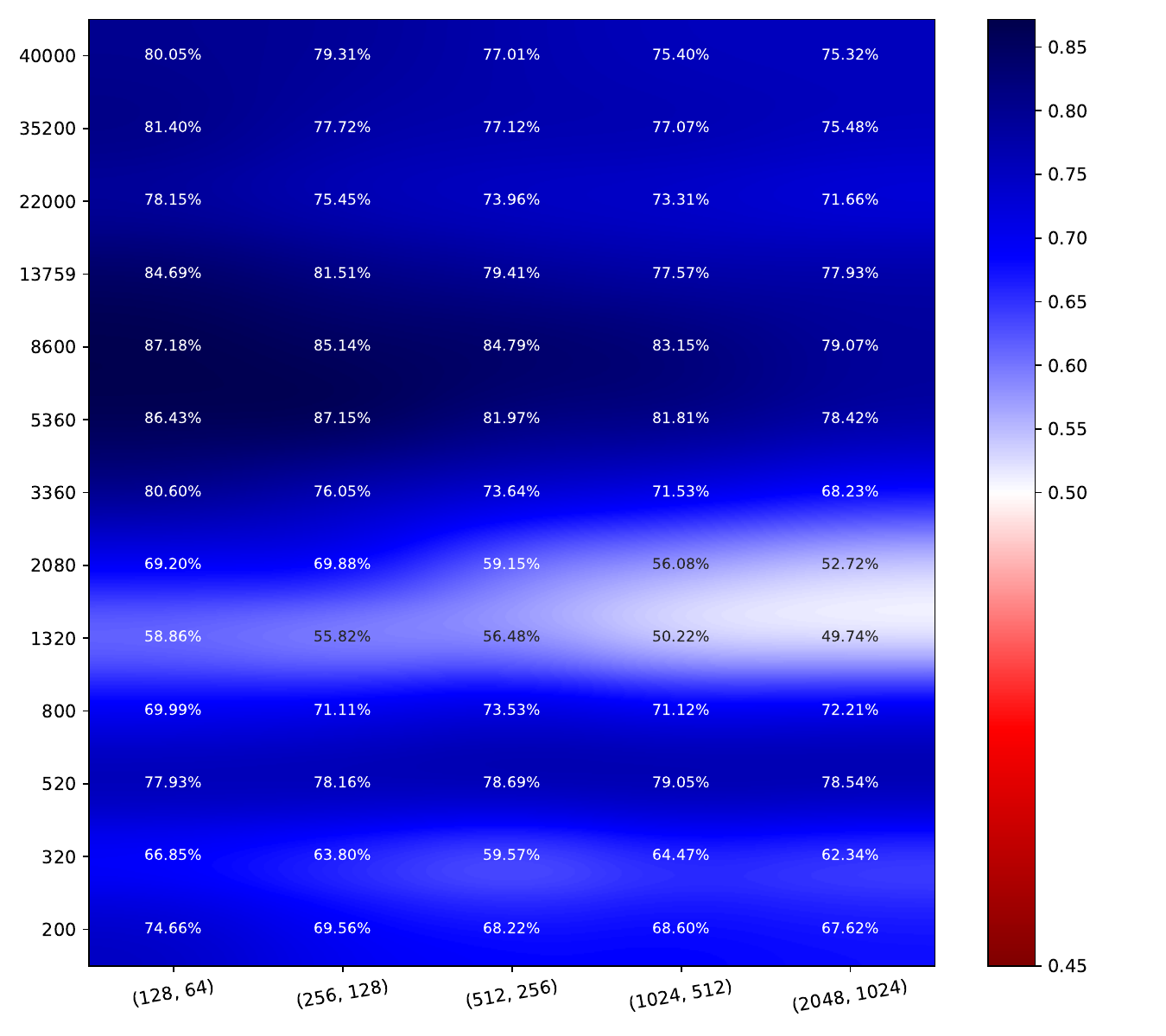}
    \caption*{Conflictual DE}
  \end{subfigure}\hfill
  \caption{Heatmaps of AUROC based on epistemic uncertainty for OOD detection. Color scales are the same per dataset.}
\end{figure}

\begin{figure}[ht]
  \centering
  \begin{subfigure}[t]{\dimexpr0.23\textwidth+20pt\relax}
    \makebox[20pt]{\raisebox{35pt}{\rotatebox[origin=c]{90}{\scriptsize MNIST}}}%
    \includegraphics[width=\dimexpr\linewidth-20pt\relax]{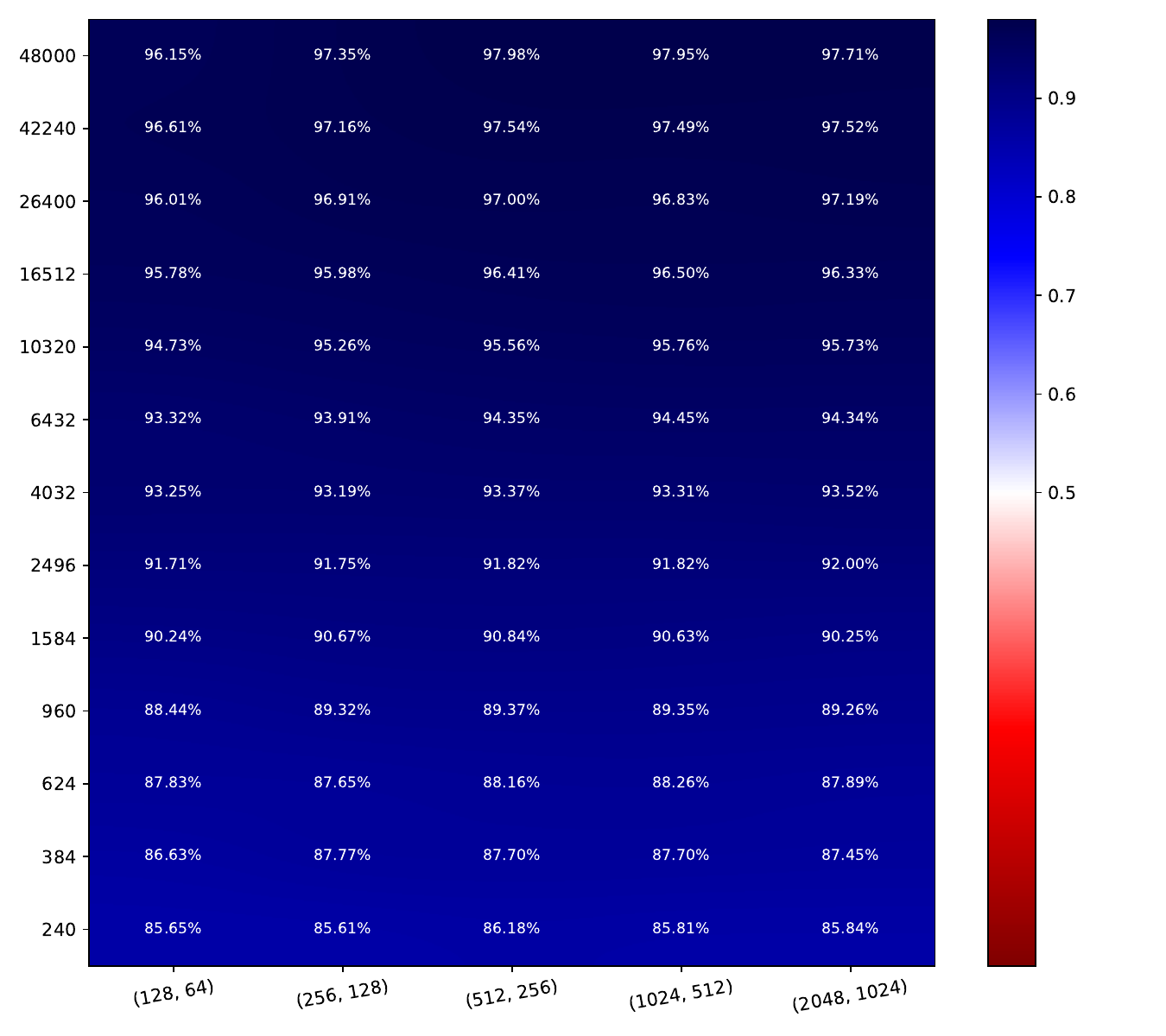}
    \makebox[20pt]{\raisebox{35pt}{\rotatebox[origin=c]{90}{\scriptsize CIFAR10}}}%
    \includegraphics[width=\dimexpr\linewidth-20pt\relax]{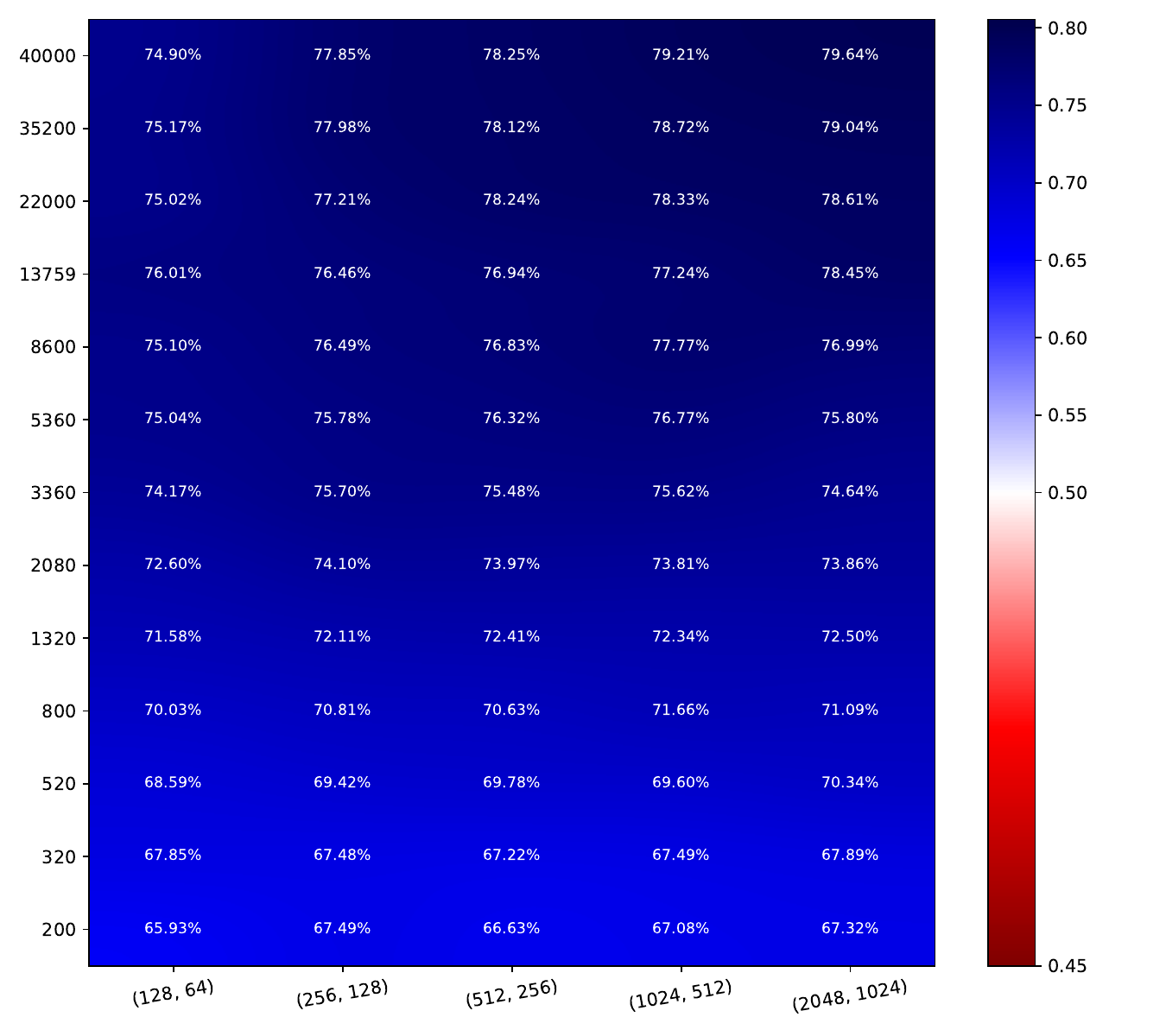}
    \caption*{\qquad MC-Dropout}
  \end{subfigure}\hfill
  \begin{subfigure}[t]{0.23\textwidth}
    \includegraphics[width=\textwidth]{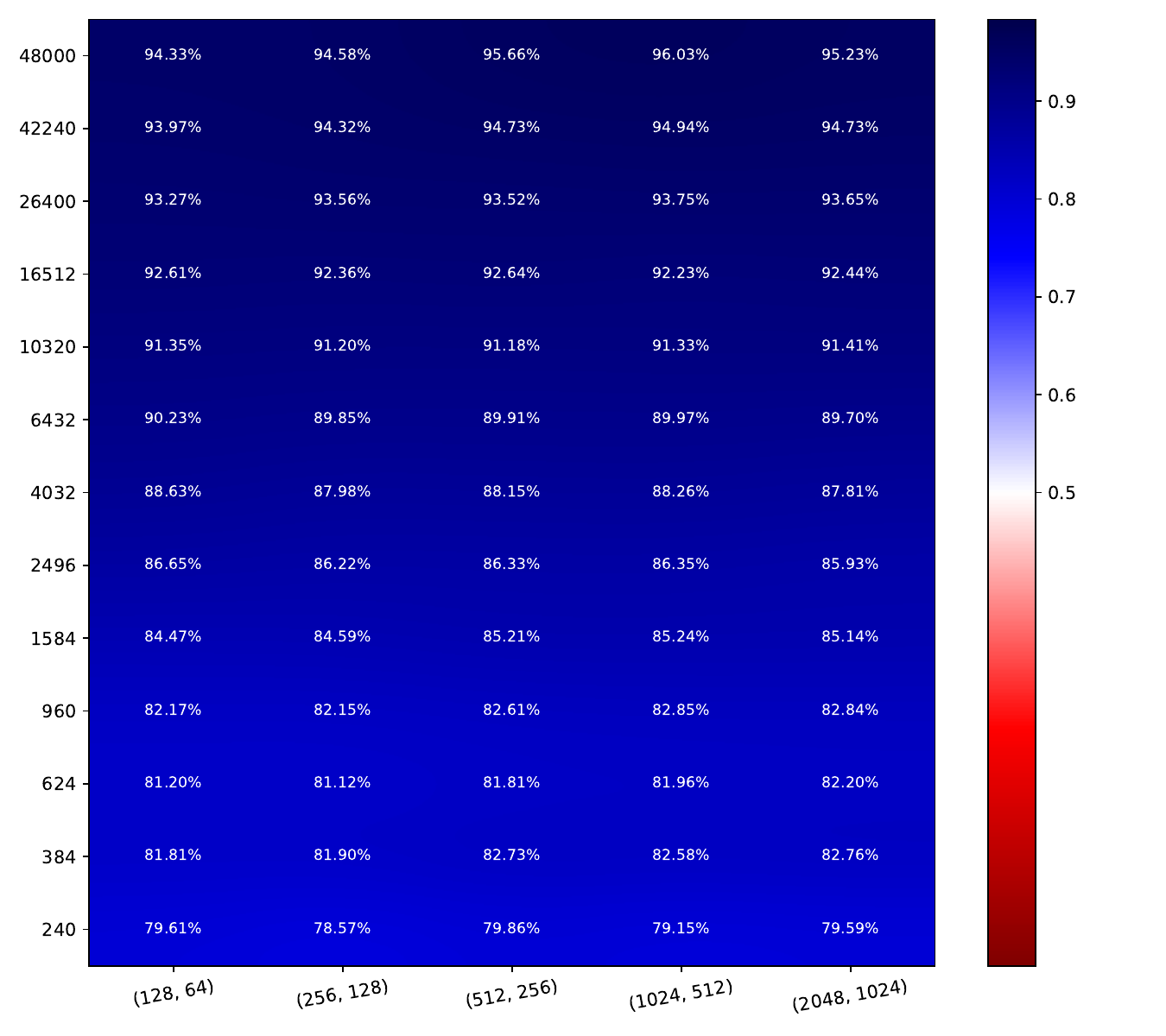}
    \includegraphics[width=\textwidth]{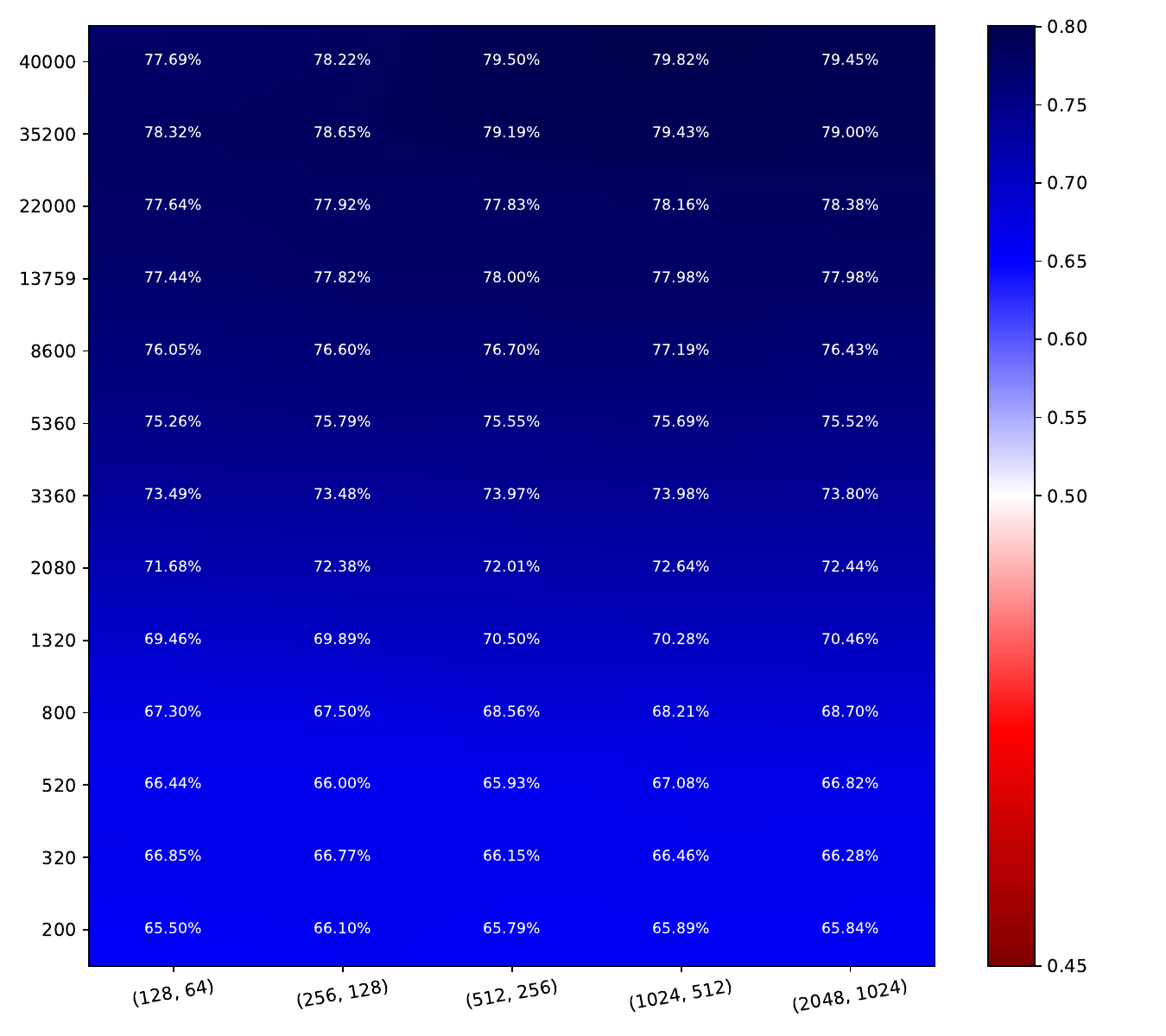}
    \caption*{EDL}
  \end{subfigure}\hfill
  \begin{subfigure}[t]{0.23\textwidth}
    \includegraphics[width=\textwidth]{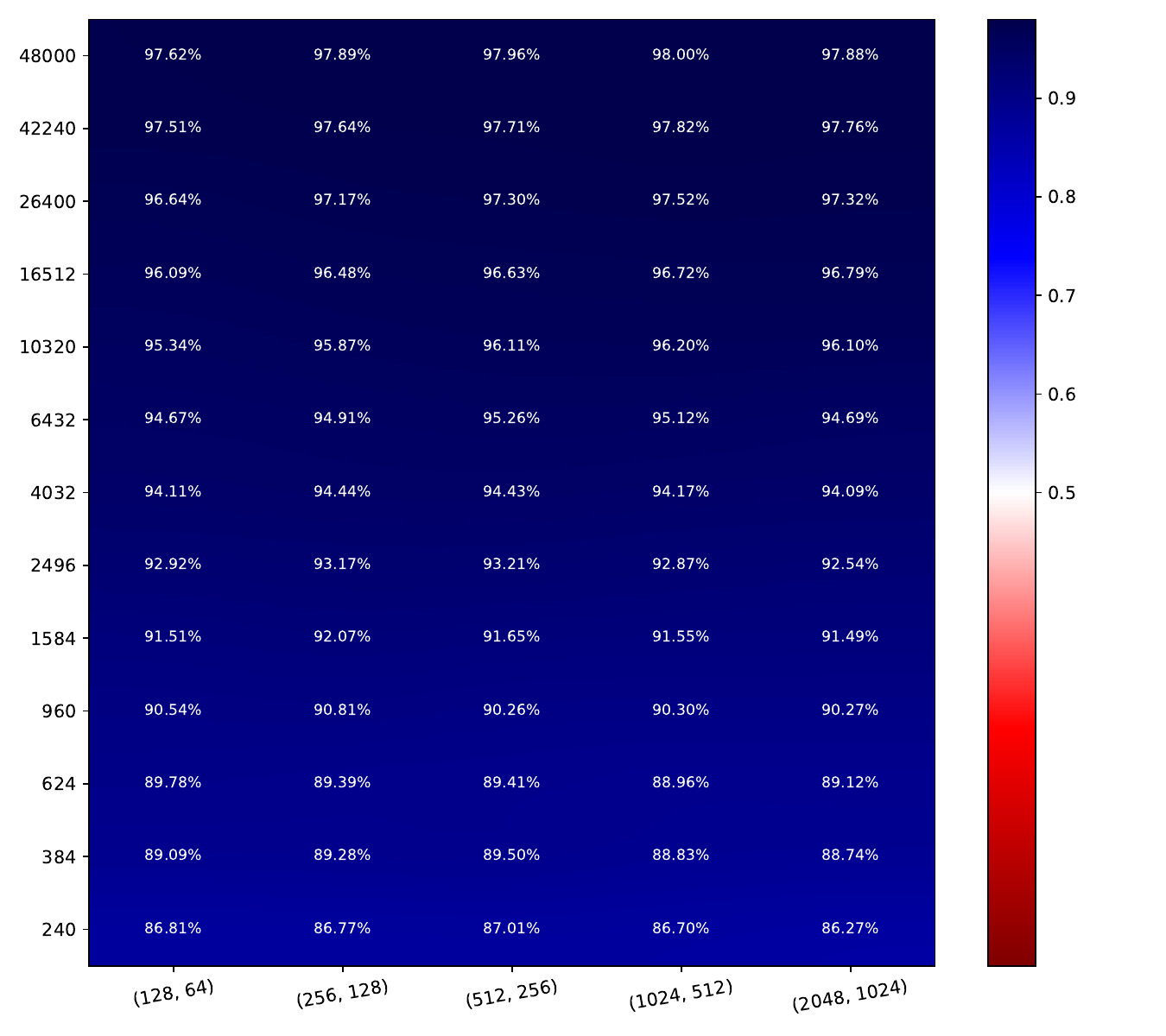}
    \includegraphics[width=\textwidth]{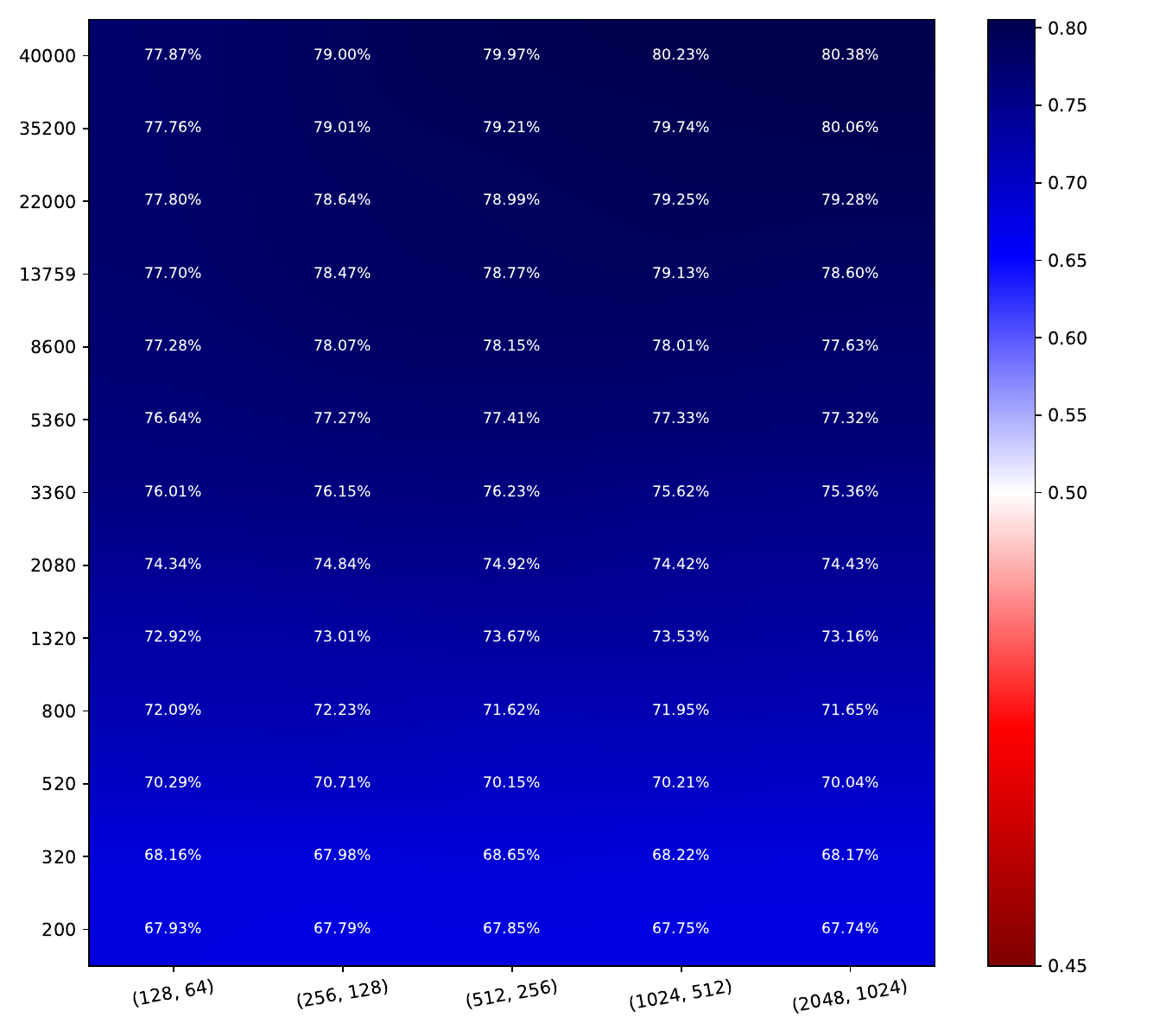}
    \caption*{DE}
  \end{subfigure}\hfill
  \begin{subfigure}[t]{0.23\textwidth}
    \includegraphics[width=\textwidth]{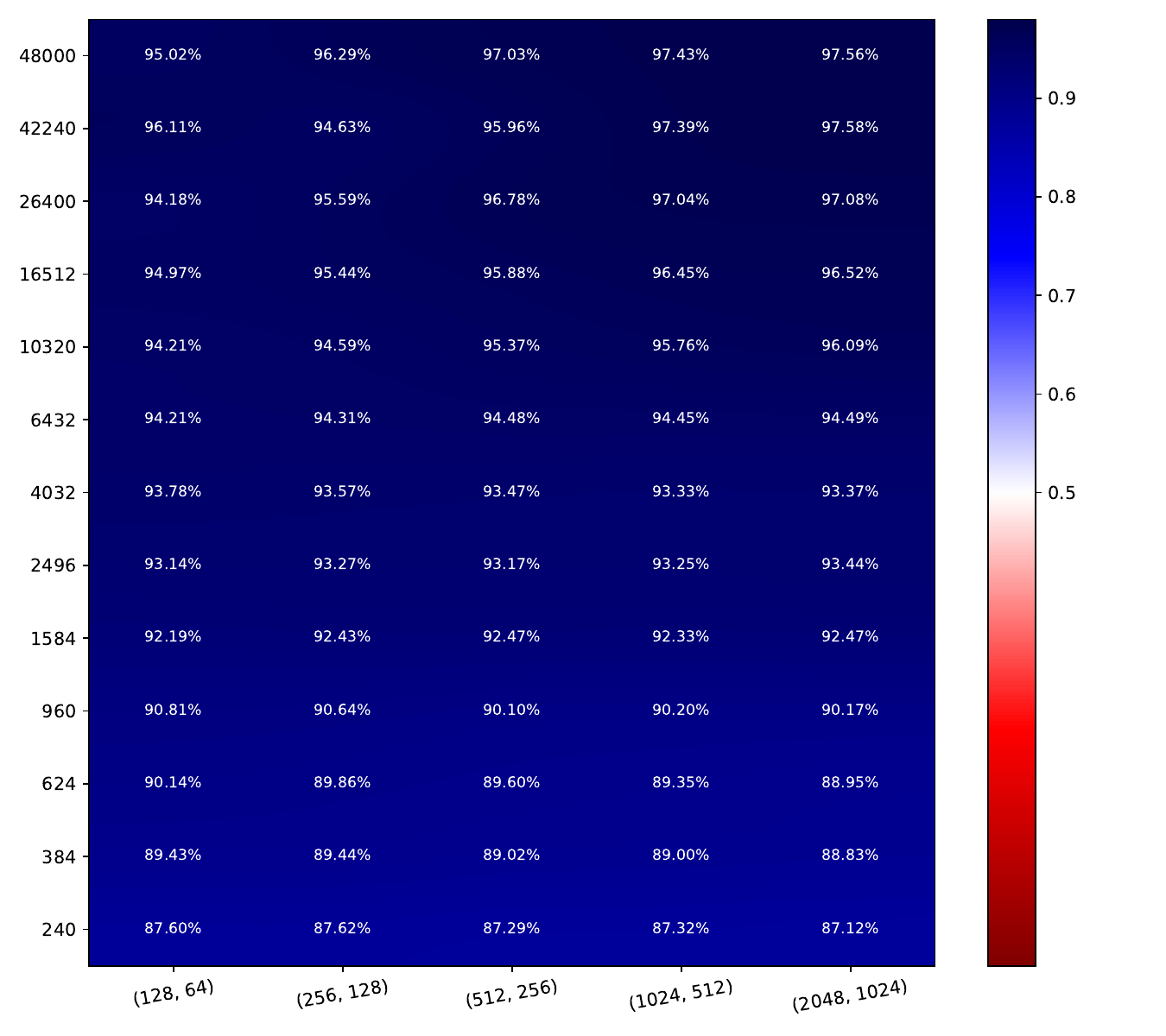}
    \includegraphics[width=\textwidth]{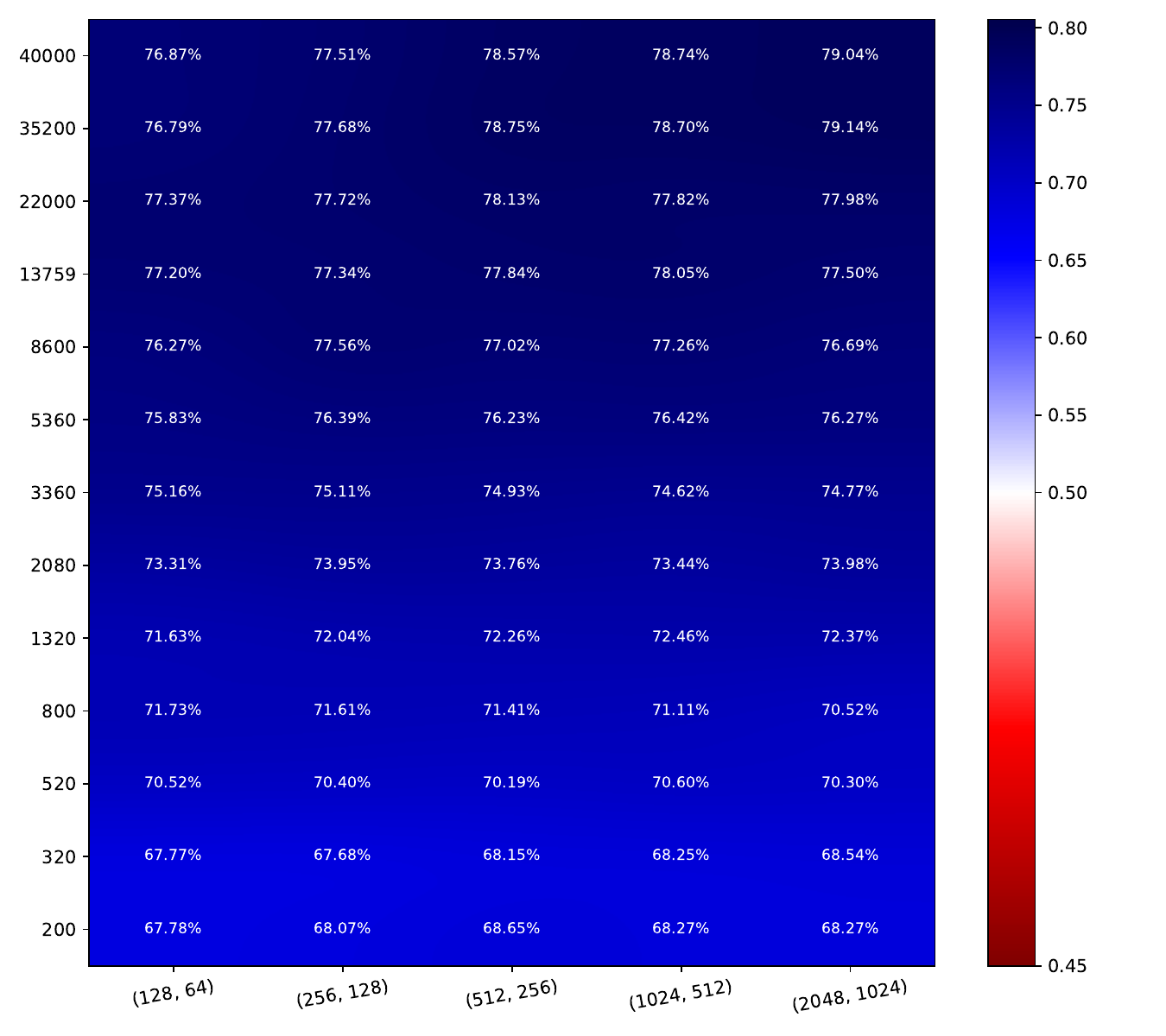}
    \caption*{Conflictual DE}
  \end{subfigure}\hfill
  \caption{Heatmaps of AUROC based on epistemic uncertainty for misclassification detection. Color scales are the same per dataset.}
\end{figure}

\end{subappendices}

\end{document}